\def\eqref#1{equation~\ref{#1}}
\def\1{\bm{1}}
\DeclareMathAlphabet{\mathsfit}{\encodingdefault}{\sfdefault}{m}{sl}
\SetMathAlphabet{\mathsfit}{bold}{\encodingdefault}{\sfdefault}{bx}{n}
\newtheorem{theorem}{Theorem}
\newtheorem{proposition}{Proposition}
\newtheorem{lemma}{Lemma}
\newtheorem{corollary}{Corollary}
\newtheorem{definition}{Definition}
\newtheorem{assumption}{Assumption}
\newcommand{\our}[0]{\text{SToNe}~}
\newcommand*{\oure}{\text{SToNe}}
\newcommand*\circled[1]{\tikz[baseline=(char.base)]{
            \node[shape=circle,draw,inner sep=0.3pt] (char) {#1};}}
\newcommand{\tikzxmark}{%
\tikz[scale=0.23] {
    \draw[line width=0.7,line cap=round] (0,0) to [bend left=6] (1,1);
    \draw[line width=0.7,line cap=round] (0.2,0.95) to [bend right=3] (0.8,0.05);
}}
\newcommand{\tikzcmark}{%
\tikz[scale=0.23] {
    \draw[line width=0.7,line cap=round] (0.25,0) to [bend left=10] (1,1);
    \draw[line width=0.8,line cap=round] (0,0.35) to [bend right=1] (0.23,0);
}}
\date{}
\title{On the Generalization Capability of Temporal Graph Learning Algorithms: Theoretical Insights and a Simpler Method}
\author[1]{Weilin Cong}
\author[2]{Jian Kang}
\author[3]{Hanghang Tong}
\author[1]{Mehrdad Mahdavi}
\affil[1]{Penn State University}
\affil[2]{University of Rochester}
\affil[3]{University of Illinois at Urbana-Champaign}
\begin{document}

\maketitle

\begin{abstract}
Temporal Graph Learning (TGL) has become a prevalent technique across diverse real-world applications, especially in domains where data  can be represented as a graph and evolves over time. Although TGL has recently seen notable progress in algorithmic solutions, its theoretical foundations remain largely unexplored. This paper aims at bridging  this gap by investigating the generalization ability of different TGL algorithms (e.g., GNN-based, RNN-based, and memory-based methods) under the finite-wide over-parameterized regime. 
We establish the connection between the generalization error of TGL algorithms and \circled{1} ``\textit{the number of layers/steps}'' in the GNN-/RNN-based TGL methods and \circled{2} ``\textit{the feature-label alignment (FLA) score}'', where FLA can be used as a proxy for the expressive power and explains the performance of memory-based methods. 
Guided by our theoretical analysis, we propose \textit{\textbf{S}implified-\textbf{T}emp\textbf{o}ral-Graph-\textbf{Ne}twork} (\oure), which enjoys a small generalization error, improved overall performance, and  lower model complexity. Extensive experiments on real-world datasets demonstrate the effectiveness of \oure. Our theoretical findings and proposed algorithm offer essential insights into TGL from a theoretical standpoint, laying the groundwork for the designing practical  TGL algorithms in future studies.
\end{abstract}

\section{Introduction}
Temporal graph learning (TGL) has emerged as an important machine learning problem and is widely used in a number of real-world applications, such as traffic prediction~\cite{yuan2021survey,zhang2021graph}, knowledge graphs~\cite{cai2022temporal,leblay2018deriving}, and recommender systems~\cite{JODIE,TGN,TGAT}.
A typical downstream task of temporal graph learning is link prediction, which focuses on predicting future interactions among nodes.
For example in an online video recommender system, the user-video clicks can be modeled as a temporal graph whose nodes represent users and videos, and links are associated with timestamps indicating when users click videos. Link prediction between nodes can be used to predict if and when a user is interested in a video. Therefore, designing graph learning models that can capture node evolutionary patterns and accurately predict future links is important.


TGL is generally more challenging than static graph learning, thereby requiring more sophisticated algorithms to model the temporal evolutionary patterns~\cite{huang2023temporal}. In recent years, many TGL algorithms~\cite{JODIE,TGAT,TGN,DySAT,CAW} have been proposed that leverage memory blocks, self-attention, time-encoding function, recurrent neural networks (RNNs), temporal walks, and message passing to better capture the meaningful structural or temporal patterns.
For instance, JODIE~\cite{JODIE} maintains a memory block for each node and utilizes an RNN to update the memory blocks upon the occurance of each interaction; TGAT~\cite{TGAT} utilizes self-attention message passing to aggregate neighbor information on the temporal graph;
TGN~\cite{TGN} combines memory blocks with message passing to allow each node in the temporal graph to have a receptive field that is not limited by the number of message-passing layers; 
DySAT~\cite{DySAT} uses self-attention to capture structural information and uses RNN to capture temporal dependencies;
CAW~\cite{CAW} captures temporal dependencies between nodes by performing multiple temporal walks from the root node and representing neighbors' identities via the probability of a node appearing in the temporal walks.
GraphMixer~\cite{graphmixer} combines MLP-mixer~\cite{tolstikhin2021mlp} with 1-hop most recent neighbor aggregation for temporal graph learning.

\begin{figure*}[t]
\centering
\begin{minipage}{0.33\textwidth}
    \includegraphics[width=0.98\textwidth]{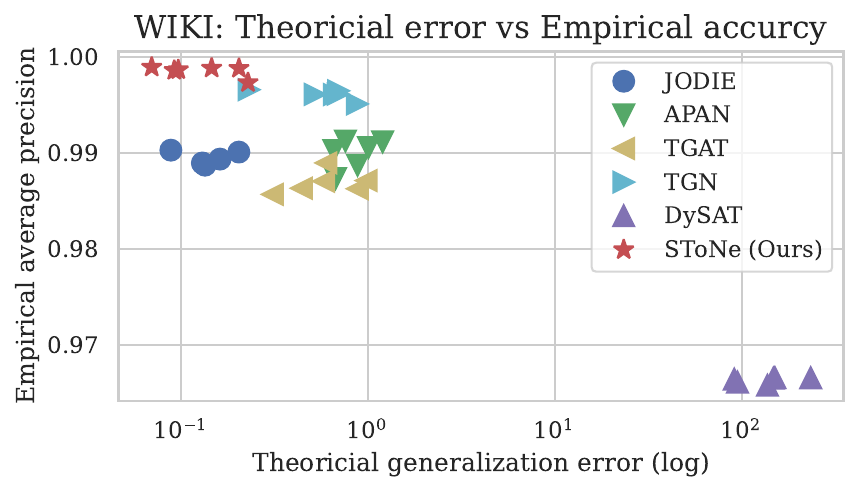}
    \label{fig:wiki_empirical_theory}
\end{minipage}
\hspace{-2mm}
\begin{minipage}{0.33\textwidth}
    \includegraphics[width=0.98\textwidth]{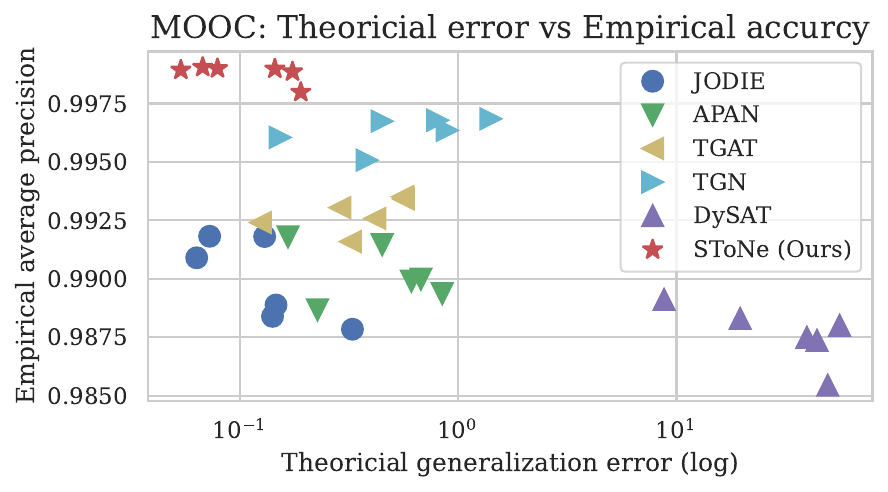}
    \label{fig:mooc_empirical_theory}
\end{minipage}
\hspace{-2mm}
\begin{minipage}{0.33\textwidth}
    \includegraphics[width=0.98\textwidth]{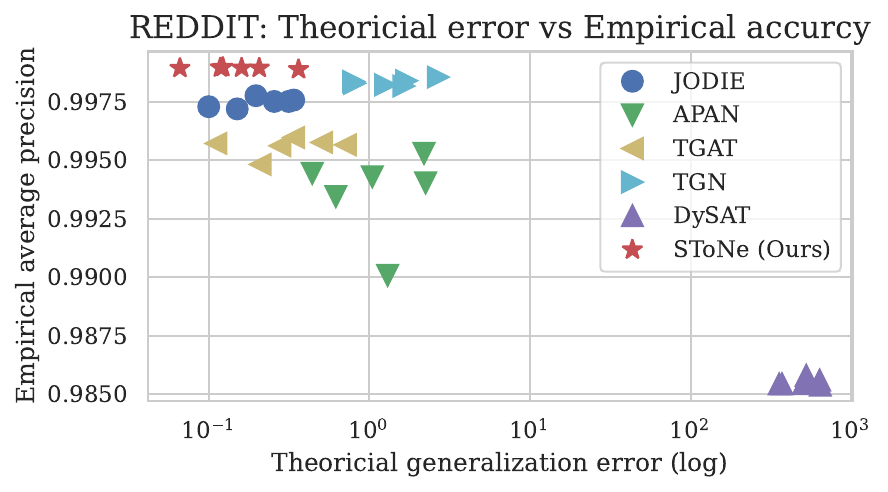}
    \label{fig:reddit_empirical_theory}
\end{minipage}
\vspace{-3mm}
\caption{Relationship between the generalization error (in Theorem~\ref{theorem:all_generalization}) and empirical average precision score. Generalization error (GE) and average precision (AP) have an inverse correlation, i.e., the larger the GE, the lower the AP. Each marker is one experiment run. The same method's GE changes at each run because it depends on feature-label alignment, which changes with different weight initialization.
More details on the computation of GE are deferred to Appendix~\ref{section:how to compute FLA}. 
}
\label{fig:empirical_theory_main}
\vspace{-6mm}
\end{figure*}


Although empirically powerful, the theoretical foundations of the aforementioned TGL algorithms remain largely under-explored. A  recent study~\cite{graphmixer} have demonstrated  that simple models could still exhibit superior performance compared to more complex models~\cite{JODIE,TGAT,TGN,DySAT,CAW}, but rigorous  understanding of this phenomena remains elusive. Therefore, it is imperative to unravel the learning capabilities of these methods from the theoretical perspective, which will be in turn beneficial to design a more practically efficacious TGL algorithm in the future studies. 
Recently, \cite{souza2022provably,gao2022equivalence} study the expressive power of temporal graphs by extending the 1-WL (Weisfeiler-Lehman) test to the temporal graph. \cite{souza2022provably} shows that using an injective aggregation function in TGL models is necessary to be most expressive, and that increasing the number of layers/steps in GNN-/RNN-based TGL methods or using memory blocks can increase the WL-based expressive power.
However, WL-based expressive power analysis may not \circled{1} explain why one TGL method performs better than another when neither of them uses injective aggregation functions (e.g., mean-average aggregation), \circled{2} explain why existing TGL methods prefer to use shallow GNN structure (e.g., \cite{TGAT,TGSRec,TGN} uses 1- or 2-layer in official implementation) or a small number of RNN steps (e.g., \cite{CAW,DySAT} use less than 4 steps in official implementation) despite the theory suggesting deeper/larger structures, and \circled{3} explain how well TGL algorithms generalize to the unseen data.


As an alternative, we directly study the generalization ability of different TGL algorithms.
We establish the generalization error bound of GNN-based~\cite{TGAT,TGSRec}, RNN-based~\cite{CAW}, and memory-based~\cite{JODIE} TGL algorithms under the finite-wide over-parameterized regime in Theorem~\ref{theorem:all_generalization} via deep learning theory. 
In particular, we show that the generalization error bound decreases with respect to the number of training data, but increases with respect to the number of layers/steps in the GNN-/RNN-based methods and the feature-label alignment (FLA), where FLA is defined as the projection of labels on the inverse of empirical neural tangent kernel $\mathbf{y} (\mathbf{J} \mathbf{J}^\top)^{-1} \mathbf{y},~\mathbf{J} = [\text{vec}(\nabla_\theta f_i(\bm{\theta}_0))]_{i=1}^N$ as in Definition~\ref{definition:feature-label-alignment}. We show that the FLA could potentially \circled{1} serve as a proxy for the expressive power measurement, \circled{2} explain the impact of input data selection on model performance, and \circled{3} explain why the memory-based methods do not outperform  GNN-/RNN-based methods, even though their generalization error is less affected as the number of layers/steps increases. Our generalization bound demonstrates the importance of ``selecting proper input data'' and ``using simpler model architecture'', which have been previously empirically observed in~\cite{graphmixer}, but lack  theoretical understanding.


Guided by our theoretical analysis, we propose \textit{Simplified-Temporal-Graph-Network}~(\oure), which achieves compatible performance on most real-world datasets, but with less weight parameters and lower computational cost.
As shown in Figure~\ref{fig:empirical_theory_main}, our proposed \our enjoys not only small generalization error theoretically but also a compatible overall average precision score.
Extensive ablation studies are conducted on real-world datasets to demonstrate the effectiveness of \oure.

\paragraph{Contributions.} 
We summarize the main contributions as follows: 
\begin{itemize} 
    \item (\emph{Theory}) In Section~\ref{section:theory}, we analyze the generalization error of memory-based, GNN-based, and RNN-based methods in Theorem~\ref{theorem:all_generalization}. Then, in Section~\ref{section:discssion on L and R}, we reveal the relationship between generalization error and the number of layers/steps in the GNN-/RNN-based method and feature-label alignment (FLA), where FLA could also be potentially used as a proxy for the expressive power and explain the performance of the memory-based method.
    \item (\emph{Algorithm}) In Section~\ref{section:simple_TGN}, guided by our theoretical analysis, we propose \our which not only enjoys a small theoretical generalization error but also bears a better overall empirical performance, a smaller model complexity, and a simpler architecture compared to baseline methods.
    \item (\emph{Experiments}) In Section~\ref{section:experiments} and Appendix~\ref{section:more experiments}, we conduct extensive experiments and ablation studies on real-world datasets to demonstrate the effectiveness of our proposal \our.
\end{itemize}

\section{Related works and preliminaries}


In this section, we briefly summarize related works and preliminaries. The more detailed discussions on expressive power, generalization, and existing TGL algorithms are deferred to Appendix~\ref{section:more details on related works}.


\noindent\textbf{Generalization and expressive power.}
Statistical learning theories have been used to study the generalization of GNNs, including uniform stability~\cite{verma2019stability,zhou2021generalization,cong2021on}, Rademacher complexity~\cite{garg2020generalization,oono2020optimization,du2019graph}, PAC-Bayesian~\cite{liao2020pac}, PAC-learning~\cite{xu2019can}, and uniform convergence~\cite{maskey2022stability}. 
Besides, \cite{souza2022provably,gao2022equivalence} study the expressive power of temporal graph networks via graph isomorphism test.
However, the aforementioned analyses are data-independent and only dependent on the number of layers or hidden dimensions, which cannot fully explain the performance difference between different algorithms.
As an alternative, we study the generalization error of different TGL methods under the finite-wide  over-parameterized regime~\cite{xu2021optimization,arora2019fine,arora2022understanding}, which not only is model-dependent but also could capture the data dependency via feature-label alignment (FLA) in Definition~\ref{definition:feature-label-alignment}. Most importantly, FLA could be empirically computed, reveals the impact of input data selection on model performance, and could be used as a proxy for the expressive power of different algorithms.

\begin{wrapfigure}{r}{0.5\textwidth}
  \begin{center}
    \includegraphics[width=0.48\textwidth]{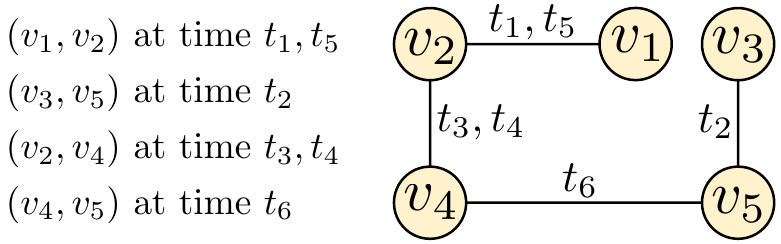}
  \end{center}
  \vspace{-3.5mm}
  \caption{An illustration of temporal graph data with nodes $v_1, \ldots,v_5$ and timestamps $t_1, \ldots, t_6$ that indicate when two nodes interact.}
  \label{fig:raw_interactions_and_temporal_graph}
  \vspace{-2.5mm}
\end{wrapfigure}

\noindent\textbf{Temporal graph learning.} Figure~\ref{fig:raw_interactions_and_temporal_graph} is an illustration of a temporal graph, where each node has node feature $\mathbf{x}_i$, each node pair could have multiple temporal edges with different timestamps $t$ and edge features $\mathbf{e}_{ij}(t)$. 
We classify several chosen representative temporal graph learning methods into \textit{memory-based} (e.g., JODIE), \textit{GNN-based} (e.g., TGAT, TGSRec), \textit{memory\&GNN-based} (e.g., TGN, APAN, and PINT), \textit{RNN-based} (e.g., CAW), and \textit{GNN\&RNN-based} (e.g., DySAT) methods. For example, 
JODIE~\cite{JODIE} maintains a memory block for each node and updates the memory block by an RNN upon the happening of each interaction; 
TGAT~\cite{TGAT} first constructs the temporal computation graph, then recursively computes the hidden representation of each node using GNN;
TGN~\cite{TGN} first uses memory blocks to capture all temporal interactions via JODIE then applies TGAT on the latest representation of the memory blocks of each node to capture the spatial information;
CAW~\cite{CAW} proposes to first construct a set of sequential temporal events via random walks, then use RNN to aggregate the information from temporal events;
DySAT~\cite{DySAT} uses GNN to extract the spatial features and applies RNN on the output of GNN to capture the temporal dependencies.


\section{Generalization of temporal graph learning methods} \label{section:theory}
Firstly, we introduce the problem setting of theoretical analysis in Section~\ref{section:problem setting}. Then, we formally define the feature-label alignment (FLA) and derive the generalization bound in Section~\ref{section:generalization_error}. Finally, we provide discussion on the generalization bound and its connection to FLA in Section~\ref{section:discssion on L and R}.


\subsection{Problem setting for theoretical analysis} \label{section:problem setting}

For theoretical analysis, let us suppose the TGL models receives a sequence of data points $(X_1, y_1), ..., (X_N, y_N)$ that is the realization of some non-stationary process, where $X_i$ and $y_i$ stand for a single input data (e.g., a root node and its induced subgraph or random-walk path) and its label at the $i$-th iteration. During training, stochastic gradient descent (SGD) first uses $(X_1, y_1)$ and the initial model $f(\bm{\theta}_0)$ to generate the $\bm{\theta}_1$. Next, SGD uses the second example $(X_2, y_2)$ and the previously obtained model $f(\bm{\theta}_1)$ to generate $\bm{\theta}_2$, and so on. The training process is outlined in Algorithm~\ref{algorithm:SGD}. We aim to develop a unified theoretical framework to examine the generalization capability of the three most fundamental types of TGL methods, including \textit{GNN-based method}, \textit{RNN-based method}, and \textit{memory-based method} on a node-level binary classification task.
Our goal is to upper bound the expected 0-1 error $\mathbb{E} [ \text{loss}^{0-1}_N(\widetilde{\bm{\theta}}) | \mathcal{D}_1^{N-1} ]$ conditioned on a sequence of sampled data points $\mathcal{D}_1^{N-1}$, where 
$\text{loss}_i^{0-1}(\bm{\theta}) = \mathbf{1}\{y_i f_i(\bm{\theta}) < 0\}$ is the 0-1 loss computed by model $f(\bm{\theta})$ on data $(X_i, y_i)$ at $i$-th iteration, and $\mathcal{D}_1^{i-1} = \{(X_1, y_1),\ldots, (X_{i-1}, y_{i-1})\}$ is the set of all data points sampled before the $i$-th iteration. Our analysis of the fundamental TGL methods can pave the way to understand more advanced algorithms, such as \textit{GNN\&RNN-based} and \textit{memory\&GNN-based} methods.

\begin{algorithm}[t]
\caption{Using SGD for temporal graph learning }\label{algorithm:SGD}
\begin{algorithmic}
\Require Initialize $\bm{\theta}_0$ from Gaussian distribution
$\mathcal{N}(0,1/m)$ with $m$ as hidden dimension.
\For{$i=1,2,\ldots,N$}
    \State Sample an input data $(X_i, y_i)$ condition on all the previous sampled data $\mathcal{D}_1^{i-1}$
    \State Predict the label of $X_i$ as $f_i(\bm{\theta}_{i-1})$
    \State Update weight parameters  by $\bm{\theta}_{i} = \bm{\theta}_{i-1} - \eta \nabla_\theta ~\text{loss}(f_i(\bm{\theta}_{i-1}), y_i)$.
\EndFor
\Ensure Return weight parameters by selecting $\widetilde{\bm{\theta}}$ randomly from $\{\bm{\theta}_0, \ldots, \bm{\theta}_{N-1}\}$.
\end{algorithmic}
\end{algorithm}

\noindent\textbf{GNN-based method.}
We compute the representation of node $v_i$ at time $t$ by applying GNN on the temporal graph that only considers the temporal edges with timestamp before time $t$.
The GNN has \smash{$\bm{\theta} = \{\mathbf{W}^{(\ell)}\}_{\ell=1}^{L}$} as the parameters to optimize and $\alpha \in \{0, 1\}$ as a binary variable that controls whether a residual connection is used. 
The final prediction on node $v_i$ is computed as \smash{$f_i(\bm{\theta}) = \mathbf{W}^{(L)}\mathbf{h}_i^{(L-1)}$}, where the hidden representation \smash{$\mathbf{h}_i^{(L-1)}$} is computed by
\begin{equation*}
    \mathbf{h}_i^{(\ell)} =  \sigma\big(\mathbf{W}^{(\ell)} \sum\nolimits_{j\in\mathcal{N}(i)} P_{ij} \mathbf{h}_j^{(\ell-1)}\big) + \alpha \mathbf{h}_i^{(\ell-1)},~ 
    \mathbf{h}_i^{(1)} = \sigma\big(\mathbf{W}^{(1)} \sum\nolimits_{j\in\mathcal{N}(i)} P_{ij} \mathbf{x}_j\big).
\end{equation*}
Here $\sigma(\cdot)$ is the activation function, $P_{ij}$ is the aggregation weight used for propagating information from node $v_j$ to node $v_i$, and $\mathcal{N}(i)$ is the set of all neighbors of node $v_i$ in the temporal graph.
For simplicity, we assume the neighbor aggregation weights are fixed (e.g., $P_{ij}=1/|\mathcal{N}(i)|$ for row normalized propagation) and 
consider the time-encoding vector as part of the node feature vector $\mathbf{x}_i$. 
For parameter dimension, we have \smash{$\mathbf{W}^{(1)} \in \mathbb{R}^{m \times d}$}, \smash{$\mathbf{W}^{(\ell)} \in \mathbb{R}^{m\times m}~\text{for}~2\leq \ell \leq  L-1$}, and \smash{$\mathbf{W}^{(L)} \in \mathbb{R}^{1\times m}$}, where $m$ is the hidden dimension and $d$ is the input feature dimension. TGAT~\cite{TGAT} can be thought of as GNN-based method but uses self-attention neighbor aggregation.

\noindent\textbf{RNN-based method.}
We compute the representation of node $v_i$ at time $t$ by applying a multi-step RNN onto a sequence of temporal events $\{ \mathbf{v}_1, \ldots, \mathbf{v}_{L-1}\}$ that are constructed at the target node, where each temporal event feature $\mathbf{v}_\ell$ is pre-computed on the temporal graph. We consider the time-encoding vector as part of event feature $\mathbf{v}_\ell$.
The RNN has trainable parameters \smash{$\bm{\theta} = \{\mathbf{W}^{(1)},\mathbf{W}^{(2)},\mathbf{W}^{(3)}\}$} and $\alpha \in \{0, 1\}$ is a binary variable that controls whether a residual connection is used.
Then, the final prediction on node $v_i$ is computed as \smash{$f_i(\bm{\theta}) = \mathbf{W}^{(3)}\mathbf{h}_{L-1}$}, where the hidden representation \smash{$\mathbf{h}_{L-1}$} is recursively compute by
\begin{equation*}
    \mathbf{h}_\ell = \sigma \Big( \kappa \big( \mathbf{W}^{(1)} \mathbf{h}_{\ell-1}  +  \mathbf{W}^{(2)} \mathbf{x}_\ell \big)  \Big) + \alpha \mathbf{h}_{\ell-1} \in \mathbb{R}^m,
\end{equation*}
$\sigma(\cdot)$ is the activation function, $\mathbf{h}_0 = \mathbf{0}_m$ is an all-zero vector, and $\mathbf{x}_\ell = \mathbf{W}^{(0)} \mathbf{v}_\ell$. We normalize each $\mathbf{h}_\ell$ by $\kappa=1/\sqrt{2}$ so that $\|\mathbf{h}_\ell\|_2^2$ does not grow exponentially with $L$.
We have $\mathbf{W}^{(1)}, \mathbf{W}^{(2)} \in \mathbb{R}^{m \times m}$, $\mathbf{W}^{(3)} \in \mathbb{R}^{1\times m}$ as trainable parameters, but $\mathbf{W}^{(0)} \in \mathbb{R}^{m\times d}$ is non-trainable.
CAW~\cite{CAW} is a special case of RNN-based method for edge classification tasks, where temporal events are sampled by temporal walks from both the source node and the destination node of an edge.

\noindent\textbf{Memory-based method.}
We compute the representation of node $v_i$ at time $t$ by applying weight parameters on the memory block $\mathbf{s}_i(t)$.
Let us define \smash{$\bm{\theta} = \{\mathbf{W}^{(1)},\ldots,\mathbf{W}^{(4)}\}$} as the parameters to optimize. 
Then, the final prediction of node $v_i$ is computed by \smash{$f_i(\bm{\theta}) = \mathbf{W}^{(4)}\mathbf{s}_i(t)$} and $\mathbf{s}_i(t)\in \mathbb{R}^m$ is updated whenever node $v_i$ interacts with other nodes by
\begin{equation*}
    \mathbf{s}_i(t) = \sigma\Big(\kappa \big( \mathbf{W}^{(1)} \mathbf{s}_i^+(h_i^t) + \mathbf{W}^{(2)} \mathbf{s}_j^+(h_j^t) + \mathbf{W}^{(3)} \mathbf{e}_{ij}(t) \big) \Big), 
\end{equation*}
where $\sigma(\cdot)$ is the activation function, $h_i^t$ is the latest timestamp that node $v_i$ interacts with other nodes before time $t$, $\mathbf{s}_i(0) = \mathbf{W}^{(0)} \mathbf{x}_i$, and $\mathbf{s}_i^+(t) = \text{StopGrad}(\mathbf{s}_i(t))$ is the memory block of node $v_i$ at time $t$. We consider the time-encoding vector as part of edge feature $\mathbf{e}_{ij}(t)$.
We normalize $\mathbf{s}_i(t)$ by $\kappa=1/\sqrt{3}$ so that $\|\mathbf{s}_i(t)\|_2^2$ does not grow exponentially with time $t$.
We have trainable parameters \smash{$\mathbf{W}^{(1)}, \mathbf{W}^{(2)} \in \mathbb{R}^{m \times m}$}, \smash{$\mathbf{W}^{(3)} \in \mathbb{R}^{m \times d}$}, \smash{$\mathbf{W}^{(4)} \in \mathbb{R}^m$} and fixed parameters \smash{$\mathbf{W}^{(0)} \in \mathbb{R}^{m\times d}$}.
JODIE~\cite{JODIE} can be thought of as memory-based method but using the attention mechanism for final prediction.


\subsection{Assumptions and main theoretical results} \label{section:generalization_error}

For the purpose of rigorous analysis, we make the following standard assumptions on the feature norms~\cite{cao2019generalization,du2018gradient}, which could be satisfied via feature re-scaling.

\begin{assumption} \label{assumption: node feature norm as 1}
All features has $\ell_2$-norm bounded by $1$, i.e., we assume $\| \mathbf{x}_i \|_2, \| \mathbf{e}_{ij}(t)\|_2, \| \mathbf{v}_\ell \| \leq 1$.
\end{assumption}

In addition, we assume that the activation functions are Lipschitz continuous in TGL methods. The following assumption holds for common activation functions such as ReLU and LeakyReLU (which are often used in GNNs), Sigmoid and Tanh (which are often used in RNNs).

\begin{assumption} \label{assumption:lipschitz constant > 1}
The activation function has Lipschitz constant $\rho\geq 1$.
\end{assumption}
Furthermore, we make the following assumptions on the propagation matrix of GNN models, which are previously used in~\cite{liao2020pac,cong2021on}. In practice, we know that $\tau= 1$ holds for row normalized and $\tau = \sqrt{\max_{i\in\mathcal{V}} d_i}$ holds for symmetrically normalized propagation matrix.

\begin{assumption} \label{assumption:row sum of propagation matrix}
The row-wise sum is bounded by \smash{$\tau = \max_{i\in\mathcal{V}}( \sum_{j\in \mathcal{N}(i)} P_{ij})$} where $\tau \geq 1$.
\end{assumption}

Finally, we adopt the following assumption regarding the non-stationary data generation process, which is standard assumption in time series prediction and online learning analysis~\cite{kuznetsov2015learning,kuznetsov2016time}.
As the data generation process transitions to a stationary state with an identical distribution at each step, this deviation $\Delta$ diminishes to zero. 
\begin{assumption} \label{assumption:non-stationary}

We assume the
discrepancy measure that quantifies the deviation between the intermediate steps (i.e., $i=1,...,N-1$) and the final step (i.e., $i=N$) data distribution as
\begin{equation*}
    \Delta = \sup_{f(\bm{\theta})} \Big| \frac{1}{N} \sum_{i=1}^N \mathbb{E} \left[ \text{loss}^{0-1}_i(\bm{\theta}_{i-1}) | \mathcal{D}_1^{i-1} \right] - \frac{1}{N} \sum_{i=1}^N \mathbb{E} \left[ \text{loss}^{0-1}_N(\bm{\theta}_{i-1}) | \mathcal{D}_1^{N-1} \right] \Big|,
\end{equation*}
where the supremum is on any model, $\mathcal{D}_1^{i-1} = \{(X_1, y_1),\ldots, (X_{i-1}, y_{i-1})\}$ is the sequence of data points before the $i$-th iteration, and $\text{loss}^{0-1}_i(\bm{\theta})=\mathbf{1}\{y_if_i(\bm{\theta})<0\}$ is 0-1 loss.
\end{assumption}

We introduce the feature-label alignment (FLA) score, which measures how well the representations produced by different algorithms align with the ground-truth labels.

\begin{definition} 
\label{definition:feature-label-alignment}
FLA is defined as $\mathbf{y}^\top (\mathbf{J} \mathbf{J}^\top)^{-1} \mathbf{y} $, where $\mathbf{J} = [\text{vec}(\nabla_\theta f_i(\bm{\theta}_0))]_{i=1}^N$ is the gradient of different temporal graph algorithms computed on each individual training example and $\bm{\theta}_0$ is the untrained weight parameters initialized from a Gaussian distribution.
\end{definition}

FLA has appeared in the convergence and generalization analysis of over-parameterized neural networks~\cite{arora2019fine}. In practice, FLA quantifies the amount of perturbation we need  on $\bm{\theta}$ along the direction of $\mathbf{J} = [\text{vec}(\nabla_\theta f_i(\bm{\theta}_0))]_{i=1}^N$ to minimize the logistic loss, which could be used to capture the expressiveness of different TGL algorithms, i.e., the smaller the perturbation, the better the expressiveness.
Detailed discussion can be found in Appendix~\ref{section:FLA as expressive power proxy}. 
Computing FLA requires $\mathcal{O}(N^2 |\bm{\theta}|)$ time complexity to compute $\mathbf{J}\mathbf{J}^\top$ and $\mathcal{O}(N^3)$ time complexity to compute matrix inverse, where $N$ is the number of training data and $|\bm{\theta}|$ is the number of weight parameters. In over-parameterized regime, we assume $|\bm{\theta}|>N$, and we can compute $\mathbf{J}\mathbf{J}^\top$ on a sampled subset of training data rather than the full training data to make sure this assumption holds.

The following theorem establishes  the generalization capability of different TGL algorithms. 

\begin{theorem} 
\label{theorem:all_generalization}
Given any $\delta\in(0,1/e]$, FLA-related constant $R=\mathcal{O}(\sqrt{\mathbf{y}^\top (\mathbf{J} \mathbf{J}^\top )^{-1} \mathbf{y}})$, and number of training iterations $N$ (one training example per iteration), there exists $m^\star=\mathcal{O}(N^2/L^2)\log(1/\delta)$ such that, if hidden dimension $m\geq m^\star$ and using learning rate $\eta=\frac{R}{m C \sqrt{2N}}$, with probability at least $1-\delta$ 
over the randomness of $\bm{\theta}_0$ initialization, we can upper bound the expected 0-1 error by
\begin{equation*}
    \mathbb{E}[\text{loss}_N^{0-1}(\widetilde{\bm{\theta}})~|~\mathcal{D}_{1}^{N-1}] \leq 
    \mathcal{O}\Big(\frac{DR C}{\sqrt{N}}\Big) + \mathcal{O}\Big( \sqrt{\frac{\ln(1/\delta)}{N}}\Big) + \Delta,
\end{equation*}
where 
$\widetilde{\bm{\theta}}$ is uniformly sampled from $\{ \bm{\theta}_0,\ldots,\bm{\theta}_{N-1} \}$, 
$R=\mathcal{O}(\sqrt{\mathbf{y}^\top (\mathbf{J} \mathbf{J}^\top )^{-1} \mathbf{y}}) $, and 
the constant $C$ and $D$ of \circled{1} the $L$-layer GNN are  $C = ((1+3\rho)\tau)^{L-1}~\text{and}~D=L$, \circled{2} the $L$-step RNN are $C = (1+3\rho/\sqrt{2})^{L-1}~\text{and}~D=L$, and \circled{3} the memory-based method are  $C = \rho~\text{and}~D=4$.
\end{theorem}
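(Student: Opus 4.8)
The plan is to prove Theorem~\ref{theorem:all_generalization} by combining an online-to-batch argument with a neural-tangent-kernel (NTK) style regret analysis in the over-parameterized regime, treating the three architectures through a single set of structural bounds whose constants $C$ and $D$ are then instantiated per model.

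First I would decompose the target quantity. Since $\widetilde{\bm\theta}$ is drawn uniformly from $\{\bm\theta_0,\dots,\bm\theta_{N-1}\}$, we have $\mathbb{E}[\text{loss}_N^{0-1}(\widetilde{\bm\theta})\mid\mathcal D_1^{N-1}] = \frac1N\sum_{i=1}^N \mathbb{E}[\text{loss}_N^{0-1}(\bm\theta_{i-1})\mid\mathcal D_1^{N-1}]$. Assumption~\ref{assumption:non-stationary} then lets me replace the ``final-step'' distribution by the per-step distribution at the cost of the additive $\Delta$, reducing the problem to bounding $\frac1N\sum_{i=1}^N \mathbb{E}[\text{loss}_i^{0-1}(\bm\theta_{i-1})\mid\mathcal D_1^{i-1}]$. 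Because $\bm\theta_{i-1}$ is measurable with respect to $\mathcal D_1^{i-1}$, the centered terms $\text{loss}_i^{0-1}(\bm\theta_{i-1}) - \mathbb{E}[\text{loss}_i^{0-1}(\bm\theta_{i-1})\mid\mathcal D_1^{i-1}]$ form a bounded martingale-difference sequence, so Azuma--Hoeffding yields the $\mathcal O(\sqrt{\ln(1/\delta)/N})$ confidence term and leaves the \emph{empirical cumulative} $0$--$1$ loss $\frac1N\sum_i \text{loss}_i^{0-1}(\bm\theta_{i-1})$, which I upper-bound by the cumulative logistic surrogate $\frac1N\sum_i \ell(y_i f_i(\bm\theta_{i-1}))$.

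The core step is the NTK/over-parameterization analysis of the surrogate. Here I would establish three ingredients uniformly for all three models, valid inside a ball $\|\bm\theta-\bm\theta_0\|\le\omega$: (i) a gradient-norm bound $\|\nabla_\theta f_i(\bm\theta)\|\le \mathcal O(C)$ (up to the $m$-scaling absorbed into $\eta$); (ii) near-linearity, i.e. $|f_i(\bm\theta) - f_i(\bm\theta_0) - \langle\nabla_\theta f_i(\bm\theta_0),\bm\theta-\bm\theta_0\rangle|$ is small, with remainder vanishing once $m\ge m^\star=\mathcal O(N^2/L^2)\log(1/\delta)$; and (iii) existence of a reference point $\bm\theta^\star$ with $\|\bm\theta^\star-\bm\theta_0\|\le R$ that drives the surrogate near zero. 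Point (iii) follows from Definition~\ref{definition:feature-label-alignment}: the minimum-norm solution of the linearized interpolation system $\mathbf J(\bm\theta^\star-\bm\theta_0)=\mathbf y$ has squared norm exactly $\mathbf y^\top(\mathbf J\mathbf J^\top)^{-1}\mathbf y$, so $R=\mathcal O(\sqrt{\mathbf y^\top(\mathbf J\mathbf J^\top)^{-1}\mathbf y})$, and near-linearity then guarantees the true network attains comparably small loss at $\bm\theta^\star$. The constants $C$ and $D$ are obtained by propagating a weight perturbation through each recursion: for the $L$-layer GNN the per-layer Lipschitz factor $(1+3\rho)$ compounds with the propagation bound $\tau$ of Assumption~\ref{assumption:row sum of propagation matrix} across $L-1$ layers to give $C=((1+3\rho)\tau)^{L-1}$, and analogously $C=(1+3\rho/\sqrt2)^{L-1}$ for the $\kappa$-normalized $L$-step RNN, while the factor $D$—equal to $L$ for the depth-growing GNN and RNN and to the constant $4$ for memory—records the structural complexity entering the loss-derivative and remainder control. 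The crucial contrast is the memory-based model: the StopGrad on $\mathbf s^+$ severs backpropagation through the temporal recursion, so only a constant number of matrices carry gradient, collapsing the exponential dependence to the depth-independent $C=\rho$, $D=4$.

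Finally I would run a standard online-gradient-descent regret bound on the (approximately convex, by near-linearity) surrogate along the SGD trajectory, of the form $\frac1N\sum_i\ell(y_if_i(\bm\theta_{i-1})) - \frac1N\sum_i\ell(y_if_i(\bm\theta^\star)) \le \frac{\|\bm\theta^\star-\bm\theta_0\|^2}{2\eta N} + \frac{\eta}{2N}\sum_i\|\nabla\ell_i\|^2 + (\text{remainder})$; substituting $\|\bm\theta^\star-\bm\theta_0\|\le R$, the gradient bound $\mathcal O(C)$, the near-zero comparator loss, and the stated $\eta=\frac{R}{mC\sqrt{2N}}$ balances the first two terms to $\mathcal O(RC/\sqrt N)$, with $D$ entering through the remainder and loss-derivative control to produce the leading $\mathcal O(DRC/\sqrt N)$. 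I expect the main obstacle to be this NTK core: proving near-linearity and the gradient bounds simultaneously for the GNN, RNN, and memory recursions with the correct architecture-dependent $C$ and $D$, and in particular verifying that the second-order remainder is genuinely negligible at width $m^\star$ so that the linearized FLA quantity controls the nonlinear trajectory. The memory case additionally requires checking that StopGrad truncates the dependency graph exactly as claimed, which is precisely what removes the exponential $L$-dependence.
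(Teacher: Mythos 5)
Your proposal follows essentially the same route as the paper's proof: an online-to-batch conversion via a Hoeffding--Azuma martingale bound (the paper's Proposition~1) plus Assumption~4 for the $\Delta$ term, the $0$--$1$-to-logistic surrogate, near-linearity and almost-convexity of the network output in an $\omega$-ball around $\bm{\theta}_0$, architecture-wise gradient bounds giving $C$ (with $D$ counting the trainable weight blocks summed in the regret decomposition, and StopGrad truncating the memory-based recursion exactly as you describe), an online-gradient-descent regret bound against a comparator $\bm{\theta}^\star\in\mathcal{B}(\bm{\theta}_0,Rm^{-1/2})$ balanced by the stated $\eta$, and a minimum-norm linearized comparator yielding $R=\mathcal{O}(\sqrt{\mathbf{y}^\top(\mathbf{J}\mathbf{J}^\top)^{-1}\mathbf{y}})$. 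The one detail to patch is your comparator: solving $\mathbf{J}(\bm{\theta}^\star-\bm{\theta}_0)=\mathbf{y}$ gives margin $1$ and hence \emph{constant} per-sample logistic loss (so the surrogate term would not vanish), whereas the paper interpolates the inflated targets $(B+B')\mathbf{y}$ with $B=-\log(\exp(N^{-1/2})-1)=\Theta(\log N)$ so that each comparator loss is at most $N^{-1/2}$, a scaling then absorbed into the big-$\mathcal{O}$ defining $R$.
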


In Theorem~\ref{theorem:all_generalization}, we show that the generalization error decreases as the number of training iterations $N$ increases, with a single data point being used for training at each iteration.
On the other hand, the generalization error increases with respect to the number of layers/steps $L$ in GNN-/RNN-based methods, the feature-label alignment constant $R$, the maximum Lipschitz constant of the activation function $\rho$, and the graph convolution-related constant $\tau$.
In the following section, we delve into more details on the effect of the number of layers/steps $L$ and the feature-label alignment constant $R$.
The proofs are deferred to Appendices~\ref{section:proof of GNN-based method},~\ref{section:proof of RNN-based method}, and~\ref{section:proof of memory-based method}, respectively.

\subsection{Discussion on the generalization bound: insights and limitations} \label{section:discssion on L and R}

\paragraph{Dependency on depth and steps.} 
The generalization error of GNN- and RNN-based methods tends to increase as the number of layers/steps $L$ increases. This partially explains why the hyper-parameter selection on $L$ is usually small for those methods. For example, \emph{GNN-based method} TGAT uses 2-layer GNN (i.e., $L=3$), \emph{RNN-based method} CAW selects 3-steps RNN (i.e., $L=4$), and \emph{GNN\&RNN-based method} DySAT uses 2-layer GNN and 3-steps RNN to achieve outstanding performance. On the other hand, \emph{memory-based method} JODIE alleviates the dependency issue by using memory blocks  and can leverage all historical interactions for prediction, which enjoys a generalization error independent of the number of steps. However, since gradients cannot flow through the memory blocks due to ``\href{https://pytorch.org/docs/stable/generated/torch.Tensor.detach.html}{stop gradient}'', its expressive power may be lower than that of other methods, which will be further elaborated when discussing the impact of feature-label alignment.
\emph{Memory\&GNN-based methods} TGN and APAN alleviate the lack of expressive power issue by applying a single-layer GNN on top of the memory blocks.


\paragraph{Dependency on feature-label alignment (FLA).} Although the dependency on the number of layers/steps of GNN/RNN can partially explain the performance disparity between these methods, it is still not clear if using ``stop gradient'' in the memory-based method and the selection of input data (e.g., using recent or uniformly sampled neighbors in a temporal graph) can affect the model performance. In the following, we take a closer look at the FLA score, which is inversely correlated to the generalization ability of the TGL models, i.e., the smaller the FLA, the better the generalization ability. According to Figure~\ref{fig:feature_label_alignment_main}, we observe that \circled{1} JODIE (memory-based) has a relatively larger FLA score than most of the other TGL methods. This is potentially due to ``stop gradient'' operation that prevents gradients from flowing through the memory blocks and could potentially hurt the expressive power. \circled{2} APAN and TGN (memory\&GNN-based) alleviate the expressive power degradation issue by applying a single layer GNN on top of the memory blocks, resulting in a smaller FLA than the pure memory-based method. \circled{3} TGAT (GNN-based) has a relatively smaller FLA score than other methods, which is expected since GNN has achieved outstanding performance on static graphs. 
\circled{4} DySAT (GNN\&RNN-based) is originally designed for snapshot graphs, so its FLA score might be highly dependent on the selection of time-span size when converting a temporal graph to snapshot graphs. A non-optimal choice of time-span might cause information loss, which partially explains why DySAT's FLA is large.
Additionally, the selection of the input data also affects the FLA.
We will discuss further in the experimental section with empirical validation and detailed analysis.

\begin{figure*}[h]
\vspace{-4mm}
\centering
\begin{minipage}{0.33\textwidth}
    \includegraphics[width=0.98\textwidth]{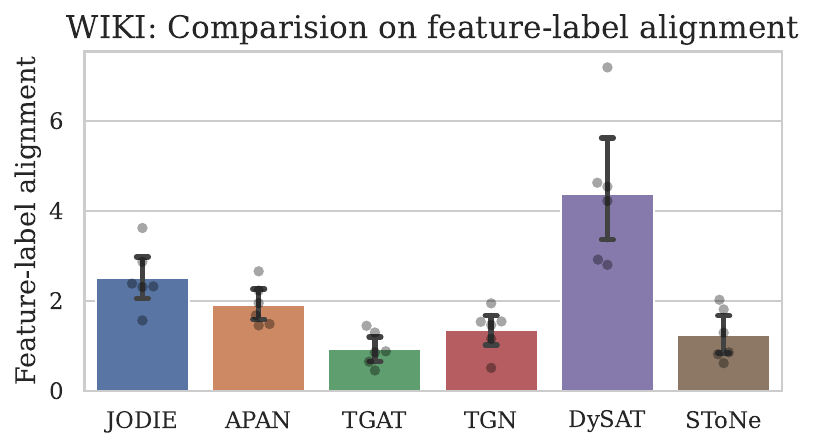}
    \label{fig:WIKI_feature_label_alignment}
\end{minipage}
\hspace{-2mm}
\begin{minipage}{0.33\textwidth}
    \includegraphics[width=0.98\textwidth]{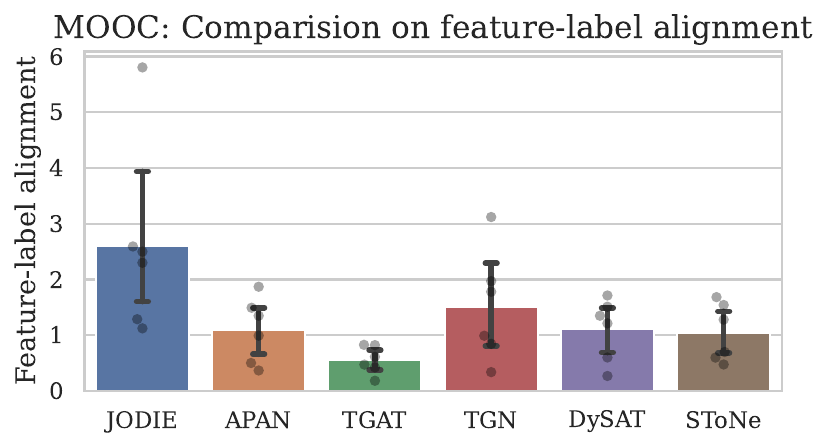}
    \label{fig:MOOC_feature_label_alignment}
\end{minipage}
\hspace{-2mm}
\begin{minipage}{0.33\textwidth}
    \includegraphics[width=0.98\textwidth]{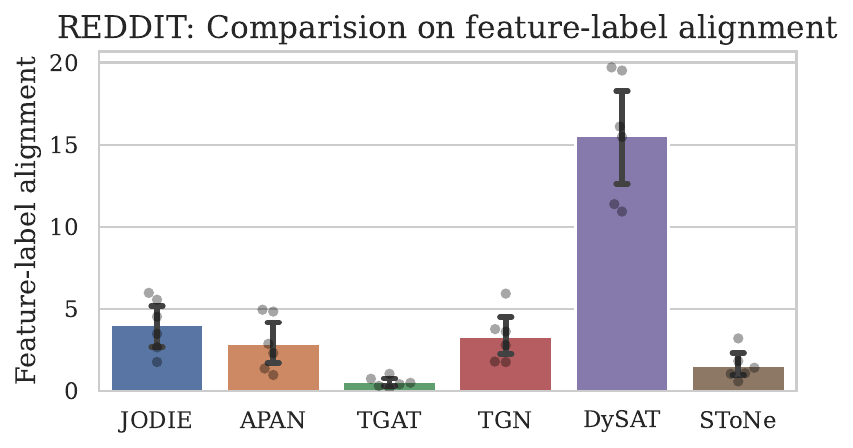}
    \label{fig:REDDIT_feature_label_alignment}
\end{minipage}
\vspace{-3mm}
\caption{Comparison of FLA (y-axis) of different methods (x-axis) on real-world datasets. }
\label{fig:feature_label_alignment_main}
\end{figure*}


\section{A simplified algorithm} \label{section:simple_TGN}
Informed by our theoretical analysis, we introduce \textit{Simplified-Temporal-Graph-Network} (\oure) that not only enjoys a small generalization error but also empirically works well. The design of \our is guided by the following key insights
that are presented in Section~\ref{section:discssion on L and R}:
\begin{itemize} [noitemsep,topsep=0pt,leftmargin=5mm]
    \item \textit{Shallow and non-recursive network.} This is because the generalization error increases with respect to the number of layers/steps in  GNN-/RNN-based methods, which motivates us to consider a shallow and non-recursive neural architecture to alleviate such dependency.
    \item \textit{Selecting proper input data instead of using memory blocks.} Although memory blocks could alleviate the dependency of generalization error on the number of layers/steps, it will also affect the FLA and hurt the generalization ability of the models. As an alternative, we propose to capture the important historical interactions by empirically selecting the proper input data.
\end{itemize}

To this end, we introduce the data preparation and the neural architecture in Section~\ref{section:input data preparation}, then highlight the key features of \our that can differentiate itself from existing methods in Section~\ref{section:comparison to existing methods}.

\subsection{Simplified temporal graph network: input data and neural architecture} \label{section:input data preparation}

\noindent\textbf{Input data preparation.}~
To compute the representation of node $v_i$ at time $t$, we first identify the $K$ most recent nodes that have interacted with $v_i$ prior to time $t$ and denote them as temporal neighbors $\mathcal{N}_K^t(v_i)$. Then, we sort all nodes inside $\mathcal{N}_K^t(v_i)$ by the descending temporal order. 
If a node $v_j$ interacts with node $v_i$ multiple times, each interaction is treated as a separate temporal neighbor.
For example in Figure~\ref{fig:raw_interactions_and_temporal_graph}, for any time $t>t_6$ and large enough constant $K$, we have $\mathcal{N}_K^t(v_4) = \{(v_5, t_6), (v_2, t_4), (v_2, t_3)\}$ in the descending temporal order.
For each temporal neighbor $v_j\in\mathcal{N}_K(v_i,t)$, we represent its interaction with the target node $v_i$ at time $t^\prime$ using a combination of edge features $\mathbf{e}_{ij}(t^\prime) \in \mathbb{R}^{d_\text{e}}$, time-encoding $\bm{\psi}(t-t^\prime) \in \mathbb{R}^{d_\text{t}}$, and node features $\mathbf{x}_i,~\mathbf{x}_j \in \mathbb{R}^{d_\text{n}}$, which is denoted as $\mathbf{u}_{ij}(t^\prime)=[\mathbf{e}_{ij}(t^\prime)~||~\bm{\psi}(t-t^\prime)~||~\mathbf{x}_i~||~\mathbf{x}_j]$. 
Then, we define $\mathbb{H}_i(t) = \{\mathbf{u}_{ij}(t^\prime)~|~(v_j, t^\prime) \in \mathcal{N}_K^t(v_i) \}$ as the set of all features that represent the temporal neighbors of node $v_i$ at time $t$, where all vectors in $\mathbb{H}_i(t)$ are ordered by the descending temporal order.
For example, the interactions between the temporal neighbors of node $v_4$ at time $t>t_6$ to its root node in Figure~\ref{fig:raw_interactions_and_temporal_graph} are $\mathbb{H}_4(t) = \{ \mathbf{u}_{4,5}(t_6), \mathbf{u}_{4,2}(t_4), \mathbf{u}_{4,2}(t_3) \}$, where
$\mathbf{u}_{4,5}(t_6) = [\mathbf{e}_{4,5}(t_6)~||~\bm{\psi}(t-t_6)~||~\mathbf{x}_4~||~ \mathbf{x}_5], ~\mathbf{u}_{4,2}(t_4) =[\mathbf{e}_{4,2}(t_4)~||~\bm{\psi}(t-t_4)~||~\mathbf{x}_4~||~ \mathbf{x}_2]$, and $\mathbf{u}_{4,2}(t_3)=[\mathbf{e}_{4,2}(t_3)~||~\bm{\psi}(t-t_3)~||~\mathbf{x}_4~||~ \mathbf{x}_2]$.
The time-encoding function in \our is defined as $\bm{\psi}(t-t^\prime) = \cos((t-t^\prime) \mathbf{w})$, where $\mathbf{w} = [\alpha^{-(i-1)/\beta}]_{i=1}^{d_\text{t}}$ is a fixed $d_\text{t}$-dimensional vector and $\alpha=\beta=\sqrt{d_\text{t}}$.
Notice that a similar time-encoding function is used in other temporal graph learning methods, e.g.,~\cite{JODIE,TGAT,TGN,CAW,graphmixer}. 


\paragraph{Encoding features via GNN.}~
\our is a GNN-based method with trainable parameters $\bm{\theta}=\{\bm{\alpha}, \mathbf{W}^{(1)}, \mathbf{W}^{(2)}\}$, where $\bm{\alpha} \in \mathbb{R}^K$, $\mathbf{W}^{(1)} \in \mathbb{R}^{d_\text{hid}\times d_\text{in}}$, and $\mathbf{W}^{(2)} \in \mathbb{R}^{d_\text{out}\times d_\text{hid}}$. Here $K$ is the maximum temporal neighbor size we consider, $d_\text{in} = d_\text{e}+d_\text{t}+2d_\text{n}$, and $d_\text{hid}, d_\text{out}$ are the dimensions of hidden and output representations.
The representation of node $v_i$ at time $t$ is computed by
\begin{equation}\label{eq:self-attention-forward}
    \mathbf{h}_i(t) = \mathbf{W}^{(2)} \text{LayerNorm}(\mathbf{z}_i(t)), 
    \mathbf{z}_i(t) = \sigma\Big(\sum\nolimits_{k=1}^{|\mathbb{H}_i(t)|} \alpha_k  \mathbf{W}^{(1)}[\mathbb{H}_i(t)]_k \Big) 
    +\sum\nolimits_{k=1}^{|\mathbb{H}_i(t)|} [\mathbb{H}_i(t)]_k,
\end{equation}
If the temporal neighbor size of node $v_i$ is less than $K$, i.e., $|\mathbb{H}_i(t)|<K$, we only need to update the first $|\mathbb{H}_i(t)|$ entries of the vector $\bm{\alpha}$.
Notice that the number of parameters in \our is $(K+d_\text{hid} d_\text{in} + d_\text{hid} d_\text{out})$, which is usually fewer than other TGL algorithms given the same hidden dimension size. We will report the number of parameters and its computational cost in the experiment section.

\noindent\textbf{Link prediction via MLP.}~
When the downstream task is future link prediction, we predict whether an interaction between nodes $v_i, v_j$ occurs at time $t$ by applying a 2-layer MLP model on $[\mathbf{h}_i(t)~||~\mathbf{h}_j(t)] \in \mathbb{R}^{2d_\text{out}}$. It is worth noting that the same link classifier is used in almost all the existing temporal graph learning methods~\cite{JODIE,TGAT,TGN,CAW,souza2022provably,TGL,graphmixer}, including \oure.

\subsection{Comparison to existing methods} \label{section:comparison to existing methods}

\paragraph{Comparison to TGAT.}GNN-based method TGAT uses 2-hop uniformly sampled neighbors and aggregates the information using a 2-layer GAT~\cite{GAT}. The neighbor aggregation weights in TGAT are estimated by self-attention. 
In contrast, \our uses 1-hop most recent neighbors and directly learns the neighbor aggregation weights $\bm{\alpha}$ as shown in Eq.~\ref{eq:self-attention-forward}.
Moreover, self-attention in TGAT can be thought of as weighted average of neighbor information, while \our can be thought of as sum aggregation, which can better distinguish different temporal neighbors, and it is especially helpful when node and edge features are lacking.
\paragraph{Comparison to TGN.}Memory\&GNN-based method TGN uses 1-hop most recent temporal neighbors
and applies a self-attention module to the sampled temporal neighbors' features that are stored inside the memory blocks. In fact, \our can be thought of as a special case of TGN, where we use the features in $\mathbb{H}_i(t)$ instead of the memory blocks and directly learn the neighbor aggregation weight $\bm{\alpha}$ instead of using the self-attention aggregation as shown in Eq.~\ref{eq:self-attention-forward}.
\paragraph{Comparison to GraphMixer.}\our could be think of as a simplified version of \cite{graphmixer} that addresses the high computation cost associated with the MLP-mixer used for temporal aggregation \cite{tolstikhin2021mlp}. Instead of relying on the MLP-mixer, we introduce the aggregation vector $\bm{\alpha}$ and employ linear functions parameterized by $\smash{\mathbf{W}^{(1)}}$ and $\smash{\mathbf{W}^{(2)}}$ for aggregation. Additionally, we do not explicitly model the graph structure through \cite{graphmixer}'s node-encoder. Instead, we implicitly capture the node features within the temporal interactions present in $\mathbb{H}(t)$. Our experiments demonstrate that \our significantly reduces the model complexity (Table~\ref{table:complexity}) while achieving comparable performance (Figure~\ref{fig:validation_curves} and Table~\ref{table:average_precision}).
\paragraph{Temporal graph construction.}Most of the TGL methods~\cite{JODIE,TGAT,TGN,DySAT,CAW} implements temporal graphs as directed graph data structure with information only flowing from source to destination nodes. However, we consider the temporal graph as an bi-directed graph data structure by assuming that information also flow from destination to source nodes for \oure. This means that the ``most recent 1-hop neighbors'' sampled for the two nodes on the ``bi-directed'' temporal graph should be similar if two nodes are frequently connected in recent timestamps. Such similarity provides information on whether two nodes are frequently connected in recent timestamps, which is essential for temporal graph link prediction~\cite{graphmixer}. 
For example, let us assume nodes $v_i, v_j$ interacts at time $t_1,t_2$ in the temporal order. Then, given any timestamp $t>t_2$ and a large enough $K$, the 1-hop temporal neighbors on the bi-directed graph is $\mathcal{N}_K^t(v_i)=\{(v_j,t_1), (v_j,t_2)\}$ and $\mathcal{N}_K^t(v_j)=\{(v_i,t_1), (v_i,t_2)\}$, while on directed graph is $\mathcal{N}_K^t(v_i)=\{(v_j,t_1), (v_j,t_2)\}$ but $\mathcal{N}_K^t(v_j)=\emptyset$.
Intuitively, if two nodes are frequently connected in recent timestamps, they are also likely to be connected in the near future.
In the experiment section, we show that changing the temporal graph from bi-directed to directed can negatively impact the feature-label alignment and model performance.

\section{Experiments}\label{section:experiments}

We compare \our with several TGL algorithms under the transductive learning setting.
We conduct experiments on 6 real-world datasets, i.e., Reddit, Wiki, MOOC, LastFM, GDELT, and UCI. Similar to many existing works, we also use the $70\%/15\%/15\%$ chronological splits for the train/validation/test sets. We re-run the official released implementations on benchmark datasets and repeat 6 times with different random seeds. 
Please refer to~Appendix~\ref{section:experiment setup details} for experiment setup details.

\subsection{Experiment results} \label{section:main_experiment_results}

\noindent\textbf{Comparison on average precision.}
We compare the average precision score with baseline methods in Table~\ref{table:average_precision}.
We observe that \our could achieve compatible or even better performance on most datasets.
In particular, the performance of \our outperforms most of the baselines with a large margin on the LastFM and UCI dataset. This is potentially because these two datasets lack node/edge features and have larger average time-gap (see dataset statistics in Table~\ref{table:dataset_stat}). Since baseline methods rely on RNN or self-attention to process the node/edge features, they implicitly assume that the features exists and are ``smooth'' at adjacent timestamps,  which could generalize poorly when the assumptions are violated. PINT addresses this issue by pre-processing the dataset and generating its own positional encoding as the augment features, therefore it achieves better performance than \our on UCI dataset.
However, computing this positional encoding is time-consuming and does not always perform well on other datasets. For example, PINT requires more than $400$ hours  to compute the positional features on GDELT, which is infeasible.
Additionally, the performance of \our closely matches that of GraphMixer, but with significantly lower model complexity, which will be demonstrated in Table~\ref{table:complexity}. \our exhibits improved and more consistent performance on UCI dataset, because \our is less susceptible to overfitting on small dataset.

\begin{table*}[t]
\caption{Comparison on the average precision score for link prediction. We highlight the best score in red, the second best score in blue, and the third best in green. }  
\label{table:average_precision}
\vspace{-3mm}
\centering\setlength{\tabcolsep}{2.5mm}
\scalebox{0.9}{
\begin{tabular}{ l cccc cc}
\hline\hline
       & Reddit & Wiki & MOOC & LastFM & GDELT & UCI  \\
      \hline\hline
JODIE & $\textcolor{green}{99.75} \pm 0.02$ & $98.94 \pm 0.06$ & $98.99 \pm 0.04$ & $79.41 \pm 3.68$ & $\textcolor{green}{98.48} \pm 0.01$ & $73.93 \pm 2.77$ \\ 
TGAT  &  $99.56 \pm 0.04$ & $98.69 \pm 0.10$ & $99.28 \pm 0.07$ & $75.16 \pm 0.10$ & $96.46 \pm 0.04$ & $82.49 \pm 0.67$ \\ 
TGSRec & $95.21 \pm 0.08$ & $91.64 \pm 0.12$ & $83.62 \pm 0.34$ & $76.91 \pm 0.87$ & $97.03 \pm 0.61$ & $76.64 \pm 0.54$ \\
TGN   &  $98.83 \pm 0.01$ & $\textcolor{green}{99.61} \pm 0.05$ & $\textcolor{green}{99.63} \pm 0.06$ & $\textcolor{green}{91.04} \pm 2.18$ & $98.33 \pm 0.09$ & $79.91 \pm 1.59$ \\ 
APAN & $99.36 \pm 0.17$ & $98.99 \pm 0.14$ & $99.02 \pm 0.11$ & $73.18 \pm 7.72$ & $98.01 \pm 0.26$ & $61.73 \pm 5.96$ \\ 
PINT & $99.03 \pm 0.01$ & $98.78 \pm 0.10$ & $87.24 \pm 0.73$ & $88.06 \pm 0.71$ & - & $\textcolor{red}{96.22} \pm 0.08$ \\
CAW-attn& $98.51 \pm 0.02$ & $97.95 \pm0.03$ & $63.07\pm0.82$ & $76.31 \pm 0.10$ & $95.06 \pm 0.11 $ & $92.16 \pm 0.19$ \\ 
DySAT & $98.55 \pm 0.01$ & $96.64 \pm 0.03$ & $98.76 \pm 0.11$ & $76.28 \pm 0.04$ & $98.17 \pm 0.01$ & $80.43 \pm 0.36$ \\ 
GraphMixer & $\textcolor{red}{99.93} \pm 0.01$ & $\textcolor{red}{99.85} \pm 0.01$ & $\textcolor{red}{99.91} \pm 0.01$ & $\textcolor{red}{96.31} \pm 0.02$ & $\textcolor{blue}{98.89} \pm 0.02$ & $\textcolor{green}{92.39} \pm 2.15$ \\ \hline
\our &  $\textcolor{blue}{99.89} \pm 0.00$ & $\textcolor{blue}{99.85} \pm 0.05$ & $\textcolor{blue}{99.88} \pm 0.04$ &  $\textcolor{blue}{95.74} \pm 0.13$ & $\textcolor{red}{99.11} \pm 0.03$ & $\textcolor{blue}{94.60} \pm 0.31$ \\ 
\hline\hline
\end{tabular}}
\vspace{-5mm}
\end{table*}


\noindent\textbf{Comparison on generalization performance.}
To validate the generalization performance of \oure, we compare the average precision score and the generalization gap in Figure~\ref{fig:validation_curves}. The generalization gap is defined as the absolute difference between the training and validation average precision scores.
Our results show that \oure, similar to GraphMixer, consistently achieves a higher average precision score faster than other baselines, has a smaller generalization gap, and has relatively stable performance across different epochs.
This suggests that \our has better generalization performance compared to the baselines.
In particular on the UCI dataset, \our is less prone to overfit and its generalization gap increases more slowly than all other baseline methods.

\begin{figure*}[h]
\vspace{-3mm}
\centering
\begin{minipage}{0.32\textwidth}
    \includegraphics[width=0.98\textwidth]{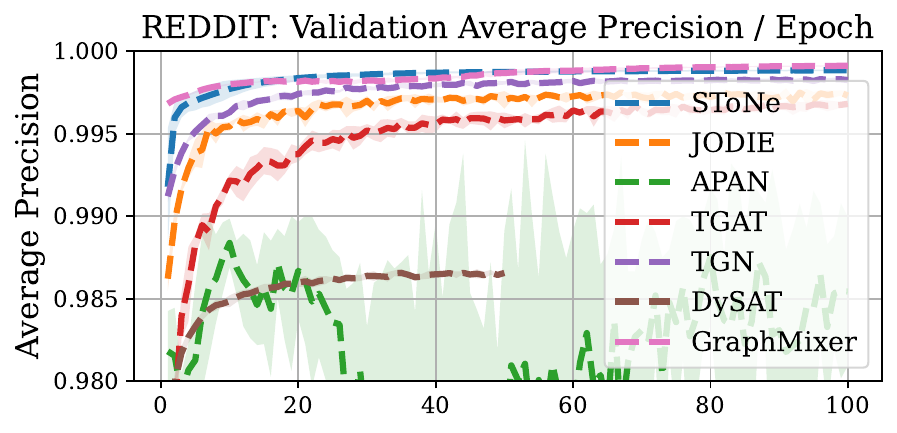}
    \label{fig:validation_curves_REDDIT}
\end{minipage}
\vspace{-1.5mm}
\begin{minipage}{0.32\textwidth}
    \includegraphics[width=0.98\textwidth]{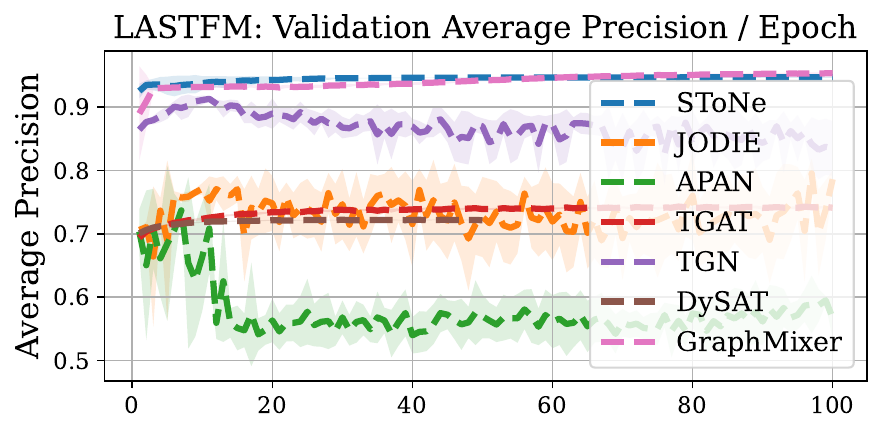}
    \label{fig:validation_curves_LASTFM}
\end{minipage}
\vspace{-1mm}
\begin{minipage}{0.32\textwidth}
    \includegraphics[width=0.98\textwidth]{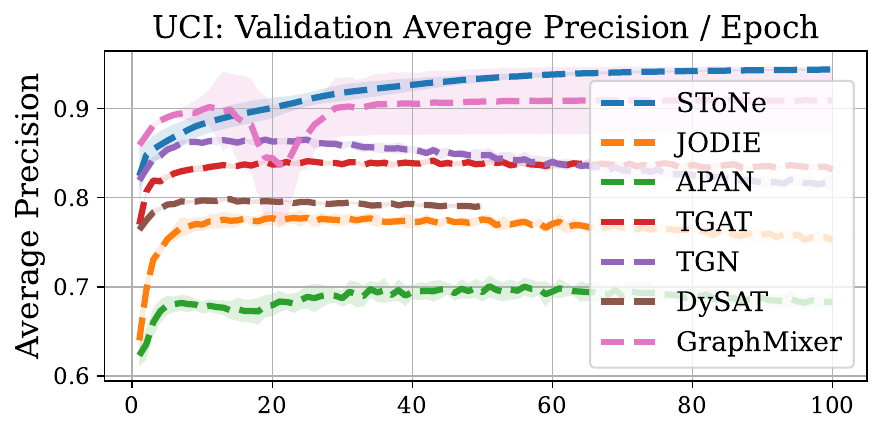}
    \label{fig:validation_curves_UCI}
\end{minipage}
\vspace{-1mm}
\begin{minipage}{0.32\textwidth}
    \includegraphics[width=0.98\textwidth]{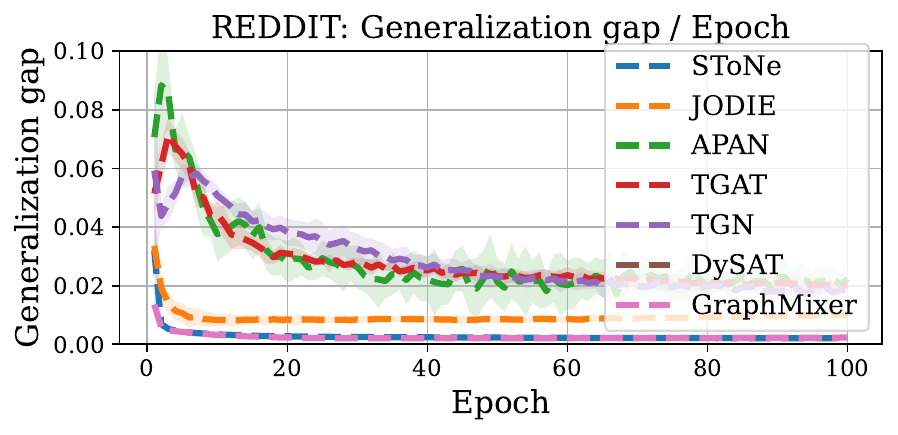}
    \label{fig:validation_curves_UCI}
\end{minipage}
\vspace{-1mm}
\begin{minipage}{0.32\textwidth}
    \includegraphics[width=0.98\textwidth]{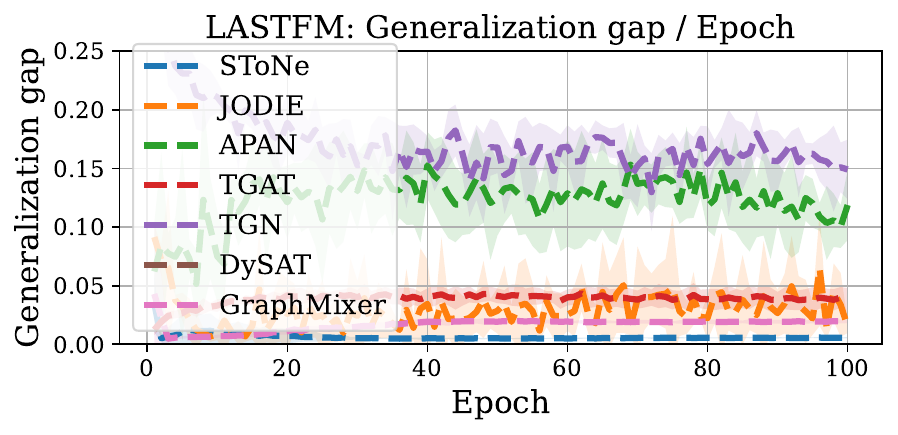}
    \label{fig:validation_curves_UCI}
\end{minipage}
\vspace{-1mm}
\begin{minipage}{0.32\textwidth}
    \includegraphics[width=0.98\textwidth]{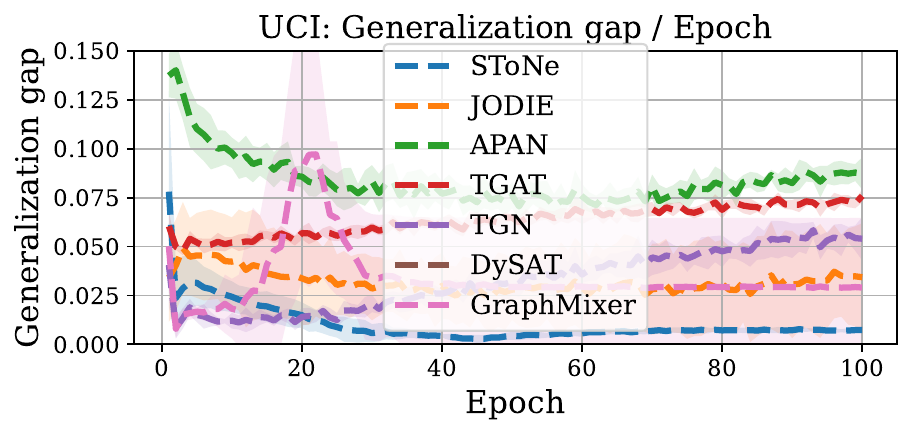}
    \label{fig:validation_curves_UCI}
\end{minipage}
\vspace{-1mm}
\caption{Comparison of the average prevision of validation set and generalization gap of different methods on real-world datasets. }
\label{fig:validation_curves}
\vspace{-2mm}
\end{figure*}

\noindent\textbf{Comparison on model complexity.} We compare the number of parameters (including trainable weight parameters and memory block size) and wall-clock time per epoch during the training phase in Table~\ref{table:complexity}.
Our results show that \our has fewer parameters than all the baselines, and its computation time is also faster than most baseline methods.
Note that some baselines also require significant time for data preprocessing, which is not included in Table~\ref{table:complexity}.
For example, PINT takes more than $84$ hours to pre-compute the positional encoding on the Reddit dataset, which is significantly longer than its per epoch training time.
In contrast,  \our does not require computing augmented features based on the temporal graph and therefore does not have this pre-processing time.


\begin{table}[h]
\caption{ Comparison of the number of parameters ($\times 10^5$ parameters) and wall-clock time (second) of single epoch of training in the format ``\textit{Number of parameters} (\textit{Wall-clock time})''. }
\vspace{-3mm}
\label{table:complexity}
\centering
\centering\setlength{\tabcolsep}{0.5mm}
\scalebox{0.81}{
\begin{tabular}{l|lllllllll|l}
\hline
       & JODIE     & TGAT      & TGSRec       & TGN       & APAN       & PINT        & CAW           & DySAT     & GraphMixer & SToNe      \\ \hline\hline
Reddit & 11.6 (5s) & 2.1 (15s) & 5.1 (538s)   & 14.1 (8s) & 12.3 (13s) & 17.1 (436s) & 43.3 (1,930s) & 4.3 (33s) & 23.3 (12s) & 0.58 (8s)  \\ 
Wiki   & 9.8 (2s)  & 2.1 (4s)  & 4.8 (157s)   & 12.3 (2s) & 10.6 (4s)  & 17.9 (93s)  & 43.3 (282s)   & 4.3 (6s)  & 19.8 (3s) & 0.58 (2s)  \\ 
MOOC   & 7.8 (4s)  & 1.4 (8s)  & 3.9 (656s)   & 10.0 (5s) & 8.4 (9s)   & 10.1 (157s) & 43.3 (653s)   & 2.3 (16d) & 15.3 (7s) & 0.41 (5s)  \\ 
LastFM & 2.6 (11s) & 1.4 (28s) & 2.3 (1,810s) & 4.8 (15s) & 3.2 (28s)  & 4.9 (440s)  & 43.3 (1,832s) & 2.3 (41s) & 5.2 (21s) & 0.41 (16s) \\ \hline
\end{tabular}}
\vspace{-3mm}
\end{table}




\noindent\textbf{More experiment results.}
Due to the space limit, we defer more experiment results to Appendix~\ref{section:more experiments}, such as performance evaluation under different metrics (e.g., AUC, Recall\@K, and MRR), ablation study on the effect of neighbor selection (e.g., recent vs randomly sampled neighbors), etc.



\section{Conclusion}
In this paper, we study the generalization ability of various TGL algorithms.
We reveal the relationship between the generalization error to ``\textit{the number of layers/steps}'' and ``\textit{the feature-label alignment}''. 
Guided by our analysis, we propose \oure. 
Extensive experiments are conducted on real-world datasets to show that \our enjoys a smaller generalization error, better performance, and lower complexity. 
These results provide a deeper insight into TGL from a theoretical perspective and are beneficial to design practically efficacious and simpler TGL algorithms for future studies.

\section*{Acknowledgements}
This work is supported in part by NSF Award \#2008398, \#2134079, and \#1939725.

\bibliography{iclr2024_conference}
\bibliographystyle{apalike}

\clearpage

\appendix
\noindent\textbf{Organization.} The supplementary material is organized as follows:
\begin{itemize}
    \item In Section~\ref{section:experiment setup details}, we provide an overview on the hardware and software that used in the experiments, including details on the datasets, as well as the implementation of the baseline algorithms and \oure.
    \item In Section~\ref{section:more experiments}, we present additional experimental results using different evaluation metrics (AUC, Recall@K, and MRR), as well as ablation studies to further analyze the contributions of different components in \oure.
    \item In Section~\ref{section:proof of GNN-based method}, Section~\ref{section:proof of RNN-based method}, and Section~\ref{section:proof of memory-based method}, we provide the thorough proof  on the generalization analysis of GNN-based, RNN-based, and memory-based TGL algorithms.
    \item In Section~\ref{section:more details on related works}, we provide a summary of additional related works on the expressive power and generalization of graph learning algorithms, as well as additional TGL algorithms.
    \item In Section~\ref{section:FAQ}, we provide further discussions and clarifications on several important details of this paper, e.g., details on related works and backgrounds, discussions on our theoretical limitations, and future clarifications on experiment settings and results.
\end{itemize}


\section{Experiment setup details}\label{section:experiment setup details}

\subsection{Hardware specification and environment}
Our experiments are conducted on a single machine with an Intel i9-10850K processor, an Nvidia RTX 3090 GPU, and 64GB of RAM. The experiments are implemented in Python 3.8 using PyTorch 1.12.1 on CUDA 11.6 and the temporal graph learning framework~\cite{TGL}. To ensure a fair comparison, all experiments are repeated 6 times with random seeds $\{0, 1, 2, 3, 4, 5\}$.

\subsection{Details on datasets} \label{section:dataset_stat}
We conduct experiments on the following 6 datasets, where the detailed dataset statistics are summarized in Table~\ref{table:dataset_stat}. The download links for all datasets can be found in the code repository.
\begin{itemize}
    \item Reddit dataset consists of one month of posts made by users on subreddits. The link feature is extracted by converting the text of each post into a feature vector. Reddit dataset has been previously used in existing works such as~\cite{JODIE,TGAT,TGN,souza2022provably,CAW,TGL}.
    \item Wiki dataset consists of one month of edits made by edits on Wikipedia pages. The link feature is extracted by converting the edit test into an LIWC-feature vector. Wiki dataset has been previously used in existing works such as~\cite{JODIE,TGAT,TGN,souza2022provably,CAW,TGL}.
    \item LastFM dataset consists of one month of who-listen-to-which song information. LastFM dataset has been previously used in existing works such as~\cite{JODIE,souza2022provably,TGL}.
    \item MOOC dataset consists of actions  done by students on a MOOC online course. MOOC dataset has been previously used in existing works such as~\cite{JODIE,souza2022provably,TGL}.
    \item GDELT dataset is a temporal knowledge graph dataset that originates from the Event Database. The Event Database records events from around the world as reported in news articles. It has been previously used in works such as~\cite{TGL,graphmixer}. We follow~\cite{graphmixer} to subsample 1 temporal link per 100 continuous temporal link because the original dataset is too big to fit into CPU RAM memory for single-machine training.
    \item UCI dataset is a publicly available communication network dataset that includes email interactions between core employees and messages sent between peer users on an online social network platform. It has been previously used in works such as~\cite{DySAT,souza2022provably}.
\end{itemize}

\begin{table*}[h]
\caption{Dataset statistic.} \label{table:dataset_stat}
\centering
\scalebox{0.94}{
\begin{tabular}{ l r r r r r c c c}
\hline\hline
       & $|\mathcal{V}|$ & $|\mathcal{E}|$ & Avg time-gap & $\text{dim}(\mathbf{x}_i^\text{node})$ & $\text{dim}(\mathbf{x}_{ij}^\text{link})$ & Node & Link & Time\\ \hline\hline
Reddit & 10,984 & 672,447 & 4& 0 & 172 &$\tikzxmark$&$\tikzcmark$& $\tikzcmark$\\ 
Wiki   & 9,227 & 157,474 & 17& 0  & 172  &$\tikzxmark$&$\tikzcmark$&$\tikzcmark$\\ 
MOOC   &  7,144 & 411,749 & 3.6 & 0 & 0    &$\tikzxmark$&$\tikzxmark$&$\tikzcmark$\\ 
LastFM & 1,980 & 1,293,103 & 106 & 0 & 0   &$\tikzxmark$&$\tikzxmark$&$\tikzcmark$\\ 
GDELT  & 8,831 & 1,912,909 & 0.1 & 413 & 186 &$\tikzcmark$&$\tikzcmark$&$\tikzcmark$\\
UCI    & 1,900 & 59,835 & 279 & 0 & 0 &$\tikzxmark$&$\tikzcmark$& $\tikzcmark$\\ 
\hline
\hline
\end{tabular}}
\vspace{-3mm}
\end{table*}


\subsection{Details on baseline implementations}
The implementations of JODIE, DySAT, TGAT, TGN, and APAN are obtained from the TGL framework~\cite{TGL} at \textcolor{blue}{TGL-code}\footnote{\url{https://github.com/amazon-research/tgl}}.
This framework's implementation has been found to achieve better overall scores than the original implementations of these baselines. 

The implementation of CAWs-attn is obtained from their official implementation at \textcolor{blue}{CAW-code}\footnote{\url{https://github.com/snap-stanford/CAW}}. 

The implementation of TGSRec is obtained from \textcolor{blue}{TGSRec-code}\footnote{\url{https://github.com/DyGRec/TGSRec}}.

The implementation of PINT is obtained from \textcolor{blue}{PINT-code}\footnote{\url{https://github.com/AaltoPML/PINT}}. 

The implementation of GraphMixer is obtained from \textcolor{blue}{GraphMixer-code}\footnote{\url{https://github.com/CongWeilin/GraphMixer}}.

We follow their instructions that are provided in the code repository for hyper-parameter selection.
We directly test with their official implementations by changing our data structure to their required structure and use their default hyper-parameters.

\subsection{Details on \our implementations}
We implement the proposed method \our under the TGL framework~\cite{TGL} and use the same hyper-parameters on all datasets (e.g., learning rate $0.0001$, weight decay $10^{-6}$, batch size $600$, hidden dimension $100$) to ensure a fair comparison. The models are trained until the validation error did not decrease for 20 epochs. One of the most important dataset-dependent hyper-parameter is the number of temporal graph neighbors $K$. The selection of $K$ can be found in our repository.

\clearpage

\section{More experiment results}\label{section:more experiments}


\subsection{Transductive learning with AUC as evaluation metric}
AUC (Under the ROC Curve) is one of the most widely accepted evaluation metrics for link prediction, which has been used in many existing works~\cite{TGAT,TGN}. In the following, we compare the AUC score of \our with baselines in Table~\ref{table:transductive_AUC_score}.
We observe that \our performs better than the baselines on most datasets. While our performance is slightly lower than PINT on the UCI dataset, PINT requires a significant amount of time to pre-compute  the positional features as augmented features (about 3 hours for the UCI dataset) and the use of positional encoding in PINT does not always perform well on other datasets. Besides, we do not report the results of PINT on the GDELT dataset because it requires more than $400$ hours to compute the positional features for training.
Additionally, the performance of \our closely matches that of GraphMixer, but with significantly lower model complexity, which will be demonstrated in Table~\ref{table:complexity}. \our exhibits improved and more consistent performance compared on UCI dataset, which is attributed to its small dataset size and \our is less susceptible to overfitting on small dataset.
Please also refer to our discussion on computational time and average precision score in Section~\ref{section:main_experiment_results} for more details.

\begin{table}[h]
\caption{Comparison on the AUC score for link prediction. We highlight the best score in red, the second best score in blue, and the third best in green.}  
\label{table:transductive_AUC_score}
\centering\setlength{\tabcolsep}{2.3mm}
\scalebox{0.85}{
\begin{tabular}{ l cccc cc}
\hline\hline
       & Reddit & Wiki & MOOC & LastFM & GDELT & UCI  \\
      \hline\hline
JODIE & $\textcolor{green}{99.78} \pm 0.01$ & $99.10 \pm 0.05$ & $99.46 \pm 0.07$ & $80.48 \pm 5.84$ & $\textcolor{green}{98.66} \pm 0.01$ & $76.15 \pm 3.28$ \\ 
TGAT  &  $99.59 \pm 0.04$ & $98.78 \pm 0.08$ & $99.53 \pm 0.03$ & $78.49 \pm 0.09$ & $97.21 \pm 0.02$ & $83.30 \pm 0.40$ \\ 
TGSRec & $94.74 \pm 0.20$ & $91.32 \pm 0.19$ & $80.70 \pm 2.31$ & $76.66 \pm 1.54$ & $96.72 \pm 0.42$ & $81.00 \pm 0.58$ \\
TGN   &  $98.86 \pm 0.01$ & $\textcolor{green}{99.62} \pm 0.05$ & $\textcolor{green}{99.77} \pm 0.04$ & $\textcolor{green}{92.54} \pm 2.24$ & $98.55 \pm 0.05$ & $75.32 \pm 2.34$ \\ 
APAN & $99.50 \pm 0.11$ & $99.12 \pm 0.11$ & $99.43 \pm 0.08$ & $78.15 \pm 6.99$ & $98.39 \pm 0.15$ & $64.39 \pm 7.42$ \\ 
PINT & $99.03 \pm 0.01$ & $98.78 \pm 0.10$ & $87.24 \pm 0.73$ & $88.06 \pm 0.71$ & - & $\textcolor{red}{94.22} \pm 0.58$ \\
CAW-attn& $98.30 \pm 0.01$ & $97.89 \pm0.02$ & $63.95\pm0.81$ & $72.93 \pm 0.54$ & $95.13 \pm 0.11 $ & $92.10 \pm 0.10$ \\ 
DySAT & $98.43 \pm 0.01$ & $96.87 \pm 0.03$ & $99.25 \pm 0.05$ & $73.93 \pm 0.08$ & $98.38 \pm 0.01$ & $78.70 \pm 0.05$ \\
GraphMixer & $\textcolor{red}{99.94} \pm 0.01$ & $\textcolor{blue}{99.82} \pm 0.01$ & $\textcolor{red}{99.93} \pm 0.01$ & $\textcolor{red}{97.38} \pm 0.02$ & $\textcolor{blue}{98.89} \pm 0.02$ & $\textcolor{green}{91.74} \pm 2.69$ \\ \hline
\our &  $\textcolor{blue}{99.91} \pm 0.00$ & $\textcolor{red}{99.85} \pm 0.05$ & $\textcolor{blue}{99.91} \pm 0.02$ &  $\textcolor{blue}{97.16} \pm 0.06$ & $\textcolor{red}{99.26} \pm 0.02$ & $\textcolor{blue}{93.14} \pm 0.67$ \\ 
\hline\hline
\end{tabular}}
\vspace{-5mm}
\end{table}


%

\subsection{Ablation study on the effect of neighbor selection}

We study the effect of model input selection on feature-label alignment and average precision scores. As shown in Figure~\ref{fig:feature_label_alignment_main_appendix}, changing the default setting of ``recent 1-hop neighbors'' in \our to either ``recent 2-hop neighbors'' or ``random 1-hop neighbor'' increases feature-label alignment score, which has a inverse correlation to the generalization ability, and decreases average precision scores.

\begin{figure*}[h]
\vspace{-3mm}
\centering
\begin{minipage}{0.31\textwidth}
    \includegraphics[width=0.98\textwidth]{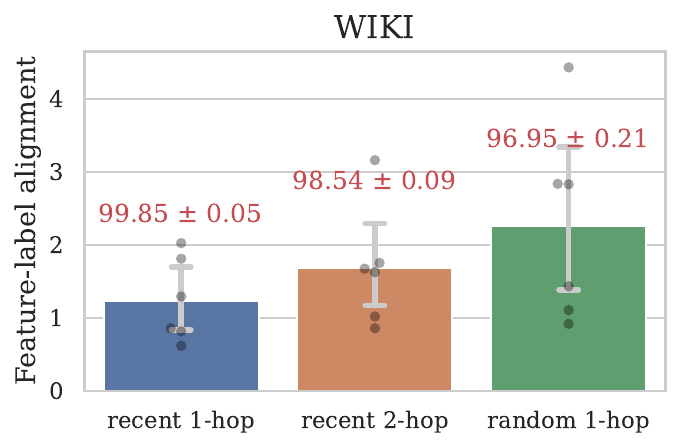}
    \label{fig:WIKI_feature_label_alignment_appendix}
\end{minipage}
\hspace{-2mm}
\begin{minipage}{0.31\textwidth}
    \includegraphics[width=0.98\textwidth]{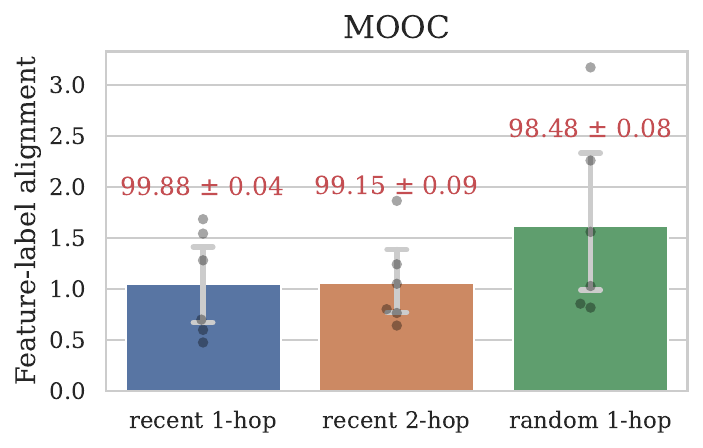}
    \label{fig:MOOC_feature_label_alignment_appendix}
\end{minipage}
\hspace{-2mm}
\begin{minipage}{0.31\textwidth}
    \includegraphics[width=0.98\textwidth]{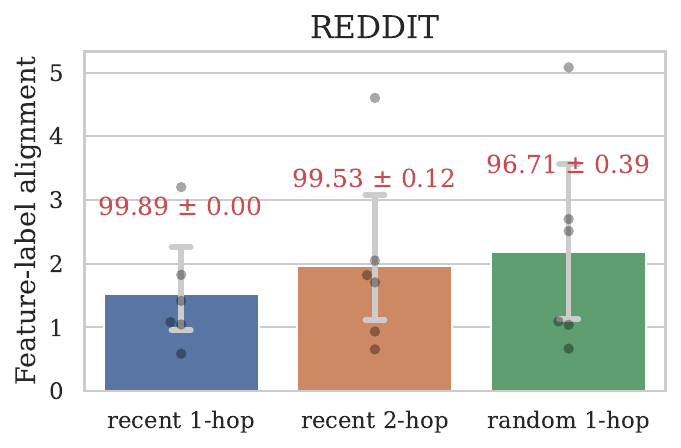}
    \label{fig:REDDIT_feature_label_alignment_appendix}
\end{minipage}
\vspace{-4mm}
\caption{Comparison of feature-label alignment (y-axis) and average precision score (in red text) of different model input selection (x-axis) on real-world datasets. }
\label{fig:feature_label_alignment_main_appendix}
\vspace{-3mm}
\end{figure*}

\subsection{Directed versus bi-directed graph.} We study the impact of using directed/bi-directed graph data structure on the feature-label alignment (FLA) and average precision score. As shown in Figure~\ref{fig:feature-label-alignment-direct-undirected},  changing from a bi-directed to a directed graph will increase the FLA score and lead to a significant decrease in model performance. 
This is because ``the recent 1-hop neighbors'' sampled for the two nodes on the bi-directed graph could provide information on whether two nodes are frequently connected in the last few timestamps, which is essential for temporal graph link prediction. 
However, such type of information is missing if using directed graph as discussed in the last paragraph of Section~\ref{section:comparison to existing methods}. 

\begin{figure}[h]
    \begin{center}
    \includegraphics[width=0.6\textwidth]{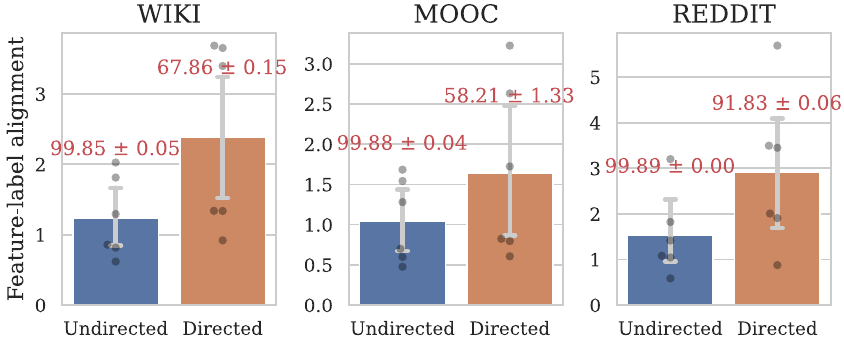}
  \end{center}
  \vspace{-4mm}
  \caption{Compare the feature-label alignment (y-axis) and average precision (red text) with directed/undirected temporal graph.}
  \label{fig:feature-label-alignment-direct-undirected}
\end{figure}

\subsection{Transductive learning with Recall@K and MRR as evaluation metrics}

Recall@K and MRR (mean reciprocal rank) are popular evaluation metrics commonly used in the real-world recommendation system. Higher values indicate better model performance. 
Our implementation of Recall@K and MRR follows the implementation in~\cite{hu2020open,graphmixer}: we first sample 100 negative destination nodes for each source node of a temporal link node pair, then our goal is to rank the positive temporal link node pairs higher than 100 negative destination nodes. 
As shown in Table~\ref{table:recall_mmr}, \our and GraphMixer has higher Recall@K and MRR scores than other baselines, indicating that \our has more confidence in the estimated categories. This might because \our has better generalization ability and higher confidence in its predictions when using the knowledge learned from the training set.


\begin{table}[t]
\caption{Comparison on the Recall@1, Recall@5 and, MRR. For exch row, the best score is in \textcolor{red}{red} text, second best score is in \textcolor{blue}{blue} text, and the first best score is in \textcolor{green}{green}. } \label{table:recall_mmr}
\centering
\centering\setlength{\tabcolsep}{1mm}
\scalebox{0.76}{
\begin{tabular}{llcccccc}
\hline\hline
                        &     & JODIE & DySAT & TGAT & TGN & GraphMixer & \our \\ \hline\hline
\multirow{3}{*}{Reddit} & Recall@1 & $0.8773\pm0.0121$ & $0.8454\pm0.0007$ & $0.8428\pm0.0130$ & $\textcolor{green}{0.9294}\pm0.0028$ & $\textcolor{red}{0.9999} \pm 0.0000$ & $\textcolor{blue}{0.9998}\pm0.0001$ \\ 
                        & Recall@5 & $0.9712\pm0.0012$ & $0.9362\pm0.0006$ & $0.9531\pm0.0077$ & $\textcolor{green}{0.9823}\pm0.0012$ & $\textcolor{red}{1.0000} \pm 0.0000$ & $\textcolor{red}{1.0000}\pm0.0000$ \\ 
                        & MRR & $0.8425\pm0.0152$ & $0.8153\pm0.0016$ & $0.8219\pm0.0116$ & $\textcolor{green}{0.9072}\pm0.0040$ & $\textcolor{red}{0.9965} \pm 0.0000$ & $\textcolor{blue}{0.9958}\pm0.0002$ \\ \hline
\multirow{3}{*}{Wiki}   & Recall@1 & $0.8347\pm0.0051$ & $0.7689\pm0.0043$ & $0.7139\pm0.0150$ & $\textcolor{green}{0.8526}\pm0.0050$ & $\textcolor{blue}{0.9954} \pm 0.0002$ & $\textcolor{red}{0.9955}\pm0.0002$ \\ 
                        & Recall@5 & $0.9286\pm0.0033$ & $0.8763\pm0.0080$ & $0.8915\pm0.0090$ & $\textcolor{green}{0.9293}\pm0.0026$ & $\textcolor{blue}{0.9971} \pm 0.0002$ & $\textcolor{red}{0.9980}\pm0.0001$ \\ 
                        & MRR & $0.8145\pm0.0048$ & $0.7450\pm0.0084$ & $0.6999\pm0.0128$ & $\textcolor{green}{0.8343}\pm0.0063$ & $\textcolor{red}{0.9875} \pm 0.0004$ & $\textcolor{blue}{0.9827}\pm0.0009$ \\ \hline
\multirow{3}{*}{MOOC}   & Recall@1 & $0.8637\pm0.0130$ & $0.8337\pm0.0059$ & $0.9066\pm0.0072$ & $\textcolor{blue}{0.9314}\pm0.0175$ & $\textcolor{red}{0.9994 \pm 0.0001}$ & $\textcolor{red}{0.9994}\pm0.0001$ \\ 
                        & Recall@5 & $\textcolor{green}{0.9981}\pm0.0002$ & $0.9981\pm0.0002$ & $0.9927\pm0.0013$ & $0.9932\pm0.0009$ & $\textcolor{blue}{0.9999} \pm 0.0001$& $\textcolor{red}{0.9999}\pm0.0000$ \\ 
                        & MRR & $0.7675\pm0.0151$ & $0.7451\pm0.0058$ & $0.8234\pm0.0083$ & $\textcolor{green}{0.8602}\pm0.0263$ & $\textcolor{red}{0.9911} \pm 0.0004$& $\textcolor{blue}{0.9910}\pm0.0006$ \\ \hline
\multirow{3}{*}{LastFM} & Recall@1 & $0.1684\pm0.0589$ & $\textcolor{green}{0.2773}\pm0.0035$ & $0.1770\pm0.0041$ & $0.2746\pm0.1279$ & $\textcolor{blue}{0.9940} \pm 0.0013$ & $\textcolor{red}{0.9951}\pm0.0002$ \\ 
                        & Recall@5 & $0.2982\pm0.0766$ & $0.3886\pm0.0050$ & $0.2617\pm0.0043$ & $\textcolor{green}{0.4312}\pm0.2001$ & $\textcolor{blue}{0.9997} \pm 0.0002$ & $\textcolor{red}{1.0000}\pm0.0000$ \\ 
                        & MRR & $0.2071\pm0.0536$ & $0.2892\pm0.0074$ & $0.1998\pm0.0042$ & $\textcolor{blue}{0.2995}\pm0.1278$ & $\textcolor{red}{0.9620} \pm 0.0015$ & $\textcolor{blue}{0.9592}\pm0.0011$ \\ \hline
\multirow{3}{*}{GDELT}  & Recall@1 & $0.7533\pm0.0053$ & $0.3503\pm0.0138$ & $0.0182\pm0.0053$ & $\textcolor{green}{0.7595}\pm0.0047$ & $\textcolor{blue}{0.9115}\pm0.0028$ & $\textcolor{red}{0.9117}\pm0.0027$ \\ 
                        & Recall@5 & $\textcolor{green}{0.9333}\pm0.0011$ & $0.8239\pm0.0068$ & $0.0576\pm0.0134$ & $0.9318\pm0.0023$ & $\textcolor{blue}{0.9891}\pm0.0011$ & $\textcolor{red}{0.9899}\pm0.0009$ \\ 
                        & MRR & $0.7108\pm0.0076$ & $0.4146\pm0.0103$ & $0.0868\pm0.0061$ & $\textcolor{green}{0.7302}\pm0.0044$ & $\textcolor{blue}{0.8622}\pm0.0037$ & $\textcolor{red}{0.8627}\pm0.0032$ \\ \hline
\multirow{3}{*}{UCI}    & Recall@1 & $0.5736\pm0.0068$ & $\textcolor{green}{0.6039}\pm0.0878$ & $0.1872\pm0.0292$ & $0.4732\pm0.0434$ & $\textcolor{blue}{0.7989} \pm 0.1119$ & $\textcolor{red}{0.8995}\pm0.0069$ \\ 
                        & Recall@5 & $\textcolor{green}{0.7814}\pm0.0022$ & $0.6682\pm0.0889$ & $0.3036\pm0.0339$ & $0.6384\pm0.0396$ & $\textcolor{blue}{0.8571} \pm 0.0771$& $\textcolor{red}{0.9435}\pm0.0034$     \\ 
                        & MRR & $0.5572\pm0.0087$ & $\textcolor{green}{0.5656}\pm 0.0816$ & $0.2296\pm 0.0253$ & $0.4799\pm0.0354$ & $\textcolor{blue}{0.7677} \pm 0.1093$ & $\textcolor{red}{0.8628}\pm0.0087$      \\ \hline\hline
\end{tabular}}
\vspace{-5mm}

\end{table}

\subsection{Ablation study on the effect of using self-attention in \our}

In this section, we explore the effect of replacing the aggregation weight $\bm{\alpha}$ with the self-attention aggregation~\cite{shi2020masked} implemented by PyTorch Geometric~\cite{Fey/Lenssen/2019}, we name it as ``\emph{\our (self-attention)}''.

First of all, we compare the average precision score and AUC score of ``\emph{\oure}'' and ``\emph{\our (self-attention)}'' in Table~\ref{table:STGN_vs_self_attention}.
We observe that ``\emph{\oure}'' could achieve superior performance than ``\emph{\our (self-attention)}'' on all datasets.
In particular, using self-attention with \our results in a larger variance on LastFM and UCI datasets.
This is potentially because these two datasets lack node/edge features and have larger average time-gap (dataset statistic in Table~\ref{table:dataset_stat}). Since ``\emph{\our (self-attention)}'' is relying on self-attention to process the node/edge features, it implicitly assumes node features exist and are ``smooth'' at adjacent timestamps. As a result, ``\emph{\our (self-attention)}'' could generalize poorly when the assumptions are violated. 

\begin{table}[h]
\caption{Comparison on the Average precision and AUC score for link prediction. }  
\label{table:STGN_vs_self_attention}
\centering\setlength{\tabcolsep}{0.5mm}
\scalebox{0.85}{
\begin{tabular}{ l l cccc cc}
\hline\hline
       & & Reddit & Wiki & MOOC & LastFM & GDELT & UCI  \\
      \hline\hline
\multirow{2}{*}{\begin{tabular}[c]{@{}l@{}}Average\\ Precision\end{tabular}} 
& \our (self-attention) &  $99.37 \pm 0.01$ & $98.99 \pm 0.01$ & $99.42 \pm 0.01$ &  $91.79 \pm 4.02$ & $96.20 \pm 0.02$ & $90.24 \pm 0.73$ \\ 
& \our                  &  $\textbf{99.89} \pm 0.00$ & $\textbf{99.85} \pm 0.05$ & $\textbf{99.88} \pm 0.04$ &  $\textbf{95.74} \pm 0.13$ & $\textbf{97.40} \pm 0.02$ & $\textbf{94.60} \pm 0.31$ \\ \hline
\multirow{2}{*}{\begin{tabular}[c]{@{}l@{}}AUC\\ Score\end{tabular}} 
& \our (self-attention) &  $99.37 \pm 0.01$ & $98.92 \pm 0.02$ & $99.58 \pm 0.01$ &  $93.33 \pm 4.23$ & $95.56 \pm 0.01$ & $88.22 \pm 0.98$ \\ 
& \our &  $\textbf{99.91} \pm 0.00$ & $\textbf{99.85} \pm 0.05$ & $\textbf{99.91} \pm 0.02$ &  $\textbf{97.16} \pm 0.06$ & $\textbf{96.96} \pm 0.01$ & $\textbf{93.14} \pm 0.67$ \\ 
\hline\hline
\end{tabular}}
\vspace{-3mm}
\end{table}


Moreover, to validate the generalization performance, we compare the average precision score and the generalization gap in Figure~\ref{fig:validation_curves_appendix}. The generalization gap is defined as the absolute difference between the training and validation average precision scores.
Our results show that ``\emph{\oure}'' could achieve a higher average precision score than ``\emph{\our (self-attention)}'' and has a smaller generalization gap.
This suggests that using aggregation weight $\bm{\alpha}$ instead of using self-attention in \our could lead to better generalization performance, which is expected because self-attention has higher model complexity and could be harder to train.

\begin{figure*}[h]
\centering
\begin{minipage}{0.31\textwidth}
    \includegraphics[width=0.98\textwidth]{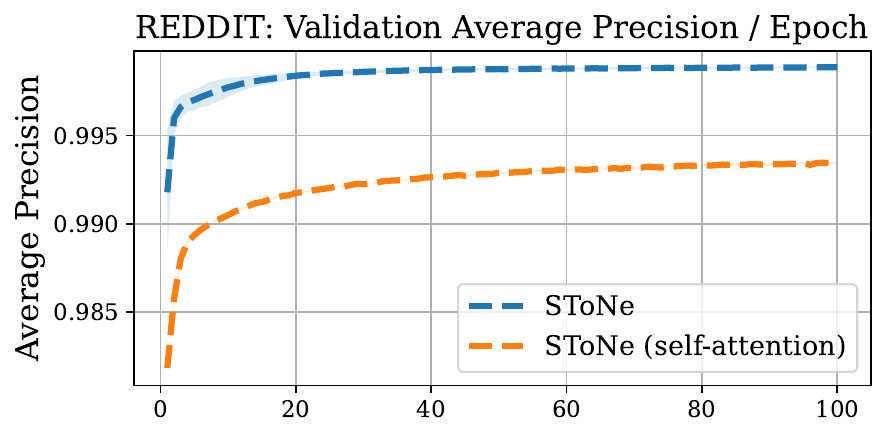}
    \label{fig:appendix/figs/REDDIT}
\end{minipage}
\vspace{-1mm}
\begin{minipage}{0.31\textwidth}
    \includegraphics[width=0.98\textwidth]{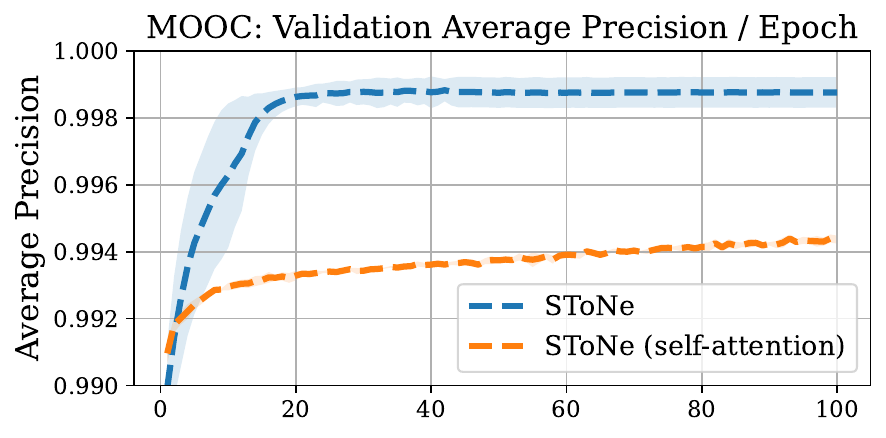}
    \label{fig:appendix/figs/MOOC}
\end{minipage}
\vspace{-1mm}
\begin{minipage}{0.31\textwidth}
    \includegraphics[width=0.98\textwidth]{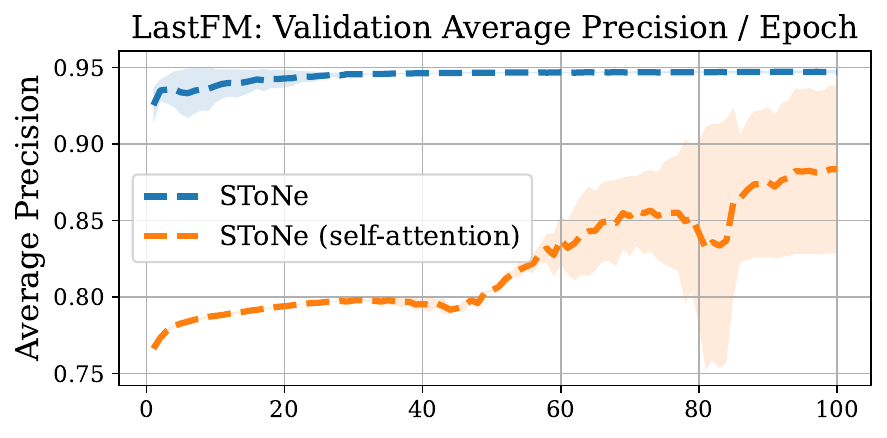}
    \label{fig:appendix/figs/LastFM}
\end{minipage}
\vspace{-1mm}
\begin{minipage}{0.31\textwidth}
    \includegraphics[width=0.98\textwidth]{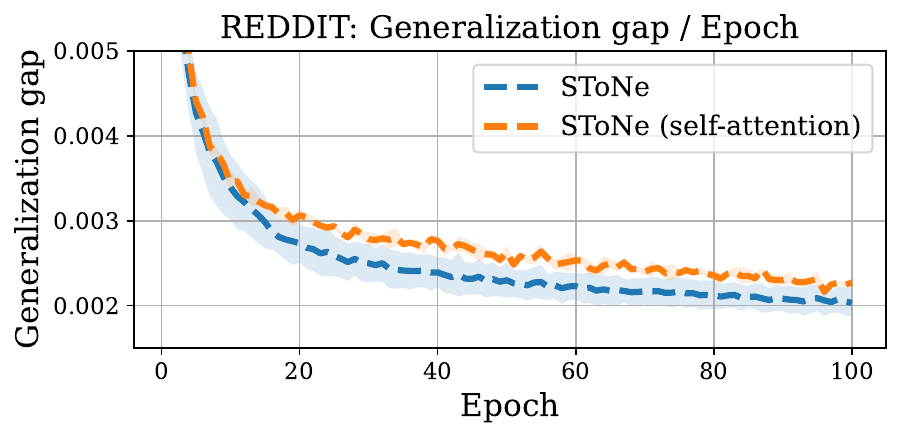}
    \label{fig:appendix/figs/REDDIT_gap}
\end{minipage}
\vspace{-1mm}
\begin{minipage}{0.31\textwidth}
    \includegraphics[width=0.98\textwidth]{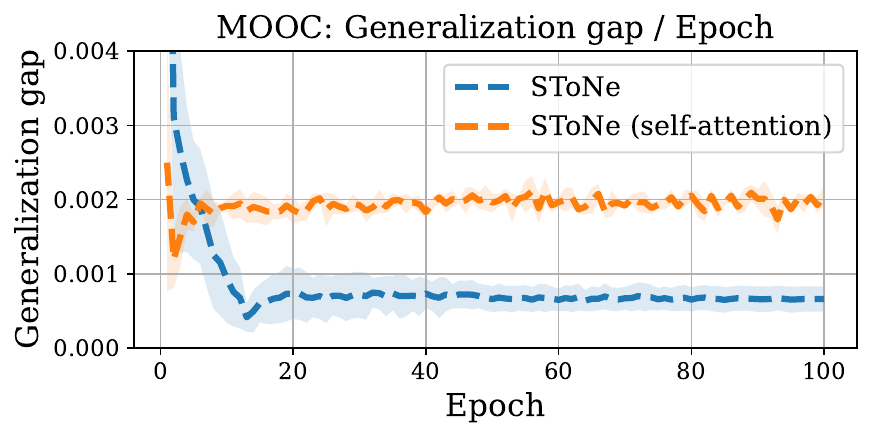}
    \label{fig:appendix/figs/MOOC_gap}
\end{minipage}
\vspace{-1mm}
\begin{minipage}{0.31\textwidth}
    \includegraphics[width=0.98\textwidth]{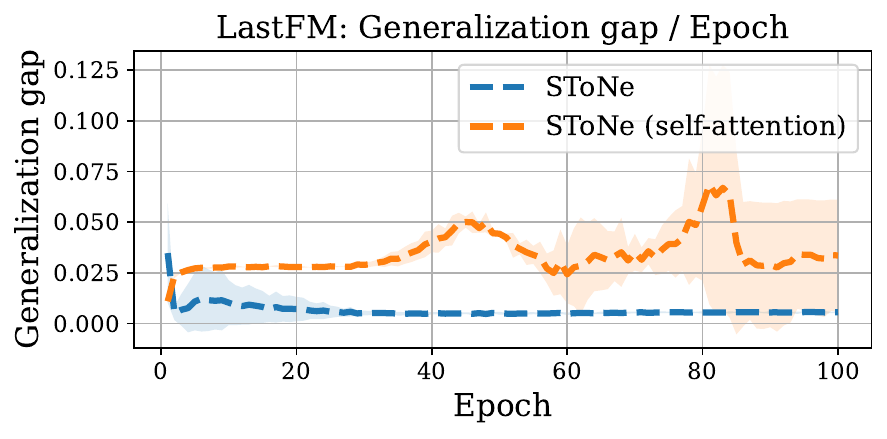}
    \label{fig:appendix/figs/LASTFM_gap}
\end{minipage}
\vspace{-1mm}
\caption{Comparison of the average prevision of the validation set and the generalization gap of different methods on real-world datasets. }
\label{fig:validation_curves_appendix}
\vspace{-2mm}
\end{figure*}

\subsection{Conduct experiments under the inductive learning setting}

Please notice that the discussion in our paper is mainly focusing on transductive learning. For the completeness, we also report some preliminary inductive learning results. Our inductive learning setting is the same as~\cite{TGAT}. 
More specifically, the inductive node sets are all nodes that does not belonging to the training set nodes, i.e., $\mathcal{V}_\text{inductive} = \mathcal{V} \setminus \mathcal{V}_\text{train}$. In the inductive setting, negative nodes  are only selected from $\mathcal{V}_\text{inductive}$. A test set link is considered as the inductive link if at least one of its associated nodes belong to $\mathcal{V}_\text{inductive}$, i.e., $\mathcal{E}_\text{inductive} = \{(v_i, v_j)~|~(v_i \in \mathcal{V}_\text{inductive} \vee v_j \in \mathcal{V}_\text{inductive}) \wedge (v_i, v_j) \in \mathcal{E}_\text{test}\}$. Inductive learning performance is evaluated only on the inductive edges. Dataset statistics are summarized in Table~\ref{table:inductive_datasets}.

\begin{table}[h]
\centering
\caption{Dataset statistics for inductive learning settings.}\label{table:inductive_datasets}
\scalebox{0.9}{
\begin{tabular}{ l l l l }
\hline\hline
       & $|\mathcal{V}_\text{inductive}|$ & $|\mathcal{E}_\text{inductive}|$ & $|\mathcal{E}_\text{test}|$ \\ \hline\hline
Reddit & 134 & 4,704                   & 100,867                    \\ 
Wiki   & 1,210 & 5,732                   & 26,621                     \\ 
MOOC   & 342 & 8,645                   & 61,763                     \\ 
UCI & 391 & 4,876               &  8,976                \\ \hline 
\end{tabular}}
\end{table}

We compare with baselines under the inductive learning setting with TGL framework for a fair comparison.
As shown in the table, the performance of our method also works well on the inductive learning setting.
Moreover, by combining with the results in Table~\ref{table:average_precision}, we found that changing from transductive to inductive learning does not affect our model performance much, which is potentially because our method has a simpler neural architecture and a stronger inductive bias for link prediction task.

\begin{table}[h]
\centering
\caption{Inductive learning setting average precision.}
\scalebox{0.95}{
\begin{tabular}{ l l l l l }
\hline\hline
Inductive average precision & Reddit & Wiki & MOOC & UCI \\ \hline\hline
SToNe & $99.75 \pm 0.00$ & $99.70\pm 0.05$ & $99.81\pm 0.04$ & $90.94\pm 0.33$ \\ 
TGN  &  $98.88\pm 0.02$ & $98.88\pm 0.06$ & $99.72\pm 0.08$ & $83.27\pm 2.01$ \\ 
JODIE   & $96.06\pm 0.03$ & $98.59\pm 0.07$ & $86.90\pm 0.06$ & $59.92\pm 3.19$ \\ 
TGAT  & $93.28\pm 0.05$  & $95.82\pm 0.12$ & $95.57\pm 0.11$ & $67.65\pm 0.73$ \\ 
DySAT  & $90.83\pm 0.02$  & $96.45\pm 0.05$ & $98.65\pm 0.19$ & $80.48\pm 2.98$ \\ 
\hline

\end{tabular}}
\end{table}

\subsection{Preliminary results of node classification}

The design of \our is mainly focusing on the link prediction. For the completeness, we also report some preliminary node classification results using average precision in the table bellow.
For this experiment, we use the identical network structure as the link prediction model. The evaluation strategy follows the framework in~\cite{TGL}. It is worth noting that since our paper primarily focused on link prediction task, the performance of \our is not optimal. We plan to make it as an interesting future direction and enhance model performance by introducing the inductive bias of the node classification task. 

\begin{table}[h]
\centering
\caption{Node classification average precision.}
\begin{tabular}{lcccccc}
\hline\hline
       & SToNe & GraphMixder & JODIE & TGAT  & TGN   & DySAT \\ \hline
Reddit & 70.66 & 70.51 & 70.89 & 63.63 & 64.79 & 62.69 \\ 
Wiki   & 85.78 & 83.93 & 80.63 & 85.30 & 87.13 & 85.30 \\ \hline
\end{tabular}
\end{table}

\clearpage
\section{Generalization bound of GNN-based method} \label{section:proof of GNN-based method}

Recall that we compute the representation of node $v_i$ at time $t$ by applying GNN on the temporal graph that only considers the temporal edges with timestamp before time $t$.
The GNN has trainable weight parameters \smash{$\bm{\theta} = \{\mathbf{W}^{(1)}, \ldots, \mathbf{W}^{(L)}\}$} and binary hyper-parameter $\alpha \in \{0, 1\}$ that controls whether residual connection is used. 

\noindent\textbf{Representation computation.}
The final prediction on node $v_i$ is computed as \smash{$f_i(\bm{\theta}) = \mathbf{W}^{(L)}\mathbf{h}_i^{(L-1)}$}, where the hidden representation \smash{$\mathbf{h}_i^{(L-1)}$} is computed by
\begin{equation*}
    \begin{aligned}
    \mathbf{h}_i^{(\ell)} &= \sigma\big(\mathbf{W}^{(\ell)} \sum\nolimits_{j\in\mathcal{N}(i)} P_{ij} \mathbf{h}_j^{(\ell-1)}\big) + \alpha \mathbf{h}_i^{(\ell-1)} \in \mathbb{R}^m, \\
    \mathbf{h}_i^{(1)} &= \sigma\big(\mathbf{W}^{(1)} \sum\nolimits_{j\in\mathcal{N}(i)} P_{ij} \mathbf{x}_j\big) \in \mathbb{R}^m.
    \end{aligned}
\end{equation*}
Here $\sigma(\cdot)$ is the activation function, $P_{ij}$ is the aggregation weight used for propagating information from node $v_j$ to $v_i$, and $\mathcal{N}(i)$ is the set of all neighbors of node $v_i$ in the temporal graph.
For parameter dimension, we have \smash{$\mathbf{W}^{(1)} \in \mathbb{R}^{m \times d}$}, \smash{$\mathbf{W}^{(\ell)} \in \mathbb{R}^{m\times m}~\text{for}~2\leq \ell \leq  L-1$}, and \smash{$\mathbf{W}^{(L)} \in \mathbb{R}^{1\times m}$}, where $m$ is the hidden dimension and $d$ is the input dimension.

\noindent\textbf{Gradient computation.}
The gradient of weight matrix $\mathbf{W}^{(\ell)},~\forall \ell\in[L-1]$ is computed by 
\begin{equation*}
    \frac{\partial f_i(\bm{\theta})}{\partial \mathbf{W}^{(\ell)}} = \sum_{i_{L-2}\in\mathcal{N}(i_{L-1})}\sum_{i_{L-3}\in\mathcal{N}(i_{L-2})}\ldots\sum_{i_{\ell}\in\mathcal{N}(i_{\ell+1})} P_{i_{L-1}, i_{L-2}} P_{i_{L-2},i_{L-3}} \ldots P_{i_{\ell+1},i_\ell} \cdot
    \mathbf{G}^{\ell}(i_{\ell},\ldots, i_{L-1}).
\end{equation*}
Here $\mathbf{G}^{\ell}(i_{\ell},\ldots, i_{L-1})\in \mathbb{R}^{m\times m}$ is defined as
\begin{equation*}
    \mathbf{G}^{\ell}(i_{\ell},\ldots, i_{L-1}) =  \Big[ \mathbf{W}^{(L)} \Big( \mathbf{D}_{i_{L-1}}^{(L-1)} \mathbf{W}^{(L-1)} + \alpha \mathbf{I}_{m}\Big) \ldots \Big( \mathbf{D}_{i_{\ell+1}}^{(\ell+1)} \mathbf{W}^{(\ell+1)} + \alpha \mathbf{I}_{m}\Big) \mathbf{D}_{i_{\ell}}^{(\ell)} \Big]^\top ( \Tilde{\mathbf{z}}_{i_{\ell}}^{(\ell-1)} )^\top,
\end{equation*}
where $\mathbf{D}_{i,\ell}^{(\ell)} = \text{diag}(\sigma^\prime(\mathbf{z}_i^{(\ell)})) \in \mathbb{R}^{m \times m}$ is a diagonal matrix and $\Tilde{\mathbf{z}}_{i_{\ell}}^{(\ell-1)}$ is the aggregation of neighbors' representations 
\begin{equation*}
    \Tilde{\mathbf{z}}_{i_{\ell}}^{(\ell-1)} = \sum_{i_{\ell-1} \in \mathcal{N}(i_{\ell})} P_{i_\ell, i_{\ell-1}}  \mathbf{h}_{i_{\ell-1}}^{(\ell-1)} \in \mathbb{R}^{m},~
    \forall \ell\in\{1,2,\ldots, L-1\}.
\end{equation*}

Finally, the gradient with respect to the final layer weight matrix $\mathbf{W}^{(L)}$ is computed as 
\begin{equation*}
    \frac{\partial f_i(\bm{\theta})}{\partial \mathbf{W}^{(L)}} = \mathbf{h}^{(L-1)}_i.
\end{equation*}
Please refer to Appendix~\ref{section:gradient computation} for detailed derivation of the gradients for the weight parameters in an $L$-layer GNN.

\subsection{Proof sketch}

In the following, we summarize the main steps of proving  the generalization bound of GNN-based methods:

\begin{itemize}
    \item Firstly, we show in Lemma~\ref{lemma:output_change_small} that if two weight parameters are close to each other, then the node representation computed on these two parameters are also close.
    \item Secondly, we show  that if two set of weight parameters are close to each other, then the neural network outputs computed on these weight parameters are almost linear in Lemma~\ref{lemma:the neural network output is almost linear in W}, and the computed loss is almost convex in Lemma~\ref{lemma:almost convex}.
    \item Thirdly, we show in Lemma~\ref{lemma:the gradient of the neural network function can be upper bounded under near initialization} that with high probability, the gradient of neural network can be upper bounded, and this upper bound is dependent on the neural architecture.
    \item Thirdly, based on our previous results in Lemma~\ref{lemma:output_change_small}, Lemma~\ref{lemma:almost convex}, and Lemma~\ref{lemma:the gradient of the neural network function can be upper bounded under near initialization}, we show in Lemma~\ref{lemma:the cumulative loss can be upper bounded under small changes on the parameters} that the difference between the cumulative loss over training to the loss computed on optimal solution could be upper bounded.
    \item Finally, we show in Lemma~\ref{lemma:base_generalization_bound} that the expected 0-1 error is upper bounded. Then, by using our definition on the neural tangent random feature and feature-label alignment, we conclude the proof.
 \end{itemize}

\subsection{Useful lemmas}

For the ease of presentation, we introduce the following two definitions which will be used when introducing our lemmas.

\begin{definition} [$\omega$-neighborhood~\cite{cao2019generalization}]
For any $\widetilde{\bm{\theta}} = \{\widetilde{\mathbf{W}}^{(1)},\ldots, \widetilde{\mathbf{W}}^{(L)}\}$, we define its $\omega$-neighborhood as
\begin{equation*}
    \mathcal{B}(\bm{\theta},\omega) = \{ \widetilde{\bm{\theta}}~|~\| \widetilde{\mathbf{W}}^{(\ell)} - \mathbf{W}^{(\ell)} \|_\mathrm{F} \leq \omega,~\ell\in[L] \}.
\end{equation*}
\end{definition}

\begin{definition} [Neural tangent random feature~\cite{cao2019generalization}] \label{def:neural tangent random feature}
Let $\bm{\theta}_0$ be generated via the initialization. We define the neural tangent random feature function class as
\begin{equation*}
    \mathcal{F}(\bm{\theta}_0, R) = \{f(\bm{\theta}_0) + \langle \nabla_\theta f(\bm{\theta}_0), \bm{\theta} \rangle~|~\bm{\theta}\in \mathcal{B}(\mathbf{0}, Rm^{-1/2})\},
\end{equation*}
where $R>0$ measures the size of the function class and $m$ is the hidden dimension of the neural network.
\end{definition}

In the following, we show that if the input weight parameters $\bm{\theta}=\{\mathbf{W}^{(1)}, \ldots, \mathbf{W}^{(L)}\}$ and $\widetilde{\bm{\theta}}=\{\widetilde{\mathbf{W}}^{(1)},\ldots, \widetilde{\mathbf{W}}^{(L)}\}$ are close, the hidden representation of graph neural networks computed on $\bm{\theta}$ and $\widetilde{\bm{\theta}}$ does not change too much.

\begin{tcolorbox}
\begin{lemma} \label{lemma:output_change_small}
    Let  $\rho$ be the Lipschitz constant of the activation function  and $m$ is the hidden dimension. Then with $\omega = \mathcal{O}(1/((3\rho +1)\tau)^{(L-1)} )$ and assuming $\widetilde{\bm{\theta}} \in \mathcal{B}(\bm{\theta}, \omega)$, we have $\| \widetilde{\mathbf{h}}_i^{(\ell)} - \mathbf{h}_i^{(\ell)} \|_2 = \mathcal{O}(1)$ with probability at least $1-2\ell \exp(-m/2) - \ell \exp(-\Omega(m))$.
\end{lemma}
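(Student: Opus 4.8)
The plan is to argue by induction on the layer index $\ell$, carrying along two coupled quantities: the representation gap $\Delta^{(\ell)} := \max_i \| \widetilde{\mathbf{h}}_i^{(\ell)} - \mathbf{h}_i^{(\ell)} \|_2$ and a uniform bound $B^{(\ell)} \geq \max_i \|\mathbf{h}_i^{(\ell)}\|_2$ on the representation norm at initialization. The engine of the proof is a one-step recursion obtained by subtracting the two GNN updates. Writing $\tilde{\mathbf{z}}_i^{(\ell-1)} = \sum_{j\in\mathcal{N}(i)} P_{ij}\mathbf{h}_j^{(\ell-1)}$ and its tilded analogue $\tilde{\mathbf{z}}_i^{(\ell-1),\sim}$, I would insert a telescoping term to split the discrepancy into a propagation part and a weight-perturbation part:
\[
\widetilde{\mathbf{W}}^{(\ell)}\tilde{\mathbf{z}}_i^{(\ell-1),\sim} - \mathbf{W}^{(\ell)}\tilde{\mathbf{z}}_i^{(\ell-1)} = \widetilde{\mathbf{W}}^{(\ell)}\big(\tilde{\mathbf{z}}_i^{(\ell-1),\sim} - \tilde{\mathbf{z}}_i^{(\ell-1)}\big) + \big(\widetilde{\mathbf{W}}^{(\ell)} - \mathbf{W}^{(\ell)}\big)\tilde{\mathbf{z}}_i^{(\ell-1)}.
\]
Applying $\rho$-Lipschitzness of $\sigma$ (Assumption~\ref{assumption:lipschitz constant > 1}), the triangle inequality, and the row-sum bound $\sum_{j} P_{ij}\leq\tau$ (Assumption~\ref{assumption:row sum of propagation matrix}) to pull the aggregation through the norm, together with $\|\widetilde{\mathbf{W}}^{(\ell)}-\mathbf{W}^{(\ell)}\|_2\leq\|\widetilde{\mathbf{W}}^{(\ell)}-\mathbf{W}^{(\ell)}\|_\mathrm{F}\leq\omega$, and noting the residual branch contributes only $\alpha\Delta^{(\ell-1)}\leq\Delta^{(\ell-1)}$, yields a recursion of the form $\Delta^{(\ell)} \leq \big(\rho\|\widetilde{\mathbf{W}}^{(\ell)}\|_2\tau + 1\big)\Delta^{(\ell-1)} + \rho\omega\tau\,B^{(\ell-1)}$.

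The two probabilistic ingredients enter next. First, since each $\mathbf{W}^{(\ell)}$ has i.i.d.\ $\mathcal{N}(0,1/m)$ entries, standard operator-norm concentration gives $\|\mathbf{W}^{(\ell)}\|_2\leq 2+t$ with probability at least $1-2\exp(-mt^2/2)$; taking $t=1$ and a union bound over the $\ell$ matrices used up to layer $\ell$ produces the $2\ell\exp(-m/2)$ failure term, and for $\omega$ small (which will hold) $\|\widetilde{\mathbf{W}}^{(\ell)}\|_2\leq\|\mathbf{W}^{(\ell)}\|_2+\omega\leq 3$. Second, a parallel induction bounding $\|\mathbf{h}_i^{(\ell)}\|_2$ — using the same Lipschitz and row-sum estimates plus concentration of $\|\mathbf{W}^{(\ell)}\tilde{\mathbf{z}}_i^{(\ell-1)}\|_2$ around $\|\tilde{\mathbf{z}}_i^{(\ell-1)}\|_2$ — gives $B^{(\ell)} = \mathcal{O}\big(((3\rho+1)\tau)^{\ell}\big)$, with the $\ell\exp(-\Omega(m))$ failure term coming from these $\ell$ norm-concentration events. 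Crucially, using $\tau\geq 1$ I can collapse the per-layer multiplier into the clean form $\rho\|\widetilde{\mathbf{W}}^{(\ell)}\|_2\tau+1\leq 3\rho\tau+1\leq(3\rho+1)\tau$, matching the exponent appearing in the choice of $\omega$.

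With $\gamma := (3\rho+1)\tau$, the base case is $\Delta^{(1)}\leq\rho\omega\tau$ (direct from Lipschitzness, Assumption~\ref{assumption: node feature norm as 1}, and the row-sum bound, since $\|\sum_j P_{ij}\mathbf{x}_j\|_2\leq\tau$), and the recursion $\Delta^{(\ell)}\leq\gamma\,\Delta^{(\ell-1)} + \rho\omega\tau\,B^{(\ell-1)}$ with $B^{(\ell-1)}=\mathcal{O}(\gamma^{\ell-1})$ unrolls to $\Delta^{(\ell)} = \mathcal{O}\big(\rho\tau\,\ell\,\omega\,\gamma^{\ell-1}\big)$. Substituting $\omega = \mathcal{O}(1/\gamma^{L-1})$ and using $\ell\leq L-1$ gives $\Delta^{(\ell)} = \mathcal{O}\big(\rho\tau\,\ell\,\gamma^{\ell-L}\big) = \mathcal{O}(1)$, treating $\rho,\tau,L$ as constants; this is exactly the claimed bound and it holds on the intersection of the above high-probability events, i.e.\ with probability at least $1-2\ell\exp(-m/2)-\ell\exp(-\Omega(m))$.

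I expect the main obstacle to be the geometric accumulation across layers: each layer both amplifies the inherited gap by the factor $\gamma$ and injects a fresh perturbation error of size $\rho\omega\tau\,B^{(\ell-1)}$, where $B^{(\ell-1)}$ itself grows like $\gamma^{\ell-1}$. One must verify that the telescoped sum does not produce an extra unfavorable power of $\gamma$ and that the prescribed $\omega\propto\gamma^{-(L-1)}$ is precisely strong enough to cancel the worst-case $\gamma^{\ell-1}$ factor at the deepest relevant layer while keeping $\omega$ small enough to justify $\|\widetilde{\mathbf{W}}^{(\ell)}\|_2\leq 3$. Secondary care is needed for the $\alpha$-residual term (handled by $\alpha\le1$) and, if $\sigma(0)\neq 0$, for an additive constant in the norm bound $B^{(\ell)}$, which only changes constants and not the scaling.
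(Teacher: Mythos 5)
Your proposal is correct and takes essentially the same route as the paper's proof: the same telescoping decomposition $\widetilde{\mathbf{W}}^{(\ell)}\big(\tilde{\mathbf{z}}_i^{(\ell-1),\sim}-\tilde{\mathbf{z}}_i^{(\ell-1)}\big)+\big(\widetilde{\mathbf{W}}^{(\ell)}-\mathbf{W}^{(\ell)}\big)\tilde{\mathbf{z}}_i^{(\ell-1)}$, the Gaussian spectral bound $\|\mathbf{W}^{(\ell)}\|_2\le 3$ via union bound (yielding the $2\ell\exp(-m/2)$ term), concentration of the hidden-representation norms (yielding the $\ell\exp(-\Omega(m))$ term), and unrolling the resulting recursion with $\omega\propto((3\rho+1)\tau)^{-(L-1)}$. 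The only deviation is that you carry a worst-case geometric norm bound $B^{(\ell)}=\mathcal{O}\big(((3\rho+1)\tau)^{\ell}\big)$ on $\|\mathbf{h}_i^{(\ell)}\|_2$, whereas the paper invokes its Lemma~\ref{lemma:upper bound on node features} to get the sharper $\Theta(1)$; since your choice of $\omega$ still absorbs the extra geometric factor for all $\ell\le L-1$, this does not affect the conclusion.
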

\end{tcolorbox}

Please note that the smaller the distance $\omega$, the closer the representation $\| \widetilde{\mathbf{h}}_i^{(\ell)} - \mathbf{h}_i^{(\ell)} \|_2$. In particular, according to the proof of Lemma~\ref{lemma:output_change_small}, we have $\| \widetilde{\mathbf{h}}_i^{(\ell)} - \mathbf{h}_i^{(\ell)} \|_2 \leq \mathcal{O}(\epsilon)$ by selecting $\omega = \mathcal{O}(\epsilon/((3\rho +1)\tau)^{(L-1)} )$ for any small  $\epsilon >0$. This conclusion will be later used in Lemma~\ref{lemma:the cumulative loss can be upper bounded under small changes on the parameters}.

\begin{proof} [Proof of Lemma~\ref{lemma:output_change_small}]
When $\ell=1$, we have for any node $v_i\in\mathcal{V}$
\begin{equation*}
    \begin{aligned}
    \| \widetilde{\mathbf{h}}_i^{(1)} - \mathbf{h}_i^{(1)} \|_2 
    &= \Big\| \sigma \Big(\widetilde{\mathbf{W}}^{(1)} \sum_{j\in\mathcal{N}(i)} P_{ij} \mathbf{h}_j^{(0)} \Big) - \sigma \Big(\mathbf{W}^{(1)} \sum_{j\in\mathcal{N}(i)} P_{ij} \mathbf{h}_j^{(0)} \Big) \Big\|_2 \\
    &\underset{(a)}{\leq} \rho \| \widetilde{\mathbf{W}}^{(1)} - \mathbf{W}^{(1)} \|_2 \Big\| \sum_{j\in\mathcal{N}(i)} P_{ij} \mathbf{x}_j \Big\|_2 \\
    &\underset{(b)}{\leq} \rho \omega \Big( \sum_{j\in\mathcal{N}(i)} P_{ij} \| \mathbf{x}_j\|_2 \Big) \\
    &\underset{(c)}{\leq}\rho \tau \cdot \omega= \mathcal{O}(1),
    \end{aligned}
\end{equation*}
where inequality (a) is due to the Lipschitz continuity of activation function, inequality (b) is due to the $\omega$-neighborhood definition $\widetilde{\mathbf{W}} \in \mathcal{B}(\mathbf{W}, \omega)$, and inequality (c) is due to Assumption~\ref{assumption: node feature norm as 1} and Assumption~\ref{assumption:row sum of propagation matrix}.

Similarly, when $\ell \in \{ 2,\ldots, L-1\}$, we have $\forall i\in\mathcal{V}$
\begin{equation*}
    \begin{aligned}
    \| \widetilde{\mathbf{h}}_i^{(\ell)} - \mathbf{h}_i^{(\ell)} \|_2 
    &\underset{(a)}{\leq} \Big\| \sigma \Big(\widetilde{\mathbf{W}}^{(\ell)} \sum_{j\in\mathcal{N}(i)} P_{ij} \widetilde{\mathbf{h}}_j^{(\ell-1)} \Big) - \sigma \Big(\mathbf{W}^{(\ell)} \sum_{j\in\mathcal{N}(i)} P_{ij} \mathbf{h}_j^{(\ell-1)} \Big) \Big\|_2 + \alpha \| \widetilde{\mathbf{h}}_i^{(\ell-1)} -  \mathbf{h}_i^{(\ell-1)} \|_2 \\
    &\underset{(b)}{\leq}\rho \| \widetilde{\mathbf{W}}^{(\ell)} - \mathbf{W}^{(\ell)} \|_2 \Big\| \sum_{j\in\mathcal{N}(i)} P_{ij} \widetilde{\mathbf{h}}_j^{(\ell-1)} \Big\|_2 + \rho \| \mathbf{W}^{(\ell)} \|_2 \Big( \sum_{j\in\mathcal{N}(i)} P_{ij} \| \widetilde{\mathbf{h}}_j^{(\ell-1)} - \mathbf{h}_j^{(\ell-1)}\|_2 \Big) \\
    &\quad + \| \widetilde{\mathbf{h}}_i^{(\ell-1)} -  \mathbf{h}_i^{(\ell-1)} \|_2 \\
    &\underset{(c)}{\leq}\rho \omega \Big( \Big\| \sum_{j\in\mathcal{N}(i)} P_{ij} \widetilde{\mathbf{h}}_j^{(\ell-1)} - \sum_{j\in\mathcal{N}(i)} P_{ij} \mathbf{h}_j^{(\ell-1)} \Big\|_2 + \Big\| \sum_{j\in\mathcal{N}(i)} P_{ij} \mathbf{h}_j^{(\ell-1)} \Big\|_2 \Big) \\
    &\quad  + \rho \| \mathbf{W}^{(\ell)} \|_2 \Big( \sum_{j\in\mathcal{N}(i)} P_{ij} \| \widetilde{\mathbf{h}}_j^{(\ell-1)} - \mathbf{h}_j^{(\ell-1)}\|_2 \Big) + \| \widetilde{\mathbf{h}}_i^{(\ell-1)} -  \mathbf{h}_i^{(\ell-1)} \|_2 \\
    &\leq \rho \omega \tau \cdot \max_{j\in\{i\} \cup \mathcal{N}(i)} \| \mathbf{h}_j^{(\ell-1)} \|_2 + \Big(\rho \tau (\| \mathbf{W}^{(\ell)} \|_2 + \omega) + 1\Big) \cdot \max_{j \in \{i\} \cup \mathcal{N}(i)} \| \widetilde{\mathbf{h}}_j^{(\ell-1)} - \mathbf{h}_j^{(\ell-1)}\|_2,
\end{aligned}
\end{equation*}
where the inequality (a) and (c) are due to $\| \mathbf{A} + \mathbf{B} \|_2 \leq \| \mathbf{A} \|_2 + \| \mathbf{B} \|_2$, the inequality (b) is due to the Lipschitz continuity of activation function and $\widetilde{\mathbf{W}} \in \mathcal{B}(\mathbf{W}, \omega)$.

By Proposition~\ref{prop:Norm of weight matrices 2 to L}, we know that with probability at least $1-2\exp(-m/2)$ we have $\| \mathbf{W}^{(\ell)} \|_2 \leq 3$ for all $\ell\in[L-1]$. 

By Lemma~\ref{lemma:upper bound on node features}, we know that with probability at least $1-\exp(-\Omega(m))$ we have $\| \mathbf{h}_i^{(\ell)} \|_2 = \Theta(1)$.

Then, we have with probability at least $1-2\ell \exp(-m/2)-\ell \exp(-\Omega(m))$ for any $i\in\mathcal{V}$
\begin{equation*}
    \begin{aligned}
    \| \widetilde{\mathbf{h}}_i^{(\ell)} - \mathbf{h}_i^{(\ell)} \|_2 
    &\leq \Big( \rho \tau (3+\omega) + 1 \Big) \cdot \max_{j \in \{i\} \cup \mathcal{N}(i)} \| \widetilde{\mathbf{h}}_j^{(\ell-1)} - \mathbf{h}_j^{(\ell-1)}\|_2 + \rho \omega \tau \cdot \max_{j\in\{i\} \cup \mathcal{N}(i)} \| \mathbf{h}_j^{(\ell-1)} \|_2 \\
    &\leq \Big( \rho \tau (3+\omega) + 1 \Big) \cdot \max_{j\in\mathcal{V}} \| \widetilde{\mathbf{h}}_j^{(\ell-1)} - \mathbf{h}_j^{(\ell-1)}\|_2 + \rho \omega \tau \cdot \max_{j\in\mathcal{V}} \| \mathbf{h}_j^{(\ell-1)} \|_2 \\
    &\underset{(a)}{=}  \Big( \rho \tau (3+\omega) + 1 \Big) \cdot \max_{j\in\mathcal{V}} \| \widetilde{\mathbf{h}}_j^{(\ell-1)} - \mathbf{h}_j^{(\ell-1)}\|_2 + \rho \omega \tau \cdot \Theta(1) \\
    &\leq \rho \omega \tau \cdot \Theta(1) \cdot \frac{((3+\omega) \rho \tau +1)^{\ell-1} -1}{(3+\omega) \rho \tau} + \rho \omega \tau \cdot ((3+\omega) \rho \tau +1)^{\ell-1} \\
    &\leq ((3+\omega) \rho \tau +1)^{\ell-1} \Big( \frac{\rho \omega \tau \cdot \Theta(1) }{(3+\omega) \rho \tau} + \rho \omega \tau \Big) \\
    &\leq ((3+\omega) \rho \tau +1)^{\ell-1}  \cdot \omega \cdot \Big( \Theta(1) + \rho \tau \Big), \\
    \end{aligned}
\end{equation*}
where the equality (a) is due to $\max_{j\in\mathcal{V}} \| \mathbf{h}_j^{(\ell)} \|_2 = \Theta(1)$.

By setting $\omega = 1/((3\rho + 1)\tau)^{L-1}$ we have the above equation upper bounded by $\mathcal{O}(1)$.
\end{proof}

Then, in the next lemma, we show that if the initialization of two set of weight parameters $\bm{\theta}=\{\mathbf{W}^{(1)}, \ldots, \mathbf{W}^{(L)}\}$ and $\widetilde{\bm{\theta}}=\{\widetilde{\mathbf{W}}^{(1)},\ldots, \widetilde{\mathbf{W}}^{(L)}\}$ are close, the neural network output $f_i(\bm{\theta})$ will be almost linear with respect to its weight parameters.

\begin{tcolorbox}
\begin{lemma} \label{lemma:the neural network output is almost linear in W}
Let $\bm{\theta}, \widetilde{\bm{\theta}} \in \mathcal{B}(\bm{\theta}_0, \omega)$ with $\omega = \mathcal{O}\Big(1/((3\rho+1)\tau)^{(L-1)} \Big)$. Then, for any node $v_i\in\mathcal{V}$ in the graph, with probability at least $1-2(L-1)\exp(-m/2) - L\exp(-\Omega(m)) - 2/m$, we have
\begin{equation*}
    \epsilon_\text{lin} := |  f_i(\widetilde{\bm{\theta}})  - f_i(\bm{\theta}) - \langle \nabla f_i(\bm{\theta}), \widetilde{\bm{\theta}} - \bm{\theta} \rangle | = \mathcal{O}(1),
\end{equation*}
where $f_i(\bm{\theta})$ is the prediction on the $L$-hop subgraph centered on the root node $v_i$.
\end{lemma}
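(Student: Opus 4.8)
The plan is to reduce the output linearization error to a single hidden-layer residual that I then control by induction on depth. Write $\Delta\bm{\theta}=\widetilde{\bm{\theta}}-\bm{\theta}$, so $\|\Delta\mathbf{W}^{(\ell)}\|_\mathrm{F}\le 2\omega$ because both parameters lie in $\mathcal{B}(\bm{\theta}_0,\omega)$, and define the first-order (Jacobian) residual of the hidden representation
\begin{equation*}
\mathbf{r}_i^{(\ell)} := \widetilde{\mathbf{h}}_i^{(\ell)}-\mathbf{h}_i^{(\ell)}-\langle\nabla_{\bm{\theta}}\mathbf{h}_i^{(\ell)},\Delta\bm{\theta}\rangle .
\end{equation*}
Since $f_i=\mathbf{W}^{(L)}\mathbf{h}_i^{(L-1)}$, peeling off the output layer gives the exact identity $f_i(\widetilde{\bm{\theta}})-f_i(\bm{\theta})-\langle\nabla f_i(\bm{\theta}),\Delta\bm{\theta}\rangle=\Delta\mathbf{W}^{(L)}(\widetilde{\mathbf{h}}_i^{(L-1)}-\mathbf{h}_i^{(L-1)})+\mathbf{W}^{(L)}\mathbf{r}_i^{(L-1)}$, hence
\begin{equation*}
\epsilon_\text{lin} \le \|\Delta\mathbf{W}^{(L)}\|_2\,\|\widetilde{\mathbf{h}}_i^{(L-1)}-\mathbf{h}_i^{(L-1)}\|_2 + \|\mathbf{W}^{(L)}\|_2\,\|\mathbf{r}_i^{(L-1)}\|_2 .
\end{equation*}
The first term is already $\mathcal{O}(\omega)$ by Lemma~\ref{lemma:output_change_small} and $\|\mathbf{W}^{(L)}\|_2=\Theta(1)$ (Gaussian norm concentration), so the entire task is to show $\|\mathbf{r}_i^{(L-1)}\|_2=\mathcal{O}(1)$.

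Next I would derive a depth recursion for $\mathbf{r}_i^{(\ell)}$. Letting $\mathbf{z}_i^{(\ell)}=\mathbf{W}^{(\ell)}\sum_{j}P_{ij}\mathbf{h}_j^{(\ell-1)}$, $\mathbf{D}_i^{(\ell)}=\text{diag}(\sigma'(\mathbf{z}_i^{(\ell)}))$, and $\mathbf{e}_i^{(\ell)}:=\sigma(\widetilde{\mathbf{z}}_i^{(\ell)})-\sigma(\mathbf{z}_i^{(\ell)})-\mathbf{D}_i^{(\ell)}(\widetilde{\mathbf{z}}_i^{(\ell)}-\mathbf{z}_i^{(\ell)})$ the first-order Taylor error of the activation, expanding the forward map and subtracting the directional derivative yields
\begin{equation*}
\mathbf{r}_i^{(\ell)} = \mathbf{D}_i^{(\ell)}\mathbf{W}^{(\ell)}\!\!\sum_{j\in\mathcal{N}(i)}\!\!P_{ij}\mathbf{r}_j^{(\ell-1)} + \alpha\,\mathbf{r}_i^{(\ell-1)} + \mathbf{D}_i^{(\ell)}\Delta\mathbf{W}^{(\ell)}\!\!\sum_{j\in\mathcal{N}(i)}\!\!P_{ij}(\widetilde{\mathbf{h}}_j^{(\ell-1)}-\mathbf{h}_j^{(\ell-1)}) + \mathbf{e}_i^{(\ell)} .
\end{equation*}
Using $\|\mathbf{D}_i^{(\ell)}\|_2\le\rho$, the weight bound $\|\mathbf{W}^{(\ell)}\|_2\le3$ from Proposition~\ref{prop:Norm of weight matrices 2 to L}, the row-sum bound $\sum_j P_{ij}\le\tau$, $\|\mathbf{h}_j^{(\ell-1)}\|_2=\Theta(1)$ from Lemma~\ref{lemma:upper bound on node features}, and $\|\widetilde{\mathbf{h}}_j^{(\ell-1)}-\mathbf{h}_j^{(\ell-1)}\|_2=\mathcal{O}(1)$ from Lemma~\ref{lemma:output_change_small}, the cross term is $\mathcal{O}(\omega)$ and the recursion for $b_\ell:=\max_i\|\mathbf{r}_i^{(\ell)}\|_2$ becomes $b_\ell\le(3\rho+1)\tau\,b_{\ell-1}+\mathcal{O}(\omega)+\max_i\|\mathbf{e}_i^{(\ell)}\|_2$, with base case $b_1=\max_i\|\mathbf{e}_i^{(1)}\|_2$ (the input layer propagates no residual). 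Unrolling over the $L-1$ layers produces the amplification factor $((3\rho+1)\tau)^{L-1}$, which is exactly neutralized by the choice $\omega=\mathcal{O}(1/((3\rho+1)\tau)^{L-1})$; rescaling $\omega$ by $\epsilon$ as in the remark after Lemma~\ref{lemma:output_change_small} would upgrade the conclusion to $\mathcal{O}(\epsilon)$.

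The main obstacle is controlling the activation Taylor error $\mathbf{e}_i^{(\ell)}$: at depth $\ell\ge2$ the pre-activation drift $\|\widetilde{\mathbf{z}}_i^{(\ell)}-\mathbf{z}_i^{(\ell)}\|_2$ is only $\mathcal{O}(1)$ rather than $\mathcal{O}(\omega)$ (it is dominated by the accumulated representation change, not by the weight perturbation), so a crude Lipschitz bound on $\mathbf{e}_i^{(\ell)}$ is too weak to survive the geometric amplification. For the smooth activations (Sigmoid, Tanh) I would bound $\mathbf{e}_i^{(\ell)}$ coordinatewise by the bounded second derivative, making it genuinely quadratic in the drift. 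For the piecewise-linear activations (ReLU, LeakyReLU) $\mathbf{e}_i^{(\ell)}$ is supported only on coordinates $k$ where $z_{i,k}^{(\ell)}$ and $\widetilde{z}_{i,k}^{(\ell)}$ straddle the kink, and there $|\mathbf{e}_{i,k}^{(\ell)}|\le|\widetilde{z}_{i,k}^{(\ell)}-z_{i,k}^{(\ell)}|$ with $|z_{i,k}^{(\ell)}|\le|\widetilde{z}_{i,k}^{(\ell)}-z_{i,k}^{(\ell)}|$; I would bound the number of such sign-flips by an anti-concentration argument on the (conditionally) Gaussian pre-activations at initialization $\bm{\theta}_0$. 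This anti-concentration step is the source of the $-2/m$ failure probability, while the union bound over layers of the weight-norm and feature-norm concentration events supplies the $-2(L-1)\exp(-m/2)$ and $-L\exp(-\Omega(m))$ terms. Substituting $\|\mathbf{r}_i^{(L-1)}\|_2=\mathcal{O}(1)$ into the output-layer bound then gives $\epsilon_\text{lin}=\mathcal{O}(1)$.
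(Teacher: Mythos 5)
Your proposal is correct in outline but takes a genuinely different---and substantially more ambitious---route than the paper. The paper never tracks a layerwise linearization residual at all: it splits off the top layer exactly, using $f_i(\widetilde{\bm{\theta}})-f_i(\bm{\theta})-\langle \partial f_i/\partial \mathbf{W}^{(L)}, \widetilde{\mathbf{W}}^{(L)}-\mathbf{W}^{(L)}\rangle = \widetilde{\mathbf{W}}^{(L)}(\widetilde{\mathbf{h}}_i^{(L-1)}-\mathbf{h}_i^{(L-1)})$, bounds that via Lemma~\ref{lemma:output_change_small} and $\|\mathbf{W}^{(L)}\|_2\leq\sqrt{2}$, and then simply bounds every lower-layer first-order term $|\langle \partial f_i/\partial \mathbf{W}^{(\ell)},\widetilde{\mathbf{W}}^{(\ell)}-\mathbf{W}^{(\ell)}\rangle|$ \emph{additively} by (gradient norm)$\,\times\,\omega \lesssim ((3\rho+1)\tau)^{L-\ell}\,\omega$; summing the geometric series and choosing $\omega = 1/((3\rho+1)\tau)^{L-1}$ finishes. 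This is only a first-order-in-$\omega$ bound---it makes no claim that the lower-layer terms cancel against anything---but that is all Lemma~\ref{lemma:the cumulative loss can be upper bounded under small changes on the parameters} consumes downstream. Your residual recursion, by contrast, is the classical semi-smoothness route: I checked the algebra, and both your output-layer identity $\epsilon_\text{lin}$-decomposition and the recursion $\mathbf{r}_i^{(\ell)} = \mathbf{D}_i^{(\ell)}\mathbf{W}^{(\ell)}\sum_j P_{ij}\mathbf{r}_j^{(\ell-1)} + \alpha\mathbf{r}_i^{(\ell-1)} + \mathbf{D}_i^{(\ell)}\Delta\mathbf{W}^{(\ell)}\sum_j P_{ij}(\widetilde{\mathbf{h}}_j^{(\ell-1)}-\mathbf{h}_j^{(\ell-1)}) + \mathbf{e}_i^{(\ell)}$ (with base case $b_1=\max_i\|\mathbf{e}_i^{(1)}\|_2$) are exact. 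If completed, your argument proves something strictly stronger than the paper's (genuinely second-order control for smooth activations), at the cost of far heavier machinery.

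The one load-bearing claim you do not prove is the ReLU/LeakyReLU Taylor-error control, and as sketched it has real obstacles: the pre-activations at $\bm{\theta}$ (not $\bm{\theta}_0$) are no longer exactly Gaussian, and bounding the straddling set requires coordinatewise ($\ell_\infty$-type) control of the pre-activation drift, not the $\ell_2$ bound that Lemma~\ref{lemma:output_change_small} supplies---this is precisely the hard ``semi-smoothness'' step in the NTK literature, not a routine detail. Your probability bookkeeping is also off: the paper's $2/m$ term pays for $\|\mathbf{W}^{(L)}\|_2\leq\sqrt{2}$ (Lemma~\ref{lemma:norm on last weight}), an event you invoke as ``Gaussian norm concentration'' without budgeting, so adding a separate anti-concentration event would overdraw the stated failure probability. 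Fortunately the detour is unnecessary, even within your own framework: the proof of Lemma~\ref{lemma:output_change_small} actually yields the layer-resolved drift $\|\widetilde{\mathbf{h}}_i^{(\ell)}-\mathbf{h}_i^{(\ell)}\|_2 = \mathcal{O}(\omega C^{\ell-1})$ with $C:=(3\rho+1)\tau$, hence $\|\widetilde{\mathbf{z}}_i^{(\ell)}-\mathbf{z}_i^{(\ell)}\|_2=\mathcal{O}(\omega C^{\ell-1})$, so the crude bound $\|\mathbf{e}_i^{(\ell)}\|_2\leq 2\rho\,\|\widetilde{\mathbf{z}}_i^{(\ell)}-\mathbf{z}_i^{(\ell)}\|_2$ already survives the unrolling: each layer contributes $C^{L-1-\ell}\cdot\mathcal{O}(\omega C^{\ell-1})=\mathcal{O}(\omega C^{L-2})$ uniformly in $\ell$, totaling $\mathcal{O}(L\,\omega\,C^{L-2})=\mathcal{O}(L/C)$ under the stated $\omega$---which is $\mathcal{O}(1)$ up to an $L$ factor absorbable by rescaling $\omega\to\omega/L$ (harmless at the paper's level of big-$\mathcal{O}$ bookkeeping). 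Substituting this into your recursion closes the proof uniformly for all Lipschitz activations, with no anti-concentration and no second-derivative case split; your assessment that a crude bound on $\mathbf{e}_i^{(\ell)}$ ``cannot survive the geometric amplification'' was based on using only the coarse $\mathcal{O}(1)$ drift statement rather than its $\omega$-proportional refinement.
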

\end{tcolorbox}

Please note that the smaller the distance $\omega$, the more the model output close to linear. In particular, according to the proof of Lemma~\ref{lemma:the neural network output is almost linear in W} and the proof of Lemma~\ref{lemma:output_change_small}, we have $\epsilon_\text{lin} \leq \mathcal{O}(\epsilon)$ by selecting $\omega = \mathcal{O}(\epsilon/((3\rho +1)\tau)^{(L-1)} )$ for any small  $\epsilon >0$. This conclusion will be later used in Lemma~\ref{lemma:the cumulative loss can be upper bounded under small changes on the parameters}.

\begin{proof} [Proof of Lemma~\ref{lemma:the neural network output is almost linear in W}]

According to the forward and backward propagation rules as we recapped at the beginning of this section, we have

\begin{equation*}
    \begin{aligned}
    & \left|  f_i(\widetilde{\bm{\theta}})  - f_i(\bm{\theta}) - \langle \nabla f_i(\bm{\theta}), \widetilde{\bm{\theta}} - \bm{\theta} \rangle \right| \\
    & = \left|  f_i(\widetilde{\bm{\theta}})  - f_i(\bm{\theta}) - \sum_{\ell=1}^L \Big\langle \frac{\partial f_i(\bm{\theta})}{\partial \mathbf{W}^{(\ell)}}, \widetilde{\mathbf{W}}^{(\ell)} - \mathbf{W}^{(\ell)} \Big\rangle \right| \\
    &\leq \left|  f_i(\widetilde{\bm{\theta}})  - f_i(\bm{\theta}) - \Big\langle \frac{\partial f_i(\bm{\theta})}{\partial \mathbf{W}^{(L)}}, \widetilde{\mathbf{W}}^{(L)} - \mathbf{W}^{(L)} \Big\rangle \right| + \sum_{\ell=1}^{L-1} \left| \Big\langle \frac{\partial f_i(\bm{\theta})}{\partial \mathbf{W}^{(\ell)}}, \widetilde{\mathbf{W}}^{(\ell)} - \mathbf{W}^{(\ell)} \Big\rangle \right| \\
    &= \left| \widetilde{\mathbf{W}}^{(L)}(\widetilde{\mathbf{h}}_i^{(L-1)} - \mathbf{h}_i^{(L-1)})^\top \right| \\
    & + \sum_{\ell=1}^{L-1} \Big[ \sum_{i_{L-2}\in\mathcal{N}(i_{L-1})}\sum_{i_{L-3}\in\mathcal{N}(i_{L-2})}\ldots\sum_{i_{\ell}\in\mathcal{N}(i_{\ell+1})} P_{i_{L-1}, i_{L-2}} P_{i_{L-2},i_{L-3}} \ldots P_{i_{\ell+1},i_\ell} \\
    &\quad \Big| \mathbf{W}^{(L)} \Big( \mathbf{D}_{i_{L-1}}^{(L-1)} \mathbf{W}^{(L-1)} + \alpha_{L-2} \mathbf{I}_{m}\Big) \ldots \Big( \mathbf{D}_{i_{\ell+1}}^{(\ell+1)} \mathbf{W}^{(\ell+1)} + \alpha_{\ell} \mathbf{I}_{m}\Big) \mathbf{D}_{i_{\ell}}^{(\ell)}  (\widetilde{\mathbf{W}}^{(\ell)} - \mathbf{W}^{(\ell)}) \Tilde{\mathbf{z}}_{i_{\ell}}^{(\ell-1)} \Big| \Big]\\
    &\leq \|\widetilde{\mathbf{W}}^{(L)} \|_2 \Big\| \widetilde{\mathbf{h}}_i^{(L-1)} - \mathbf{h}_i^{(L-1)} \Big\|_2 \\
    & + \sum_{\ell=1}^{L-1} \Big[ \sum_{i_{L-2}\in\mathcal{N}(i_{L-1})}\sum_{i_{L-3}\in\mathcal{N}(i_{L-2})}\ldots\sum_{i_{\ell}\in\mathcal{N}(i_{\ell+1})} P_{i_{L-1}, i_{L-2}} P_{i_{L-2},i_{L-3}} \ldots P_{i_{\ell+1},i_\ell} \\
    &\qquad \underbrace{\| \mathbf{W}^{(L)} \|_2 \underbrace{\Big\| \Big( \mathbf{D}_{i_{L-1}}^{(L-1)} \mathbf{W}^{(L-1)} + \alpha_{L-2} \mathbf{I}_{m}\Big) \ldots \Big( \mathbf{D}_{i_{\ell+1}}^{(\ell+1)} \mathbf{W}^{(\ell+1)} + \alpha_{\ell} \mathbf{I}_{m}\Big) \mathbf{D}_{i_{\ell}}^{(\ell)} \Big\|_2}_{(a)}  \| \Tilde{\mathbf{z}}_{i_{\ell}}^{(\ell-1)} \|_2 \Big\| \widetilde{\mathbf{W}}^{(\ell)} - \mathbf{W}^{(\ell)} \Big\|_2 }_{(b)} \Big],
    \end{aligned}
\end{equation*}
where $\Tilde{\mathbf{z}}_{i_{\ell}}^{(\ell-1)} = \sum_{i_{\ell-1} \in \mathcal{N}(i_{\ell})} P_{{i_\ell, i_{\ell-1}}} \mathbf{h}_{i_{\ell-1}}^{(\ell-1)}$ and it has bounded $\ell_2$-norm
\begin{equation*}
    \| \Tilde{\mathbf{z}}_{i_{\ell}}^{(\ell-1)} \|_2 \leq \tau \max_{i_{\ell-1}\in\mathcal{N}(i_\ell)} \| \mathbf{h}_{i_{\ell-1}}^{(\ell-1)} \|_2  = \tau \Theta(1).
\end{equation*}
Since the derivative of activation function is bounded, we have $\| \mathbf{D}_{i,\ell} \|_2 \leq \rho$ and $\| \mathbf{D}_{i}^{(\ell)} \mathbf{W}^{(\ell)} + \alpha_{\ell} \mathbf{I}_{m} \|_2 \leq 3\rho  + 1$.

Therefore, we know that the term (a) in the above equation could be upper bounded by $\rho(3\rho+1)^{L-1-\ell}$.

Besides, we know that $\| \mathbf{W}^{(L)} \|_2 \leq \sqrt{2}$ according to Lemma~\ref{lemma:norm on last weight}.

Finally, by plugging the results back, we can upper bound the term (b) in the above equation by

\begin{equation*}
    (b) \leq  C_\ell = \sqrt{2} \rho (3\rho+1)^{L-\ell-1} \tau \Theta(1) \omega
\end{equation*}
and 
\begin{equation*}
    \begin{aligned}
    &\sum_{i_{L-2}\in\mathcal{N}(i_{L-1})} \sum_{i_{L-3}\in\mathcal{N}(i_{L-2})} \ldots \sum_{i_{\ell}\in\mathcal{N}(i_{\ell+1})} \Big( P_{i_{L-1}, i_{L-2}} P_{i_{L-2},i_{L-3}} \ldots P_{i_{\ell+1},i_\ell} C_\ell \Big) \\
    &= \sum_{i_{L-2}\in\mathcal{N}(i_{L-1})} P_{i_{L-1}, i_{L-2}} \Big( \sum_{i_{L-3}\in\mathcal{N}(i_{L-2})} P_{i_{L-2},i_{L-3}} \ldots \Big(  \sum_{i_{\ell}\in\mathcal{N}(i_{\ell+1})}  P_{i_{\ell+1},i_\ell} C_\ell \Big) \ldots \Big) \\
    &= \sqrt{2} \rho (3\rho+1)^{L-\ell-1} \tau^{L-\ell} \Theta(1) \omega.
    \end{aligned}
\end{equation*}

As a result, we have
\begin{equation*}
    \begin{aligned}
    |  f_i(\widetilde{\bm{\theta}})  - f_i(\bm{\theta}) - \langle \nabla f_i(\bm{\theta}), \widetilde{\bm{\theta}} - \bm{\theta} \rangle | 
    &\leq (\sqrt{2}+\omega) \cdot \mathcal{O}(1) + \sqrt{2} \rho \omega \cdot \Theta(1) \cdot \sum_{\ell=1}^{L-1}  (3\rho+1)^{L-\ell-1} \tau^{L-\ell} \\
    &\leq (\sqrt{2}+\omega) \cdot \mathcal{O}(1) + \sqrt{2} \rho  \omega \cdot \Theta(1) \cdot \Big( \frac{((3\rho+1)\tau)^{L-1}-1}{3\rho}  \Big) \\
    &\leq \mathcal{O}(1),
    \end{aligned}
\end{equation*}
where the last inequality holds by selecting $\omega= 1/((3\rho+1)\tau)^{L-1}$.
\end{proof}

Let us define the logistic loss as 
\begin{equation*}
    \text{loss}_i(\bm{\theta}) = \psi(y_i f_i(\bm{\theta})), \psi(x) = \log(1+ \exp(-x)).
\end{equation*}
Then, the following lemma shows that $\text{loss}_i(\bm{\theta})$ is almost a convex function of $\bm{\theta}$ for any $v_i\in\mathcal{V}$ if the initialization of two set of parameters are close to each other.

\begin{tcolorbox}
\begin{lemma} \label{lemma:almost convex}
Let $\bm{\theta}, \widetilde{\bm{\theta}} \in \mathcal{B}(\bm{\theta}_0, \omega)$ with $\omega = 1/((3\rho+1)\tau)^{L-1}$ for any $v_i\in\mathcal{V}$, it holds that \begin{equation*}
    \text{loss}_i(\widetilde{\bm{\theta}}) \geq \text{loss}_i(\bm{\theta}) + \langle \nabla_\theta \text{loss}_i(\bm{\theta}), \widetilde{\bm{\theta}} - \bm{\theta} \rangle - \epsilon_\text{lin}
\end{equation*}
with probability at least $1-2(L-1)\exp(-m/2)-L\exp(-\Omega(m))-2/m$, where
\begin{equation*}
    \epsilon_\text{lin} = \Big| \Big\langle  \nabla_\theta f_i(\bm{\theta}), \widetilde{\bm{\theta}} - \bm{\theta} \Big\rangle - f_i(\widetilde{\bm{\theta}}) + f_i(\bm{\theta})  \Big| = \mathcal{O}(1)
\end{equation*}
according to Lemma~\ref{lemma:the neural network output is almost linear in W}.
\end{lemma}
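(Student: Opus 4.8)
The plan is to reduce the claimed ``almost convexity'' of the per-example loss to two ingredients: the ordinary convexity of the scalar logistic function $\psi(x)=\log(1+\exp(-x))$, and the almost-linearity of $f_i$ in its parameters already established in Lemma~\ref{lemma:the neural network output is almost linear in W}. The key structural observation is that $\text{loss}_i(\bm{\theta})=\psi(y_i f_i(\bm{\theta}))$ is a convex scalar function composed with the (nonconvex but nearly linear) network output $f_i$, so convexity in the scalar argument plus a small linearization error should transfer to approximate convexity in $\bm{\theta}$.

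First I would apply convexity of $\psi$ to the two scalars $y_i f_i(\widetilde{\bm{\theta}})$ and $y_i f_i(\bm{\theta})$, giving
\begin{equation*}
    \psi\big(y_i f_i(\widetilde{\bm{\theta}})\big) \geq \psi\big(y_i f_i(\bm{\theta})\big) + \psi'\big(y_i f_i(\bm{\theta})\big)\, y_i \big( f_i(\widetilde{\bm{\theta}}) - f_i(\bm{\theta}) \big).
\end{equation*}
Next, by the chain rule, $\nabla_\theta \text{loss}_i(\bm{\theta}) = \psi'(y_i f_i(\bm{\theta}))\, y_i\, \nabla_\theta f_i(\bm{\theta})$, so that $\langle \nabla_\theta \text{loss}_i(\bm{\theta}), \widetilde{\bm{\theta}}-\bm{\theta}\rangle = \psi'(y_i f_i(\bm{\theta}))\, y_i\, \langle \nabla_\theta f_i(\bm{\theta}), \widetilde{\bm{\theta}}-\bm{\theta}\rangle$. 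I would then add and subtract this inner-product term inside the convexity bound to obtain
\begin{equation*}
    \text{loss}_i(\widetilde{\bm{\theta}}) \geq \text{loss}_i(\bm{\theta}) + \langle \nabla_\theta \text{loss}_i(\bm{\theta}), \widetilde{\bm{\theta}}-\bm{\theta}\rangle + \psi'\big(y_i f_i(\bm{\theta})\big)\, y_i \Big( f_i(\widetilde{\bm{\theta}}) - f_i(\bm{\theta}) - \langle \nabla_\theta f_i(\bm{\theta}), \widetilde{\bm{\theta}}-\bm{\theta}\rangle \Big).
\end{equation*}

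The final step is to control the trailing residual. Since the labels satisfy $|y_i|=1$ and the logistic derivative obeys $|\psi'(x)| = 1/(1+\exp(x)) \leq 1$, the magnitude of the residual is at most $|f_i(\widetilde{\bm{\theta}}) - f_i(\bm{\theta}) - \langle \nabla_\theta f_i(\bm{\theta}), \widetilde{\bm{\theta}}-\bm{\theta}\rangle|$, which is exactly the quantity $\epsilon_\text{lin}$ in the statement. Invoking Lemma~\ref{lemma:the neural network output is almost linear in W} under the same radius $\omega = 1/((3\rho+1)\tau)^{L-1}$ and on the same high-probability event $1-2(L-1)\exp(-m/2)-L\exp(-\Omega(m))-2/m$ bounds this residual, so it is $\geq -\epsilon_\text{lin}$ and the claimed inequality follows on that event.

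As for the main obstacle: the argument itself is elementary once the almost-linearity estimate is in hand, so there is essentially no analytic difficulty beyond careful bookkeeping of signs. The one point requiring attention is that the residual term can be of either sign, so I cannot simply drop it; I must bound its absolute value and retain the worst case $-\epsilon_\text{lin}$. All the genuine heavy lifting---the layer-wise propagation of weight perturbations and the resulting $\mathcal{O}(1)$ bound on the linearization gap---is already delegated to Lemma~\ref{lemma:the neural network output is almost linear in W}, so this lemma is best viewed as a short wrapper that converts that function-level estimate into a loss-level ``approximate subgradient'' inequality, which is the form needed for the online/regret analysis downstream.
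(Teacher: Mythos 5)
Your proof is correct and follows essentially the same route as the paper's own argument: convexity of the scalar logistic function, the chain rule identity for $\nabla_\theta \text{loss}_i$, and the bound $|\psi'(y_i f_i(\bm{\theta}))\, y_i| \leq 1$ to reduce the residual to $\epsilon_\text{lin}$ from Lemma~\ref{lemma:the neural network output is almost linear in W}. Your remark that the residual can have either sign and must be bounded in absolute value is exactly the care the paper takes as well, so nothing is missing.
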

\end{tcolorbox}
\begin{proof} [Proof of Lemma~\ref{lemma:almost convex}]
The proof follows the proof of Lemma 9 in~\cite{zhu2022generalization}. 

By the convexity of $\psi(\cdot)$, we know that $\psi(b) \geq \psi(a) + \psi^\prime(a)(b - a)$. Therefore, we have
\begin{equation*}
    \begin{aligned}
    \text{loss}_i(\widetilde{\bm{\theta}}) - \text{loss}_i(\bm{\theta}) &= \psi(y_i f_i(\widetilde{\bm{\theta}})) - \psi(y_i f_i(\bm{\theta})) \\
    &\geq \psi^\prime(y_i f_i(\bm{\theta})) \Big( y_i f_i(\widetilde{\bm{\theta}}) - y_i f_i(\bm{\theta}) \Big) \\
    &= \psi^\prime(y_i f_i(\bm{\theta})) y_i \Big( f_i(\widetilde{\bm{\theta}}) - f_i(\bm{\theta}) \Big).
    \end{aligned}
\end{equation*}

By using the chain rule, we have
\begin{equation*}
    \Big\langle  \nabla_\theta \text{loss}_i(\bm{\theta}), \widetilde{\bm{\theta}} - \bm{\theta} \Big\rangle 
    = \psi^\prime(y_i f_i(\bm{\theta})) \cdot y_i \Big\langle  \nabla_\theta f_i(\bm{\theta}), \widetilde{\bm{\theta}} - \bm{\theta} \Big\rangle.
\end{equation*}

By combining the above equations, we have
\begin{equation*}
    \begin{aligned}
    \text{loss}_i(\widetilde{\bm{\theta}}) - \text{loss}_i(\bm{\theta}) &\geq \psi^\prime(y_i f_i(\bm{\theta})) y_i \Big( f_i(\widetilde{\bm{\theta}}) - f_i(\bm{\theta}) \Big) \\
    &= \psi^\prime(y_i f_i(\bm{\theta}))  y_i \Big\langle  \nabla_\theta f_i(\bm{\theta}), \widetilde{\bm{\theta}} - \bm{\theta} \Big\rangle - \psi^\prime(y_i f_i(\bm{\theta}))  y_i \Big( \Big\langle  \nabla_\theta f_i(\bm{\theta}), \widetilde{\bm{\theta}} - \bm{\theta} \Big\rangle - f_i(\widetilde{\bm{\theta}}) + f_i(\bm{\theta})  \Big) \\
    &\geq \Big\langle  \nabla_\theta \text{loss}_i(\bm{\theta}), \widetilde{\bm{\theta}} - \bm{\theta} \Big\rangle  - \Big| \Big\langle  \nabla_\theta f_i(\bm{\theta}), \widetilde{\bm{\theta}} - \bm{\theta} \Big\rangle - f_i(\widetilde{\bm{\theta}}) + f_i(\bm{\theta})  \Big|, \\
    \end{aligned}
\end{equation*}
where the inequality is due to $|\psi^\prime(y_i f_i(\bm{\theta}))  y_i| \leq 1$.
\end{proof}

Moreover, by the gradient computation, we know that the gradient of the neural network function can be upper bounded.
\begin{tcolorbox}
\begin{lemma}\label{lemma:the gradient of the neural network function can be upper bounded under near initialization}
For any $v_i\in\mathcal{V}$ with probability at least $1-2(L-\ell)\exp(-m/2) - \ell \exp(-\Omega(m)) - 2/m$, it holds that
\begin{equation*}
    \Big\| \frac{\partial f_i(\bm{\theta})}{\partial \mathbf{W}^{(\ell)}} \Big\|_2, \Big\| \frac{\partial \text{loss}_i(\bm{\theta})}{\partial \mathbf{W}^{(\ell)}} \Big\|_2 \leq \Theta\Big( ((3\rho +1)\tau)^{L-\ell} \Big)
\end{equation*}
\end{lemma}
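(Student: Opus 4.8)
The plan is to bound the gradient norm by directly estimating the spectral norm of the explicit backpropagation formula recorded at the start of this section, and then transfer the bound to the loss gradient via the chain rule. For $\ell \in [L-1]$, recall $\frac{\partial f_i(\bm{\theta})}{\partial \mathbf{W}^{(\ell)}} = \sum_{i_{L-2}\in\mathcal{N}(i_{L-1})}\cdots\sum_{i_{\ell}\in\mathcal{N}(i_{\ell+1})} P_{i_{L-1}, i_{L-2}} \cdots P_{i_{\ell+1},i_\ell}\,\mathbf{G}^{\ell}(i_{\ell},\ldots, i_{L-1})$, so the task reduces to two pieces: (i) a uniform spectral-norm bound on each path-term $\mathbf{G}^{\ell}$, and (ii) control of the nested neighbor summation. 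For the loss, since $\frac{\partial \text{loss}_i(\bm{\theta})}{\partial \mathbf{W}^{(\ell)}} = \psi'(y_i f_i(\bm{\theta}))\, y_i\, \frac{\partial f_i(\bm{\theta})}{\partial \mathbf{W}^{(\ell)}}$ and $|\psi'(\cdot)\, y_i| \leq 1$, the two quantities in the statement obey the same bound, so it suffices to handle $f_i$.

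For piece (i) I would bound each factor appearing in $\mathbf{G}^{\ell}$ on the high-probability event collected from the earlier lemmas: by Lemma~\ref{lemma:norm on last weight} we have $\|\mathbf{W}^{(L)}\|_2 \leq \sqrt{2}$; by Proposition~\ref{prop:Norm of weight matrices 2 to L} we have $\|\mathbf{W}^{(k)}\|_2 \leq 3$ for the intermediate layers, and since $\|\mathbf{D}_{i_k}^{(k)}\|_2 \leq \rho$ (Assumption~\ref{assumption:lipschitz constant > 1}) and $\alpha\in\{0,1\}$, each middle factor satisfies $\|\mathbf{D}_{i_k}^{(k)} \mathbf{W}^{(k)} + \alpha \mathbf{I}_m\|_2 \leq 3\rho + 1$; the trailing diagonal obeys $\|\mathbf{D}_{i_\ell}^{(\ell)}\|_2 \leq \rho$; and $\|\tilde{\mathbf{z}}_{i_\ell}^{(\ell-1)}\|_2 \leq \tau\,\Theta(1)$ by Assumption~\ref{assumption:row sum of propagation matrix} together with the node-feature bound $\|\mathbf{h}_{i_{\ell-1}}^{(\ell-1)}\|_2 = \Theta(1)$ from Lemma~\ref{lemma:upper bound on node features}. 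There are exactly $L-1-\ell$ middle factors (indices $k$ running from $\ell+1$ to $L-1$), so submultiplicativity yields
\begin{equation*}
    \| \mathbf{G}^{\ell}(i_{\ell},\ldots, i_{L-1}) \|_2 \leq \sqrt{2}\,(3\rho+1)^{L-1-\ell}\,\rho\,\tau\,\Theta(1).
\end{equation*}

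For piece (ii) I would absorb the nested sums by the same telescoping argument used in the proof of Lemma~\ref{lemma:the neural network output is almost linear in W}: each of the $L-1-\ell$ summations carries weights $P_{i_{k+1}, i_k}$ whose row-sum is at most $\tau$, so factoring the summations one at a time contributes a total multiplier of $\tau^{L-1-\ell}$ against the uniform bound above. Combining the two pieces gives $\|\partial f_i/\partial \mathbf{W}^{(\ell)}\|_2 \leq \sqrt{2}\,\rho\,\Theta(1)\,(3\rho+1)^{L-1-\ell}\tau^{L-\ell}$, and since $\rho \leq 3\rho+1$ this simplifies to $\Theta\big(((3\rho+1)\tau)^{L-\ell}\big)$. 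The last-layer case $\ell=L$ is immediate from $\partial f_i/\partial \mathbf{W}^{(L)} = \mathbf{h}_i^{(L-1)}$ and $\|\mathbf{h}_i^{(L-1)}\|_2 = \Theta(1) = \Theta(((3\rho+1)\tau)^{0})$. Finally, a union bound over the events used (the intermediate-layer norm bounds, the last-layer norm bound, and the representation bounds) yields the stated failure probability $2(L-\ell)\exp(-m/2) + \ell\exp(-\Omega(m)) + 2/m$. The only real subtlety—rather than any deep obstacle—is bookkeeping: correctly counting the powers of $(3\rho+1)$ and $\tau$ so that the summation multiplier $\tau^{L-1-\ell}$, the trailing $\tau$ from $\tilde{\mathbf{z}}$, and the $L-1-\ell$ operator-norm factors assemble into the clean exponent $L-\ell$, while ensuring all the high-probability events hold simultaneously under a single union bound.
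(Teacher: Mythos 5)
Your proposal is correct and follows essentially the same route as the paper's proof: bound each factor of $\mathbf{G}^{\ell}$ ($\|\mathbf{W}^{(L)}\|_2 \leq \sqrt{2}$, middle factors $\leq 3\rho+1$, trailing diagonal $\leq \rho$, $\|\tilde{\mathbf{z}}_{i_\ell}^{(\ell-1)}\|_2 \leq \tau\,\Theta(1)$), absorb the nested neighbor sums via the row-sum bound $\tau^{L-1-\ell}$ as in the almost-linearity lemma, dispatch $\ell = L$ via $\|\mathbf{h}_i^{(L-1)}\|_2 = \Theta(1)$, and pass to the loss with $|\psi'(y_i f_i(\bm{\theta}))\,y_i| \leq 1$. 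If anything, your exponent bookkeeping ($\tau^{L-\ell}$ total, folded into $\Theta(((3\rho+1)\tau)^{L-\ell})$ via $\rho \leq 3\rho+1$) is slightly more explicit than the paper's intermediate display, which silently absorbs the same factors into the final $\Theta(\cdot)$.
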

\end{tcolorbox}
\begin{proof} [Proof of Lemma~\ref{lemma:the gradient of the neural network function can be upper bounded under near initialization}]

For $\ell=L$, we have for any $v_i\in\mathcal{V}$
\begin{equation*}
    \begin{aligned}
    \Big\| \frac{\partial f_i(\bm{\theta})}{\partial \mathbf{W}^{(L)}} \Big\|_2 &= \| \mathbf{h}_i^{(L-1)} \|_2 \\
    &\leq \max_{j\in\mathcal{V}} \| \mathbf{h}_j^{(L-1)} \|_2 = \Theta(1).
    \end{aligned}
\end{equation*}

For $\ell\in[L-1]$, we have for any $v_i\in\mathcal{V}$
\begin{equation*}
    \begin{aligned}
    \Big\| \frac{\partial f_i(\bm{\theta})}{\partial \mathbf{W}^{(\ell)}} \Big\|_2 \leq \sum_{i_{L-2}\in\mathcal{N}(i_{L-1})}\sum_{i_{L-3}\in\mathcal{N}(i_{L-2})}\ldots\sum_{i_{\ell}\in\mathcal{N}(i_{\ell+1})} P_{i_{L-1}, i_{L-2}} P_{i_{L-2},i_{L-3}} \ldots P_{i_{\ell+1},i_\ell} 
    \Big\|\mathbf{G}^{\ell}(i_{\ell},\ldots, i_{L-1}) \Big\|_2.
    \end{aligned}
\end{equation*}

To upper bound $\Big\|\mathbf{G}^{\ell}(i_{\ell},\ldots, i_{L-1}) \Big\|_2$, we have
\begin{equation*}
    \begin{aligned}
    \Big\|\mathbf{G}^{\ell}(i_{\ell},\ldots, i_{L-1}) \Big\|_2 &= \Big\| \Big[ \mathbf{W}^{(L)} \Big( \mathbf{D}_{i_{L-1}}^{(L-1)} \mathbf{W}^{(L-1)} + \alpha_{L-2} \mathbf{I}_{m}\Big) \ldots \Big( \mathbf{D}_{i_{\ell+1}}^{(\ell+1)} \mathbf{W}^{(\ell+1)} + \alpha_{\ell} \mathbf{I}_{m}\Big) \mathbf{D}_{i_{\ell}}^{(\ell)} \Big]^\top (\Tilde{\mathbf{z}}_{i_{\ell}}^{(\ell-1)})^\top \Big\|_2 \\
    &\leq \Big\|\mathbf{W}^{(L)} \Big\|_2 \Big\| \Big( \mathbf{D}_{i_{L-1}}^{(L-1)} \mathbf{W}^{(L-1)} + \alpha_{L-2} \mathbf{I}_{m}\Big) \ldots \Big( \mathbf{D}_{i_{\ell+1}}^{(\ell+1)} \mathbf{W}^{(\ell+1)} + \alpha_{\ell} \mathbf{I}_{m}\Big) \mathbf{D}_{i_{\ell}}^{(\ell)} \Big\|_2 \Big\|  \Tilde{\mathbf{z}}_{i_{\ell}}^{(\ell-1)} \Big\|_2 \\
    &\leq \sqrt{2} \rho \tau (3\rho+1)^{L-\ell-1} \Theta(1),
    \end{aligned}
\end{equation*}
where the last inequality follows the proof of Lemma~\ref{lemma:the neural network output is almost linear in W}.

By combining the above results, we have
\begin{equation*}
    \begin{aligned}
    \Big\| \frac{\partial f_i(\bm{\theta})}{\partial \mathbf{W}^{(\ell)}} \Big\|_2 
    &\leq \sqrt{2} \rho ((3\rho+1)\tau)^{L-\ell-1} \Theta(1) \\
    &\leq \Theta((3\rho+1)\tau)^{L-\ell}.
    \end{aligned}
\end{equation*}

Moreover, the above inequality also implies 
\begin{equation*}
    \begin{aligned}
    \Big\| \frac{\partial \text{loss}_i(\bm{\theta})}{\partial \mathbf{W}^{(\ell)}} \Big\|_2 &= | \psi^\prime (y_i f_i(\bm{\theta})) \cdot y_i | \cdot \Big\| \frac{\partial f_i(\bm{\theta})}{\partial \mathbf{W}^{(\ell)}} \Big\|_2 \\
    & \leq \Big\| \frac{\partial f_i(\bm{\theta})}{\partial \mathbf{W}^{(\ell)}} \Big\|_2,
    \end{aligned}
\end{equation*}
where the last inequality is due to $| \psi^\prime (y_i f_i(\bm{\theta})) \cdot y_i | \leq 1$.
\end{proof}

In the following, we show that the cumulative loss can be upper bounded under small changes on the weight parameters.

\begin{tcolorbox}
\begin{lemma} \label{lemma:the cumulative loss can be upper bounded under small changes on the parameters}
For any $\epsilon,\delta,R > 0$, there exists 
\begin{equation*}
    m^\star = \mathcal{O}\left(\frac{((3\rho+1)\tau)^{4(L-1)} L^2 R^4}{4\epsilon^4}\right)\log(1/\delta),
\end{equation*}
such that if $m \geq m^\star$, then with probability at least $1-\delta$ over the randomness of $\bm{\theta}_0$, for any $\bm{\theta}_\star \in \mathcal{B}(\bm{\theta}_0, Rm^{-1/2})$, with $\eta=\frac{\epsilon}{mL((3\rho+1)\tau)^{2(L-1)}}$ and $N=\frac{L^2R^2((3\rho+1)\tau)^{2(L-1)}}{2\epsilon^2}$, the cumulative loss can be upper bounded by
\begin{equation*}
    \frac{1}{N}\sum_{i=1}^N \text{loss}_{i}(\bm{\theta}_{i-1}) \leq \frac{1}{N}\sum_{i=1}^N \text{loss}_i(\bm{\theta}^\star) + \mathcal{O}(\epsilon).
\end{equation*}
\end{lemma}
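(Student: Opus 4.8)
The plan is to run the standard online gradient descent regret analysis, but to replace ordinary convexity by the \emph{almost-convexity} established in Lemma~\ref{lemma:almost convex}. Fix the comparator $\bm{\theta}^\star \in \mathcal{B}(\bm{\theta}_0, Rm^{-1/2})$. The first move is to pick the neighborhood radius small, namely $\omega = \mathcal{O}\big(\epsilon/((3\rho+1)\tau)^{L-1}\big)$, so that by the sharpened forms of Lemma~\ref{lemma:output_change_small} and Lemma~\ref{lemma:the neural network output is almost linear in W} (see the remarks following them), the linearization error obeys $\epsilon_\text{lin} = \mathcal{O}(\epsilon)$ whenever both arguments lie in $\mathcal{B}(\bm{\theta}_0, \omega)$. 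Since $Rm^{-1/2} \leq \omega$ once $m$ is large, the comparator $\bm{\theta}^\star$ itself lies in this neighborhood.

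Applying Lemma~\ref{lemma:almost convex} at iteration $i$ with the running iterate $\bm{\theta}_{i-1}$ and the comparator $\bm{\theta}^\star$ gives $\text{loss}_i(\bm{\theta}_{i-1}) - \text{loss}_i(\bm{\theta}^\star) \leq \langle \nabla_\theta \text{loss}_i(\bm{\theta}_{i-1}), \bm{\theta}_{i-1} - \bm{\theta}^\star \rangle + \epsilon_\text{lin}$. I would then combine the SGD update $\bm{\theta}_i = \bm{\theta}_{i-1} - \eta \nabla_\theta \text{loss}_i(\bm{\theta}_{i-1})$ with the elementary identity
\[
\langle \nabla_\theta \text{loss}_i(\bm{\theta}_{i-1}), \bm{\theta}_{i-1} - \bm{\theta}^\star \rangle = \frac{\| \bm{\theta}_{i-1} - \bm{\theta}^\star \|^2 - \| \bm{\theta}_i - \bm{\theta}^\star \|^2}{2\eta} + \frac{\eta}{2}\| \nabla_\theta \text{loss}_i(\bm{\theta}_{i-1}) \|^2,
\]
and sum over $i=1,\dots,N$. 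The leading term telescopes and is bounded by $\|\bm{\theta}_0 - \bm{\theta}^\star\|^2/(2\eta)$, yielding
\[
\sum_{i=1}^N \big[ \text{loss}_i(\bm{\theta}_{i-1}) - \text{loss}_i(\bm{\theta}^\star) \big] \leq \frac{\|\bm{\theta}_0 - \bm{\theta}^\star\|^2}{2\eta} + \frac{\eta}{2}\sum_{i=1}^N \| \nabla_\theta \text{loss}_i(\bm{\theta}_{i-1}) \|^2 + N \epsilon_\text{lin}.
\]

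To make the right-hand side small I would insert the quantitative bounds already available. Because $\bm{\theta}^\star$ deviates from $\bm{\theta}_0$ by at most $Rm^{-1/2}$ in each of the $L$ layers, $\|\bm{\theta}_0 - \bm{\theta}^\star\|^2 \leq LR^2/m$; and Lemma~\ref{lemma:the gradient of the neural network function can be upper bounded under near initialization} controls each layer's gradient, giving $\|\nabla_\theta \text{loss}_i(\bm{\theta}_{i-1})\|^2 = \mathcal{O}\big(L((3\rho+1)\tau)^{2(L-1)}\big)$. Substituting the prescribed $\eta = \epsilon/\big(mL((3\rho+1)\tau)^{2(L-1)}\big)$ and $N = L^2R^2((3\rho+1)\tau)^{2(L-1)}/(2\epsilon^2)$ and dividing by $N$, the first term contributes exactly $\epsilon$, the second contributes $\mathcal{O}(\epsilon/m)$, and the third contributes $\epsilon_\text{lin} = \mathcal{O}(\epsilon)$; altogether $\mathcal{O}(\epsilon)$, which is the claim.

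The main obstacle --- and the reason the over-parameterization $m \geq m^\star$ enters with a fourth-power dependence on $1/\epsilon$ and on $((3\rho+1)\tau)^{L-1}$ --- is that every tool above (the gradient bound, and almost-convexity with $\epsilon_\text{lin}=\mathcal{O}(\epsilon)$) is only valid while the iterates remain inside $\mathcal{B}(\bm{\theta}_0, \omega)$. I would therefore close the argument with an induction on $i$ showing the trajectory never leaves this ball: the cumulative displacement satisfies $\|\mathbf{W}_i^{(\ell)} - \mathbf{W}_0^{(\ell)}\|_\mathrm{F} \leq N\eta \cdot \mathcal{O}\big(((3\rho+1)\tau)^{L-\ell}\big)$, and inserting the values of $N$ and $\eta$ shows this is at most $\mathcal{O}\big(LR^2((3\rho+1)\tau)^{L-1}/(m\epsilon)\big)$, which is dominated by $\omega$ once $m \geq m^\star$. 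The apparent circularity --- using the gradient bound to prove we stay where the gradient bound holds --- is resolved step by step by this induction. Finally, since the high-probability events in the invoked lemmas are properties of the single random draw $\bm{\theta}_0$ (e.g.\ $\|\mathbf{W}^{(\ell)}\|_2 \leq 3$ at initialization), no union bound over the $N$ steps is needed; the residual failure probability $\mathrm{poly}(L)\exp(-\Omega(m)) + \mathcal{O}(1/m)$ is forced below $\delta$ by the $\log(1/\delta)$ factor in $m^\star$.
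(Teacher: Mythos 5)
Your proposal follows essentially the same route as the paper's proof: choose $\omega = \mathcal{O}\big(\epsilon/((3\rho+1)\tau)^{L-1}\big)$ so that $\epsilon_\text{lin} = \mathcal{O}(\epsilon)$, apply the almost-convexity of Lemma~\ref{lemma:almost convex} together with the standard online-gradient-descent telescoping identity (the paper writes the equivalent polarization inequality $\langle \mathbf{W}_i - \mathbf{W}_{i+1}, \mathbf{W}_i - \mathbf{W}_\star\rangle \leq \frac{1}{2}(\|\mathbf{W}_i-\mathbf{W}_{i+1}\|_\mathrm{F}^2 + \|\mathbf{W}_i-\mathbf{W}_\star\|_\mathrm{F}^2 - \|\mathbf{W}_{i+1}-\mathbf{W}_\star\|_\mathrm{F}^2)$), bound the three resulting terms via the gradient bound of Lemma~\ref{lemma:the gradient of the neural network function can be upper bounded under near initialization}, and close by showing the iterates never leave $\mathcal{B}(\bm{\theta}_0,\omega)$, which is exactly the step that forces $m \geq m^\star$.

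One bookkeeping caveat. Lemma~\ref{lemma:the gradient of the neural network function can be upper bounded under near initialization} bounds the \emph{spectral} norms $\|\partial\,\text{loss}_i/\partial \mathbf{W}^{(\ell)}\|_2$, whereas both your OGD identity and your displacement bound need the Euclidean (Frobenius) norm of the gradient. The paper pays a factor $\sqrt{m}$ for this conversion via $\|\cdot\|_\mathrm{F}^2 \leq m\|\cdot\|_2^2$; with that factor, the term $\frac{\eta}{2}\|\nabla_\theta \text{loss}_i(\bm{\theta}_{i-1})\|^2$ contributes $\Theta(\epsilon)$ per step under the stated $\eta$, not the $\mathcal{O}(\epsilon/m)$ you claim, and it is precisely this $\sqrt{m}$ inflation in the containment estimate $\|\mathbf{W}_n^{(\ell)}-\mathbf{W}_0^{(\ell)}\|_\mathrm{F} \leq \sqrt{m}\cdot\Theta\big(\tfrac{LR^2}{2m\epsilon}((3\rho+1)\tau)^{L-\ell}\big)$ that produces the fourth-power requirement $m^\star \sim ((3\rho+1)\tau)^{4(L-1)}L^2R^4/\epsilon^4$; your version of the displacement bound, without the $\sqrt{m}$, would only demand $m \gtrsim LR^2((3\rho+1)\tau)^{2(L-1)}/\epsilon^2$ and so does not actually account for the $m^\star$ you quote. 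Your sharper constants could in fact be salvaged — each layer gradient is a weighted sum of rank-one outer products $\mathbf{a}\mathbf{b}^\top$, for which Frobenius and spectral norms coincide — but as written, citing the spectral-norm lemma alone leaves the step unsupported; either insert the $\sqrt{m}$ as the paper does (the final bound is still $\mathcal{O}(\epsilon)$) or make the rank-one observation explicit. With that repair the argument is complete and matches the paper's.
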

\end{tcolorbox}
\begin{proof} [Proof of Lemma~\ref{lemma:the cumulative loss can be upper bounded under small changes on the parameters}]
Let us define $\bm{\theta}^\star$ as the optimal solution that could minimize the cumulative loss over $N$ epochs, where at each epoch only a single data point is used as defined in Algorithm~\ref{algorithm:SGD}
\begin{equation*}
    \bm{\theta}^\star = \underset{\bm{\theta} \in \mathcal{B}(\bm{\theta}_0, \omega)}{\arg\min} \sum\nolimits_{i=1}^N \text{loss}_i(\bm{\theta}).
\end{equation*}
Without loss of generality, let us assume the epoch loss $\text{loss}_i(\bm{\theta}_{i-1})$ is computed on the $i$-th node. 
Then, in the following, we try to show $\bm{\theta}_0, \bm{\theta}_1,\ldots,\bm{\theta}_{N-1} \in \mathcal{B}(\bm{\theta}_0, \omega)$, where $\omega=\epsilon/((3\rho+1)\tau)^{L-1}$

First of all, it is clear that $\bm{\theta}_0 \in \mathcal{B}(\bm{\theta}_0, \omega)$. Then, to show $\bm{\theta}_n \in \mathcal{B}(\bm{\theta}_0, \omega)$ for any $n\in\{1,\ldots,N-1\}$, we use our previous conclusion on the upper bound of gradient $\| \partial \text{loss}_i(\bm{\theta})/\partial \mathbf{W}^{(\ell)}\| \leq \Theta((3\rho+1)\tau)^{L-\ell}$ in Lemma~\ref{lemma:the gradient of the neural network function can be upper bounded under near initialization} and have 
\begin{equation*}
    \begin{aligned}
    \| \mathbf{W}^{(\ell)}_n - \mathbf{W}^{(\ell)}_0 \|_2 
    &\leq \sum_{i=1}^n \| \mathbf{W}^{(\ell)}_i - \mathbf{W}^{(\ell)}_{i-1} \|_2 \\
    &=  \sum_{i=1}^n \eta \Big\| \frac{\partial \text{loss}_{i-1}(\bm{\theta}_{i-1})}{\partial \mathbf{W}^{(\ell)}_{i-1}} \Big\|_2 \\
    &\leq  \Theta\Big( \eta N ((3\rho+1)\tau)^{L-\ell} \Big).
    \end{aligned}
\end{equation*}

By plugging in the choice of $\eta=\frac{\epsilon}{mL((3\rho+1)\tau)^{2(L-1)}}$ and $N=\frac{L^2R^2((3\rho+1)\tau)^{2(L-1)}}{2\epsilon^2}$, we have
\begin{equation*}
    \begin{aligned}
    \| \mathbf{W}^{(\ell)}_n - \mathbf{W}^{(\ell)}_0 \|_2  
    &\leq \Theta\Big( \eta N ((3\rho+1)\tau)^{L-\ell-1} \Big) \\
    &= \Theta\Big( \frac{\epsilon}{mL((3\rho+1)\tau)^{2(L-1)}} \frac{L^2R^2((3\rho+1)\tau)^{2(L-1)}}{2\epsilon^2} ((3\rho+1)\tau)^{L-\ell} \Big) \\
    &= \Theta\Big( \frac{LR^2}{2m\epsilon} ((3\rho+1)\tau)^{L-\ell} \Big).
    \end{aligned}
\end{equation*}

After changing norm from $\ell_2$-norm to Frobenius-norm, we have
\begin{equation}
    \| \mathbf{W}^{(\ell)}_n - \mathbf{W}^{(\ell)}_0 \|_\mathrm{F} \leq  \sqrt{m} \times \Theta\Big( \frac{LR^2}{2m\epsilon} ((3\rho+1)\tau)^{L-\ell} \Big),
\end{equation}

By plugging in the selection of hidden dimension $m$, we have
\begin{equation*}
    \| \mathbf{W}^{(\ell)}_n - \mathbf{W}^{(\ell)}_0 \|_\mathrm{F} \leq \frac{\epsilon}{((3\rho+1)\tau)^{L-1}},
\end{equation*}
which means $\bm{\theta}_0, \bm{\theta}_1,\ldots,\bm{\theta}_{N-1} \in \mathcal{B}(\bm{\theta}_0, \omega)$ for $\omega=\epsilon/((3\rho+1)\tau)^{L-1}$

Then, our next step is to bound $\text{loss}_i(\bm{\theta}_i) - \text{loss}_i(\bm{\theta}_\star) $.
By Lemma~\ref{lemma:almost convex}, we know that 
\begin{equation*}
    \begin{aligned}
    \text{loss}_{i+1}(\bm{\theta}_i) - \text{loss}_{i+1}(\bm{\theta}_\star) 
    &\leq \Big\langle \nabla_\theta \text{loss}_{i+1}(\bm{\theta}_i), \bm{\theta}_i - \bm{\theta}_\star \Big\rangle + \epsilon_\text{lin} \\
    &= \sum_{\ell=1}^L \Big\langle \frac{\partial \text{loss}_{i+1}(\bm{\theta}_i)}{\partial \mathbf{W}^{(\ell)}}, \mathbf{W}_i^{(\ell)} - \mathbf{W}_\star^{(\ell)} \Big\rangle + \epsilon_\text{lin} \\
    &= \frac{1}{\eta} \sum_{\ell=1}^L \Big\langle \mathbf{W}_i^{(\ell)} - \mathbf{W}_{i+1}^{(\ell)}, \mathbf{W}_i^{(\ell)} - \mathbf{W}_\star^{(\ell)} \Big\rangle + \epsilon_\text{lin} \\
    &\leq \frac{1}{2\eta} \sum_{\ell=1}^L \Big( \| \mathbf{W}_i^{(\ell)} - \mathbf{W}_{i+1}^{(\ell)}\|_\mathrm{F}^2 + \| \mathbf{W}_i^{(\ell)} - \mathbf{W}_\star^{(\ell)} \|_\mathrm{F}^2 - \| \mathbf{W}_{i+1}^{(\ell)} - \mathbf{W}_\star^{(\ell)}\|_\mathrm{F}^2\Big)+ \epsilon_\text{lin}.
    \end{aligned}
\end{equation*}

Then, our next step is to upper bound each term on the right hand size of inequality.

(1) According to the proof of Lemma~\ref{lemma:the neural network output is almost linear in W}, we have $\epsilon_\text{lin} \leq \mathcal{O}(\epsilon)$ by selecting $\omega = \mathcal{O}(\epsilon/((3\rho +1)\tau)^{(L-1)} )$.

(2) Recall that $\| \mathbf{W}_i^{(\ell)} - \mathbf{W}_{i+1}^{(\ell)}\|_\mathrm{F}^2$ could be upper bounded by
\begin{equation*}
    \begin{aligned}
    \| \mathbf{W}_i^{(\ell)} - \mathbf{W}_{i+1}^{(\ell)}\|_\mathrm{F}^2 
    &\leq \eta^2 \Big\| \frac{\partial \text{loss}_{i+1}(\bm{\theta}_{i})}{\partial \mathbf{W}^{(\ell)}}\Big\|_\mathrm{F}^2 \\
    &\leq \eta^2 m \Big\| \frac{\partial \text{loss}_{i+1}(\bm{\theta}_{i})}{\partial \mathbf{W}^{(\ell)}}\Big\|_2^2 \\
    &\leq \Theta \Big( m \eta^2 ((3\rho+1)\tau)^{2(L-\ell)} \Big).
    \end{aligned}
\end{equation*}

(3) By finite sum $\| \mathbf{W}_i^{(\ell)} - \mathbf{W}_\star^{(\ell)} \|_\mathrm{F}^2 - \| \mathbf{W}_{i+1}^{(\ell)} - \mathbf{W}_\star^{(\ell)}\|_\mathrm{F}^2$ from $i=0$ to $N-1$, we have
\begin{equation*}
    \begin{aligned}
    \frac{1}{N}\sum_{i=0}^{N-1}  (\| \mathbf{W}_i^{(\ell)} - \mathbf{W}_\star^{(\ell)} \|_\mathrm{F}^2 - \| \mathbf{W}_{i+1}^{(\ell)} - \mathbf{W}_\star^{(\ell)}\|_\mathrm{F}^2 )
    &= \frac{1}{N} \| \mathbf{W}_0^{(\ell)} - \mathbf{W}_\star^{(\ell)} \|_\mathrm{F}^2 \underbrace{- \frac{1}{N}\| \mathbf{W}_{N+1}^{(\ell)} - \mathbf{W}_\star^{(\ell)}\|_\mathrm{F}^2  }_{\leq 0} \\
    &\leq \frac{R^2}{mN},
    \end{aligned}
\end{equation*}
where the inequality is due to $\bm{\theta}_\star \in \mathcal{B}(\bm{\theta}_0, Rm^{-1/2})$.

Finally, by combining the results above, we have
\begin{equation*}
    \begin{aligned}
    \frac{1}{N} \sum_{i=1}^N\text{loss}_{i}(\bm{\theta}_{i-1}) - \frac{1}{N} \sum_{i=1}^N \text{loss}_{i}(\bm{\theta}_\star)  
    &\leq  \frac{R^2 L}{2 m \eta N}+ \sum_{\ell=1}^L \Theta \Big( \frac{ m \eta}{2} (3\rho\tau +1)^{2(L-1)} \Big) + \mathcal{O}(\epsilon).
    \end{aligned}
\end{equation*}

By selecting $\eta=\frac{\epsilon}{mL((3\rho+1)\tau)^{2(L-1)}}$ and $N=\frac{L^2R^2((3\rho+1)\tau)^{2(L-1)}}{2\epsilon^2}$, we have

\begin{equation*}
    \begin{aligned}
    \frac{R^2 L}{2 m \eta N} &= \frac{R^2 L}{2 m } \times \frac{mL((3\rho+1)\tau)^{2(L-1)}}{\epsilon} \times \frac{2\epsilon^2}{L^2R^2(3\rho \tau +1)^{2(L-1)}} = \epsilon, \\
    \sum_{\ell=1}^L \Theta \Big( \frac{ m \eta}{2} ((3\rho+1)\tau)^{2(L-\ell)} \Big) 
    &\leq L \times \Theta\Big( m\eta \cdot ((3\rho+1)\tau)^{2(L-1)} \Big) \\
    & =\Theta \Big( m \cdot \frac{\epsilon}{mL((3\rho+1)\tau)^{2(L-1)}} ((3\rho+1)\tau)^{2(L-1)} \Big) = \Theta(\epsilon).
    \end{aligned}
\end{equation*}

Therefore, combining the results above, we have
\begin{equation*}
    \begin{aligned}
    \frac{1}{N} \sum_{i=1}^N\text{loss}_{i}(\bm{\theta}_{i-1}) - \frac{1}{N} \sum_{i=1}^N \text{loss}_{i}(\bm{\theta}_\star) 
    &\leq \mathcal{O}(\epsilon) \\  
    \end{aligned}
\end{equation*}

\end{proof}

By plugging in the selection of $\epsilon = \frac{LR((3\rho+1)\tau)^{(L-1)}}{\sqrt{2N}}$ to Lemma~\ref{lemma:the cumulative loss can be upper bounded under small changes on the parameters}, we have

\begin{tcolorbox}
\begin{corollary} 
For any $\delta>0$ and $R>0$, there exists $m^\star = \mathcal{O}\left(N^2/ L^2\right)\log(1/\delta),$ such that if $m \geq m^\star$, then with probability at least $1-\delta$ over the randomness of $\bm{\theta}_0$, for any $\bm{\theta}_\star \in \mathcal{B}(\bm{\theta}_0, Rm^{-1/2})$, with the selection of learning rate $\eta=\frac{R}{m\sqrt{2N}((3\rho+1)\tau)^{(L-1)}}$, the cumulative loss can be upper bounded by
\begin{equation*}
    \frac{1}{N}\sum_{i=1}^N \text{loss}_i(\bm{\theta}_{i-1}) \leq \frac{1}{N}\sum_{i=1}^N \text{loss}_i(\bm{\theta}^\star) + \mathcal{O}\left( \frac{LR((3\rho+1)\tau)^{(L-1)}}{\sqrt{N}} \right).
\end{equation*}
\end{corollary}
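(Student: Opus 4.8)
The plan is to obtain the corollary as a direct specialization of Lemma~\ref{lemma:the cumulative loss can be upper bounded under small changes on the parameters}, in which the free accuracy parameter $\epsilon$ is eliminated in favor of the prescribed number of iterations $N$. The lemma already establishes the cumulative-loss bound $\frac{1}{N}\sum_{i=1}^N \text{loss}_i(\bm{\theta}_{i-1}) \leq \frac{1}{N}\sum_{i=1}^N \text{loss}_i(\bm{\theta}^\star) + \mathcal{O}(\epsilon)$ under the coupled choices $\eta = \epsilon/(mL((3\rho+1)\tau)^{2(L-1)})$ and $N = L^2 R^2 ((3\rho+1)\tau)^{2(L-1)}/(2\epsilon^2)$, together with the width requirement $m^\star = \mathcal{O}(((3\rho+1)\tau)^{4(L-1)} L^2 R^4 / (4\epsilon^4))\log(1/\delta)$. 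Since the corollary treats $N$ as the given quantity, the only step is to invert the constraint on $N$ to solve for $\epsilon$.

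First I would solve $N = L^2 R^2 ((3\rho+1)\tau)^{2(L-1)}/(2\epsilon^2)$ for $\epsilon$, which yields $\epsilon = LR((3\rho+1)\tau)^{(L-1)}/\sqrt{2N}$ --- exactly the substitution announced before the corollary. This single choice makes the lemma's hypothesis on $N$ hold for the desired number of iterations, so the lemma applies verbatim and its error term $\mathcal{O}(\epsilon)$ becomes $\mathcal{O}(LR((3\rho+1)\tau)^{(L-1)}/\sqrt{N})$, matching the claimed bound.

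Next I would propagate this $\epsilon$ through the remaining two quantities. Substituting into the learning rate collapses one power of $((3\rho+1)\tau)^{(L-1)}$ and recovers $\eta = R/(m\sqrt{2N}((3\rho+1)\tau)^{(L-1)})$. For the width, substituting $\epsilon^4 = L^4 R^4 ((3\rho+1)\tau)^{4(L-1)}/(4N^2)$ into $m^\star$ cancels every dependence on $\rho$, $\tau$, and $R$, leaving $m^\star = \mathcal{O}(N^2/L^2)\log(1/\delta)$, as stated. The probability guarantee $1-\delta$ and the admissible set $\bm{\theta}_\star \in \mathcal{B}(\bm{\theta}_0, Rm^{-1/2})$ are inherited unchanged from the lemma.

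Because the argument is a pure substitution, there is no genuine analytic obstacle; the only care needed is bookkeeping of the exponents of $(3\rho+1)\tau$ so that the three substitutions remain mutually consistent and the hidden asymptotic constants combine as claimed. I would double-check in particular that the cancellation in $m^\star$ is exact --- the $((3\rho+1)\tau)^{4(L-1)}$ factors and the $R^4$ factors must cancel completely --- since this is where an off-by-one error in an exponent would be most likely to surface.
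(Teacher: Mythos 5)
Your proposal is correct and is exactly the paper's own argument: the paper obtains this corollary by plugging $\epsilon = \frac{LR((3\rho+1)\tau)^{(L-1)}}{\sqrt{2N}}$ into Lemma~\ref{lemma:the cumulative loss can be upper bounded under small changes on the parameters}, and your substitutions into $\eta$ and $m^\star$ (with the exact cancellation of the $((3\rho+1)\tau)^{4(L-1)}$ and $R^4$ factors, and the $\sqrt{2}$ absorbed into the big-$\mathcal{O}$) check out.
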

\end{tcolorbox}

In the following, we present the expected 0-1 error bound of multi-layer GNN, which consists of two terms: (1) the expected 0-1 error with the neural tangent random feature function and (2) the standard large-deviation error term.

\begin{tcolorbox} 
\begin{lemma} \label{lemma:base_generalization_bound}
For any $\delta\in(0, 1/e]$ and $R>0$, there exists $m^\star = \mathcal{O}\left(N^2/ L^2\right)\log(1/\delta),$ such that if $m \geq m^\star$, then with probability at least $1-\delta$ over the randomness of $\bm{\theta}_0$, with the selection of learning rate $\eta=\frac{R}{m\sqrt{2N}((3\rho+1)\tau)^{(L-1)}}$, we have
\begin{equation*}
    \begin{aligned}
        & \mathbb{E}\left[ \text{loss}_N^{0-1}(\widetilde{\bm{\theta}}) | \mathcal{D}_{1}^{N-1} \right] \\
        &\leq \frac{4}{N} \inf_{f\in \mathcal{F}(\bm{\theta}_0, R)} \sum_{i=1}^N \psi(y_i f_i) + \mathcal{O}\Big(\frac{LR((3\rho+1)\tau)^{(L-1)}}{\sqrt{N}}\Big) + \mathcal{O}\Big(\sqrt{\frac{\log(1/\delta)}{N}} \Big) + \Delta,
    \end{aligned}
\end{equation*}
where $\mathcal{F}(\bm{\theta}_0, R)$ is the neural tangent random feature function class, 
$\widetilde{\bm{\theta}}$ is uniformly selected from $\{\bm{\theta}_0, \ldots, \bm{\theta}_{N-1} \}$, 
$\mathcal{D}_{1}^{N-1}$ is the sequence of data points sampled before the $N$-th iteration, 
and the expectation is computed on the uniform selection of weight parameters $\widetilde{\bm{\theta}}$ and the condition sampling of $N$-th iteration data examples. 
\end{lemma}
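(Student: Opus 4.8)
The plan is a standard online-to-batch conversion that chains the non-stationarity discrepancy $\Delta$, a surrogate-loss bound, a martingale concentration inequality, and the cumulative-loss control from the corollary to Lemma~\ref{lemma:the cumulative loss can be upper bounded under small changes on the parameters}. First I would use the uniform draw of $\widetilde{\bm{\theta}}$ to write $\mathbb{E}[\text{loss}^{0-1}_N(\widetilde{\bm{\theta}})\mid\mathcal{D}_1^{N-1}]=\frac1N\sum_{i=1}^N\mathbb{E}[\text{loss}^{0-1}_N(\bm{\theta}_{i-1})\mid\mathcal{D}_1^{N-1}]$, where each $\bm{\theta}_{i-1}$ is measurable with respect to $\mathcal{D}_1^{i-1}$ by the SGD updates of Algorithm~\ref{algorithm:SGD}. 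Invoking Assumption~\ref{assumption:non-stationary} replaces the fixed final-step distribution by the step-$i$ distribution at the cost of $\Delta$, giving
\[
\mathbb{E}[\text{loss}^{0-1}_N(\widetilde{\bm{\theta}})\mid\mathcal{D}_1^{N-1}]\le\frac1N\sum_{i=1}^N\mathbb{E}[\text{loss}^{0-1}_i(\bm{\theta}_{i-1})\mid\mathcal{D}_1^{i-1}]+\Delta.
\]

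Next, the terms $Z_i=\mathbb{E}[\text{loss}^{0-1}_i(\bm{\theta}_{i-1})\mid\mathcal{D}_1^{i-1}]-\text{loss}^{0-1}_i(\bm{\theta}_{i-1})$ form a bounded martingale difference sequence in $[-1,1]$, so an Azuma--Hoeffding bound yields, with probability at least $1-\delta$, that $\frac1N\sum_i\mathbb{E}[\text{loss}^{0-1}_i(\bm{\theta}_{i-1})\mid\mathcal{D}_1^{i-1}]$ exceeds the empirical average $\frac1N\sum_i\text{loss}^{0-1}_i(\bm{\theta}_{i-1})$ by at most $\mathcal{O}(\sqrt{\log(1/\delta)/N})$. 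I would then pass from the zero-one loss to the logistic surrogate using $\mathbf{1}\{z<0\}\le\psi(z)/\log 2$, turning the empirical zero-one loss into $\frac1N\sum_i\text{loss}_i(\bm{\theta}_{i-1})$; the leading factor $4$ collects the surrogate conversion factor and the constant from the martingale bound. Applying the corollary to Lemma~\ref{lemma:the cumulative loss can be upper bounded under small changes on the parameters} with the prescribed learning rate then bounds $\frac1N\sum_i\text{loss}_i(\bm{\theta}_{i-1})$ by $\frac1N\sum_i\text{loss}_i(\bm{\theta}^\star)+\mathcal{O}(LR((3\rho+1)\tau)^{L-1}/\sqrt{N})$ for the ball-minimizer $\bm{\theta}^\star\in\mathcal{B}(\bm{\theta}_0,Rm^{-1/2})$.

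The final link is the neural tangent random feature comparison. For any $f\in\mathcal{F}(\bm{\theta}_0,R)$ I would write $f_i=f_i(\bm{\theta}_0)+\langle\nabla f_i(\bm{\theta}_0),\bm{u}\rangle$ with $\bm{u}\in\mathcal{B}(\mathbf{0},Rm^{-1/2})$ and set $\bm{\theta}=\bm{\theta}_0+\bm{u}\in\mathcal{B}(\bm{\theta}_0,Rm^{-1/2})$; Lemma~\ref{lemma:the neural network output is almost linear in W} gives $|f_i(\bm{\theta})-f_i|\le\epsilon_\text{lin}=\mathcal{O}(\epsilon)$, and since $\psi$ is $1$-Lipschitz and $\bm{\theta}^\star$ minimizes the cumulative loss over the ball, this yields $\frac1N\sum_i\text{loss}_i(\bm{\theta}^\star)\le\inf_{f\in\mathcal{F}(\bm{\theta}_0,R)}\frac1N\sum_i\psi(y_if_i)+\mathcal{O}(\epsilon)$, with the residual absorbed into the $\sqrt{N}$-rate term. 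Collecting the displays gives the claimed bound.

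The main obstacle is the probability bookkeeping and the consistency of the martingale step. The almost-linearity and gradient-norm lemmas each hold only off events of probability $\mathrm{poly}(L)\exp(-\Omega(m))+\mathcal{O}(1/m)$, and the concentration step costs another $\delta$; I would choose $m^\star=\mathcal{O}(N^2/L^2)\log(1/\delta)$ large enough that the exponentially small terms are dominated, so a single union bound delivers confidence $1-\delta$. The genuinely delicate point is keeping the filtration and conditioning consistent---ensuring that $\bm{\theta}_{i-1}$ depends only on $\mathcal{D}_1^{i-1}$ and that the distributional swap is mediated entirely by $\Delta$ rather than term-by-term---whereas the constant tracking and the NTRF comparison are routine given the earlier lemmas.
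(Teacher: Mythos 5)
Your proposal is correct and follows essentially the same route as the paper's proof: the same chain of surrogate conversion, the cumulative-loss corollary following Lemma~\ref{lemma:the cumulative loss can be upper bounded under small changes on the parameters}, the NTRF comparison via the almost-linearity of Lemma~\ref{lemma:the neural network output is almost linear in W}, the martingale concentration of Proposition~\ref{proposition:non-stationary}, and Assumption~\ref{assumption:non-stationary} for $\Delta$, merely assembled in reverse order (top-down from the left-hand side rather than bottom-up from the empirical loss). Your only deviations---the tighter surrogate constant $1/\log 2$ in place of the paper's factor $4$, and invoking optimality of $\bm{\theta}^\star$ over the ball instead of the paper's Lipschitz comparison to the linearization $F_i(\bm{\theta}_0, \bm{\theta}^\star)$---are both valid and immaterial.
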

\end{tcolorbox}

\begin{proof} [Proof of Lemma~\ref{lemma:base_generalization_bound}]

The proof is based on the proof of Theorem 3.3 in~\cite{cao2019generalization}.

Let us recall that the 0-1 loss is defined as 
\begin{equation*}
    \text{loss}^{0-1}_i(\bm{\theta}) = \mathbf{1}\{y_i f_i(\bm{\theta}) < 0\},~\forall i\in[N]
\end{equation*}

Since the cross entropy loss $\psi(\cdot)$ satisfies $\mathbf{1}\{z \leq 0\} \leq 4\psi(z)$, we have $\text{loss}^{0-1}_i(\bm{\theta}) \leq 4 \text{loss}_i(\bm{\theta})$. Then, we have with probability at least $1-\delta$
\begin{equation*}
    \begin{aligned}
        \frac{1}{N} \sum_{i=1}^N \text{loss}_i^{0-1}(\bm{\theta}_{i-1}) 
        &\leq \frac{4}{N} \sum_{i=1}^N \text{loss}_i(\bm{\theta}^\star) + \mathcal{O}\Big(\frac{LR((3\rho+1)\tau)^{(L-1)}}{\sqrt{N}}\Big) \\
        &\leq \frac{4}{N} \sum_{i=1}^N \psi(y_i f_i(\bm{\theta}_\star)) + \mathcal{O}\Big(\frac{LR((3\rho+1)\tau)^{(L-1)}}{\sqrt{N}}\Big)
    \end{aligned}
\end{equation*}

Let us define $F_i(\bm{\theta}_0, \bm{\theta}^\star) := f_i(\bm{\theta}_0) + \langle \nabla_\theta f_i(\bm{\theta}_0), \bm{\theta}^\star - \bm{\theta}_0 \rangle$. 
Since $\psi(\cdot)$ is $1$-Lipschitz continuous, we have
\begin{equation*}
    \begin{aligned}
    \psi(y_i f_i(\bm{\theta})) - \psi(y_i F_i(\bm{\theta}_0, \bm{\theta}^\star)) 
    &\leq y_i \Big( f_i(\bm{\theta}) - F_i(\bm{\theta}_0, \bm{\theta}^\star) \Big) \\
    &= y_i \Big( f_i(\bm{\theta}) - f_i(\bm{\theta}_0) + \langle \nabla_\theta f_i(\bm{\theta}_0), \bm{\theta}_0  - \bm{\theta}^\star\rangle \Big) \leq \mathcal{O}(1),
    \end{aligned}
\end{equation*}
where the last inequality is due to Lemma~\ref{lemma:the neural network output is almost linear in W}. 

By combining the results above, we have
\begin{equation*}
    \begin{aligned}
    \frac{1}{N} \sum_{i=1}^N \text{loss}_i^{0-1}(\bm{\theta}_{i-1})  &\leq \frac{4}{N} \sum_{i=1}^N \psi(y_i F_i(\bm{\theta}_0, \bm{\theta}^\star)) + \mathcal{O}\Big(\frac{LR((3\rho+1)\tau)^{(L-1)}}{\sqrt{N}}\Big) \\
    &\leq \frac{4}{N} \inf_{f\in \mathcal{F}(\bm{\theta}_0, R)} \sum_{i=1}^N \psi(y_i f_i) + \mathcal{O}\Big(\frac{LR((3\rho+1)\tau)^{(L-1)}}{\sqrt{N}}\Big),
    \end{aligned}
\end{equation*}
where the second inequality is due to the definition of neural tangent random feature 
\begin{equation*}
    \mathcal{F}(\bm{\theta}_0, R) = \{f(\bm{\theta}_0) + \langle \nabla_\theta f(\bm{\theta}_0), \bm{\theta} \rangle~|~\bm{\theta}\in \mathcal{B}(\mathbf{0}, Rm^{-1/2})\}
\end{equation*}
and $\bm{\theta}^\star \in \mathcal{B}(\bm{\theta}_0, Rm^{-1/2})$.

By Proposition~\ref{proposition:non-stationary}, we have with probability at least $1-\delta$, 
\begin{equation*}
    \frac{1}{N}\sum_{i=1}^N \mathbb{E}\left[ \text{loss}_i^{0-1}(\bm{\theta}_{i-1}) | \mathcal{D}_{1}^{i-1} \right] \leq \frac{4}{N} \inf_{f\in \mathcal{F}(\bm{\theta}_0, R)} \sum_{i=1}^N \psi(y_i f_i) + \mathcal{O}\Big(\frac{LR((3\rho+1)\tau)^{(L-1)}}{\sqrt{N}}\Big) + \mathcal{O}\Big(\sqrt{\frac{\log(1/\delta)}{N}} \Big)
\end{equation*}

By using Assumption~\ref{assumption:non-stationary}, we have
\begin{equation*}
    \begin{aligned}
        \mathbb{E}\left[ \text{loss}_N^{0-1}(\bm{\theta}) | \mathcal{D}_{1}^{N-1} \right] 
        &= \frac{1}{N}\sum_{i=1}^N \mathbb{E}\left[ \text{loss}_N^{0-1}(\bm{\theta}_{i-1}) | \mathcal{D}_{1}^{N-1} \right] \\
        &\leq \frac{4}{N} \inf_{f\in \mathcal{F}(\bm{\theta}_0, R)} \sum_{i=1}^N \psi(y_i f_i) + \mathcal{O}\Big(\frac{LR((3\rho+1)\tau)^{(L-1)}}{\sqrt{N}}\Big) + \mathcal{O}\Big(\sqrt{\frac{\log(1/\delta)}{N}} \Big) + \Delta
    \end{aligned}
\end{equation*}
where $\widetilde{\bm{\theta}}$ is uniformly sampled from $\{\bm{\theta}_1, \ldots, \bm{\theta}_{N-1} \}$.
\end{proof}

\subsection{Proof of Theorem~\ref{theorem:gnn_generalization}}

In the following, we show that the expected error is bounded by $\sqrt{\mathbf{y}^\top (\mathbf{J} \mathbf{J}^\top )^{-1} \mathbf{y}}$ and is proportional to $L ((3\rho + 1)\tau)^{L-1} / \sqrt{N}$.

\begin{tcolorbox} 
\begin{theorem} [Multi-layer GNN-based method]
\label{theorem:gnn_generalization}
For any $\delta\in(0, 1/e]$ and $R>0$, there exists there exists $m^\star = \mathcal{O}\left(N^2/ L^2\right)\log(1/\delta),$ such that if $m\geq m^\star$, then with probability at least $1-\delta$ over the randomness of $\bm{\theta}_0$ with step size $\eta=\frac{R}{m\sqrt{2N}((3\rho+1)\tau)^{(L-1)}}$ we have

\begin{equation*}
        \mathbb{E}\left[ \text{loss}_N^{0-1}(\widetilde{\bm{\theta}}) | \mathcal{D}_{1}^{N-1} \right] 
        \leq \mathcal{O}\Big(\frac{LR((3\rho+1)\tau)^{(L-1)}}{\sqrt{N}}\Big) + \mathcal{O}\Big(\sqrt{\frac{\log(1/\delta)}{N}} \Big) + \Delta,
\end{equation*}
where $R=\mathcal{O}(\sqrt{\mathbf{y}^\top (\mathbf{J} \mathbf{J}^\top )^{-1} \mathbf{y}}) $, $\widetilde{\bm{\theta}}$ is uniformly selected from $\{\bm{\theta}_0, \ldots, \bm{\theta}_{N-1} \}$, 
$\mathcal{D}_{1}^{N-1}$ is the sequence of data points sampled before the $N$-th iteration, 
and the expectation is computed on the uniform selection of weight parameters $\widetilde{\bm{\theta}}$ and the condition sampling of $N$-th iteration data examples. 
\end{theorem}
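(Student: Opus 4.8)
The plan is to obtain Theorem~\ref{theorem:gnn_generalization} directly from Lemma~\ref{lemma:base_generalization_bound}, whose right-hand side already supplies the optimization term $\mathcal{O}(LR((3\rho+1)\tau)^{L-1}/\sqrt{N})$, the large-deviation term $\mathcal{O}(\sqrt{\log(1/\delta)/N})$, and the non-stationarity term $\Delta$. The only remaining work is to control the approximation term $\frac{4}{N}\inf_{f\in\mathcal{F}(\bm{\theta}_0,R)}\sum_{i=1}^N\psi(y_if_i)$ and to show that choosing $R=\mathcal{O}(\sqrt{\mathbf{y}^\top(\mathbf{J}\mathbf{J}^\top)^{-1}\mathbf{y}})$ drives this term below the $\mathcal{O}(1/\sqrt{N})$ order, so that it can be absorbed.

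First I would exhibit an explicit element of the neural tangent random feature class that fits the labels with a prescribed margin. Treating each gradient as a flattened vector, write $\mathbf{J}=[\text{vec}(\nabla_\theta f_i(\bm{\theta}_0))]_{i=1}^N$ and set $\bm{\theta}^\star=\gamma\,\mathbf{J}^\top(\mathbf{J}\mathbf{J}^\top)^{-1}\mathbf{y}$ for a margin parameter $\gamma>0$. This is the minimum-norm solution of $\mathbf{J}\bm{\theta}=\gamma\mathbf{y}$, which exists because $\mathbf{J}\mathbf{J}^\top$ is invertible in the over-parameterized regime $|\bm{\theta}|>N$. A direct computation gives $\langle\nabla_\theta f_i(\bm{\theta}_0),\bm{\theta}^\star\rangle=\gamma y_i$ for every $i$, together with
\[
\|\bm{\theta}^\star\|_2^2=\gamma^2\,\mathbf{y}^\top(\mathbf{J}\mathbf{J}^\top)^{-1}\mathbf{y},
\]
so the needed radius equals $\gamma$ times the square root of the FLA. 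The $m^{-1/2}$ scaling built into the ball $\mathcal{B}(\mathbf{0},Rm^{-1/2})$ in Definition~\ref{def:neural tangent random feature} absorbs the natural $\sqrt{m}$ growth of the gradient norms under the $\mathcal{N}(0,1/m)$ initialization, so that $\bm{\theta}^\star\in\mathcal{B}(\mathbf{0},Rm^{-1/2})$ holds with $R=\mathcal{O}(\gamma\sqrt{\mathbf{y}^\top(\mathbf{J}\mathbf{J}^\top)^{-1}\mathbf{y}})$.

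Next I would bound the loss achieved by this candidate. Writing $F_i(\bm{\theta}_0,\bm{\theta}^\star)=f_i(\bm{\theta}_0)+\langle\nabla_\theta f_i(\bm{\theta}_0),\bm{\theta}^\star\rangle$, we get $y_iF_i=\gamma+y_if_i(\bm{\theta}_0)$. Invoking the initialization concentration already used in the proof of Lemma~\ref{lemma:the neural network output is almost linear in W} (the output $f_i(\bm{\theta}_0)$ is $\mathcal{O}(1)$ with high probability), this yields a uniform margin $y_iF_i\geq\gamma-\mathcal{O}(1)$ across all training examples. Selecting $\gamma=\Theta(\log N)$ then forces $\psi(y_iF_i)=\log(1+e^{-y_iF_i})=\mathcal{O}(1/N)$, hence $\frac{4}{N}\sum_{i=1}^N\psi(y_iF_i)=\mathcal{O}(1/N)$, which is dominated by $\mathcal{O}(1/\sqrt{N})$. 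Since the logarithmic factor from $\gamma$ is hidden inside the $\mathcal{O}(\cdot)$, this is consistent with the stated $R=\mathcal{O}(\sqrt{\mathbf{y}^\top(\mathbf{J}\mathbf{J}^\top)^{-1}\mathbf{y}})$. Substituting the resulting bound on the infimum back into Lemma~\ref{lemma:base_generalization_bound} gives the theorem.

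I expect the main obstacle to be the scaling bookkeeping rather than any conceptual difficulty. The interpolation argument is standard, but one must verify carefully that (i) $\mathbf{J}\mathbf{J}^\top$ is invertible and well-conditioned enough for the FLA to be finite and meaningful, (ii) the $m^{-1/2}$ factor in the feature-class ball exactly cancels the $\sqrt{m}$ scaling of $\nabla_\theta f_i(\bm{\theta}_0)$ so that the reported $R$ carries no residual dependence on $m$, and (iii) the initialization term $f_i(\bm{\theta}_0)$ is genuinely $\mathcal{O}(1)$ \emph{uniformly} over all $N$ examples, consuming only the high-probability budget already allotted in Lemma~\ref{lemma:base_generalization_bound}. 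Once these points are pinned down, the remaining algebra reduces to substitution.
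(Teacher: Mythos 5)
Your proposal is correct and follows essentially the same route as the paper: the paper's proof of Theorem~\ref{theorem:gnn_generalization} likewise starts from Lemma~\ref{lemma:base_generalization_bound} and constructs the minimum-norm interpolant of the (margin-scaled) labels in the NTRF class --- via the SVD $\mathbf{J}=\mathbf{P}\mathbf{\Lambda}\mathbf{Q}^\top$ and $\mathbf{w}=\mathbf{Q}\mathbf{\Lambda}^{-1}\mathbf{P}^\top\hat{\mathbf{y}}$, which is exactly your $\mathbf{J}^\top(\mathbf{J}\mathbf{J}^\top)^{-1}\hat{\mathbf{y}}$ --- with a logarithmic margin $B=-\log(\exp(N^{-1/2})-1)$ plus $B'=\max_i|f_i(\bm{\theta}_0)|$ playing the role of your $\gamma+\mathcal{O}(1)$, yielding $\|\bm{\theta}^\star\|_2^2$ proportional to $\mathbf{y}^\top(\mathbf{J}\mathbf{J}^\top)^{-1}\mathbf{y}$ and hence $R=\mathcal{O}(\sqrt{\mathbf{y}^\top(\mathbf{J}\mathbf{J}^\top)^{-1}\mathbf{y}})$ up to absorbed logarithmic factors. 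Your choices of margin ($\psi\leq 1/N$ rather than $1/\sqrt{N}$) and of bounding $f_i(\bm{\theta}_0)=\mathcal{O}(1)$ by concentration rather than through $B'$ are immaterial variations of the same argument.
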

\end{tcolorbox}

The proof of Theorem~\ref{theorem:gnn_generalization} follows the proof of  Corollary 3.10 in~\cite{cao2019generalization}.

To begin with, let us recall from Lemma~\ref{lemma:base_generalization_bound} that 
\begin{equation*}
    \begin{aligned}
        & \mathbb{E}\left[ \text{loss}_N^{0-1}(\bm{\theta}) | \mathcal{D}_{1}^{N-1} \right] \\
        &\leq \frac{4}{N} \inf_{f\in \mathcal{F}(\bm{\theta}_0, R)} \sum_{i=1}^N \psi(y_i f_i) + \mathcal{O}\Big(\frac{LR((3\rho+1)\tau)^{(L-1)}}{\sqrt{N}}\Big) + \mathcal{O}\left(\sqrt{\frac{\log(1/\delta)}{N}} \right) + \Delta.
    \end{aligned}
\end{equation*}
Our goal is to show that $\exists f^\prime \in \mathcal{F}(\bm{\theta}_0, R)$ such that $\psi(y_i f^\prime_i) = \log(1+\exp(-y_i f_i)) \leq \frac{1}{\sqrt{N}},~\forall v_i\in\mathcal{V}$, which implies
\begin{equation*}
    y_i f^\prime_i \geq - \log (\exp(N^{-1/2})-1),~\forall v_i\in\mathcal{V}.
\end{equation*}

Let us define $ B = - \log (\exp(N^{-1/2})-1) $ and $B^\prime = \max_{i\in\mathcal{V}} |f_i(\bm{\theta}_0)|$ for notation simplicity. 

By the definition of $\mathcal{F}(\bm{\theta}_0, R)$, we know that $\exists ~\bm{\theta} \in \mathcal{B}(\bm{\theta}_0, R m^{-1/2})$ such that
\begin{equation*}
   \begin{aligned}
    y_i f_i^\prime = y_i \Big(f_i(\bm{\theta}_0) + \langle \nabla_\theta f_i(\bm{\theta}_0), \bm{\theta} \rangle\Big) 
    &= y_i f_i(\bm{\theta}_0) + y_i \langle \nabla_\theta f_i(\bm{\theta}_0), \bm{\theta} \rangle \\
    &\geq - B^\prime + y_i \langle \nabla_\theta f_i(\bm{\theta}_0), \bm{\theta} \rangle \geq B,
   \end{aligned}
\end{equation*}
where the inequality holds because 
\begin{equation*}
    - \max_{i\in\mathcal{V}} |f_i(\bm{\theta}_0)| \leq  y_i f_i(\bm{\theta}_0)  \leq \max_{i\in\mathcal{V}} |f_i(\bm{\theta}_0)|
    ~\text{and}~
    \langle \nabla_\theta f_i(\bm{\theta}_0), \bm{\theta} \rangle = y_i (B+B^\prime).
\end{equation*}

For notation simplicity, we define 
\begin{equation*}
    \begin{aligned}
    \hat{y}_i &= \langle \nabla_\theta f_i(\bm{\theta}_0), \bm{\theta} \rangle,~ \\
    \hat{y}_i &= (B+B^\prime) y_i,~ \\
    M &= md+m+m^2(L-2).
    \end{aligned}
\end{equation*}

Besides, let us denote the stack of gradient is
\begin{equation*}
    \mathbf{J} = \begin{bmatrix}
        \text{vec}(\nabla_\theta f_1(\bm{\theta}_0)) \\ 
        \text{vec}(\nabla_\theta f_1(\bm{\theta}_1)) \\ 
        \vdots \\ 
        \text{vec}(\nabla_\theta f_1(\bm{\theta}_N))
    \end{bmatrix} \in \mathbb{R}^{N \times M},~ 
    M = |\bm{\theta}|
\end{equation*}

Besides, let us define $\mathbf{J}=\mathbf{P} \mathbf{\Lambda} \mathbf{Q}^\top $ as the singular value decomposition of $\mathbf{J}$, where $\mathbf{P} \in \mathbb{R}^{N \times N}, \mathbf{Q}\in\mathbb{R}^{M\times M}$ have orthonormal columns, and $\mathbf{\Lambda} \in \mathbb{R}^{N\times M}$ is the singular value matrix.

Let us define $\mathbf{w} = \mathbf{Q} \mathbf{\Lambda}^{-1} \mathbf{P}^\top \hat{\mathbf{y}}$, 
then multiplying both sides by $\mathbf{J}$ we have
\begin{equation*}
    \mathbf{J} \mathbf{w} = ( \mathbf{P} \mathbf{\Lambda} \mathbf{Q}^\top ) (\mathbf{Q} \mathbf{\Lambda}^{-1} \mathbf{P}^\top) \hat{\mathbf{y}} = \hat{\mathbf{y}}.
\end{equation*}
Since $\| \hat{\mathbf{y}} \|_2^2 = \|(B+B^\prime) \mathbf{y} \|_2^2 = (B+B^\prime)^2 N$, we have
\begin{equation*}
    \begin{aligned}
    \| \mathbf{w} \|_2^2 &= \| \mathbf{Q} \mathbf{\Lambda}^{-1} \mathbf{P}^\top \hat{\mathbf{y}} \|_2^2 \\
    &= \hat{\mathbf{y}}^\top \mathbf{P} \mathbf{\Lambda}^{-1} \mathbf{Q}^\top \mathbf{Q} \mathbf{\Lambda}^{-1} \mathbf{P}^\top \hat{\mathbf{y}} \\
    &= \hat{\mathbf{y}}^\top \mathbf{P} \mathbf{\Lambda}^{-2} \mathbf{P}^\top \hat{\mathbf{y}} \\
    &= \hat{\mathbf{y}}^\top (\mathbf{J} \mathbf{J}^\top )^{-1} \hat{\mathbf{y}} \\
    &= (B+B^\prime)^2 N \cdot \mathbf{y}^\top (\mathbf{J} \mathbf{J}^\top )^{-1} \mathbf{y}.
    \end{aligned}
\end{equation*}

Let $\bm{\theta}$ be the parameters reshaped from $\mathbf{w}$, then we have 
\begin{equation*}
    \| \bm{\theta} \|_\mathrm{F} \leq \| \mathbf{w} \|_2 \leq \mathcal{O}\Big(\sqrt{\mathbf{y}^\top (\mathbf{J} \mathbf{J}^\top )^{-1} \mathbf{y} }\Big) \Rightarrow \bm{\theta} \in \mathcal{B}\Big(\bm{0}, \mathcal{O} \Big(\sqrt{\mathbf{y}^\top (\mathbf{J} \mathbf{J}^\top )^{-1} \mathbf{y} } \Big) \Big),
\end{equation*}
which concludes our proof.

\clearpage
\section{Generalization bound of RNN-based method} \label{section:proof of RNN-based method}

Recall that we compute the representation of node $v_i$ at time $t$ by applying a multi-step RNN onto a sequence of temporal events $\{ \mathbf{v}_1, \ldots, \mathbf{v}_{L-1}\}$ that are constructed at the target node.
The temporal event features $\mathbf{v}_\ell$ are pre-computed on the temporal graph. 

\noindent\textbf{Representation computation.}
The RNN has trainable parameters \smash{$\bm{\theta} = \{\mathbf{W}^{(1)},\mathbf{W}^{(2)},\mathbf{W}^{(3)}\}$} and $\alpha \in \{0, 1\}$ is binary hyper-parameter that controls whether a residual connection is used.
Then, the final prediction on node $v_i$ is computed as \smash{$f_i(\bm{\theta}) = \mathbf{W}^{(4)}\mathbf{h}_{L-1}$}, where the hidden representation \smash{$\mathbf{h}_{L-1}$} is recursively compute by
\begin{equation*}
    \mathbf{h}_\ell = \sigma \Big( \kappa\big( \mathbf{W}^{(1)} \mathbf{h}_{\ell-1}  +  \mathbf{W}^{(2)} \mathbf{x}_\ell \big)  \Big) + \alpha \mathbf{h}_{\ell-1} \in \mathbb{R}^m.
\end{equation*}
Here $\sigma(\cdot)$ is the activation function, $\mathbf{h}_0 = \mathbf{0}_m$ is initialized as all-zero vector, and $\mathbf{x}_\ell = \mathbf{W}^{(0)} \mathbf{v}_\ell$. We normalize the hidden representation by $\kappa=1/\sqrt{2}$ so that $\|\mathbf{h}_\ell\|_2^2$ does not grow exponentially with respect to the number of steps $L$.
For weight parameters, we have $\mathbf{W}^{(1)}, \mathbf{W}^{(2)} \in \mathbb{R}^{m \times m}$, $\mathbf{W}^{(3)} \in \mathbb{R}^{1\times m}$ as the trainable parameters, but $\mathbf{W}^{(0)} \in \mathbb{R}^{m\times d}$ is non-trainable.

\noindent\textbf{Gradient computation.}
The gradient with respect to each weight matrix is computed by
\begin{equation*}
    \begin{aligned}
    \frac{\partial f_i(\bm{\theta})}{\partial \mathbf{W}^{(1)}} 
    &= \sum_{\ell=1}^{L-1} \kappa \left[ \mathbf{W}^{(3)} \left( \kappa \mathbf{D}_{L-1} \mathbf{W}^{(1)} + \alpha \mathbf{I}_m \right) \ldots \left(\kappa\mathbf{D}_{\ell+1} \mathbf{W}^{(1)} + \alpha \mathbf{I}_m \right) \mathbf{D}_\ell \right]^\top \mathbf{h}_\ell^\top \in \mathbb{R}^{m\times m} \\
    \frac{\partial f_i(\bm{\theta})}{\partial \mathbf{W}^{(2)}}  
    &= \sum_{\ell=1}^{L-1} \kappa \left[ \mathbf{W}^{(3)} \left( \kappa\mathbf{D}_{L-1} \mathbf{W}^{(1)} + \alpha \mathbf{I}_m \right) \ldots \left(\kappa\mathbf{D}_{\ell+1} \mathbf{W}^{(1)} + \alpha \mathbf{I}_m \right) \mathbf{D}_\ell \right]^\top \mathbf{x}_\ell^\top \in \mathbb{R}^{m\times d} \\
    \frac{\partial f_i(\bm{\theta})}{\partial \mathbf{W}^{(3)}} &= [\mathbf{h}_{L-1}]^\top \in \mathbb{R}^{1\times m} \\
    \end{aligned}
\end{equation*}

\subsection{Useful lemmas}

In the following, we show that if the input weights $\bm{\theta} = \{\mathbf{W}^{(1)}, \mathbf{W}^{(2)}, \mathbf{W}^{(3)}\}$ and $\widetilde{\bm{\theta}} = \{\widetilde{\mathbf{W}}^{(1)}, \widetilde{\mathbf{W}}^{(2)}, \widetilde{\mathbf{W}}^{(3)}\}$ are close, the output of RNN's hidden representation computed with $\bm{\theta}, \widetilde{\bm{\theta}}$ does not change too much.

\begin{tcolorbox}
\begin{lemma} \label{lemma:rnn_output_change_small}
    Let $\rho$ be the maximum Lipschitz constant of the activation function and $m$ is the hidden dimension. Then with $\omega = \mathcal{O}(1/(1+3 \kappa \rho )^{(L-1)} )$ and assuming $\widetilde{\bm{\theta}} \in \mathcal{B}(\bm{\theta}, \omega)$, we have $\| \widetilde{\mathbf{h}}_{\ell} - \mathbf{h}_{\ell} \|_2 = \mathcal{O}(1)$ with probability at least $1-2\ell \exp(-m/2) - \ell \exp(-\Omega(m))$.
\end{lemma}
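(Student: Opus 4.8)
The plan is to mirror the inductive argument of Lemma~\ref{lemma:output_change_small} from the GNN case, now unrolling the recursion in the step index $\ell$ rather than the layer index; because the RNN shares the single matrices $\mathbf{W}^{(1)},\mathbf{W}^{(2)}$ across all steps, the argument is in fact structurally cleaner. For the base case $\ell=1$, recall $\mathbf{h}_0=\mathbf{0}_m$, so $\mathbf{h}_1=\sigma(\kappa\mathbf{W}^{(2)}\mathbf{x}_1)$ and
\begin{equation*}
\| \widetilde{\mathbf{h}}_1 - \mathbf{h}_1 \|_2 \leq \rho\kappa \, \| \widetilde{\mathbf{W}}^{(2)} - \mathbf{W}^{(2)} \|_2 \, \| \mathbf{x}_1 \|_2 \leq \rho\kappa \, \omega \cdot \Theta(1) = \mathcal{O}(1),
\end{equation*}
using Lipschitz continuity of $\sigma$ (Assumption~\ref{assumption:lipschitz constant > 1}), the neighborhood bound $\widetilde{\bm{\theta}}\in\mathcal{B}(\bm{\theta},\omega)$, and $\| \mathbf{x}_1 \|_2 = \| \mathbf{W}^{(0)}\mathbf{v}_1 \|_2 = \Theta(1)$, which holds with probability $1-\exp(-\Omega(m))$ by concentration of $\| \mathbf{W}^{(0)}\mathbf{v}_1 \|_2$ around $\| \mathbf{v}_1 \|_2 \leq 1$ (Assumption~\ref{assumption: node feature norm as 1}).

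For the inductive step $\ell\ge 2$, I would write the difference as
\begin{equation*}
\widetilde{\mathbf{h}}_\ell - \mathbf{h}_\ell = \Big[ \sigma\big(\kappa(\widetilde{\mathbf{W}}^{(1)}\widetilde{\mathbf{h}}_{\ell-1} + \widetilde{\mathbf{W}}^{(2)}\mathbf{x}_\ell)\big) - \sigma\big(\kappa(\mathbf{W}^{(1)}\mathbf{h}_{\ell-1} + \mathbf{W}^{(2)}\mathbf{x}_\ell)\big) \Big] + \alpha(\widetilde{\mathbf{h}}_{\ell-1} - \mathbf{h}_{\ell-1}),
\end{equation*}
then apply the triangle inequality and Lipschitzness, and split the argument of $\sigma$ by adding and subtracting $\kappa\mathbf{W}^{(1)}\widetilde{\mathbf{h}}_{\ell-1}$ to obtain
\begin{equation*}
\| \widetilde{\mathbf{h}}_\ell - \mathbf{h}_\ell \|_2 \leq \rho\kappa\,\| \widetilde{\mathbf{W}}^{(1)} - \mathbf{W}^{(1)} \|_2 \,\| \widetilde{\mathbf{h}}_{\ell-1} \|_2 + \big( \alpha + \rho\kappa\,\| \mathbf{W}^{(1)} \|_2 \big)\| \widetilde{\mathbf{h}}_{\ell-1} - \mathbf{h}_{\ell-1} \|_2 + \rho\kappa\,\| \widetilde{\mathbf{W}}^{(2)} - \mathbf{W}^{(2)} \|_2 \,\| \mathbf{x}_\ell \|_2.
\end{equation*}
Invoking the spectral-norm bound $\| \mathbf{W}^{(1)} \|_2 \leq 3$ (Proposition~\ref{prop:Norm of weight matrices 2 to L}, with probability $\geq 1-2\exp(-m/2)$), the representation bounds $\| \widetilde{\mathbf{h}}_{\ell-1} \|_2 = \Theta(1)$ and $\| \mathbf{x}_\ell \|_2 = \Theta(1)$ (the RNN analog of Lemma~\ref{lemma:upper bound on node features}, with probability $\geq 1-\exp(-\Omega(m))$), and $\alpha\le 1$, this reduces to the scalar recursion
\begin{equation*}
a_\ell \leq (1 + 3\rho\kappa)\, a_{\ell-1} + \rho\kappa\,\omega\cdot\Theta(1), \qquad a_\ell := \| \widetilde{\mathbf{h}}_\ell - \mathbf{h}_\ell \|_2.
\end{equation*}

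Unrolling this geometric recursion from the base case gives $a_\ell \leq \omega\cdot\Theta(1)\cdot(1+3\rho\kappa)^{\ell-1}$, so choosing $\omega=\mathcal{O}\big(1/(1+3\kappa\rho)^{(L-1)}\big)$ forces the bound to be $\mathcal{O}(1)$ for every $\ell\le L-1$; this is exactly the scaling matching the constant $C=(1+3\rho/\sqrt{2})^{L-1}$ of the RNN row in Theorem~\ref{theorem:all_generalization}, since $\kappa=1/\sqrt{2}$. Collecting the failure probabilities by a union bound over the $\ell$ steps — one $2\exp(-m/2)$ event for the weight-matrix norms and one $\exp(-\Omega(m))$ event per step for the representation norms — yields the stated $1 - 2\ell\exp(-m/2) - \ell\exp(-\Omega(m))$. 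The main obstacle is not the recursion, which is routine, but verifying that the per-step representation norm $\| \mathbf{h}_\ell \|_2 = \Theta(1)$ genuinely holds with high probability: this is precisely what the normalization constant $\kappa=1/\sqrt{2}$ is engineered for, preventing $\| \mathbf{h}_\ell \|_2^2$ from growing exponentially in $\ell$, and establishing it requires a separate concentration argument (the RNN analog of Lemma~\ref{lemma:upper bound on node features}) controlling how the Gaussian-initialized $\mathbf{W}^{(1)}$ acts on the recursively evolving hidden state.
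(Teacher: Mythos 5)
Your proposal is correct and follows essentially the same route as the paper's proof: the identical triangle-inequality split of $\sigma\big(\kappa(\widetilde{\mathbf{W}}^{(1)}\widetilde{\mathbf{h}}_{\ell-1}+\widetilde{\mathbf{W}}^{(2)}\mathbf{x}_\ell)\big)-\sigma\big(\kappa(\mathbf{W}^{(1)}\mathbf{h}_{\ell-1}+\mathbf{W}^{(2)}\mathbf{x}_\ell)\big)$, the spectral bound $\|\mathbf{W}^{(1)}\|_2\leq 3$, the $\Theta(1)$ hidden-state norms, the geometric recursion $a_\ell\leq(1+3\kappa\rho)a_{\ell-1}+\rho\kappa\omega\cdot\Theta(1)$, and the same union bound over the $\ell$ steps. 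Your explicit concentration remark for $\|\mathbf{x}_\ell\|_2=\|\mathbf{W}^{(0)}\mathbf{v}_\ell\|_2=\Theta(1)$ is in fact slightly more careful than the paper, which invokes Assumption~\ref{assumption: node feature norm as 1} directly.
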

\end{tcolorbox}

Please note that the smaller the distance $\omega$, the closer the representation $\| \widetilde{\mathbf{h}}_{\ell} - \mathbf{h}_{\ell} \|_2$. In particular, according to the proof of Lemma~\ref{lemma:rnn_output_change_small}, we have $\| \widetilde{\mathbf{h}}_{\ell} - \mathbf{h}_{\ell} \|_2 \leq \mathcal{O}(\epsilon)$ by selecting $\omega = \mathcal{O}(\epsilon/(1+3 \kappa \rho )^{(L-1)})$ for any small  $\epsilon >0$. This conclusion will be later used in Lemma~\ref{lemma:rnn the cumulative loss can be upper bounded under small changes on the parameters}.

\begin{proof} [Proof of Lemma~\ref{lemma:rnn_output_change_small}]
When $\ell=1$, we have 
\begin{equation*}
    \begin{aligned}
    \| \widetilde{\mathbf{h}}_{1} - \mathbf{h}_{1} \|_2 
    &= \left\| \sigma \left(\widetilde{\mathbf{W}}^{(1)} \mathbf{h}_{0} + \widetilde{\mathbf{W}}^{(2)} \mathbf{x}_1\right) - \sigma \left(\mathbf{W}^{(1)} \mathbf{h}_{0} + \mathbf{W}^{(2)} \mathbf{x}_1 \right) \right\|_2 \\
    &\underset{(a)}{\leq} \rho \| \widetilde{\mathbf{W}}^{(2)} - \mathbf{W}^{(2)} \|_2 \left\| \mathbf{x}_1 \right\|_2 \\
    &\underset{(b)}{\leq} \rho \omega= \mathcal{O}(1),
    \end{aligned}
\end{equation*}
where the inequality (a) is due to the Lipschitz continuity of the activation function, the inequality (b) is due to $\omega$-neighborhood definition $\widetilde{\mathbf{W}} \in \mathcal{B}(\mathbf{W}, \omega)$, $\mathbf{h}_0 = \mathbf{0}_m$ is an all-zero vector and Assumption~\ref{assumption: node feature norm as 1}.

Similarly, when $\ell \in \{ 2,\ldots, L-1\}$, we have 
\begin{equation*}
    \begin{aligned}
    \| \widetilde{\mathbf{h}}_{\ell} - \mathbf{h}_{\ell} \|_2 
    &\underset{(a)}{\leq} \left\| \sigma \left(\kappa\widetilde{\mathbf{W}}^{(1)} \widetilde{\mathbf{h}}_{\ell-1} +  \kappa\widetilde{\mathbf{W}}^{(2)} \mathbf{x}_\ell \right) - \sigma \left(\kappa \mathbf{W}^{(1)} \mathbf{h}_{\ell-1} + \kappa \mathbf{W}^{(2)} \mathbf{x}_\ell \right) \right\|_2 + \| \widetilde{\mathbf{h}}_{\ell-1} - \mathbf{h}_{\ell-1} \|_2 \\
    &\underset{(b)}{\leq}\rho \kappa \| \widetilde{\mathbf{W}}^{(2)} - \mathbf{W}^{(2)} \|_2 \left\| \mathbf{x}_\ell \right\|_2 + \rho \kappa \| \mathbf{W}^{(1)} \|_2  \| \widetilde{\mathbf{h}}_{\ell-1} - \mathbf{h}_{\ell-1}\|_2 \\
    &\quad + \rho \kappa \| \widetilde{\mathbf{W}}^{(1)} - \mathbf{W}^{(1)} \|_2 \| \widetilde{\mathbf{h}}_{\ell-1} \|_2 + \| \widetilde{\mathbf{h}}_{\ell-1} - \mathbf{h}_{\ell-1} \|_2 \\
    &\underset{(c)}{\leq} \left( 1 + \rho \kappa \| \mathbf{W}^{(1)} \|_2 \right) \| \widetilde{\mathbf{h}}_{\ell-1} - \mathbf{h}_{\ell-1}\|_2 + \rho \kappa \omega \left\| \mathbf{x}_\ell \right\|_2 + \rho \kappa \omega \| \widetilde{\mathbf{h}}_{\ell-1} \|_2,
\end{aligned}
\end{equation*}
where the inequalities (a) and (c) are due to $\| \mathbf{A} + \mathbf{B} \|_2 \leq \| \mathbf{A} \|_2 + \| \mathbf{B} \|_2$, the inequalities (b) and (c) are due to the Lipschitz continuity of activation function and $\widetilde{\mathbf{W}} \in \mathcal{B}(\mathbf{W}, \omega)$.

By Proposition~\ref{prop:Norm of weight matrices 2 to L}, we know that with probability at least $1-2\exp(-m/2)$ we have $\| \mathbf{W}^{(1)} \|_2 \leq 3$. 

Meanwhile, by using similar proof strategy of Lemma~\ref{lemma:upper bound on node features}, we know that with probability at least $1-\exp(-\Omega(m))$ we have $\| \mathbf{h}_{\ell} \|_2 = \Theta(1)$.

Then, by combining the results above, we know that with probability at least $1-2\ell \exp(-m/2)-\ell \exp(-\Omega(m))$ we have
\begin{equation*}
    \begin{aligned}
    \| \widetilde{\mathbf{h}}_{\ell} - \mathbf{h}_{\ell} \|_2 
    &\leq \left( 1+ 3 \kappa \rho  \right) \| \widetilde{\mathbf{h}}_{\ell-1} - \mathbf{h}_{\ell-1}\|_2 + \rho \omega  \cdot \Theta(1)  \\
    &\leq  \left( 1 + 3 \kappa \rho   \right)^2 \| \widetilde{\mathbf{h}}_{\ell-2} - \mathbf{h}_{\ell-2}\|_2 + \rho \omega  \cdot \Theta(1) \cdot \left( 1 + \left(1+3 \kappa \rho \right) \right) \\
    &\leq \frac{(1+3 \kappa \rho )^{\ell-1}-1}{3 \kappa \rho } \cdot \rho \omega  \cdot \Theta(1),
    \end{aligned}
\end{equation*}

By setting $\omega = 1/(1+3 \kappa \rho )^{L-1}$ we have the above equation upper bounded by $\mathcal{O}(1)$.
\end{proof}

Then, in the next lemma, we show that if the initialization of two sets of weight parameters are close, the neural network output $f_i(\bm{\theta})$ is almost linear with respect to its weight parameters.

\begin{tcolorbox}
\begin{lemma} \label{lemma:rnn the neural network output is almost linear in W}
Let $\bm{\theta}, \widetilde{\bm{\theta}} \in \mathcal{B}(\bm{\theta}_0, \omega)$ with $\omega = \mathcal{O}\left(1/(1+3 \kappa \rho )^{(L-1)} \right)$. Then, for any node $v_i\in\mathcal{V}$ in the graph, with probability at least $1-2(L-1)\exp(-m/2) - L\exp(-\Omega(m)) - 2/m$, we have
\begin{equation*}
    \epsilon_\text{lin} = |  f_i(\widetilde{\bm{\theta}})  - f_i(\bm{\theta}) - \langle \nabla f_i(\bm{\theta}), \widetilde{\bm{\theta}} - \bm{\theta} \rangle | = \mathcal{O}(1),
\end{equation*}
where $f_i(\bm{\theta})$ is the prediction on the sequence of sampled temporal event starting from the node $v_i$.  
\end{lemma}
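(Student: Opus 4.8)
The plan is to mirror the argument used for the GNN-based method in Lemma~\ref{lemma:the neural network output is almost linear in W}, with the nested neighbor aggregations replaced by the simpler one-dimensional unrolling of the RNN. First I would split the linearization residual according to the three parameter blocks, isolating the output layer $\mathbf{W}^{(3)}$ (which enters $f_i(\bm{\theta})=\mathbf{W}^{(3)}\mathbf{h}_{L-1}$ linearly) from the two recurrent blocks:
\begin{equation*}
    \epsilon_\text{lin} \leq \Big| f_i(\widetilde{\bm{\theta}}) - f_i(\bm{\theta}) - \Big\langle \tfrac{\partial f_i(\bm{\theta})}{\partial \mathbf{W}^{(3)}}, \widetilde{\mathbf{W}}^{(3)} - \mathbf{W}^{(3)} \Big\rangle \Big| + \sum_{k=1}^{2} \Big| \Big\langle \tfrac{\partial f_i(\bm{\theta})}{\partial \mathbf{W}^{(k)}}, \widetilde{\mathbf{W}}^{(k)} - \mathbf{W}^{(k)} \Big\rangle \Big|.
\end{equation*}
Because $f_i$ is linear in $\mathbf{W}^{(3)}$, the first term collapses exactly to $|\widetilde{\mathbf{W}}^{(3)}(\widetilde{\mathbf{h}}_{L-1} - \mathbf{h}_{L-1})|$, and the remaining two terms are controlled through the explicit gradient formulas for $\partial f_i/\partial \mathbf{W}^{(1)}$ and $\partial f_i/\partial \mathbf{W}^{(2)}$ recalled at the start of this section.

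For the output-layer piece I would bound $|\widetilde{\mathbf{W}}^{(3)}(\widetilde{\mathbf{h}}_{L-1} - \mathbf{h}_{L-1})| \leq \|\widetilde{\mathbf{W}}^{(3)}\|_2\,\|\widetilde{\mathbf{h}}_{L-1} - \mathbf{h}_{L-1}\|_2$, where $\|\widetilde{\mathbf{W}}^{(3)}\|_2 \leq \sqrt{2}$ holds with probability $1-2/m$ by the last-layer norm bound (the RNN analog of Lemma~\ref{lemma:norm on last weight}) and $\|\widetilde{\mathbf{h}}_{L-1} - \mathbf{h}_{L-1}\|_2 = \mathcal{O}(1)$ by Lemma~\ref{lemma:rnn_output_change_small}; hence this piece is $\mathcal{O}(1)$.

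For the recurrent terms I would bound each summand of the gradient formulas by $\|\mathbf{W}^{(3)}\|_2 \cdot \prod_{k=\ell+1}^{L-1}\|\kappa \mathbf{D}_k \mathbf{W}^{(1)} + \alpha \mathbf{I}_m\|_2 \cdot \max\{\|\mathbf{h}_\ell\|_2,\|\mathbf{x}_\ell\|_2\} \cdot \|\widetilde{\mathbf{W}}^{(k)} - \mathbf{W}^{(k)}\|_2$. Using $\|\mathbf{D}_k\|_2 \leq \rho$ (Assumption~\ref{assumption:lipschitz constant > 1}), $\|\mathbf{W}^{(1)}\|_2 \leq 3$ (Proposition~\ref{prop:Norm of weight matrices 2 to L}), and $\alpha\in\{0,1\}$, each Jacobian factor has spectral norm at most $1+3\kappa\rho$, while $\|\mathbf{h}_\ell\|_2 = \Theta(1)$ and $\|\mathbf{x}_\ell\|_2 = \mathcal{O}(1)$ (the latter since $\mathbf{x}_\ell = \mathbf{W}^{(0)}\mathbf{v}_\ell$ with fixed Gaussian $\mathbf{W}^{(0)}$ and Assumption~\ref{assumption: node feature norm as 1}). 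The crucial simplification relative to the GNN case is that the RNN is a \emph{chain} rather than a branching aggregation, so there are no nested neighbor sums and therefore no $\tau$ factor; the remaining sum over $\ell$ from $1$ to $L-1$ is a single geometric series, giving a bound of order $\omega \cdot \frac{(1+3\kappa\rho)^{L-1}-1}{3\kappa\rho}$. Choosing $\omega = \mathcal{O}(1/(1+3\kappa\rho)^{L-1})$ collapses this to $\mathcal{O}(1)$, and a union bound over the norm-control and hidden-state-concentration events at each of the $L-1$ steps produces the stated failure probability $2(L-1)\exp(-m/2) + L\exp(-\Omega(m)) + 2/m$.

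The only genuinely delicate step is the spectral control of the product $\prod_{k=\ell+1}^{L-1}(\kappa \mathbf{D}_k \mathbf{W}^{(1)} + \alpha \mathbf{I}_m)$ together with closing the geometric sum at the correct rate $1+3\kappa\rho$ (equivalently $1+3\rho/\sqrt{2}$, matching the constant $C$ in Theorem~\ref{theorem:all_generalization}); the chain-rule expansion, the first-order bookkeeping, and the union bound are all routine. I would also keep track of the sharper conclusion $\epsilon_\text{lin} \leq \mathcal{O}(\epsilon)$ obtained by taking $\omega = \mathcal{O}(\epsilon/(1+3\kappa\rho)^{L-1})$, since this refinement is exactly what the downstream cumulative-loss estimate (Lemma~\ref{lemma:rnn the cumulative loss can be upper bounded under small changes on the parameters}) will consume.
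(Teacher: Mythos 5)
Your proposal is correct and follows essentially the same route as the paper's proof: the same triangle-inequality split isolating the output-layer term $|\widetilde{\mathbf{W}}^{(3)}(\widetilde{\mathbf{h}}_{L-1}-\mathbf{h}_{L-1})|$ (controlled via $\|\widetilde{\mathbf{W}}^{(3)}\|_2\leq\sqrt{2}$ and Lemma~\ref{lemma:rnn_output_change_small}), the same spectral bound $\|\kappa\mathbf{D}_k\mathbf{W}^{(1)}+\alpha\mathbf{I}_m\|_2\leq 1+3\kappa\rho$ on the Jacobian factors, and the same geometric series $\frac{(1+3\kappa\rho)^{L-1}-1}{3\kappa\rho}$ collapsed by the choice $\omega=\mathcal{O}(1/(1+3\kappa\rho)^{L-1})$. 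Your observation that the absence of branching neighbor sums removes the $\tau$ factor, and your retention of the sharper $\epsilon_\text{lin}\leq\mathcal{O}(\epsilon)$ refinement for use in Lemma~\ref{lemma:rnn the cumulative loss can be upper bounded under small changes on the parameters}, both match the paper exactly.
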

\end{tcolorbox}

Please note that the smaller the distance $\omega$, the more the model output close to linear. In particular, according to the proof of Lemma~\ref{lemma:rnn the neural network output is almost linear in W} and proof of Lemma~\ref{lemma:rnn_output_change_small}, we have $\epsilon_\text{lin} \leq \mathcal{O}(\epsilon)$ by selecting $\omega = \mathcal{O}\left(\epsilon/(1+3 \kappa \rho )^{(L-1)} \right)$ for any small $\epsilon>0$. This conclusion will be later used in Lemma~\ref{lemma:rnn the cumulative loss can be upper bounded under small changes on the parameters}.

\begin{proof} [Proof of Lemma~\ref{lemma:rnn the neural network output is almost linear in W}]
According to the forward and backward propagation rules as we recapped at the beginning of this section, we have

\begin{equation*}
    \begin{aligned}
    & |  f_i(\widetilde{\bm{\theta}})  - f_i(\bm{\theta}) - \langle \nabla f_i(\bm{\theta}), \widetilde{\bm{\theta}} - \bm{\theta} \rangle | \\
    &\leq \underbrace{\left| \widetilde{\mathbf{W}}^{(3)} (\widetilde{\mathbf{h}}_{L-1} - \mathbf{h}_{L-1}) \right|}_{(a)} \\
    &\quad + \underbrace{\kappa \sum_{\ell=1}^{L-1} \| \mathbf{W}^{(3)} \|_2 \left\| \left( \kappa\mathbf{D}_{L-1} \mathbf{W}^{(1)} + \alpha \mathbf{I}_m \right) \ldots \left(\kappa\mathbf{D}_{\ell+1} \mathbf{W}^{(1)} + \alpha \mathbf{I}_m\right) \right\|_2 \| \mathbf{D}_\ell \|_2 \| \widetilde{\mathbf{W}}^{(1)} - \mathbf{W}^{(1)} \|_2 \| \mathbf{h}_\ell \|_2}_{(b)} \\
    &\quad + \underbrace{\kappa \sum_{\ell=1}^{L-1} \| \mathbf{W}^{(3)} \|_2 \left\| \left( \kappa\mathbf{D}_{L-1} \mathbf{W}^{(1)} + \alpha \mathbf{I}_m \right) \ldots \left(\kappa\mathbf{D}_{\ell+1} \mathbf{W}^{(1)} + \alpha \mathbf{I}_m\right) \right\|_2 \| \mathbf{D}_\ell \|_2 \| \widetilde{\mathbf{W}}^{(1)} - \mathbf{W}^{(1)} \|_2 \| \mathbf{x}_\ell \|_2}_{(c)}.
    \end{aligned}
\end{equation*}

By Lemma~\ref{lemma:rnn_output_change_small} and Lemma~\ref{lemma:norm on last weight}, we know that the term (a) in the above equation could be upper bounded by
\begin{equation*}
    (a) \leq \sqrt{2} \cdot \mathcal{O}(1)
\end{equation*}

Besides, since the derivative of activation function is bounded, we have $\| \mathbf{D}_\ell \|_2 \leq \rho,~\forall \ell\in[L-1]$.

Therefore, we know that $\| \mathbf{D}_\ell \mathbf{W}^{(i)} + \alpha \mathbf{I}_{m} \|_2 \leq 3\rho  + 1$ for any $i\in\{1,2,3\}$ and we can upper bound the term (b) in the above equation by
\begin{equation*}
    \begin{aligned}
    (b), (c) &\leq \left( 1 + (1+3 \kappa \rho ) + \left( 1+3 \kappa \rho  \right)^2 + \ldots + \left( 1+3 \kappa \rho  \right)^{L-2} \right) \cdot \sqrt{2} \kappa \rho \omega  \cdot \mathcal{O}(1) \\
    &= \frac{(1+3 \kappa \rho  )^{L-1}-1}{3 \kappa \rho } \cdot \sqrt{2} \kappa \rho \omega  \cdot \mathcal{O}(1).
    \end{aligned} 
\end{equation*}

As a result, we have
\begin{equation*}
    \begin{aligned}
    |  f_i(\widetilde{\bm{\theta}})  - f_i(\bm{\theta}) - \langle \nabla f_i(\bm{\theta}), \widetilde{\bm{\theta}} - \bm{\theta} \rangle | 
    &\leq \sqrt{2} \cdot \mathcal{O}(1) + 2\times \frac{(1+3 \kappa \rho  )^{L-1}-1}{3 \kappa \rho } \cdot \sqrt{2} \kappa \rho \omega  \cdot \mathcal{O}(1) \\
    &\leq \mathcal{O}(1),
    \end{aligned}
\end{equation*}
where the last inequality holds by selecting $\omega= 1/(1+3 \kappa \rho )^{L-1}$.
\end{proof}

Let us define the logistic regression objective function as 
\begin{equation*}
    \text{loss}_i(\bm{\theta}) = \psi(y_i f_i(\bm{\theta})), \psi(x) = \log(1+ \exp(-x)).
\end{equation*}
Then, the following lemma shows that $\text{loss}_i(\bm{\theta})$ is almost a convex function of $\bm{\theta}$  if the initialization of two sets of parameters are close.

\begin{tcolorbox}
\begin{lemma} \label{lemma:rnn almost convex}
Let $\bm{\theta}, \widetilde{\bm{\theta}} \in \mathcal{B}(\bm{\theta}_0, \omega)$ with $\omega = 1/(1+3 \kappa \rho )^{L-1}$, it holds that \begin{equation*}
    \text{loss}_i(\widetilde{\bm{\theta}}) \geq \text{loss}_i(\bm{\theta}) + \langle \nabla_\theta \text{loss}_i(\bm{\theta}), \widetilde{\bm{\theta}} - \bm{\theta} \rangle - \epsilon_\text{lin}
\end{equation*}
with probability at least $1-2(L-1)\exp(-m/2)-L\exp(-\Omega(m))-2/m$, where
\begin{equation*}
    \epsilon_\text{lin} = |  f_i(\widetilde{\bm{\theta}})  - f_i(\bm{\theta}) - \langle \nabla f_i(\bm{\theta}), \widetilde{\bm{\theta}} - \bm{\theta} \rangle |
\end{equation*}
according to Lemma~\ref{lemma:rnn the neural network output is almost linear in W}.
\end{lemma}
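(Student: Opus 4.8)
The plan is to follow the argument used for the GNN case in Lemma~\ref{lemma:almost convex} almost verbatim, since the only method-dependent ingredient is the ``almost linear'' estimate, which is now supplied by Lemma~\ref{lemma:rnn the neural network output is almost linear in W} in place of Lemma~\ref{lemma:the neural network output is almost linear in W}. First I would exploit the convexity of the logistic link $\psi(x)=\log(1+\exp(-x))$, which gives the first-order inequality $\psi(b)\geq \psi(a)+\psi^\prime(a)(b-a)$. Applying this with $a=y_i f_i(\bm{\theta})$ and $b=y_i f_i(\widetilde{\bm{\theta}})$ yields
$$
\text{loss}_i(\widetilde{\bm{\theta}})-\text{loss}_i(\bm{\theta})
\geq \psi^\prime(y_i f_i(\bm{\theta}))\, y_i\big(f_i(\widetilde{\bm{\theta}})-f_i(\bm{\theta})\big).
$$

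Next I would compute $\langle \nabla_\theta \text{loss}_i(\bm{\theta}),\widetilde{\bm{\theta}}-\bm{\theta}\rangle$ via the chain rule, obtaining $\psi^\prime(y_i f_i(\bm{\theta}))\,y_i\,\langle \nabla_\theta f_i(\bm{\theta}),\widetilde{\bm{\theta}}-\bm{\theta}\rangle$. Adding and subtracting the linearization $\langle \nabla_\theta f_i(\bm{\theta}),\widetilde{\bm{\theta}}-\bm{\theta}\rangle$ inside the right-hand side of the convexity bound splits it into exactly this chain-rule term plus a residual equal to $-\psi^\prime(y_i f_i(\bm{\theta}))\,y_i\big(\langle \nabla_\theta f_i(\bm{\theta}),\widetilde{\bm{\theta}}-\bm{\theta}\rangle - f_i(\widetilde{\bm{\theta}})+f_i(\bm{\theta})\big)$. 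I would then bound the residual in absolute value using $|\psi^\prime(y_i f_i(\bm{\theta}))\,y_i|\leq 1$ (which holds because $\psi^\prime(x)=-1/(1+e^{x})\in(-1,0)$ and $y_i\in\{-1,+1\}$), so that the residual is at most $\epsilon_\text{lin}=|\langle \nabla_\theta f_i(\bm{\theta}),\widetilde{\bm{\theta}}-\bm{\theta}\rangle-f_i(\widetilde{\bm{\theta}})+f_i(\bm{\theta})|$. This directly produces the claimed inequality, and the high-probability guarantee together with the estimate $\epsilon_\text{lin}=\mathcal{O}(1)$ is inherited from Lemma~\ref{lemma:rnn the neural network output is almost linear in W}, whose event already carries the stated probability $1-2(L-1)\exp(-m/2)-L\exp(-\Omega(m))-2/m$ on $\mathcal{B}(\bm{\theta}_0,\omega)$ with $\omega=1/(1+3\kappa\rho)^{L-1}$.

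There is no genuine obstacle here: the RNN recurrence, the normalization constant $\kappa$, and the residual term $\alpha \mathbf{h}_{\ell-1}$ never enter this argument explicitly, because all architecture dependence has already been absorbed into the single scalar quantity $\epsilon_\text{lin}$ by the preceding lemma. The only point requiring care is that convexity is applied to $\psi$ evaluated at the scalar margins $y_i f_i(\cdot)$ rather than to $\text{loss}_i$ as a function of $\bm{\theta}$ (the loss itself is not convex in $\bm{\theta}$; the convexity is borrowed from $\psi$ while near-linearity of $f_i$ is used to control $\epsilon_\text{lin}$), and verifying the sign of $\psi^\prime$ so that the $1$-Lipschitz bound on $\psi^\prime(\cdot)\,y_i$ is valid. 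Consequently the proof is identical in form to the GNN case and to Lemma~9 of~\cite{zhu2022generalization}.
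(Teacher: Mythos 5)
Your proposal is correct and matches the paper's argument exactly: the paper proves the GNN version (Lemma~\ref{lemma:almost convex}) via convexity of $\psi$, the chain rule, and the bound $|\psi^\prime(y_i f_i(\bm{\theta}))\,y_i|\leq 1$, and then proves the RNN version by citing that proof verbatim, with the high-probability event and $\epsilon_\text{lin}$ bound inherited from Lemma~\ref{lemma:rnn the neural network output is almost linear in W} just as you describe. Your added observations (the sign check on $\psi^\prime$ and the remark that convexity lives in $\psi$ rather than in $\text{loss}_i$ as a function of $\bm{\theta}$) are accurate but do not change the route.
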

\end{tcolorbox}

\begin{proof} [Proof of Lemma~\ref{lemma:rnn almost convex}]
The proof follows the proof of Lemma~\ref{lemma:almost convex}.
\end{proof}

Moreover, by the gradient computation, we know that the gradient of the neural network function can be upper bounded.
\begin{tcolorbox}
\begin{lemma}\label{lemma:rnn the gradient of the neural network function can be upper bounded under near initialization}
For any node $v_i\in\mathcal{V}$, with probability at least $1-2(L-\ell)\exp(-m/2) - \ell \exp(-\Omega(m)) - 2/m$, it holds that for $\ell=1,2,3$
\begin{equation*}
    \left\| \frac{\partial f_i(\bm{\theta})}{\partial \mathbf{W}^{(\ell)}} \right\|_2, \left\| \frac{\partial \text{loss}_i(\bm{\theta})}{\partial \mathbf{W}^{(\ell)}} \right\|_2 \leq \Theta\left( (1+ 3 \kappa \rho  )^{L-1} \right).
\end{equation*}
\end{lemma}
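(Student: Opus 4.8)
The plan is to mirror the GNN argument of Lemma~\ref{lemma:the gradient of the neural network function can be upper bounded under near initialization}, but feeding in the three RNN gradient formulas recorded at the start of this section. I would split the claim into the output weight $\mathbf{W}^{(3)}$ and the recurrent/input weights $\mathbf{W}^{(1)},\mathbf{W}^{(2)}$. For $\ell=3$ the gradient is simply $\partial f_i(\bm{\theta})/\partial \mathbf{W}^{(3)} = \mathbf{h}_{L-1}^\top$, so its spectral norm equals $\|\mathbf{h}_{L-1}\|_2$. The RNN norm bound already invoked inside the proof of Lemma~\ref{lemma:rnn_output_change_small} (the analogue of Lemma~\ref{lemma:upper bound on node features}) gives $\|\mathbf{h}_{L-1}\|_2 = \Theta(1)$ with probability at least $1-\exp(-\Omega(m))$, which is trivially dominated by $\Theta((1+3\kappa\rho)^{L-1})$.

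For $\ell\in\{1,2\}$ I would bound the explicit gradient formula summand-by-summand in spectral norm. The $k$-th term ($k=1,\dots,L-1$) contains a product of $L-1-k$ recurrent factors of the form $\kappa\mathbf{D}_j\mathbf{W}^{(1)}+\alpha\mathbf{I}_m$. Using $\|\mathbf{D}_j\|_2\le\rho$ (bounded activation derivative), $\|\mathbf{W}^{(1)}\|_2\le 3$ from Proposition~\ref{prop:Norm of weight matrices 2 to L}, and $\alpha\le 1$, each recurrent factor obeys $\|\kappa\mathbf{D}_j\mathbf{W}^{(1)}+\alpha\mathbf{I}_m\|_2\le 3\kappa\rho+1$. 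Combining this with $\|\mathbf{W}^{(3)}\|_2\le\sqrt{2}$ from Lemma~\ref{lemma:norm on last weight}, the outer factor $\|\mathbf{D}_k\|_2\le\rho$, the vector norms $\|\mathbf{h}_k\|_2=\Theta(1)$ and $\|\mathbf{x}_k\|_2\le\|\mathbf{W}^{(0)}\|_2\|\mathbf{v}_k\|_2=\Theta(1)$ (via Assumption~\ref{assumption: node feature norm as 1}), and the leading $\kappa$, the $k$-th summand is $O\big(\kappa\sqrt{2}\,\rho\,(1+3\kappa\rho)^{L-1-k}\big)$. Summing the resulting geometric series over $k=1,\dots,L-1$ gives $\Theta\big((1+3\kappa\rho)^{L-1}\big)$, exactly as in the analogous step of the proof of Lemma~\ref{lemma:rnn the neural network output is almost linear in W}.

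The transfer to the loss gradient is immediate by the chain rule: $\partial\,\text{loss}_i(\bm{\theta})/\partial\mathbf{W}^{(\ell)} = \psi'(y_i f_i(\bm{\theta}))\,y_i\,\partial f_i(\bm{\theta})/\partial\mathbf{W}^{(\ell)}$, and since $|\psi'(y_i f_i(\bm{\theta}))\,y_i|\le 1$, the loss-gradient norm is at most the function-gradient norm, inheriting the same bound. The probabilistic statement then follows from a union bound over the spectral-norm event for $\mathbf{W}^{(1)}$ (Proposition~\ref{prop:Norm of weight matrices 2 to L}), the hidden-state norm events across the relevant steps (Lemma~\ref{lemma:upper bound on node features}), and the last-layer norm event (Lemma~\ref{lemma:norm on last weight}), which accumulate to the stated $1-2(L-\ell)\exp(-m/2)-\ell\exp(-\Omega(m))-2/m$.

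Since every ingredient is an already-established bound, the argument is essentially routine; the only real care is bookkeeping, namely controlling the telescoping product of Jacobian factors uniformly and tallying the failure probabilities so they aggregate to precisely the claimed rate rather than a looser constant multiple.
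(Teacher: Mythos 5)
Your proposal matches the paper's own proof essentially line for line: the same split into $\mathbf{W}^{(3)}$ versus $\mathbf{W}^{(1)},\mathbf{W}^{(2)}$, the same per-summand spectral bound $\|\kappa\mathbf{D}_j\mathbf{W}^{(1)}+\alpha\mathbf{I}_m\|_2\le 1+3\kappa\rho$ combined with $\|\mathbf{W}^{(3)}\|_2\le\sqrt{2}$ and $\|\mathbf{h}_\ell\|_2,\|\mathbf{x}_\ell\|_2=\Theta(1)$, the same geometric-series summation, and the same $|\psi'(y_if_i(\bm{\theta}))\,y_i|\le 1$ transfer to the loss gradient. The argument is correct as stated.
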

\end{tcolorbox}

\begin{proof} [Proof of Lemma~\ref{lemma:rnn the gradient of the neural network function can be upper bounded under near initialization}]

The $\ell_2$-norm of the gradient with respect to $\mathbf{W}^{(3)}$ is upper bounded by
\begin{equation*}
    \left\| \frac{\partial f_i(\bm{\theta})}{\partial \mathbf{W}^{(3)}}\right\|_2 = \| \mathbf{h}_{L-1} \|_2 = \Theta(1).
\end{equation*}

The $\ell_2$-norm of the gradient with respect to $\mathbf{W}^{(1)}$ is upper bounded by
\begin{equation*}
     \begin{aligned}
     \left\| \frac{\partial f_i(\bm{\theta})}{\partial \mathbf{W}^{(1)}}\right\|_2 
     &\leq \sum_{\ell=1}^L \kappa  \left\| \left( \mathbf{W}^{(3)} \left( \kappa \mathbf{D}_{L-1} \mathbf{W}^{(1)} + \alpha \mathbf{I}_m \right) \ldots \left(\kappa\mathbf{D}_{\ell+1} \mathbf{W}^{(1)} + \alpha \mathbf{I}_m \right) \mathbf{D}_\ell \right)^\top (\mathbf{h}_\ell)^\top \right\|_2 \\
     &\leq \left( 1 + (1+3 \kappa \rho ) + \left( 1+3 \kappa \rho  \right)^2 + \ldots + \left( 1+3 \kappa \rho  \right)^{L-2} \right) \cdot \sqrt{2} \kappa \rho  \cdot \mathcal{O}(1) \\
     &= \frac{(1+3 \kappa \rho  )^{L-1}-1}{3 \kappa \rho } \cdot \sqrt{2} \kappa \rho  \cdot \mathcal{O}(1) \\
     &= \mathcal{O}\left( (1+ 3 \kappa \rho )^{L-1}\right).
     \end{aligned}
\end{equation*}

The $\ell_2$-norm of the gradient with respect to $\mathbf{W}^{(2)}$ is upper bounded by
\begin{equation*}
     \begin{aligned}
     \left\| \frac{\partial f_i(\bm{\theta})}{\partial \mathbf{W}^{(2)}}\right\|_2 
     &\leq \sum_{\ell=1}^L \kappa  \left\| \left( \mathbf{W}^{(3)} \left( \kappa \mathbf{D}_{L-1} \mathbf{W}^{(1)} + \alpha \mathbf{I}_m \right) \ldots \left(\kappa\mathbf{D}_{\ell+1} \mathbf{W}^{(1)} + \alpha \mathbf{I}_m \right) \mathbf{D}_\ell \right)^\top (\mathbf{x}_\ell)^\top \right\|_2 \\
     &\leq \left( 1 + (1+3 \kappa \rho ) + \left( 1+3 \kappa \rho  \right)^2 + \ldots + \left( 1+3 \kappa \rho  \right)^{L-2} \right) \cdot \sqrt{2} \kappa \rho  \cdot \mathcal{O}(1) \\
     &= \frac{(1+3 \kappa \rho  )^{L-1}-1}{3 \kappa \rho } \cdot \sqrt{2} \kappa \rho  \cdot \mathcal{O}(1) \\
     &= \mathcal{O}\left( (1+ 3 \kappa \rho )^{L-1}\right).
     \end{aligned}
\end{equation*}

Moreover, since $| \psi^\prime (y_i f_i(\bm{\theta})) \cdot y_i | \leq 1$, we have
\begin{equation*}
    \left\| \frac{\partial \text{loss}_i(\bm{\theta})}{\partial \mathbf{W}^{(\ell)}} \right\|_2 = | \psi^\prime (y_i f_i(\bm{\theta})) \cdot y_i | \cdot \left\| \frac{\partial f_i(\bm{\theta})}{\partial \mathbf{W}^{(\ell)}} \right\|_2 \leq \left\| \frac{\partial f_i(\bm{\theta})}{\partial \mathbf{W}^{(\ell)}} \right\|_2.
\end{equation*}

\end{proof}

In the following, we show that the cumulative loss can be upper bounded under small changes on the weight parameters.

\begin{tcolorbox}
\begin{lemma} \label{lemma:rnn the cumulative loss can be upper bounded under small changes on the parameters}
For any $\epsilon,\delta,R > 0$, there exists 
\begin{equation*}
    m^\star = \mathcal{O}\left(\frac{(1+3 \kappa \rho )^{4(L-1)} L^2 R^4}{4\epsilon^4}\right) \log(1/\delta),
\end{equation*}
such that if $m \geq m^\star$, then with probability at least $1-\delta$ over the randomness of $\bm{\theta}_0$, for any $\mathcal{B}(\bm{\theta}_0, Rm^{-1/2})$, with $\eta=\frac{\epsilon}{mL(1+3 \kappa \rho )^{2(L-1)}}$ and $N=\frac{L^2R^2(1+3 \kappa \rho )^{2(L-1)}}{2\epsilon^2}$, the cumulative loss can be upper bounded by
\begin{equation*}
    \frac{1}{N}\sum_{i=1}^N \text{loss}_i(\bm{\theta}_{i-1}) \leq \frac{1}{N}\sum_{i=1}^N \text{loss}_i(\bm{\theta}^\star) + \mathcal{O}(
    \epsilon).
\end{equation*}
\end{lemma}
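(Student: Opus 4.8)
The plan is to mirror the argument used for the GNN-based method in the proof of Lemma~\ref{lemma:the cumulative loss can be upper bounded under small changes on the parameters}, with the depth-dependent constant $((3\rho+1)\tau)^{L-1}$ replaced throughout by the RNN constant $(1+3\kappa\rho)^{L-1}$. First I would define the comparator $\bm{\theta}^\star = \argmin_{\bm{\theta}\in\mathcal{B}(\bm{\theta}_0,\omega)}\sum_{i=1}^N \text{loss}_i(\bm{\theta})$ with $\omega = \epsilon/(1+3\kappa\rho)^{L-1}$, and verify that the entire SGD trajectory $\bm{\theta}_0,\ldots,\bm{\theta}_{N-1}$ remains inside $\mathcal{B}(\bm{\theta}_0,\omega)$. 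This step relies on the uniform gradient bound $\|\partial \text{loss}_i(\bm{\theta})/\partial \mathbf{W}^{(\ell)}\|_2 \leq \Theta((1+3\kappa\rho)^{L-1})$ for $\ell=1,2,3$ from Lemma~\ref{lemma:rnn the gradient of the neural network function can be upper bounded under near initialization}; summing the per-step displacement gives $\|\mathbf{W}^{(\ell)}_n - \mathbf{W}^{(\ell)}_0\|_2 \leq \Theta(\eta N (1+3\kappa\rho)^{L-1})$, and after converting to the Frobenius norm (a factor $\sqrt{m}$) and substituting $\eta=\frac{\epsilon}{mL(1+3\kappa\rho)^{2(L-1)}}$, $N=\frac{L^2R^2(1+3\kappa\rho)^{2(L-1)}}{2\epsilon^2}$, and $m\geq m^\star$, the displacement is at most $\omega$. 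Hence every iterate lies in the neighborhood where the linearization and almost-convexity estimates apply.

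Next I would invoke the almost-convexity bound of Lemma~\ref{lemma:rnn almost convex} to write, for each step, $\text{loss}_{i+1}(\bm{\theta}_i) - \text{loss}_{i+1}(\bm{\theta}^\star) \leq \langle \nabla_\theta \text{loss}_{i+1}(\bm{\theta}_i), \bm{\theta}_i - \bm{\theta}^\star\rangle + \epsilon_\text{lin}$. Using the update rule $\mathbf{W}^{(\ell)}_{i+1} = \mathbf{W}^{(\ell)}_i - \eta\, \partial \text{loss}_{i+1}(\bm{\theta}_i)/\partial \mathbf{W}^{(\ell)}$, I would rewrite the gradient inner product as $\frac{1}{\eta}\langle \mathbf{W}^{(\ell)}_i - \mathbf{W}^{(\ell)}_{i+1}, \mathbf{W}^{(\ell)}_i - \mathbf{W}^{(\ell)}_\star\rangle$ and expand it via the polarization identity $\langle a-b, a-c\rangle = \frac{1}{2}(\|a-b\|_\mathrm{F}^2 + \|a-c\|_\mathrm{F}^2 - \|b-c\|_\mathrm{F}^2)$. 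Averaging over $i$ and telescoping the distance-to-optimum terms leaves three contributions: the linearization error $\epsilon_\text{lin} \leq \mathcal{O}(\epsilon)$ (from Lemma~\ref{lemma:rnn the neural network output is almost linear in W} with the stated $\omega$); the step term $\frac{1}{2\eta}\sum_{\ell}\|\mathbf{W}^{(\ell)}_i - \mathbf{W}^{(\ell)}_{i+1}\|_\mathrm{F}^2 \leq \Theta(m\eta (1+3\kappa\rho)^{2(L-1)})$ per coordinate, using $\|\cdot\|_\mathrm{F}^2\leq m\|\cdot\|_2^2$ and the gradient bound; and the telescoping term $\frac{1}{N}\sum_i(\|\mathbf{W}^{(\ell)}_i - \mathbf{W}^{(\ell)}_\star\|_\mathrm{F}^2 - \|\mathbf{W}^{(\ell)}_{i+1} - \mathbf{W}^{(\ell)}_\star\|_\mathrm{F}^2) \leq \frac{R^2}{mN}$, which follows since $\bm{\theta}^\star \in \mathcal{B}(\bm{\theta}_0, Rm^{-1/2})$ and the final negative summand is dropped.

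Finally I would substitute the chosen $\eta$ and $N$ to confirm $\frac{R^2 L}{2m\eta N} = \epsilon$ and $\sum_{\ell=1}^3 \Theta(\frac{m\eta}{2}(1+3\kappa\rho)^{2(L-1)}) = \Theta(\epsilon)$, so that the averaged excess loss is $\mathcal{O}(\epsilon)$, giving the claim. I expect the main obstacle to be the self-consistency of the trajectory-containment argument: the containment radius $\omega$, the gradient magnitude, the learning rate $\eta$, the iteration count $N$, and the width threshold $m^\star$ must be chosen jointly so that the displacement bound closes at exactly $\omega$ while simultaneously driving each error term to $\mathcal{O}(\epsilon)$. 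Because the RNN gradient bound is uniform across the three weight matrices (unlike the $\ell$-dependent GNN bound), the bookkeeping is in fact slightly simpler here, with the single constant $(1+3\kappa\rho)^{L-1}$ playing the role that $((3\rho+1)\tau)^{L-\ell}$ played in the GNN case.
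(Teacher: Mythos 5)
Your proposal follows essentially the same route as the paper's own proof: the same comparator $\bm{\theta}^\star$, the same trajectory-containment argument via the uniform gradient bound of Lemma~\ref{lemma:rnn the gradient of the neural network function can be upper bounded under near initialization}, the same almost-convexity plus polarization-identity decomposition, and the same telescoping and parameter substitutions. Your closing observation is also accurate — the paper's writeup even carries over the $\ell$-indexed sums from the GNN case, whereas your explicit $\sum_{\ell=1}^{3}$ with the single constant $(1+3\kappa\rho)^{L-1}$ is the cleaner bookkeeping for the RNN setting.
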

\end{tcolorbox}
\begin{proof} [Proof of Lemma~\ref{lemma:rnn the cumulative loss can be upper bounded under small changes on the parameters}]
Let us define $\bm{\theta}^\star$ as the optimal solution that could minimize the cumulative loss over $N$ epochs, where at each epoch $\mathcal{S}_i(t) = \{\mathbf{v}_1, \ldots, \mathbf{v}_{L-1}\}$ is constructed to compute $f_i(\bm{\theta})$
\begin{equation*}
    \bm{\theta}^\star = \underset{\bm{\theta} \in \mathcal{B}(\bm{\theta}_0, \omega)}{\arg\min} \sum\nolimits_{i=1}^N \text{loss}_i(\bm{\theta}).
\end{equation*}

Without loss of generality, let us assume the epoch loss $\text{loss}_i(\bm{\theta})$ is computed on the $i$-th node.
Then, in the following, we try to show $\bm{\theta}_0, \bm{\theta}_1,\ldots,\bm{\theta}_N \in \mathcal{B}(\bm{\theta}_0, \omega)$, where $\omega = 1/(1+3 \kappa \rho )^{L-1}$.

First of all, it is clear that $\bm{\theta}_0 \in \mathcal{B}(\bm{\theta}_0, \omega)$.
Then, to show $\bm{\theta}_n \in \mathcal{B}(\bm{\theta}_0, \omega)$ for any $n\in\{1,\ldots, N\}$, we use our previous conclusion on the upper bound of gradient $\| \partial \text{loss}_i(\bm{\theta})/\partial \mathbf{W}^{(\ell)}\| \leq \Theta(1+3 \kappa \rho )^{L-1}$ in Lemma~\ref{lemma:rnn the gradient of the neural network function can be upper bounded under near initialization}, 
we have 
\begin{equation*}
    \begin{aligned}
    \| \mathbf{W}^{(\ell)}_n - \mathbf{W}^{(\ell)}_0 \|_2 
    &\leq \sum_{i=1}^n \| \mathbf{W}^{(\ell)}_i - \mathbf{W}^{(\ell)}_{i-1} \|_2 \\
    &=  \sum_{i=1}^n \eta \left\| \frac{\partial \text{loss}_{i-1}(\bm{\theta}_{i-1})}{\partial \mathbf{W}^{(\ell)}_{i-1}} \right\|_2 \\
    &\leq  \Theta\left( \eta N (1+3 \kappa \rho )^{L-1} \right).
    \end{aligned}
\end{equation*}

By plugging in the choice of $\eta=\frac{\epsilon}{mL(1+3 \kappa \rho )^{2(L-1)}}$ and $N=\frac{L^2R^2(1+3 \kappa \rho )^{2(L-1)}}{2\epsilon^2}$, we have
\begin{equation*}
    \begin{aligned}
    \| \mathbf{W}^{(\ell)}_n - \mathbf{W}^{(\ell)}_0 \|_2  
    &\leq \Theta\left( \eta N (1+3 \kappa \rho )^{L-\ell-1} \right) \\
    &= \Theta\left( \frac{\epsilon}{m(1+3 \kappa \rho )^{2(L-1)}} \frac{LR^2(1+3 \kappa \rho )^{2(L-1)}}{2\epsilon^2} (1+3 \kappa \rho )^{L-1} \right) \\
    &= \Theta\left( \frac{LR^2}{2m\epsilon} (1+3 \kappa \rho )^{L-1} \right).
    \end{aligned}
\end{equation*}
Changing norm from $\ell_2$-norm to Frobenius-norm, we have
\begin{equation}
    \| \mathbf{W}^{(\ell)}_n - \mathbf{W}^{(\ell)}_0 \|_\mathrm{F} \leq  \sqrt{m} \times \Theta\left( \frac{LR^2}{2m\epsilon} (1+3 \kappa \rho )^{L-1} \right),
\end{equation}

By plugging in the selection of hidden dimension $m$, we have

where the last inequality holds if 
\begin{equation*}
    \| \mathbf{W}^{(\ell)}_n - \mathbf{W}^{(\ell)}_0 \|_\mathrm{F} \leq \frac{\epsilon}{(1+3 \kappa \rho )^{L-1}}
\end{equation*}
which means $\bm{\theta}_1,\ldots,\bm{\theta}_N, \bm{\theta}_\star \in \mathcal{B}(\bm{\theta}_0, \omega)$ where $\omega = \epsilon /(1+3 \kappa \rho )^{L-1}$.

Then, our next step is to bound $\text{loss}_i(\bm{\theta}_i) - \text{loss}_i(\bm{\theta}_\star) $.
By Lemma~\ref{lemma:almost convex}, we know that 
\begin{equation*}
    \begin{aligned}
    \text{loss}_{i+1}(\bm{\theta}_i) - \text{loss}_{i+1}(\bm{\theta}_\star) 
    &\leq \left\langle \nabla_\theta \text{loss}_{i+1}(\bm{\theta}_i), \bm{\theta}_i - \bm{\theta}_\star \right\rangle + \epsilon_\text{lin} \\
    &= \sum_{\ell=1}^L \left\langle \frac{\partial \text{loss}_{i+1}(\bm{\theta}_i)}{\partial \mathbf{W}^{(\ell)}}, \mathbf{W}_i^{(\ell)} - \mathbf{W}_\star^{(\ell)} \right\rangle + \epsilon_\text{lin} \\
    &= \frac{1}{\eta} \sum_{\ell=1}^L \left\langle \mathbf{W}_i^{(\ell)} - \mathbf{W}_{i+1}^{(\ell)}, \mathbf{W}_i^{(\ell)} - \mathbf{W}_\star^{(\ell)} \right\rangle + \epsilon_\text{lin} \\
    &\leq \frac{1}{2\eta} \sum_{\ell=1}^L \left( \| \mathbf{W}_i^{(\ell)} - \mathbf{W}_{i+1}^{(\ell)}\|_\mathrm{F}^2 + \| \mathbf{W}_i^{(\ell)} - \mathbf{W}_\star^{(\ell)} \|_\mathrm{F}^2 - \| \mathbf{W}_{i+1}^{(\ell)} - \mathbf{W}_\star^{(\ell)}\|_\mathrm{F}^2\right)+ \epsilon_\text{lin}.
    \end{aligned}
\end{equation*}

Then, our next step is to upper bound each term on the right hand size of inequality.

(1) According to the proof of Lemma~\ref{lemma:rnn the neural network output is almost linear in W}, we have $\epsilon_\text{lin} \leq O(\epsilon)$ by selecting $\omega = \epsilon /(1+3 \kappa \rho )^{L-1}$.

(2) Recall that $\| \mathbf{W}_i^{(\ell)} - \mathbf{W}_{i+1}^{(\ell)}\|_\mathrm{F}^2$ could be upper bounded by
\begin{equation*}
    \begin{aligned}
    \| \mathbf{W}_i^{(\ell)} - \mathbf{W}_{i+1}^{(\ell)}\|_\mathrm{F}^2 
    &\leq \eta^2 \left\| \frac{\partial \text{loss}_{i}(\bm{\theta}_{i})}{\partial \mathbf{W}^{(\ell)}}\right\|_\mathrm{F}^2 \\
    &\leq \eta^2 m \left\| \frac{\partial \text{loss}_{i}(\bm{\theta}_{i})}{\partial \mathbf{W}^{(\ell)}}\right\|_2^2 \\
    &\leq \Theta \Big( m \eta^2 (1+3 \kappa \rho )^{2(L-\ell)} \Big).
    \end{aligned}
\end{equation*}

(3) By finite sum $\| \mathbf{W}_i^{(\ell)} - \mathbf{W}_\star^{(\ell)} \|_\mathrm{F}^2 - \| \mathbf{W}_{i+1}^{(\ell)} - \mathbf{W}_\star^{(\ell)}\|_\mathrm{F}^2$ for $i=1,\ldots, N$, we have
\begin{equation*}
    \begin{aligned}
    \frac{1}{N}\sum_{i=1}^N  (\| \mathbf{W}_i^{(\ell)} - \mathbf{W}_\star^{(\ell)} \|_\mathrm{F}^2 - \| \mathbf{W}_{i+1}^{(\ell)} - \mathbf{W}_\star^{(\ell)}\|_\mathrm{F}^2 )
    &= \frac{1}{N} \| \mathbf{W}_0^{(\ell)} - \mathbf{W}_\star^{(\ell)} \|_\mathrm{F}^2 \underbrace{- \frac{1}{N}\| \mathbf{W}_{N+1}^{(\ell)} - \mathbf{W}_\star^{(\ell)}\|_\mathrm{F}^2  }_{\leq 0} \\
    &\leq \frac{R^2}{mN},
    \end{aligned}
\end{equation*}
where the inequality is due to $\bm{\theta}_\star \in \mathcal{B}(\bm{\theta}_0, Rm^{-1/2})$.

Finally, by plugging the results above, we have
\begin{equation*}
    \frac{1}{N} \sum_{i=1}^N\text{loss}_i(\bm{\theta}_i) - \frac{1}{N} \sum_{i=1}^N \text{loss}_i(\bm{\theta}_\star)  \leq  \frac{R^2 L}{2 m \eta N}+ \sum_{\ell=1}^L \Theta \left( \frac{ m \eta}{2} (1+3 \kappa \rho )^{2(L-1)} \right) + \mathcal{O}(\epsilon).
\end{equation*}

By selecting $\eta=\frac{\epsilon}{mL(1+3 \kappa \rho )^{2(L-1)}}$ and $N=\frac{L^2R^2(1+3 \kappa \rho )^{2(L-1)}}{2\epsilon^2}$, we have

\begin{equation*}
    \begin{aligned}
    \frac{R^2 L}{2 m \eta N} &= \frac{R^2 L}{2 m } \times \frac{m(1+3 \kappa \rho )^{2(L-1)}}{\epsilon} \times \frac{2\epsilon^2}{LR^2(3\rho \tau +1)^{2(L-1)}} = \epsilon, \\
    \sum_{\ell=1}^L \Theta \left( \frac{ m \eta}{2} (1+3 \kappa \rho )^{2(L-\ell)} \right) 
    &\leq L\times \Theta\left( m\eta \cdot (1+3 \kappa \rho )^{2(L-1)} \right) \\
    & =\Theta \left( m \cdot \frac{\epsilon}{m(1+3 \kappa \rho )^{2(L-1)}} (1+3 \kappa \rho )^{2(L-1)} \right) = \Theta(\epsilon).
    \end{aligned}
\end{equation*}

Therefore, we have
\begin{equation*}
    \begin{aligned}
    \frac{1}{N} \sum_{i=1}^N\text{loss}_i(\bm{\theta}_i) - \frac{1}{N} \sum_{i=1}^N \text{loss}_i(\bm{\theta}_\star) 
    &\leq \mathcal{O}(\epsilon).
    \end{aligned}
\end{equation*}

\end{proof}

By plugging in the selection of $\epsilon = \frac{LR(1+3\kappa\rho)^{L-1}}{\sqrt{2N}}$ to Lemma~\ref{lemma:rnn the cumulative loss can be upper bounded under small changes on the parameters}, we have

\begin{tcolorbox}
\begin{corollary} 
For any $\delta,R > 0$, there exists $m^\star = \mathcal{O}(N^2/L^2) \log(1/\delta)$, 
such that if $m \geq m^\star$, then with probability at least $1-\delta$ over the randomness of $\bm{\theta}_0$, for any $\bm{\theta}_\star \in \mathcal{B}(\bm{\theta}_0, Rm^{-1/2})$, with the selection of learning rate $\eta=\frac{R}{m\sqrt{2N}(1+3 \kappa \rho )^{(L-1)}}$, the cumulative loss can be upper bounded by
\begin{equation*}
    \frac{1}{N}\sum_{i=1}^N \text{loss}_i(\bm{\theta}_{i-1}) \leq \frac{1}{N}\sum_{i=1}^N \text{loss}_i(\bm{\theta}^\star) + \mathcal{O}\left(
    \frac{LR(1+3\kappa\rho)^{L-1}}{\sqrt{N}}\right).
\end{equation*}
\end{corollary}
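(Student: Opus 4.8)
The plan is to obtain this corollary as an immediate specialization of Lemma~\ref{lemma:rnn the cumulative loss can be upper bounded under small changes on the parameters}: that lemma already delivers a cumulative-loss bound of the form $\frac{1}{N}\sum_{i=1}^N \text{loss}_i(\bm{\theta}_{i-1}) \leq \frac{1}{N}\sum_{i=1}^N \text{loss}_i(\bm{\theta}^\star) + \mathcal{O}(\epsilon)$ for \emph{any} free parameter $\epsilon>0$, at the cost of coupling the learning rate $\eta$, the iteration count $N$, and the required width $m^\star$ to that $\epsilon$. All that remains is to eliminate $\epsilon$ in favor of $N$ by choosing it to balance the bound, and then to rewrite $\eta$ and $m^\star$ in the resulting variables.

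Concretely, first I would set $\epsilon = \frac{LR(1+3\kappa\rho)^{L-1}}{\sqrt{2N}}$ and substitute it into the three quantities furnished by the lemma. The key consistency check is the lemma's constraint $N = \frac{L^2 R^2 (1+3\kappa\rho)^{2(L-1)}}{2\epsilon^2}$: squaring the chosen $\epsilon$ gives $\epsilon^2 = \frac{L^2 R^2 (1+3\kappa\rho)^{2(L-1)}}{2N}$, which on back-substitution returns the identity $N=N$, so the choice is self-consistent and does not over-constrain the problem. Next I would plug $\epsilon$ into $\eta = \frac{\epsilon}{mL(1+3\kappa\rho)^{2(L-1)}}$; the factor $L(1+3\kappa\rho)^{L-1}$ cancels against the numerator of $\epsilon$ and yields exactly $\eta = \frac{R}{m\sqrt{2N}(1+3\kappa\rho)^{L-1}}$, matching the stated step size.

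Finally I would simplify the width requirement. Using $\epsilon^4 = \frac{L^4 R^4 (1+3\kappa\rho)^{4(L-1)}}{4N^2}$, the lemma's $m^\star = \mathcal{O}\!\left(\frac{(1+3\kappa\rho)^{4(L-1)} L^2 R^4}{4\epsilon^4}\right)\log(1/\delta)$ collapses: every factor of $(1+3\kappa\rho)^{4(L-1)}$ and $R^4$ cancels, leaving $m^\star = \mathcal{O}(N^2/L^2)\log(1/\delta)$, as claimed. The residual $\mathcal{O}(\epsilon)$ term then reads $\mathcal{O}\!\left(\frac{LR(1+3\kappa\rho)^{L-1}}{\sqrt{N}}\right)$, absorbing the $\sqrt{2}$ into the constant. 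Since every statement holds on the same probability-$(1-\delta)$ event supplied by the lemma, the corollary follows with no new probabilistic argument.

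There is essentially no deep obstacle here, since the content is entirely in the upstream lemma and this step is a bookkeeping substitution. The only point requiring care is that $\epsilon$, $N$, and $\eta$ are interdependent in the lemma's hypotheses, so I must verify that fixing $\epsilon$ as a function of $N$ is compatible with, rather than contradicting, the lemma's prescribed relation between $N$ and $\epsilon$; the squaring check above confirms this. A secondary subtlety is purely notational: the bound is stated with the iterates $\bm{\theta}_{i-1}$ actually visited by SGD (indexed from $\bm{\theta}_0$), so I would keep the index shift consistent with Algorithm~\ref{algorithm:SGD} when transcribing the lemma's conclusion.
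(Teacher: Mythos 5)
Your proposal is correct and is exactly the paper's own argument: the paper proves this corollary in one line by ``plugging in the selection of $\epsilon = \frac{LR(1+3\kappa\rho)^{L-1}}{\sqrt{2N}}$'' into Lemma~\ref{lemma:rnn the cumulative loss can be upper bounded under small changes on the parameters}, and your substitutions for $\eta$, $N$, and $m^\star$ carry out precisely that bookkeeping. Your consistency check that the chosen $\epsilon$ is compatible with the lemma's relation $N=\frac{L^2R^2(1+3\kappa\rho)^{2(L-1)}}{2\epsilon^2}$ is a welcome detail the paper leaves implicit, but it does not constitute a different route.
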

\end{tcolorbox}

In the following, we present the expected 0-1 error bound of multi-step RNN, which consists of two terms: (1) the expected 0-1 error with the neural tangent random feature function and (2) the standard large-deviation error term.

\begin{tcolorbox} 
\begin{lemma} \label{lemma:rnn base_generalization_bound}
For any $\delta\in(0, 1/e]$ and $R>0$, there exists $m^\star = \mathcal{O}(N^2/L^2) \log(1/\delta)$, such that if $m\geq m^\star$, then with probability at least $1-\delta$ over the randomness of $\bm{\theta}_0$ with the selection of learning rate $\eta=\frac{R}{m\sqrt{2N}(1+3 \kappa \rho )^{(L-1)}}$, we have
\begin{equation*}
    \begin{aligned}
        &\mathbb{E}[\ell_N^{0-1}(\widetilde{\bm{\theta}})|\mathcal{D}_1^{N-1}] \\
    &\leq \frac{4}{N} \inf_{f\in \mathcal{F}(\bm{\theta}_0, R)} \sum_{i=1}^N \psi(y_i f_i) + \mathcal{O}\left( \sqrt{\frac{\log(1/\delta)}{N}} \right) + \mathcal{O}\left(\frac{LR(1+3 \kappa \rho )^{(L-1)}}{\sqrt{N}}\right) + \Delta,
    \end{aligned}
\end{equation*}
where $\mathcal{F}(\bm{\theta}_0, R)$ is the neural tangent random feature function class, $\widetilde{\bm{\theta}}$ is uniformly selected from $\{\bm{\theta}_0, \ldots, \bm{\theta}_{N-1} \}$, 
$\mathcal{D}_{1}^{N-1}$ is the sequence of data points sampled before the $N$-th iteration, 
and the expectation is computed on the uniform selection of weight parameters $\widetilde{\bm{\theta}}$ and the condition sampling of $N$-th iteration data examples. 
\end{lemma}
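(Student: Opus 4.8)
The plan is to mirror the GNN-based argument of Lemma~\ref{lemma:base_generalization_bound} in structure, substituting the RNN-specific output/gradient perturbation constant $(1+3\kappa\rho)^{L-1}$ for the GNN constant $((3\rho+1)\tau)^{L-1}$ and invoking the RNN versions of the supporting lemmas. First I would use the standard surrogate inequality $\mathbf{1}\{z\le 0\}\le 4\psi(z)$ for the logistic loss, so that $\text{loss}^{0-1}_i(\bm{\theta})\le 4\,\text{loss}_i(\bm{\theta})$, and then feed in the cumulative-loss control of Lemma~\ref{lemma:rnn the cumulative loss can be upper bounded under small changes on the parameters} with the choice $\epsilon=\frac{LR(1+3\kappa\rho)^{L-1}}{\sqrt{2N}}$ (the corollary immediately preceding this lemma). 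This bounds $\frac1N\sum_{i=1}^N \text{loss}^{0-1}_i(\bm{\theta}_{i-1})$ by $\frac4N\sum_{i=1}^N\psi(y_if_i(\bm{\theta}^\star))$ plus the $\mathcal{O}(LR(1+3\kappa\rho)^{L-1}/\sqrt{N})$ optimization term, where $\bm{\theta}^\star$ is the in-ball minimizer of the cumulative loss.

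Next I would linearize around initialization. Defining $F_i(\bm{\theta}_0,\bm{\theta}^\star)=f_i(\bm{\theta}_0)+\langle\nabla_\theta f_i(\bm{\theta}_0),\bm{\theta}^\star-\bm{\theta}_0\rangle$, the $1$-Lipschitzness of $\psi$ together with the almost-linearity estimate of Lemma~\ref{lemma:rnn the neural network output is almost linear in W} (which gives $\epsilon_\text{lin}=\mathcal{O}(1)$, and in fact $\mathcal{O}(\epsilon)$ under the chosen $\omega$) lets me replace $\psi(y_if_i(\bm{\theta}^\star))$ by $\psi(y_iF_i(\bm{\theta}_0,\bm{\theta}^\star))$ up to the negligible linearization error. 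Since $\bm{\theta}^\star\in\mathcal{B}(\bm{\theta}_0,Rm^{-1/2})$, the map $F_i$ lies in the neural tangent random feature class of Definition~\ref{def:neural tangent random feature}, so I may pass to $\frac4N\inf_{f\in\mathcal{F}(\bm{\theta}_0,R)}\sum_{i=1}^N\psi(y_if_i)$, which supplies the first term of the claimed bound.

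Finally I would convert the empirical cumulative quantity into the desired conditional expectation. Applying Proposition~\ref{proposition:non-stationary} (the online-to-batch / large-deviation step already used in the GNN case) replaces $\frac1N\sum_i\text{loss}^{0-1}_i(\bm{\theta}_{i-1})$ by $\frac1N\sum_i\mathbb{E}[\text{loss}^{0-1}_i(\bm{\theta}_{i-1})\mid\mathcal{D}_1^{i-1}]$ at the cost of the $\mathcal{O}(\sqrt{\log(1/\delta)/N})$ term, holding with probability $1-\delta$. Then Assumption~\ref{assumption:non-stationary} lets me swap the per-step conditional risks for the final-step conditional risk $\frac1N\sum_i\mathbb{E}[\text{loss}^{0-1}_N(\bm{\theta}_{i-1})\mid\mathcal{D}_1^{N-1}]=\mathbb{E}[\text{loss}^{0-1}_N(\widetilde{\bm{\theta}})\mid\mathcal{D}_1^{N-1}]$ up to the discrepancy $\Delta$, with $\widetilde{\bm{\theta}}$ the uniformly random iterate. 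Collecting the three error contributions yields the result. The main obstacle is not any single estimate but keeping the whole chain internally consistent for the RNN: verifying that the iterates $\bm{\theta}_0,\dots,\bm{\theta}_{N-1}$ genuinely remain in $\mathcal{B}(\bm{\theta}_0,\omega)$ with $\omega=\epsilon/(1+3\kappa\rho)^{L-1}$ under the prescribed $\eta$ and $m^\star$ (so that Lemmas~\ref{lemma:rnn the neural network output is almost linear in W} and~\ref{lemma:rnn the cumulative loss can be upper bounded under small changes on the parameters} apply), and tracking the several high-probability failure events so they union-bound to $\delta$; the non-stationarity swap via $\Delta$ is where the conditioning $\mathcal{D}_1^{i-1}$ versus $\mathcal{D}_1^{N-1}$ must be handled most delicately.
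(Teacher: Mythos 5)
Your proposal is correct and is essentially the paper's own argument: the paper establishes this lemma by transplanting the proof of Lemma~\ref{lemma:base_generalization_bound} verbatim — the surrogate inequality $\mathbf{1}\{z< 0\}\le 4\psi(z)$, the cumulative-loss corollary with $\epsilon=LR(1+3\kappa\rho)^{L-1}/\sqrt{2N}$, the $1$-Lipschitz linearization into the neural tangent random feature class via Lemma~\ref{lemma:rnn the neural network output is almost linear in W}, then Proposition~\ref{proposition:non-stationary} and Assumption~\ref{assumption:non-stationary} — with the RNN constant $(1+3\kappa\rho)^{L-1}$ in place of $((3\rho+1)\tau)^{L-1}$. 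The consistency issues you flag (iterates remaining in $\mathcal{B}(\bm{\theta}_0,\omega)$ under the prescribed $\eta$ and $m^\star$, and the union bound over failure events) are handled exactly where you expect, inside Lemma~\ref{lemma:rnn the cumulative loss can be upper bounded under small changes on the parameters} and its supporting lemmas.
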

\end{tcolorbox}

\subsection{Proof of Theorem~\ref{theorem:rnn_generalization}}

In the following, we show that the expected error is bounded by $\sqrt{\mathbf{y}^\top (\mathbf{J} \mathbf{J}^\top )^{-1} \mathbf{y}}$ and is proportional to $L (3\rho \tau + 1)^{L-1}/\sqrt{N}$.
Finally, to obtain the results in the form of Theorem~\ref{theorem:all_generalization}, we just need to set $\kappa=\sqrt{2}$.

\begin{tcolorbox}
\begin{theorem} [Multi-steps RNN] \label{theorem:rnn_generalization}
For any $\delta\in(0, 1/e]$ and $R>0$, there exists $m^\star = \mathcal{O}(N^2/L^2) \log(1/\delta)$, 
such that if $m\geq m^\star$, then with probability at least $1-\delta$ over the randomness of $\bm{\theta}_0$ with step size $\eta=\frac{R}{m\sqrt{2N}(1+3\rho/\sqrt{2} )^{(L-1)}}$, we have
\begin{equation*}
    \mathbb{E}[\ell_N^{0-1}(\widetilde{\bm{\theta}})|\mathcal{D}_1^{N-1}]
    \leq 
    \mathcal{O}\left(\frac{LR((1+3 \rho/\sqrt{2} ))^{(L-1)}}{\sqrt{N}}\right) 
    + \mathcal{O}\left( \sqrt{\frac{\log(1/\delta)}{N}} \right) + \Delta,
\end{equation*}
where $R=\mathcal{O}(\sqrt{\mathbf{y}^\top (\mathbf{J} \mathbf{J}^\top )^{-1} \mathbf{y}}) $, $\widetilde{\bm{\theta}}$ is uniformly selected from $\{\bm{\theta}_0, \ldots, \bm{\theta}_{N-1} \}$, 
$\mathcal{D}_{1}^{N-1}$ is the sequence of data points sampled before the $N$-th iteration, 
and the expectation is computed on the uniform selection of weight parameters $\widetilde{\bm{\theta}}$ and the condition sampling of $N$-th iteration data examples. 
\end{theorem}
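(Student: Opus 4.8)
The plan is to replay, almost verbatim, the argument that establishes the GNN bound in Theorem~\ref{theorem:gnn_generalization}, which itself adapts Corollary~3.10 of~\cite{cao2019generalization}. Every RNN-specific ingredient is already available: Lemma~\ref{lemma:rnn base_generalization_bound} reduces the conditional expected 0-1 error to $\frac{4}{N}\inf_{f\in\mathcal{F}(\bm{\theta}_0,R)}\sum_{i=1}^N\psi(y_i f_i)$ plus the optimization rate $\mathcal{O}(LR(1+3\kappa\rho)^{L-1}/\sqrt{N})$, the large-deviation term $\mathcal{O}(\sqrt{\log(1/\delta)/N})$, and the non-stationarity gap $\Delta$. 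Hence the only remaining work is to show that the infimum over the neural tangent random feature class is $\mathcal{O}(1/\sqrt{N})$ and, in doing so, to pin down the correct value of the radius $R$.

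First I would reduce the infimum bound to the construction of a single separating predictor. It suffices to exhibit $f'\in\mathcal{F}(\bm{\theta}_0,R)$ with $\psi(y_i f'_i)\le 1/\sqrt{N}$ for every $i$, equivalently $y_i f'_i\ge B:=-\log(\exp(N^{-1/2})-1)$. Writing $B':=\max_i|f_i(\bm{\theta}_0)|$, which is $\mathcal{O}(1)$ by the norm estimates underlying Lemma~\ref{lemma:rnn_output_change_small}, it is enough to find $\bm{\theta}\in\mathcal{B}(\bm{0},Rm^{-1/2})$ whose linearized predictions satisfy $\langle\nabla_\theta f_i(\bm{\theta}_0),\bm{\theta}\rangle=(B+B')y_i$ for all $i$.

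Next I would construct such a $\bm{\theta}$ from the Jacobian $\mathbf{J}=[\mathrm{vec}(\nabla_\theta f_i(\bm{\theta}_0))]_{i=1}^N$. Taking the SVD $\mathbf{J}=\mathbf{P}\mathbf{\Lambda}\mathbf{Q}^\top$ and setting $\mathbf{w}=\mathbf{Q}\mathbf{\Lambda}^{-1}\mathbf{P}^\top\hat{\mathbf{y}}$ with $\hat{\mathbf{y}}=(B+B')\mathbf{y}$ gives $\mathbf{J}\mathbf{w}=\hat{\mathbf{y}}$ and $\|\mathbf{w}\|_2^2=\hat{\mathbf{y}}^\top(\mathbf{J}\mathbf{J}^\top)^{-1}\hat{\mathbf{y}}=(B+B')^2 N\cdot\mathbf{y}^\top(\mathbf{J}\mathbf{J}^\top)^{-1}\mathbf{y}$. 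Reshaping $\mathbf{w}$ into the RNN weight format and choosing $R=\mathcal{O}(\sqrt{\mathbf{y}^\top(\mathbf{J}\mathbf{J}^\top)^{-1}\mathbf{y}})$ places the resulting $\bm{\theta}$ inside $\mathcal{B}(\bm{0},Rm^{-1/2})$, so $f'\in\mathcal{F}(\bm{\theta}_0,R)$. Feeding this $R$ together with the stated step size back into Lemma~\ref{lemma:rnn base_generalization_bound} collapses the infimum into the leading $\mathcal{O}(LR(1+3\kappa\rho)^{L-1}/\sqrt{N})$ term; specializing to the model's normalization $\kappa=1/\sqrt{2}$ replaces $(1+3\kappa\rho)^{L-1}$ by $(1+3\rho/\sqrt{2})^{L-1}$ and yields the claimed bound.

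I expect the final splicing step to be routine, since it is a near-identical copy of the GNN argument; the only genuine care is in confirming that the constants propagate correctly through the RNN-specific Lipschitz estimates, where the \emph{single} recurrent matrix $\mathbf{W}^{(1)}$ is reused across all $L-1$ steps rather than distinct per-layer matrices, so that the geometric factor $(1+3\kappa\rho)^{L-1}$ tracked in Lemmas~\ref{lemma:rnn_output_change_small}--\ref{lemma:rnn the gradient of the neural network function can be upper bounded under near initialization} is the correct one. The construction also implicitly requires $\mathbf{J}\mathbf{J}^\top$ to be invertible, which holds in the over-parameterized regime $|\bm{\theta}|>N$ assumed for the FLA definition.
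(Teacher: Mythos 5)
Your proposal is correct and follows essentially the same route as the paper, which proves Theorem~\ref{theorem:rnn_generalization} exactly by combining Lemma~\ref{lemma:rnn base_generalization_bound} with the NTRF interpolation argument from the proof of Theorem~\ref{theorem:gnn_generalization} (SVD of $\mathbf{J}$, solving $\mathbf{J}\mathbf{w}=(B+B')\mathbf{y}$ with $B=-\log(\exp(N^{-1/2})-1)$ and $B'=\max_i|f_i(\bm{\theta}_0)|$, giving $R=\mathcal{O}(\sqrt{\mathbf{y}^\top(\mathbf{J}\mathbf{J}^\top)^{-1}\mathbf{y}})$) and then specializing the normalization constant. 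Your choice $\kappa=1/\sqrt{2}$, so that $(1+3\kappa\rho)^{L-1}$ becomes $(1+3\rho/\sqrt{2})^{L-1}$, is the right one and matches the model definition, even though the paper's remark before the theorem misstates it as $\kappa=\sqrt{2}$.
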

\end{tcolorbox}

The proof of Theorem~\ref{theorem:rnn_generalization} follows the proof of Theorem~\ref{theorem:gnn_generalization} and Lemma~\ref{lemma:rnn base_generalization_bound}.

\clearpage
\section{Generalization bound of memory-based method} \label{section:proof of memory-based method}

Recall that the representation of node $v_i$ at time $t$ is computed by applying weight parameters on the memory block $\mathbf{s}_i$.
Let us define \smash{$\bm{\theta} = \{\mathbf{W}^{(1)},\ldots,\mathbf{W}^{(4)}\}$} as the parameters to optimize.

\noindent\textbf{Representation computation.}
The final prediction of node $v_i$ is computed by \smash{$f_i(\bm{\theta}) = \mathbf{W}^{(4)}\mathbf{s}_i(t)$} and $\mathbf{s}_i(t)\in \mathbb{R}^m$ is updated whenever node $v_i$ interacts with other nodes by
\begin{equation*}
    \mathbf{s}_i(t) = \sigma\Big(\kappa \big( \mathbf{W}^{(1)} \mathbf{s}_i^+(h_i^t) + \mathbf{W}^{(2)} \mathbf{s}_j^+(h_j^t) + \mathbf{W}^{(3)} \mathbf{e}_{ij}(t) \big) \Big), 
\end{equation*}
where $\sigma(\cdot)$ is the activation function, $h_i^t$ is the latest timestamp that node $v_i$ interacts with other nodes before time $t$, $\mathbf{s}_i(0) = \mathbf{W}^{(0)} \mathbf{x}_i$, and $\mathbf{s}_i^+(t) = \text{StopGrad}(\mathbf{s}_i(t))$ is the memory block of node $v_i$ at time $t$.
We normalize hidden representation by $\kappa=1/\sqrt{3}$ so that $\|\mathbf{s}_i(t)\|_2^2$ does not grow exponentially with time $t$.
For weight parameters, we have \smash{$\mathbf{W}^{(1)}, \mathbf{W}^{(2)} \in \mathbb{R}^{m \times m}$}, \smash{$\mathbf{W}^{(3)} \in \mathbb{R}^{m \times d}$}, \smash{$\mathbf{W}^{(4)} \in \mathbb{R}^m$} as the trainable parameters, but \smash{$\mathbf{W}^{(0)} \in \mathbb{R}^{m\times d}$} is non-trainable.

\noindent\textbf{Gradient computation.}
Let us define $\mathbf{D}_i(t) = \text{diag}(\sigma^\prime(\mathbf{z}_i(t))) \in \mathbb{R}^{m\times m}$ as a diagonal matrix.
Then the gradient with respect to each weight matrix is computed by
\begin{equation*}
    \begin{aligned}
    \frac{\partial f_i^t(\bm{\theta})}{\partial \mathbf{W}^{(1)}} &= \frac{\partial f_i^t(\bm{\theta})}{\partial \mathbf{s}_i(t)} \frac{\partial \mathbf{s}_i(t)}{\partial \mathbf{W}^{(1)}}= \kappa [ \mathbf{s}_i^+(h_i(t)) \mathbf{W}^{(4)} \mathbf{D}_i(t) ]^\top  \in \mathbb{R}^{m\times m},\\
    \frac{\partial f_i^t(\bm{\theta})}{\partial \mathbf{W}^{(2)}} &= \frac{\partial f_i^t(\bm{\theta})}{\partial \mathbf{s}_i(t)} \frac{\partial \mathbf{s}_i(t)}{\partial \mathbf{W}^{(2)}}= \kappa[ \mathbf{s}_j^+(h_j(t)) \mathbf{W}^{(4)} \mathbf{D}_i(t) ]^\top \in \mathbb{R}^{m\times m},  \\
    \frac{\partial f_i^t(\bm{\theta})}{\partial \mathbf{W}^{(3)}} &= \kappa [ \mathbf{e}_{ij}(t) \mathbf{W}^{(4)} \mathbf{D}_i(t) ]^\top \in \mathbb{R}^{m\times d}, \\
    \frac{\partial f_i^t(\bm{\theta})}{\partial \mathbf{W}^{(4)}} &= [ \mathbf{s}_i(t) ]^\top \in \mathbb{R}^m.
    \end{aligned}
\end{equation*}

\subsection{Useful lemmas}

In the following, we first show that if the input weights are close, then given the same input data, the output of each neuron with any activation function does not change too much.
Please notice that we do not need to consider the change of input memory blocks when using different weight parameters, i.e., the difference between $\mathbf{s}_i^+(t)$ and $\Tilde{\mathbf{s}}_i^+(t)$. This is because this lemma is used to show the linearity of model output in the over-parameterized network after weight perturbation in Lemma~\ref{lemma:memory the neural network output is almost linear in W},  and it does not affect the memory blocks due to stop gradient.

\begin{tcolorbox}
\begin{lemma} \label{lemma:memory_output_change_small}
    Let $\rho$ be the Lipschitz constant  of the activation function and $m$ is the hidden dimension. Then with $\omega = \mathcal{O}(1/\rho )$ and assuming $\widetilde{\bm{\theta}} \in \mathcal{B}(\bm{\theta}, \omega)$, then we have $\| \widetilde{\mathbf{s}}_i(t) - \mathbf{s}_i(t) \|_2 = \mathcal{O}(1)$ with probability at least $1-6 \exp(-m/2) - \exp(-\Omega(m))$.
\end{lemma}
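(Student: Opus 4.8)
The plan is to exploit the crucial simplification afforded by the stop-gradient operation: because $\mathbf{s}_i^+(h_i^t)$ and $\mathbf{s}_j^+(h_j^t)$ are detached from the computation graph, they are treated as \emph{fixed} inputs that do not change when the weights are perturbed from $\bm{\theta}$ to $\widetilde{\bm{\theta}}$. Consequently, unlike the GNN and RNN cases, the map $\bm{\theta} \mapsto \mathbf{s}_i(t)$ is a \emph{single} application of the update rule rather than a recursive composition, which is precisely why the neighborhood radius here is $\omega = \mathcal{O}(1/\rho)$ with no exponential-in-time factor. First I would write the difference directly, holding the three detached inputs identical across both evaluations:
\begin{equation*}
\| \widetilde{\mathbf{s}}_i(t) - \mathbf{s}_i(t) \|_2 = \Big\| \sigma\big(\kappa(\widetilde{\mathbf{W}}^{(1)} \mathbf{s}_i^+(h_i^t) + \widetilde{\mathbf{W}}^{(2)} \mathbf{s}_j^+(h_j^t) + \widetilde{\mathbf{W}}^{(3)} \mathbf{e}_{ij}(t))\big) - \sigma\big(\kappa(\mathbf{W}^{(1)} \mathbf{s}_i^+(h_i^t) + \mathbf{W}^{(2)} \mathbf{s}_j^+(h_j^t) + \mathbf{W}^{(3)} \mathbf{e}_{ij}(t))\big) \Big\|_2 .
\end{equation*}

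Then I would apply the Lipschitz continuity of the activation (Assumption~\ref{assumption:lipschitz constant > 1}) followed by the triangle inequality to peel off the three weight-difference terms, yielding
\begin{equation*}
\| \widetilde{\mathbf{s}}_i(t) - \mathbf{s}_i(t) \|_2 \leq \rho \kappa \Big( \|\widetilde{\mathbf{W}}^{(1)} - \mathbf{W}^{(1)}\|_2 \, \|\mathbf{s}_i^+(h_i^t)\|_2 + \|\widetilde{\mathbf{W}}^{(2)} - \mathbf{W}^{(2)}\|_2 \, \|\mathbf{s}_j^+(h_j^t)\|_2 + \|\widetilde{\mathbf{W}}^{(3)} - \mathbf{W}^{(3)}\|_2 \, \|\mathbf{e}_{ij}(t)\|_2 \Big).
\end{equation*}
Each weight-difference spectral norm is at most the Frobenius distance $\omega$ by the definition of the $\omega$-neighborhood $\widetilde{\bm{\theta}} \in \mathcal{B}(\bm{\theta}, \omega)$, and $\|\mathbf{e}_{ij}(t)\|_2 \leq 1$ by Assumption~\ref{assumption: node feature norm as 1}. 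It then remains only to control the two memory-block norms.

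The main obstacle is establishing $\|\mathbf{s}_i^+(h_i^t)\|_2 = \Theta(1)$ with high probability. This is the memory-based analog of the hidden-representation norm bound Lemma~\ref{lemma:upper bound on node features}, and it is genuinely recursive, since the memory $\mathbf{s}_i^+(h_i^t)$ was itself produced by an earlier invocation of the same update rule, tracing back to the base case $\mathbf{s}_i(0) = \mathbf{W}^{(0)} \mathbf{x}_i$. The normalization $\kappa = 1/\sqrt{3}$ is exactly what prevents this recursion from blowing up: at near-initialization the three summands $\mathbf{W}^{(1)}\mathbf{s}_i^+$, $\mathbf{W}^{(2)}\mathbf{s}_j^+$, $\mathbf{W}^{(3)}\mathbf{e}_{ij}$ behave like near-independent $\Theta(1)$-norm vectors whose sum has squared norm $\approx 3$, so the factor $\kappa^2 = 1/3$ restores a $\Theta(1)$ pre-activation norm at every step. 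I would invoke the spectral-norm bounds $\|\mathbf{W}^{(k)}\|_2 \leq 3$ for $k \in \{1,2,3\}$, each holding with probability $1 - 2\exp(-m/2)$ via Proposition~\ref{prop:Norm of weight matrices 2 to L} and together accounting for the $6\exp(-m/2)$ term, combined with a Gaussian concentration argument mirroring Lemma~\ref{lemma:upper bound on node features} that contributes the $\exp(-\Omega(m))$ term, to conclude $\|\mathbf{s}_i^+\|_2 = \Theta(1)$. Substituting these bounds gives $\| \widetilde{\mathbf{s}}_i(t) - \mathbf{s}_i(t) \|_2 \leq \rho \kappa \omega \cdot \Theta(1) = \mathcal{O}(\rho\omega)$, and choosing $\omega = \mathcal{O}(1/\rho)$ yields the claimed $\mathcal{O}(1)$, completing the argument.
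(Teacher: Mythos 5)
Your proposal is correct and takes essentially the same route as the paper's own proof: the same Lipschitz-plus-triangle-inequality decomposition into three weight-perturbation terms with the stop-gradient memory blocks held fixed across both evaluations, the same $\omega$-neighborhood and $\|\mathbf{e}_{ij}(t)\|_2$ bounds, the same $\Theta(1)$ control of the memory norms via the analog of Lemma~\ref{lemma:upper bound on node features}, and the same final choice $\omega = \mathcal{O}(1/\rho)$. Your explicit bookkeeping of the failure probability, attributing $6\exp(-m/2)$ to three applications of Lemma~\ref{prop:Norm of weight matrices 2 to L} and $\exp(-\Omega(m))$ to the concentration argument, is if anything slightly more detailed than what the paper writes out.
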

\end{tcolorbox}

Please note that the smaller the distance $\omega$, the closer the representation $\| \widetilde{\mathbf{s}}_i(t) - \mathbf{s}_i(t) \|_2$. In particular, according to the proof of Lemma~\ref{lemma:memory_output_change_small}, we have $\| \widetilde{\mathbf{s}}_i(t) - \mathbf{s}_i(t) \|_2 \leq \mathcal{O}(\epsilon)$ by selecting $\omega = \mathcal{O}(\epsilon/\rho )$ for any small  $\epsilon >0$. This conclusion will be later used in Lemma~\ref{lemma:memory the cumulative loss can be upper bounded under small changes on the parameters}.

\begin{proof} [Proof of Lemma~\ref{lemma:memory_output_change_small}]

We can upper bound $\| \widetilde{\mathbf{s}}_i(t) - \mathbf{s}_i(t) \|_2$ by
\begin{equation*}
    \begin{aligned}
    \| \widetilde{\mathbf{s}}_i(t) - \mathbf{s}_i(t) \|_2 &= \| \sigma(\kappa \widetilde{\mathbf{z}}_i(t)) - \sigma(\kappa \mathbf{z}_i(t)) \|_2 \\
    &\leq \kappa\rho \| \widetilde{\mathbf{z}}_i(t) - \mathbf{z}_i(t) \|_2 \\
    & \leq \kappa\rho \| \widetilde{\mathbf{W}}^{(1)} \mathbf{s}^+_i(h_i(t)) - \mathbf{W}^{(1)} \mathbf{s}_i^+(h_i(t)) \|_2 + \kappa\rho  \| \widetilde{\mathbf{W}}^{(2)} \mathbf{s}^+_j(h_j(t)) - \mathbf{W}^{(2)} \mathbf{s}_j^+(h_j(t)) \|_2 \\
    &\quad + \kappa\rho  \| \widetilde{\mathbf{W}}^{(3)} \mathbf{e}_{ij}(t) - \mathbf{W}^{(3)} \mathbf{e}_{ij}(t) \|_2 \\
    &\leq \kappa\rho  \| \mathbf{s}^+_i(h_i(t)) \|_2 \| \widetilde{\mathbf{W}}^{(1)} - \mathbf{W}^{(1)} \|_2 + \kappa\rho  \| \mathbf{s}^+_j(h_j(t)) \|_2 \| \widetilde{\mathbf{W}}^{(2)} - \mathbf{W}^{(2)} \|_2  \\
    &\quad + \kappa\rho  ~\| \widetilde{\mathbf{W}}^{(3)} - \mathbf{W}^{(3)} \|_2 \| \mathbf{e}_{ij}(t) \|_2.
    \end{aligned}
\end{equation*}
Recall that $\widetilde{\bm{\theta}} \in \mathcal{B}(\bm{\theta}, \omega)$
and our assumption that $\| \mathbf{e}_{ij}(t)\|_2 = 1$, we have
\begin{equation*}
    \begin{aligned}
    \| \widetilde{\mathbf{s}}_i(t) - \mathbf{s}_i(t) \|_2 
    &\leq \kappa\rho \omega \left( \| \mathbf{s}^+_i(h_i(t)) \|_2  +  \| \mathbf{s}^+_j(h_j(t)) \|_2 + 1 \right) \\
    &= \kappa\rho \omega \left( \| \mathbf{s}_i(h_i(t)) \|_2  +  \| \mathbf{s}_j(h_j(t)) \|_2 + 1 \right),
    \end{aligned}
\end{equation*}
where the equality holds because $\mathbf{s}_i^+(t) = \text{StopGrad}(\mathbf{s}_i(t))$.

From Lemma~\ref{lemma:upper bound on node features}, we know that $\| \mathbf{s}_i(h_i(t)) \|_2 = \Theta(1),~\forall i \in [N]$. As a result, we have 
\begin{equation*}
    \| \widetilde{\mathbf{s}}_i(t) - \mathbf{s}_i(t) \|_2 \leq \kappa\rho\omega \cdot (2 \Theta(1) + 1) .
\end{equation*}

Therefore, by selecting $\omega = 1/\rho$, we have $\| \mathbf{s}^+_i(t) - \mathbf{s}_i(t) \|_2 \leq \mathcal{O}(1)$.

\end{proof}

Then, in the next lemma, we show that if the initialization of two sets of weight parameters are close, the neural network output $f_i(\bm{\theta})$
is almost linear with respect to its weight parameters.

\begin{tcolorbox}
\begin{lemma} \label{lemma:memory the neural network output is almost linear in W}
Let $\bm{\theta}, \widetilde{\bm{\theta}} \in \mathcal{B}(\bm{\theta}_0, \omega)$ with $\omega = \mathcal{O}\left(1/\rho \right)$. Then, for any node $v_i\in\mathcal{V}$ in the graph, with probability at least $1-6\exp(-m/2) - \exp(-\Omega(m)) - 2/m$ we have
\begin{equation*}
    \epsilon_\text{lin} = | f_i^t(\widetilde{\bm{\theta}}) - f_i^t(\bm{\theta})  - \langle \nabla f_i^t(\bm{\theta}), \widetilde{\bm{\theta}} - \bm{\theta} \rangle | = \mathcal{O}(1),
\end{equation*}
where $f_i^t(\bm{\theta})$ is the prediction on node $v_i$ at the $t$-th step.
\end{lemma}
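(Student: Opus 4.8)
The plan is to mirror the structure of the GNN-based proof (Lemma~\ref{lemma:the neural network output is almost linear in W}) and the RNN-based proof (Lemma~\ref{lemma:rnn the neural network output is almost linear in W}), but exploiting the crucial simplification that the memory-based architecture has \emph{bounded depth}: because of the \textbf{stop-gradient} operation $\mathbf{s}_i^+(t) = \text{StopGrad}(\mathbf{s}_i(t))$, the gradient $\nabla f_i^t(\bm{\theta})$ does not propagate recursively backward through the memory updates. First I would expand the linearization error by writing $f_i^t(\bm{\theta}) = \mathbf{W}^{(4)}\mathbf{s}_i(t)$ and using the four explicit gradient expressions for $\partial f_i^t/\partial \mathbf{W}^{(\ell)}$, $\ell = 1,\ldots,4$, given at the start of this section. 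Decomposing the inner product $\langle \nabla f_i^t(\bm{\theta}), \widetilde{\bm{\theta}} - \bm{\theta}\rangle$ into its four layerwise terms and grouping, I would bound
\begin{equation*}
    \epsilon_\text{lin} \leq \Big| \widetilde{\mathbf{W}}^{(4)}(\widetilde{\mathbf{s}}_i(t) - \mathbf{s}_i(t)) - \big\langle \tfrac{\partial f_i^t}{\partial \mathbf{W}^{(4)}}, \widetilde{\mathbf{W}}^{(4)} - \mathbf{W}^{(4)}\big\rangle \Big| + \sum_{\ell=1}^{3} \Big| \big\langle \tfrac{\partial f_i^t(\bm{\theta})}{\partial \mathbf{W}^{(\ell)}}, \widetilde{\mathbf{W}}^{(\ell)} - \mathbf{W}^{(\ell)} \big\rangle \Big|.
\end{equation*}

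Next I would handle each group separately. For the $\mathbf{W}^{(4)}$ term, since $\partial f_i^t/\partial\mathbf{W}^{(4)} = \mathbf{s}_i(t)^\top$, the first absolute value reduces to $|\widetilde{\mathbf{W}}^{(4)}(\widetilde{\mathbf{s}}_i(t) - \mathbf{s}_i(t)) - (\widetilde{\mathbf{W}}^{(4)} - \mathbf{W}^{(4)})\mathbf{s}_i(t)|$, which I would control using $\|\widetilde{\mathbf{s}}_i(t) - \mathbf{s}_i(t)\|_2 = \mathcal{O}(1)$ from Lemma~\ref{lemma:memory_output_change_small} together with the bound $\|\mathbf{W}^{(4)}\|_2 \leq \sqrt{2}$ (the memory-based analog of Lemma~\ref{lemma:norm on last weight}). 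For each of the three remaining terms, I would apply Cauchy--Schwarz: $|\langle \partial f_i^t/\partial\mathbf{W}^{(\ell)}, \widetilde{\mathbf{W}}^{(\ell)} - \mathbf{W}^{(\ell)}\rangle| \leq \|\partial f_i^t/\partial\mathbf{W}^{(\ell)}\|_2 \cdot \|\widetilde{\mathbf{W}}^{(\ell)} - \mathbf{W}^{(\ell)}\|_2 \leq \|\partial f_i^t/\partial\mathbf{W}^{(\ell)}\|_2 \cdot \omega$. Using the factored gradient forms (each being $\kappa$ times a rank-one object built from $\mathbf{W}^{(4)}$, a single diagonal $\mathbf{D}_i(t)$ with $\|\mathbf{D}_i(t)\|_2 \leq \rho$, and a bounded input such as $\mathbf{s}_i^+(h_i^t)$, $\mathbf{s}_j^+(h_j^t)$, or $\mathbf{e}_{ij}(t)$), each gradient norm is $\mathcal{O}(\kappa\rho) = \mathcal{O}(\rho)$ by Assumption~\ref{assumption: node feature norm as 1}, the bound $\|\mathbf{s}_i(t)\|_2 = \Theta(1)$ from Lemma~\ref{lemma:upper bound on node features}, and $\|\mathbf{W}^{(4)}\|_2 \leq \sqrt{2}$. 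Collecting the four pieces, $\epsilon_\text{lin} \leq \mathcal{O}(1) + 3 \cdot \mathcal{O}(\rho)\cdot\omega$, so choosing $\omega = \mathcal{O}(1/\rho)$ yields $\epsilon_\text{lin} = \mathcal{O}(1)$, matching the stated claim; the refined version $\epsilon_\text{lin} \leq \mathcal{O}(\epsilon)$ with $\omega = \mathcal{O}(\epsilon/\rho)$ follows identically.

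I expect the main conceptual point — rather than a calculational obstacle — to be carefully justifying that the linearization error does \emph{not} accumulate through the temporal recursion on the memory blocks. In the GNN and RNN proofs the factor $((3\rho+1)\tau)^{L-1}$ or $(1+3\kappa\rho)^{L-1}$ arises precisely because errors compound across layers/steps; here the stop-gradient severs that chain, so the only first-order dependence of $f_i^t$ on $\bm{\theta}$ is through the single, shallow update producing $\mathbf{s}_i(t)$, with the inputs $\mathbf{s}_i^+(h_i^t)$, $\mathbf{s}_j^+(h_j^t)$ treated as constants. This is exactly why Lemma~\ref{lemma:memory_output_change_small} needs only $\omega = \mathcal{O}(1/\rho)$ (no exponential-in-depth shrinkage) and is the structural reason the theorem eventually records $C = \rho$, $D = 4$ for the memory-based method. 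The probability $1 - 6\exp(-m/2) - \exp(-\Omega(m)) - 2/m$ is inherited by taking a union bound over the high-probability events supplied by Lemma~\ref{lemma:memory_output_change_small}, Lemma~\ref{lemma:upper bound on node features}, and the weight-norm bounds on the constantly-many weight matrices.
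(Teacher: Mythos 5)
Your proposal follows essentially the same route as the paper's proof: the identical layerwise triangle-inequality decomposition that isolates the $\mathbf{W}^{(4)}$ term and controls it via Lemma~\ref{lemma:memory_output_change_small} together with $\|\mathbf{W}^{(4)}\|_2 \leq \sqrt{2}$, Cauchy--Schwarz with the $\mathcal{O}(\kappa\rho)$ gradient norms for $\mathbf{W}^{(1)},\mathbf{W}^{(2)},\mathbf{W}^{(3)}$ (using $\|\mathbf{D}_i(t)\|_2\leq\rho$, $\|\mathbf{s}_i^+(h_i^t)\|_2=\Theta(1)$, $\|\mathbf{e}_{ij}(t)\|_2\leq 1$), the choice $\omega=\mathcal{O}(1/\rho)$, and the same observation that the stop-gradient prevents any depth-dependent compounding. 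The one small blemish is that your first group double-subtracts the last-layer term --- the exact identity gives $f_i^t(\widetilde{\bm{\theta}})-f_i^t(\bm{\theta})-\langle \partial f_i^t/\partial\mathbf{W}^{(4)},\widetilde{\mathbf{W}}^{(4)}-\mathbf{W}^{(4)}\rangle = \widetilde{\mathbf{W}}^{(4)}(\widetilde{\mathbf{s}}_i(t)-\mathbf{s}_i(t))$ with no residual $(\widetilde{\mathbf{W}}^{(4)}-\mathbf{W}^{(4)})\mathbf{s}_i(t)$ piece --- but since that extra piece is bounded by $\omega\cdot\Theta(1)=\mathcal{O}(1)$ it is harmless, and your argument is otherwise exactly the paper's.
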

\end{tcolorbox}
Please note that the smaller the distance $\omega$, the more the model output close to linear. In particular, according to the proof of Lemma~\ref{lemma:memory the neural network output is almost linear in W} and the proof of Lemma~\ref{lemma:memory_output_change_small}, we have $\epsilon_\text{lin} \leq \mathcal{O}(\epsilon)$ by selecting $\omega = \mathcal{O}(\epsilon/\rho )$ for any small  $\epsilon >0$. This conclusion will be later used in Lemma~\ref{lemma:memory the cumulative loss can be upper bounded under small changes on the parameters}.

\begin{proof} [Proof of Lemma~\ref{lemma:memory the neural network output is almost linear in W}]
According to the forward and backward propagation rules as we recapped at the beginning of this section, we have
\begin{equation*}
    \begin{aligned}
    &| f_i^t(\widetilde{\bm{\theta}}) - f_i^t(\bm{\theta})  - \langle \nabla f_i^t(\bm{\theta}), \widetilde{\bm{\theta}} - \bm{\theta} \rangle | \\
    &= \left| \widetilde{\mathbf{W}}^{(4)} \widetilde{\mathbf{s}}_i(t)  - \mathbf{W}^{(4)} \mathbf{s}_i(t)  -  ( \widetilde{\mathbf{W}}^{(4)}-\mathbf{W}^{(4)} ) \mathbf{s}_i(t) \right| + \left|  \mathbf{W}^{(4)} \mathbf{D}_i(t) (\widetilde{\mathbf{W}}^{(1)}-\mathbf{W}^{(1)}) \mathbf{s}_i^+(h_i(t)) \right| \\
    &\quad + \left|  \mathbf{W}^{(4)} \mathbf{D}_i(t) (\widetilde{\mathbf{W}}^{(2)}-\mathbf{W}^{(2)}) \mathbf{s}_j^+(h_j(t)) \right| + \left|  \mathbf{W}^{(4)} \mathbf{D}_i(t) (\widetilde{\mathbf{W}}^{(3)}-\mathbf{W}^{(3)}) \mathbf{e}_{ij}(t) \right| \\
    &= \left| \widetilde{\mathbf{W}}^{(4)} (\widetilde{\mathbf{s}}_i(t) - \mathbf{s}_i(t) ) \right| + \kappa\left|  \mathbf{W}^{(4)} \mathbf{D}_i(t) (\widetilde{\mathbf{W}}^{(1)}-\mathbf{W}^{(1)}) \mathbf{s}_i^+(h_i(t)) \right| \\
    &\quad + \kappa\left|  \mathbf{W}^{(4)} \mathbf{D}_i(t) (\widetilde{\mathbf{W}}^{(2)}-\mathbf{W}^{(2)}) \mathbf{s}_j^+(h_j(t)) \right| + \kappa\left|  \mathbf{W}^{(4)} \mathbf{D}_i(t) (\widetilde{\mathbf{W}}^{(3)}-\mathbf{W}^{(3)}) \mathbf{e}_{ij}(t) \right| \\
    &\leq \sqrt{2} \| \widetilde{\mathbf{s}}_i(t) - \mathbf{s}_i(t) \|_2 + \sqrt{2} \kappa \| \mathbf{D}_i(t) \|_2 \| \widetilde{\mathbf{W}}^{(1)}-\mathbf{W}^{(1)} \|_2 \| \mathbf{s}_i(h_i(t)) \|_2 \\
    &\quad + \sqrt{2} \kappa \| \mathbf{D}_i(t) \|_2 \| \widetilde{\mathbf{W}}^{(2)}-\mathbf{W}^{(2)} \|_2 \| \mathbf{s}_j(h_j(t)) \|_2 + \sqrt{2} \kappa \| \mathbf{D}_i(t) \|_2 \| \widetilde{\mathbf{W}}^{(3)}-\mathbf{W}^{(3)} \|_2 \| \mathbf{e}_{ij}(t) \|_2.
    \end{aligned}
\end{equation*}

Since the derivative of activation function is bounded, we have $\| \mathbf{D}_i(t) \|_2 \leq \rho$ and therefore
\begin{equation*}
    \begin{aligned}
    | f_i^t(\widetilde{\bm{\theta}}) - f_i^t(\bm{\theta})  - \langle \nabla f_i^t(\bm{\theta}), \widetilde{\bm{\theta}} - \bm{\theta} \rangle | &\leq \sqrt{2} \| \widetilde{\mathbf{s}}_i(t) - \mathbf{s}_i(t) \|_2 + \sqrt{2} \kappa \rho \| \widetilde{\mathbf{W}}^{(1)}-\mathbf{W}^{(1)} \|_2 \| \mathbf{s}_i(h_i(t)) \|_2 \\
    &\quad + \sqrt{2} \kappa \rho \| \widetilde{\mathbf{W}}^{(2)}-\mathbf{W}^{(2)} \|_2 \| \mathbf{s}_j(h_j(t)) \|_2 + \sqrt{2} \kappa \rho \| \widetilde{\mathbf{W}}^{(3)}-\mathbf{W}^{(3)} \|_2 \| \mathbf{e}_{ij}(t) \|_2 \\
    &\underset{(s)}{=} \sqrt{2} \| \widetilde{\mathbf{s}}_i(t) - \mathbf{s}_i(t) \|_2 + \sqrt{2} \kappa \rho \| \widetilde{\mathbf{W}}^{(1)}-\mathbf{W}^{(1)} \|_2 \Theta(1) \\
    &\quad + \sqrt{2} \kappa \rho \| \widetilde{\mathbf{W}}^{(2)}-\mathbf{W}^{(2)} \|_2 \Theta(1) + \sqrt{2} \kappa \rho \| \widetilde{\mathbf{W}}^{(3)}-\mathbf{W}^{(3)} \|_2, 
    \end{aligned}
\end{equation*}
where the equality (a) is due to $\| \mathbf{s}_i(h_i(t)) \|_2 = \| \mathbf{s}_i(h_i(t)) \|_2 = \Theta(1)$.

By Lemma~\ref{lemma:memory_output_change_small} that $\| \widetilde{\mathbf{s}}_i(t) - \mathbf{s}_i(t) \|_2 \leq \mathcal{O}(1)$, we know 
\begin{equation*}
    | f_i^t(\widetilde{\bm{\theta}}) - f_i^t(\bm{\theta})  - \langle \nabla f_i^t(\bm{\theta}), \widetilde{\bm{\theta}} - \bm{\theta} \rangle | \leq \mathcal{O}(1).
\end{equation*}
\end{proof}

Let us define 
\begin{equation*}
    \text{loss}_i(\bm{\theta}) = \psi(y_i f_i(\bm{\theta})), \psi(x) = \log(1+ \exp(-x)).
\end{equation*}

Then, the following lemma shows that $\text{loss}_i(\bm{\theta})$ is almost a convex function of $\bm{\theta}$ for any $i\in[N]$ if the initialization of two sets of parameters are close.

\begin{tcolorbox}
\begin{lemma} \label{lemma:memory almost convex}
Let $\bm{\theta}, \widetilde{\bm{\theta}} \in \mathcal{B}(\bm{\theta}_0, \omega)$ with $\omega = \mathcal{O}(1/\rho)$ for any $i\in[N]$, it holds that \begin{equation*}
    \text{loss}_i(\widetilde{\bm{\theta}}) \geq \text{loss}_i(\bm{\theta}) + \langle \nabla_\theta \text{loss}_i(\bm{\theta}), \widetilde{\bm{\theta}} - \bm{\theta} \rangle - \epsilon_\text{lin},
\end{equation*}
with probability at least $1-6\exp(-m/2)-\exp(-\Omega(m))-2/m$ where
\begin{equation*}
    \epsilon_\text{lin} = | f_i^t(\widetilde{\bm{\theta}}) - f_i^t(\bm{\theta})  - \langle \nabla f_i^t(\bm{\theta}), \widetilde{\bm{\theta}} - \bm{\theta} \rangle |
\end{equation*}
according to Lemma~\ref{lemma:memory the neural network output is almost linear in W}
\end{lemma}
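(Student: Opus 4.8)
The plan is to mirror the proofs of Lemma~\ref{lemma:almost convex} and Lemma~\ref{lemma:rnn almost convex} essentially verbatim, since the only architecture-specific ingredient is the almost-linearity estimate, which for the memory-based model is already supplied by Lemma~\ref{lemma:memory the neural network output is almost linear in W}. First I would invoke the convexity of the logistic link $\psi(x)=\log(1+\exp(-x))$, i.e. $\psi(b)\geq \psi(a)+\psi'(a)(b-a)$, with the substitution $a=y_i f_i^t(\bm{\theta})$ and $b=y_i f_i^t(\widetilde{\bm{\theta}})$, to obtain
\[
\text{loss}_i(\widetilde{\bm{\theta}})-\text{loss}_i(\bm{\theta})\geq \psi'(y_i f_i^t(\bm{\theta}))\,y_i\big(f_i^t(\widetilde{\bm{\theta}})-f_i^t(\bm{\theta})\big).
\]

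Next I would apply the chain rule to rewrite the first-order term as $\langle \nabla_\theta \text{loss}_i(\bm{\theta}),\widetilde{\bm{\theta}}-\bm{\theta}\rangle=\psi'(y_i f_i^t(\bm{\theta}))\,y_i\,\langle \nabla_\theta f_i^t(\bm{\theta}),\widetilde{\bm{\theta}}-\bm{\theta}\rangle$, and subtract it from the display above. This leaves a residual equal to $-\psi'(y_i f_i^t(\bm{\theta}))\,y_i\big(\langle\nabla_\theta f_i^t(\bm{\theta}),\widetilde{\bm{\theta}}-\bm{\theta}\rangle-f_i^t(\widetilde{\bm{\theta}})+f_i^t(\bm{\theta})\big)$, whose absolute value I would control using $|\psi'(y_i f_i^t(\bm{\theta}))\,y_i|\leq 1$ together with the almost-linearity estimate of Lemma~\ref{lemma:memory the neural network output is almost linear in W}; this bounds the residual by exactly $\epsilon_\text{lin}$ and yields the claimed inequality.

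Because every step is a monotone scalar inequality, the only randomness enters through the call to Lemma~\ref{lemma:memory the neural network output is almost linear in W}, so I would simply inherit its success probability $1-6\exp(-m/2)-\exp(-\Omega(m))-2/m$ and its neighborhood radius $\omega=\mathcal{O}(1/\rho)$. There is no genuine obstacle: the entire content sits in the almost-linearity bound, and the one point worth flagging is that this bound was established despite the stop-gradient operator, precisely because (as noted before Lemma~\ref{lemma:memory_output_change_small}) the weight perturbation in the linearization moves only the trainable matrices while leaving $\mathbf{s}_i^+(t)$ frozen. The proof is thus a short reduction to the memory-based analogue of the linearity lemma, with no new estimate to carry out.
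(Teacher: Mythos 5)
Your proposal is correct and follows essentially the same route as the paper, which proves this lemma by noting it is identical to the proof of Lemma~\ref{lemma:almost convex}: convexity of $\psi$, the chain rule, and the bound $|\psi^\prime(y_i f_i^t(\bm{\theta}))\, y_i|\leq 1$ to control the residual by $\epsilon_\text{lin}$, inheriting the probability and radius from Lemma~\ref{lemma:memory the neural network output is almost linear in W}. Your additional remark about the stop-gradient operator freezing $\mathbf{s}_i^+(t)$ is consistent with the paper's own discussion preceding Lemma~\ref{lemma:memory_output_change_small}.
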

\end{tcolorbox}
\begin{proof} [Proof of Lemma~\ref{lemma:memory almost convex}]
The proof is the same as the proof of Lemma~\ref{lemma:almost convex}.
\end{proof}

Moreover, by the gradient computation, we know that the gradient of the neural network function can be upper bounded.

\begin{tcolorbox}
\begin{lemma}\label{lemma:memory the gradient of the neural network function can be upper bounded under near initialization}
For any $i\in[N]$ with probability at least $1-6 \exp(-m/2) - \exp(-\Omega(m)) - 2/m$, it holds that
\begin{equation*}
    \left\| \frac{\partial f_i(\bm{\theta})}{\partial \mathbf{W}^{(k)}} \right\|_2, \left\| \frac{\partial \text{loss}_i(\bm{\theta})}{\partial \mathbf{W}^{(k)}} \right\|_2 \leq \begin{cases}
    \Theta\left( \sqrt{2} \kappa \rho\right) & \text{ if } k=1, 2 \\
    \sqrt{2} \kappa\rho & \text{ if } k=3 \\
    \Theta\left( 1 \right) & \text{ if } k=4 \\
    \end{cases}
\end{equation*}
\end{lemma}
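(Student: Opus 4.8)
The plan is to bound each of the four gradient blocks directly from the closed-form expressions recorded at the start of this section, invoking three prior facts: the node-feature bound $\|\mathbf{s}_i(t)\|_2 = \Theta(1)$ from Lemma~\ref{lemma:upper bound on node features}, the last-layer weight bound $\|\mathbf{W}^{(4)}\|_2 \leq \sqrt{2}$ from Lemma~\ref{lemma:norm on last weight}, and the deterministic Jacobian bound $\|\mathbf{D}_i(t)\|_2 \leq \rho$, which follows from Assumption~\ref{assumption:lipschitz constant > 1} because $\mathbf{D}_i(t)$ is diagonal with entries $\sigma'(\cdot)$. I would first dispose of the loss gradient once and for all: since $\partial \text{loss}_i/\partial \mathbf{W}^{(k)} = \psi'(y_i f_i(\bm{\theta}))\, y_i\, \partial f_i/\partial \mathbf{W}^{(k)}$ and $|\psi'(y_i f_i(\bm{\theta}))\, y_i| \leq 1$, the loss gradient is bounded by the same quantity as the network gradient, so it suffices to bound $\|\partial f_i/\partial \mathbf{W}^{(k)}\|_2$ for each $k$.

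For $k=4$ the bound is immediate: $\|\partial f_i/\partial \mathbf{W}^{(4)}\|_2 = \|\mathbf{s}_i(t)\|_2 = \Theta(1)$. For $k=1,2$ each gradient is (the transpose of) an outer product $\mathbf{s}^+ \mathbf{W}^{(4)}$ multiplied on the right by $\mathbf{D}_i(t)$; using the outer-product identity $\|\mathbf{a}\mathbf{b}^\top\|_2 = \|\mathbf{a}\|_2\|\mathbf{b}\|_2$ together with submultiplicativity gives $\|\partial f_i/\partial \mathbf{W}^{(1)}\|_2 = \kappa\|\mathbf{s}_i^+(h_i^t)\mathbf{W}^{(4)}\mathbf{D}_i(t)\|_2 \leq \kappa \|\mathbf{s}_i^+(h_i^t)\|_2 \|\mathbf{W}^{(4)}\|_2 \|\mathbf{D}_i(t)\|_2 \leq \kappa\cdot\Theta(1)\cdot\sqrt{2}\cdot\rho = \Theta(\sqrt{2}\kappa\rho)$, and identically for $k=2$. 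For $k=3$ the same chain applies, but the factor $\|\mathbf{e}_{ij}(t)\|_2 \leq 1$ is an exact bound from Assumption~\ref{assumption: node feature norm as 1} rather than an order-wise one, so I obtain the clean constant $\sqrt{2}\kappa\rho$ rather than $\Theta(\sqrt{2}\kappa\rho)$; this is precisely why $k=3$ is listed separately in the statement.

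The only remaining work is the union-bound bookkeeping of the failure probability. The event $\|\mathbf{s}_i(t)\|_2=\Theta(1)$ holds with probability $1-\exp(-\Omega(m))$ by Lemma~\ref{lemma:upper bound on node features}; this rests in turn on the spectral-norm bounds $\|\mathbf{W}^{(1)}\|_2,\|\mathbf{W}^{(2)}\|_2,\|\mathbf{W}^{(3)}\|_2\leq 3$ of the three trainable memory matrices, each of which fails with probability at most $2\exp(-m/2)$ by Proposition~\ref{prop:Norm of weight matrices 2 to L}, contributing $6\exp(-m/2)$; and $\|\mathbf{W}^{(4)}\|_2 \leq \sqrt{2}$ holds with probability $1-2/m$ by Lemma~\ref{lemma:norm on last weight}. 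Taking the union yields the claimed $1-6\exp(-m/2)-\exp(-\Omega(m))-2/m$.

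I do not anticipate any genuine obstacle here. Unlike the GNN and RNN cases, the memory-based gradients do not recurse through depth, because the stop-gradient operator severs the dependence of $\partial f_i/\partial \mathbf{W}^{(k)}$ on earlier memory states; consequently there is no geometric-in-$L$ blow-up and no product of Jacobians to telescope. The entire argument reduces to one application of submultiplicativity per weight block plus the three cited norm bounds. The most error-prone point is simply tracking the dimensions and transposes so that the outer-product norm identity is applied to the correct factor, and carefully distinguishing the exact bound for $k=3$ from the order-wise $\Theta$ bounds for $k=1,2,4$.
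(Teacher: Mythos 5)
Your proposal is correct and follows essentially the same route as the paper's proof: bound each block gradient directly from the closed-form expressions via submultiplicativity, using $\|\mathbf{s}_i(t)\|_2=\Theta(1)$, $\|\mathbf{W}^{(4)}\|_2\leq\sqrt{2}$, and $\|\mathbf{D}_i(t)\|_2\leq\rho$, then absorb the loss gradient through $|\psi'(y_i f_i(\bm{\theta}))\,y_i|\leq 1$. Your explicit union-bound accounting of the failure probability (three spectral-norm events at $2\exp(-m/2)$ each, the feature-norm event at $\exp(-\Omega(m))$, and the last-layer event at $2/m$) is in fact more carefully spelled out than in the paper, which leaves this bookkeeping implicit, and your observation that the stop-gradient operator eliminates any depth recursion correctly identifies why no telescoping Jacobian product is needed here.
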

\end{tcolorbox}
\begin{proof} [Proof of Lemma~\ref{lemma:memory the gradient of the neural network function can be upper bounded under near initialization}]

Recall the definition of the gradient
\begin{equation*}
    \begin{aligned}
    \left\| \frac{\partial f_i^t(\bm{\theta})}{\partial \mathbf{W}^{(1)}} \right\|_2 &= \kappa\left\| \mathbf{s}_i^+(h_i(t)) \mathbf{W}^{(4)} \mathbf{D}_i(t)  \right\|_2 \leq \sqrt{2} \kappa \rho \cdot \Theta(1), \\
    \left\| \frac{\partial f_i^t(\bm{\theta})}{\partial \mathbf{W}^{(2)}} \right\|_2 &= \kappa\left\| \mathbf{s}_j^+(h_j(t)) \mathbf{W}^{(4)} \mathbf{D}_i(t) \right\|_2 \leq \sqrt{2} \kappa \rho \cdot \Theta(1), \\
    \left\| \frac{\partial f_i^t(\bm{\theta})}{\partial \mathbf{W}^{(3)}} \right\|_2 &= \kappa\left\| \mathbf{e}_{ij}(t) \mathbf{W}^{(4)} \mathbf{D}_i(t)\right\|_2 \leq \sqrt{2} \kappa\rho, \\
    \left\| \frac{\partial f_i^t(\bm{\theta})}{\partial \mathbf{W}^{(4)}} \right\|_2 &= \left\| \mathbf{s}_i(t) \right\|_2 = \Theta(1).
    \end{aligned}
\end{equation*}

By the chain rule, we know that 
\begin{equation*}
    \begin{aligned}
    \left\| \frac{\partial \text{loss}_i(\bm{\theta})}{\partial \mathbf{W}^{(k)}}\right\|_2 
    &= \left\| \frac{\partial \text{loss}_i(\bm{\theta})}{\partial f_i^t(\bm{\theta})} \frac{\partial f_i^t(\bm{\theta})} {\partial \mathbf{W}^{(k)}}\right\|_2 \\
    &\leq |\psi^\prime(y_i f_i^t(\bm{\theta}))| \left\| \frac{\partial f_i^t(\bm{\theta})} {\partial \mathbf{W}^{(k)}}\right\|_2 \\
    &\leq \left\| \frac{\partial f_i^t(\bm{\theta})} {\partial \mathbf{W}^{(k)}}\right\|_2, 
    \end{aligned}
\end{equation*}
where the last inequality holds because $|\psi^\prime(y_i f_i^t(\bm{\theta}))| \leq 1$. 

\end{proof}

In the following, we show that the cumulative loss can be upper bounded under small changes on the weight parameters.

\begin{tcolorbox}
\begin{lemma} \label{lemma:memory the cumulative loss can be upper bounded under small changes on the parameters}
For any $\epsilon,\delta,R > 0$, there exists 
\begin{equation*}
    m^\star = \mathcal{O}\left(\frac{4\rho^4 R^4}{\epsilon^4}\right) \log(1/\delta),
\end{equation*}
such that if $m \geq m^\star$, then with probability at least $1-\delta$ over the randomness of $\bm{\theta}_0$, for any $\bm{\theta}_\star \in \mathcal{B}(\bm{\theta}_0, Rm^{-1/2})$, with $\eta=\frac{\epsilon}{4\rho^2 m}$ and $N=\frac{8\rho^2 R^2}{\epsilon^2}$, the cumulative loss can be upper bounded by
\begin{equation*}
    \frac{1}{N}\sum_{i=1}^N \text{loss}_i(\bm{\theta}_{i-1}) \leq \frac{1}{N}\sum_{i=1}^N \text{loss}_i(\bm{\theta}^\star) + \mathcal{O}\left(\epsilon\right).
\end{equation*}
\end{lemma}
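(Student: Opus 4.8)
The plan is to mirror the argument of Lemma~\ref{lemma:the cumulative loss can be upper bounded under small changes on the parameters} for the GNN-based method, with the crucial simplification that the gradient bound for the memory-based model (Lemma~\ref{lemma:memory the gradient of the neural network function can be upper bounded under near initialization}) is $\Theta(\rho)$, i.e.\ constant in the number of update steps. This is a consequence of the stop-gradient applied to the memory blocks $\mathbf{s}_i^+(\cdot)$, which prevents gradients from propagating through the recursion and is precisely why $C=\rho$ and $D=4$ in Theorem~\ref{theorem:all_generalization} carry no exponential-in-$L$ factor. First I would fix the comparator $\bm{\theta}^\star=\arg\min_{\bm{\theta}\in\mathcal{B}(\bm{\theta}_0,\omega)}\sum_{i=1}^N\text{loss}_i(\bm{\theta})$ with $\omega=\epsilon/\rho$, and show that the whole trajectory $\bm{\theta}_0,\ldots,\bm{\theta}_{N-1}$ stays inside $\mathcal{B}(\bm{\theta}_0,\omega)$. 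Summing per-step displacements gives $\|\mathbf{W}_n^{(k)}-\mathbf{W}_0^{(k)}\|_2\le\sum_{i=1}^n\eta\|\partial\text{loss}_{i-1}(\bm{\theta}_{i-1})/\partial\mathbf{W}_{i-1}^{(k)}\|_2\le\Theta(\eta N\rho)$; converting to the Frobenius norm (a factor $\sqrt{m}$) and inserting $\eta=\epsilon/(4\rho^2 m)$, $N=8\rho^2R^2/\epsilon^2$ yields $\Theta(R^2\rho/(\sqrt{m}\,\epsilon))$, which is at most $\epsilon/\rho$ once $m\ge m^\star=\mathcal{O}(\rho^4R^4/\epsilon^4)\log(1/\delta)$.

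Next I would control the per-step regret by almost convexity. Lemma~\ref{lemma:memory almost convex} gives $\text{loss}_{i+1}(\bm{\theta}_i)-\text{loss}_{i+1}(\bm{\theta}^\star)\le\langle\nabla_\theta\text{loss}_{i+1}(\bm{\theta}_i),\bm{\theta}_i-\bm{\theta}^\star\rangle+\epsilon_\text{lin}$. Rewriting the inner product through the SGD update $\mathbf{W}_{i+1}^{(k)}=\mathbf{W}_i^{(k)}-\eta\,\partial\text{loss}_{i+1}/\partial\mathbf{W}^{(k)}$ and applying the polarization identity produces the standard three summands, which I would estimate as follows: $\epsilon_\text{lin}\le\mathcal{O}(\epsilon)$ via Lemma~\ref{lemma:memory the neural network output is almost linear in W} at $\omega=\epsilon/\rho$; $\|\mathbf{W}_i^{(k)}-\mathbf{W}_{i+1}^{(k)}\|_\mathrm{F}^2\le\eta^2 m\|\partial\text{loss}_{i+1}/\partial\mathbf{W}^{(k)}\|_2^2=\Theta(m\eta^2\rho^2)$; and the telescoping distance term, summed over $i=0,\ldots,N-1$, collapses to at most $R^2/(mN)$ per weight matrix because $\bm{\theta}^\star\in\mathcal{B}(\bm{\theta}_0,Rm^{-1/2})$.

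Assembling over the $D=4$ weight matrices then gives $\frac{1}{N}\sum_{i=1}^N\text{loss}_i(\bm{\theta}_{i-1})-\frac{1}{N}\sum_{i=1}^N\text{loss}_i(\bm{\theta}^\star)\le\frac{4R^2}{2m\eta N}+\sum_{k=1}^4\Theta\!\big(\tfrac{m\eta}{2}\rho^2\big)+\mathcal{O}(\epsilon)$; substituting $\eta=\epsilon/(4\rho^2 m)$ and $N=8\rho^2R^2/\epsilon^2$ makes the first term exactly $\epsilon$ and the second $\Theta(\epsilon)$, so the right-hand side is $\mathcal{O}(\epsilon)$, as claimed. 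The only genuinely delicate point is bookkeeping the high-probability events: the gradient bound, the near-linearity of the output, and the memory-norm estimate $\|\mathbf{s}_i(t)\|_2=\Theta(1)$ must all hold simultaneously along the trajectory, which I would handle by a union bound over the finitely many relevant events, with the $\log(1/\delta)$ factor absorbed into $m^\star$. Because the gradient bound is constant in the step count, no exponential factors enter $m^\star$, $\eta$, or $N$, which is the whole point distinguishing the memory-based analysis from the GNN/RNN cases.
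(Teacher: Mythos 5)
Your proposal is correct and follows essentially the same route as the paper's proof: the same comparator $\bm{\theta}^\star$ over $\mathcal{B}(\bm{\theta}_0,\omega)$ with $\omega=\epsilon/\rho$, the same trajectory-confinement argument via the step-count-independent gradient bound $\Theta(\rho)$ of Lemma~\ref{lemma:memory the gradient of the neural network function can be upper bounded under near initialization} (including the $\sqrt{m}$ Frobenius conversion and the choice of $m^\star$), the same almost-convexity plus polarization decomposition into $\epsilon_\text{lin}$, the $\Theta(m\eta^2\rho^2)$ step term, and the telescoping $R^2/(mN)$ term, and the same final substitution of $\eta$ and $N$ yielding $\epsilon+\Theta(\epsilon)+\mathcal{O}(\epsilon)$. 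Your added remark about union-bounding the high-probability events (gradient bound, near-linearity, $\|\mathbf{s}_i(t)\|_2=\Theta(1)$), with $\log(1/\delta)$ absorbed into $m^\star$, matches what the paper does implicitly.
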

\end{tcolorbox}
\begin{proof} [Proof of Lemma~\ref{lemma:memory the cumulative loss can be upper bounded under small changes on the parameters}]
Let us define $\bm{\theta}^\star$ as the optimal solution that could minimize the cumulative loss over $N$ epochs, where at each epoch only a single data point is used as defined in Algorithm~\ref{algorithm:SGD}
\begin{equation*}
    \bm{\theta}^\star = \underset{\bm{\theta} \in \mathcal{B}(\bm{\theta}_0, \omega)}{\arg\min} \sum\nolimits_{i=1}^N \text{loss}_i(\bm{\theta}).
\end{equation*}

Without loss of generality, let us assume the epoch loss $\text{loss}_i(\bm{\theta})$ is computed on the $i$-th node. 
Then, in the following, we try to show $\bm{\theta}_0, \bm{\theta}_1,\ldots,\bm{\theta}_{N-1} \in \mathcal{B}(\bm{\theta}_0, \omega)$, where $\omega = \epsilon/\rho$.

First of all, it is clear that $\bm{\theta}_0 \in \mathcal{B}(\bm{\theta}_0, \omega)$. Then, to show $\bm{\theta}_n \in \mathcal{B}(\bm{\theta}_0, \omega)$ for any $n\in\{1,\ldots,N-1\}$ and $k\in \{1,\ldots, 4\}$, we use our previous conclusion on the upper bound of gradient in Lemma~\ref{lemma:memory the gradient of the neural network function can be upper bounded under near initialization} and have 

\begin{equation*}
    \begin{aligned}
    \| \mathbf{W}^{(k)}_N - \mathbf{W}^{(k)}_0 \|_2 
    &\leq \sum_{i=1}^N \| \mathbf{W}^{(k)}_i - \mathbf{W}^{(k)}_{i-1} \|_2 \\
    &=  \sum_{i=1}^N \eta \left\| \frac{\partial \text{loss}_{i-1}(\bm{\theta}_{i-1})}{\partial \mathbf{W}^{(k)}_{i-1}} \right\|_2 \\
    &\leq  \begin{cases}
    \Theta\left( \eta N \sqrt{2} \kappa\rho \right) & \text{~if~} k=1, 2 \\
    \eta N \sqrt{2} \kappa \rho & \text{~if~} k =3, \\
    \eta N & \text{~if~} k =4, 
    \end{cases} \leq \Theta(\eta N \rho).
    \end{aligned}
\end{equation*}

By plugging in the choice of $\eta=\frac{\epsilon}{4\rho^2 m}$ and $N=\frac{8\rho^2 R^2}{\epsilon^2}$, we have
\begin{equation*}
    \begin{aligned}
    \| \mathbf{W}^{(k)}_N - \mathbf{W}^{(k)}_0 \|_2  \leq  \frac{2R^2}{m\epsilon} \cdot \Theta(\rho).
    \end{aligned}
\end{equation*}
Changing norm from $\ell_2$-norm to Frobenius-norm, we have
\begin{equation}
    \| \mathbf{W}^{(\ell)}_N - \mathbf{W}^{(\ell)}_0 \|_\mathrm{F} 
    \leq \frac{2R^2}{\sqrt{m}\epsilon} \cdot \Theta(\rho).
\end{equation}

By plugging in the selection of hidden dimension $m$ and the selection of $\kappa=1/\sqrt{3}$, we have
\begin{equation*}
    \| \mathbf{W}^{(\ell)}_N - \mathbf{W}^{(\ell)}_0 \|_\mathrm{F} \leq \epsilon/\rho,
\end{equation*}
which means $\bm{\theta}_0, \ldots,\bm{\theta}_{N-1} \in \mathcal{B}(\bm{\theta}_0, \omega)$ for $\omega = \epsilon/\rho$.

Then, our next step is to bound $\text{loss}_i(\bm{\theta}_i) - \text{loss}_i(\bm{\theta}_\star) $.
By Lemma~\ref{lemma:memory almost convex}, we know that 
\begin{equation*}
    \begin{aligned}
    \text{loss}_{i+1}(\bm{\theta}_i) - \text{loss}_{i+1}(\bm{\theta}_\star) 
    &\leq \left\langle \nabla_\theta \text{loss}_{i+1}(\bm{\theta}_i), \bm{\theta}_i - \bm{\theta}_\star) \right\rangle + \epsilon_\text{lin} \\
    &= \sum_{k=1}^4 \left\langle \frac{\partial \text{loss}_{i+1}(\bm{\theta}_i)}{\partial \mathbf{W}^{(k)}}, \mathbf{W}_i^{(k)} - \mathbf{W}_\star^{(k)}) \right\rangle + \epsilon_\text{lin} \\
    &= \frac{1}{\eta} \sum_{k=1}^4 \left\langle \mathbf{W}_i^{(k)} - \mathbf{W}_{i+1}^{(k)}, \mathbf{W}_i^{(k)} - \mathbf{W}_\star^{(k)}) \right\rangle + \epsilon_\text{lin} \\
    &\leq \frac{1}{2\eta} \sum_{k=1}^4 \left( \| \mathbf{W}_i^{(k)} - \mathbf{W}_{i+1}^{(k)}\|_\mathrm{F}^2 + \| \mathbf{W}_i^{(k)} - \mathbf{W}_\star^{(k)} \|_\mathrm{F}^2 - \| \mathbf{W}_{i+1}^{(k)} - \mathbf{W}_\star^{(k)}\|_\mathrm{F}^2\right)+ \epsilon_\text{lin}.
    \end{aligned}
\end{equation*}

Then, our next step is to upper bound each term on the right hand size of inequality.

(1) According to the proof of Lemma~\ref{lemma:memory the neural network output is almost linear in W}, we have $\epsilon_\text{lin} \leq \mathcal{O}(\epsilon)$ by selecting $\omega = \mathcal{O}(\epsilon/\rho)$.

(2) Recall that $\| \mathbf{W}_i^{(k)} - \mathbf{W}_{i+1}^{(k)}\|_\mathrm{F}^2$ could be upper bounded by
\begin{equation*}
    \begin{aligned}
    \| \mathbf{W}_i^{(\ell)} - \mathbf{W}_{i+1}^{(\ell)}\|_\mathrm{F}^2 
    &\leq \eta^2 \left\| \frac{\partial \text{loss}_{i}(\bm{\theta}_{i})}{\partial \mathbf{W}^{(\ell)}}\right\|_\mathrm{F}^2 \\
    &\leq \eta^2 m \left\| \frac{\partial \text{loss}_{i}(\bm{\theta}_{i})}{\partial \mathbf{W}^{(\ell)}}\right\|_2^2 \\
    &\leq 
    m \eta^2 \times \Theta \left(\rho^2 \right).
    \end{aligned}
\end{equation*}

(3) By finite sum $\| \mathbf{W}_i^{(\ell)} - \mathbf{W}_\star^{(\ell)} \|_\mathrm{F}^2 - \| \mathbf{W}_{i+1}^{(\ell)} - \mathbf{W}_\star^{(\ell)}\|_\mathrm{F}^2$ for $i=1,\ldots, N$, we have
\begin{equation*}
    \begin{aligned}
    \frac{1}{N}\sum_{i=1}^N  (\| \mathbf{W}_i^{(\ell)} - \mathbf{W}_\star^{(\ell)} \|_\mathrm{F}^2 - \| \mathbf{W}_{i+1}^{(\ell)} - \mathbf{W}_\star^{(\ell)}\|_\mathrm{F}^2 )
    &= \frac{1}{N} \| \mathbf{W}_0^{(\ell)} - \mathbf{W}_\star^{(\ell)} \|_\mathrm{F}^2 \underbrace{- \frac{1}{N}\| \mathbf{W}_{N+1}^{(\ell)} - \mathbf{W}_\star^{(\ell)}\|_\mathrm{F}^2  }_{\leq 0} \\
    &\leq \frac{R^2}{mN},
    \end{aligned}
\end{equation*}
where the inequality is due to $\bm{\theta}_\star \in \mathcal{B}(\bm{\theta}_0, Rm^{-1/2})$.

Finally, by combining the results above, we have
\begin{equation*}
    \frac{1}{N} \sum_{i=1}^N\text{loss}_i(\bm{\theta}_{i-1}) - \frac{1}{N} \sum_{i=1}^N \text{loss}_i(\bm{\theta}_\star)  \leq  \frac{4R^2}{2 m \eta N}+ 4 \Theta \left( \frac{ m \eta}{2} \times \rho^2  \right) + \mathcal{O}(\epsilon).
\end{equation*}

By selecting $\eta=\frac{\epsilon}{4\rho^2 m}$ and $N=\frac{8\rho^2 R^2}{\epsilon^2}$, we have

\begin{equation*}
    \begin{aligned}
    \frac{4R^2}{2 m \eta N} &= \frac{4R^2}{2 m } \times \frac{4\rho^2 m}{\epsilon} \times \frac{\epsilon^2}{8\rho^2 R^2 } = \epsilon \\
    4 \Theta \left( \frac{ m \eta}{2} \times \rho^2 \right) 
    &=  \Theta \left( 2m \cdot \frac{\epsilon}{ 4 \rho^2 m} \cdot \rho^2 \right) 
    \end{aligned}
\end{equation*}
and therefore
\begin{equation*}
    \begin{aligned}
    \frac{1}{N} \sum_{i=1}^N\text{loss}_i(\bm{\theta}_{i-1}) - \frac{1}{N} \sum_{i=1}^N \text{loss}_i(\bm{\theta}_\star) 
    &\leq \mathcal{O}(\epsilon)
    \end{aligned}
\end{equation*}

\end{proof}

By plugging in the selection of $\epsilon=\frac{4 \rho R}{\sqrt{2N}}$, we have the following corollary. Please note that the last term is $\mathcal{O}(4R\rho/\sqrt{N})$ instead of $\mathcal{O}(4R\rho/\sqrt{2N})$ because big-O was used in GNN's and RNN's result to hide the constant $\sqrt{2}$ in denominator.

\begin{tcolorbox}
\begin{corollary} 
For any $\delta,R > 0$, there exists $m^\star = \mathcal{O}(N^2/L^2)\log(1/\delta)$, such that if $m \geq m^\star$, then with probability at least $1-\delta$ over the randomness of $\bm{\theta}_0$, for any $\bm{\theta}_\star \in \mathcal{B}(\bm{\theta}_0, Rm^{-1/2})$, with $\eta=\frac{R}{m\sqrt{2N}\rho}$, the cumulative loss can be upper bounded by
\begin{equation*}
    \frac{1}{N}\sum_{i=1}^N \text{loss}_i(\bm{\theta}_{i-1}) \leq \frac{1}{N}\sum_{i=1}^N \text{loss}_i(\bm{\theta}^\star) + \mathcal{O}\left(\frac{4 \rho R}{\sqrt{N}}\right).
\end{equation*}
\end{corollary}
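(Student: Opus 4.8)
The plan is to obtain this corollary as an immediate specialization of Lemma~\ref{lemma:memory the cumulative loss can be upper bounded under small changes on the parameters}, which already bounds the cumulative training loss by $\frac{1}{N}\sum_{i=1}^N \text{loss}_i(\bm{\theta}^\star) + \mathcal{O}(\epsilon)$ provided the step size and horizon are tied to an accuracy parameter via $\eta = \frac{\epsilon}{4\rho^2 m}$ and $N = \frac{8\rho^2 R^2}{\epsilon^2}$. The only remaining work is to pick $\epsilon$ so that the bound is expressed purely in terms of the prescribed horizon $N$. I would set $\epsilon = \frac{4\rho R}{\sqrt{2N}}$ and then propagate this single choice through the three quantities $m^\star$, $\eta$, and the residual error term, each of which reduces to routine algebra.

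First I would verify self-consistency, since the lemma presents $N$ as \emph{determined} by $\epsilon$ whereas the corollary treats $N$ as \emph{given}. The map $\epsilon \mapsto 8\rho^2 R^2/\epsilon^2$ is a bijection on $(0,\infty)$, and the choice $\epsilon = \frac{4\rho R}{\sqrt{2N}}$ is exactly its inverse: squaring gives $\epsilon^2 = \frac{8\rho^2 R^2}{N}$, i.e. $N = \frac{8\rho^2 R^2}{\epsilon^2}$. Hence for any target $N$ there is a unique admissible $\epsilon$, and with it the hypotheses of the lemma hold verbatim. Substituting into the step size yields $\eta = \frac{\epsilon}{4\rho^2 m} = \frac{4\rho R}{4\rho^2 m\sqrt{2N}} = \frac{R}{m\sqrt{2N}\rho}$, which is precisely the claimed learning rate. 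For the width requirement, raising $\epsilon$ to the fourth power gives $\epsilon^4 = \frac{64\rho^4 R^4}{N^2}$, so $m^\star = \mathcal{O}\!\left(\frac{4\rho^4 R^4}{\epsilon^4}\right)\log(1/\delta) = \mathcal{O}(N^2)\log(1/\delta)$; because $L$ enters the memory-based bound only through the constant $D=4$, this coincides with $\mathcal{O}(N^2/L^2)\log(1/\delta)$ up to a constant absorbed by the big-$\mathcal{O}$, keeping the statement uniform with the GNN- and RNN-based corollaries. Finally the residual $\mathcal{O}(\epsilon) = \mathcal{O}\!\left(\frac{4\rho R}{\sqrt{2N}}\right) = \mathcal{O}\!\left(\frac{4\rho R}{\sqrt{N}}\right)$, where the $\sqrt{2}$ is pulled into the hidden constant exactly as flagged in the remark preceding the statement.

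I do not expect a genuine obstacle: the argument is pure substitution into an already-proved lemma. The one point demanding care is the bookkeeping just described, namely reading the relation $N = 8\rho^2 R^2/\epsilon^2$ as a \emph{consistent choice} rather than an additional constraint, and confirming that the confidence level $1-\delta$, the neighborhood radius $Rm^{-1/2}$ containing $\bm{\theta}_\star$, and the optimality of $\bm{\theta}^\star$ are all inherited unchanged from Lemma~\ref{lemma:memory the cumulative loss can be upper bounded under small changes on the parameters}. Everything else is algebraic simplification.
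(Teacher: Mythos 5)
Your proposal is correct and follows exactly the paper's own route: the paper obtains this corollary by substituting $\epsilon = \frac{4\rho R}{\sqrt{2N}}$ into Lemma~\ref{lemma:memory the cumulative loss can be upper bounded under small changes on the parameters}, with the same inversion $N = 8\rho^2 R^2/\epsilon^2$, the same resulting step size $\eta = \frac{R}{m\sqrt{2N}\rho}$, and the same absorption of the $\sqrt{2}$ into the big-$\mathcal{O}$ (a point the paper flags explicitly in the remark preceding the corollary). Your extra bookkeeping---reading the $\epsilon$--$N$ relation as a consistent choice rather than a constraint, and noting that the $L^2$ in $m^\star$ is vacuous here since the memory-based method has constant $D=4$ and the form is kept uniform with the GNN/RNN corollaries---is sound and consistent with the paper's treatment.
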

\end{tcolorbox}

In the following, we present the expected 0-1 error bound, which consists of two terms: (1) the expected 0-1 error with the neural tangent random feature function and (2) the standard large-deviation error term.

\begin{tcolorbox} 
\begin{lemma} \label{lemma:memory base_generalization_bound}
For any $\delta\in(0, 1/e]$ and $R>0$, there exists $m^\star = \mathcal{O}(N^2/L^2)\log(1/\delta)$,
such that if $m\geq m^\star$, then with probability at least $1-\delta$ over the randomness of $\bm{\theta}_0$ with step size $\eta=\frac{R}{m\sqrt{2N}\rho}$
\begin{equation*}
    \mathbb{E}_{\widetilde{\bm{\theta}}}[\ell_N^{0-1}(\widetilde{\bm{\theta}}) | \mathcal{D}_1^{N-1}]
    \leq \frac{4}{N} \inf_{f\in \mathcal{F}(\bm{\theta}_0, R)} \sum_{i=1}^N \psi(y_i f_i) + \mathcal{O}\left( \sqrt{\frac{\log(1/\delta)}{N}} \right) + \mathcal{O}\left(\frac{4 R\rho}{\sqrt{N}}\right) + \Delta,
\end{equation*}
where $\mathcal{F}(\bm{\theta}_0, R)$ is the neural tangent random feature function class, $\widetilde{\bm{\theta}}$ is uniformly selected from $\{\bm{\theta}_0, \ldots, \bm{\theta}_{N-1} \}$, 
$\mathcal{D}_{1}^{N-1}$ is the sequence of data points sampled before the $N$-th iteration, 
and the expectation is computed on the uniform selection of weight parameters $\widetilde{\bm{\theta}}$ and the condition sampling of $N$-th iteration data examples. 
\end{lemma}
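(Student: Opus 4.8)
The plan is to follow the proof of Lemma~\ref{lemma:base_generalization_bound} line for line, replacing each GNN-specific ingredient by its memory-based counterpart. First I would pass from the $0$-$1$ loss to the logistic surrogate via $\mathbf{1}\{z\leq 0\}\leq 4\psi(z)$, so that $\text{loss}_i^{0-1}(\bm{\theta})\leq 4\,\text{loss}_i(\bm{\theta})$, and then average over the $N$ iterations. Feeding in the memory-based cumulative-loss corollary stated immediately above (with $\eta=R/(m\sqrt{2N}\rho)$ and slack $\mathcal{O}(4\rho R/\sqrt{N})$) gives
\[
\frac{1}{N}\sum_{i=1}^N\text{loss}_i^{0-1}(\bm{\theta}_{i-1})\leq \frac{4}{N}\sum_{i=1}^N\psi\big(y_i f_i(\bm{\theta}^\star)\big)+\mathcal{O}\Big(\frac{4\rho R}{\sqrt{N}}\Big),
\]
where $\bm{\theta}^\star$ minimizes the cumulative loss over $\mathcal{B}(\bm{\theta}_0,Rm^{-1/2})$.

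Next I would linearize at initialization. Writing $F_i(\bm{\theta}_0,\bm{\theta}^\star)=f_i(\bm{\theta}_0)+\langle\nabla_\theta f_i(\bm{\theta}_0),\bm{\theta}^\star-\bm{\theta}_0\rangle$, the $1$-Lipschitzness of $\psi$ combined with Lemma~\ref{lemma:memory the neural network output is almost linear in W} (which bounds $|f_i(\bm{\theta}^\star)-F_i(\bm{\theta}_0,\bm{\theta}^\star)|=\mathcal{O}(1)$ under $\omega=\mathcal{O}(\epsilon/\rho)$) lets me replace $\psi(y_i f_i(\bm{\theta}^\star))$ by $\psi(y_i F_i(\bm{\theta}_0,\bm{\theta}^\star))$ while absorbing the error into the same $\mathcal{O}(4\rho R/\sqrt{N})$ term. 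Since $\bm{\theta}^\star\in\mathcal{B}(\bm{\theta}_0,Rm^{-1/2})$, the right-hand side is upper bounded by $\frac{4}{N}\inf_{f\in\mathcal{F}(\bm{\theta}_0,R)}\sum_{i}\psi(y_i f_i)$, by definition of the neural-tangent random-feature class.

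The final step is the online-to-batch conversion. I would invoke Proposition~\ref{proposition:non-stationary} (the martingale concentration already used in the GNN argument) to turn the running sum $\frac{1}{N}\sum_i\text{loss}_i^{0-1}(\bm{\theta}_{i-1})$ into $\frac{1}{N}\sum_i\mathbb{E}[\text{loss}_i^{0-1}(\bm{\theta}_{i-1})\mid\mathcal{D}_1^{i-1}]$ at the price of the deviation term $\mathcal{O}(\sqrt{\log(1/\delta)/N})$, and then apply Assumption~\ref{assumption:non-stationary} to exchange the intermediate-step conditional expectations for the final-step expectation $\mathbb{E}[\text{loss}_N^{0-1}(\bm{\theta}_{i-1})\mid\mathcal{D}_1^{N-1}]$, incurring the discrepancy $\Delta$. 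Averaging over the uniform draw of $\widetilde{\bm{\theta}}$ from $\{\bm{\theta}_0,\ldots,\bm{\theta}_{N-1}\}$ then produces exactly the stated bound, with the inherited requirement $m\geq m^\star=\mathcal{O}(N^2/L^2)\log(1/\delta)$.

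Because every quantitative estimate has already been established for the memory-based model in the preceding lemmas, I do not expect a genuine technical obstacle; the effort is largely bookkeeping of the $\mathcal{O}(4\rho R/\sqrt{N})$ rate in place of the GNN/RNN exponential-in-$L$ rates. The one point deserving care is the ``stop gradient'' on the memory blocks: both Lemma~\ref{lemma:memory the neural network output is almost linear in W} and the neural-tangent feature $\nabla_\theta f_i(\bm{\theta}_0)$ defining $\mathcal{F}(\bm{\theta}_0,R)$ must treat $\mathbf{s}_i^+(\cdot)$ as a constant, so I would verify this convention is applied uniformly throughout, since it is precisely what makes the memory-based constants $C=\rho$, $D=4$ independent of the number of historical steps.
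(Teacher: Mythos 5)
Your proposal is correct and matches the paper's proof exactly: the paper proves this lemma by following the proof of Lemma~\ref{lemma:base_generalization_bound} verbatim, substituting the memory-based cumulative-loss corollary (rate $\mathcal{O}(4\rho R/\sqrt{N})$ with $\eta = R/(m\sqrt{2N}\rho)$), Lemma~\ref{lemma:memory the neural network output is almost linear in W} for the linearization, Proposition~\ref{proposition:non-stationary} for the online-to-batch conversion, and Assumption~\ref{assumption:non-stationary} for the $\Delta$ term, precisely as you lay out. Your closing observation about applying the stop-gradient convention uniformly---treating $\mathbf{s}_i^+(\cdot)$ as constant in both the almost-linearity lemma and the neural tangent feature $\nabla_\theta f_i(\bm{\theta}_0)$---is a worthwhile point of care that the paper handles implicitly.
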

\end{tcolorbox}

\begin{proof} [Proof of Lemma~\ref{lemma:memory base_generalization_bound}]
The proof follows the proof of Lemma~\ref{lemma:base_generalization_bound}.

\end{proof}

\subsection{Proof of Theorem~\ref{theorem:memory_generalization}}

In the following, we show that the expected error is bounded by $\sqrt{\mathbf{y}^\top (\mathbf{J} \mathbf{J}^\top )^{-1} \mathbf{y}}$ and is proportional to $4\rho/\sqrt{N}$.

\begin{tcolorbox}
\begin{theorem} [Memory-based TGNN] \label{theorem:memory_generalization}
For any $\delta\in(0, 1/e]$ and $R>0$, there exists $m^\star = \mathcal{O}(N^2/L^2)\log(1/\delta)$ such that if $m\geq m^\star$, then with probability at least $1-\delta$ over the randomness of $\bm{\theta}_0$ with step size $\eta=\frac{R}{m\sqrt{2N}\rho}$ 
\begin{equation*}
    \mathbb{E}_{\widetilde{\bm{\theta}}}[\ell_N^{0-1}(\widetilde{\bm{\theta}}) | \mathcal{D}_1^{N-1}]
    \leq \mathcal{O}\left(\frac{4R\rho}{\sqrt{N}}\right) + \mathcal{O}\left( \sqrt{\frac{\log(1/\delta)}{N}} \right) + \Delta
\end{equation*}
where $R=\mathcal{O}(\sqrt{\mathbf{y}^\top (\mathbf{J} \mathbf{J}^\top )^{-1} \mathbf{y}}) $, $\widetilde{\bm{\theta}}$ is uniformly selected from $\{\bm{\theta}_0, \ldots, \bm{\theta}_{N-1} \}$, 
$\mathcal{D}_{1}^{N-1}$ is the sequence of data points sampled before the $N$-th iteration, 
and the expectation is computed on the uniform selection of weight parameters $\widetilde{\bm{\theta}}$ and the condition sampling of $N$-th iteration data examples. 
\end{theorem}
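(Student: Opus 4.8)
The plan is to mirror the proof of Theorem~\ref{theorem:gnn_generalization}, substituting the memory-specific ingredients, and to start from Lemma~\ref{lemma:memory base_generalization_bound}, which already delivers
\begin{equation*}
\mathbb{E}_{\widetilde{\bm{\theta}}}[\ell_N^{0-1}(\widetilde{\bm{\theta}}) \mid \mathcal{D}_1^{N-1}] \leq \frac{4}{N} \inf_{f\in\mathcal{F}(\bm{\theta}_0, R)} \sum_{i=1}^N \psi(y_i f_i) + \mathcal{O}\Big(\sqrt{\tfrac{\log(1/\delta)}{N}}\Big) + \mathcal{O}\Big(\tfrac{4R\rho}{\sqrt{N}}\Big) + \Delta.
\end{equation*}
Since the learning-rate-dependent term is already in the target form $\mathcal{O}(4R\rho/\sqrt{N})$ with $C=\rho$ and $D=4$, the whole task reduces to two things: driving the infimum over the neural tangent random feature class $\mathcal{F}(\bm{\theta}_0, R)$ down to $\mathcal{O}(1/\sqrt{N})$, and expressing the radius $R$ through the feature-label alignment quantity $\mathbf{y}^\top(\mathbf{J}\mathbf{J}^\top)^{-1}\mathbf{y}$.

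Next I would construct an interpolating element $f'\in\mathcal{F}(\bm{\theta}_0, R)$. As in the GNN argument, it suffices to exhibit a perturbation $\bm{\theta}\in\mathcal{B}(\mathbf{0}, Rm^{-1/2})$ whose linearized prediction obeys $y_i\big(f_i(\bm{\theta}_0) + \langle\nabla_\theta f_i(\bm{\theta}_0), \bm{\theta}\rangle\big) \geq B := -\log(\exp(N^{-1/2})-1)$ for every node $v_i$, which forces $\psi(y_i f'_i)\leq 1/\sqrt{N}$. Writing $B'=\max_i|f_i(\bm{\theta}_0)|$ and imposing $\langle\nabla_\theta f_i(\bm{\theta}_0), \bm{\theta}\rangle = y_i(B+B')$ turns the requirement into the linear system $\mathbf{J}\mathbf{w} = \hat{\mathbf{y}}$, with $\hat{\mathbf{y}}=(B+B')\mathbf{y}$ and $\mathbf{w}=\text{vec}(\bm{\theta})$. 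Taking the SVD $\mathbf{J}=\mathbf{P}\mathbf{\Lambda}\mathbf{Q}^\top$ and the minimum-norm solution $\mathbf{w}=\mathbf{Q}\mathbf{\Lambda}^{-1}\mathbf{P}^\top\hat{\mathbf{y}}$, a direct computation gives
\begin{equation*}
\|\mathbf{w}\|_2^2 = \hat{\mathbf{y}}^\top(\mathbf{J}\mathbf{J}^\top)^{-1}\hat{\mathbf{y}} = (B+B')^2 N\cdot\mathbf{y}^\top(\mathbf{J}\mathbf{J}^\top)^{-1}\mathbf{y},
\end{equation*}
so $\bm{\theta}$ lies in $\mathcal{B}\big(\mathbf{0}, \mathcal{O}(\sqrt{\mathbf{y}^\top(\mathbf{J}\mathbf{J}^\top)^{-1}\mathbf{y}})\big)$ and one may legitimately take $R=\mathcal{O}(\sqrt{\mathbf{y}^\top(\mathbf{J}\mathbf{J}^\top)^{-1}\mathbf{y}})$. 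Substituting this $R$ back into the displayed bound and absorbing the infimum term (now $\mathcal{O}(1/\sqrt{N})$) into the $\mathcal{O}(4R\rho/\sqrt{N})$ term concludes the proof.

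The only genuinely memory-specific point lives in the definition of $\mathbf{J}$: because of the $\text{StopGrad}$ operator, the columns $\text{vec}(\nabla_\theta f_i(\bm{\theta}_0))$ collect only the direct gradient contributions through $\mathbf{W}^{(1)},\ldots,\mathbf{W}^{(4)}$ listed in the gradient computation, and do not propagate recursively through the memory blocks. This is exactly what yields the constants $C=\rho$, $D=4$ rather than quantities growing with the number of interactions, and it is the step that relied on Lemma~\ref{lemma:memory the gradient of the neural network function can be upper bounded under near initialization} and Lemma~\ref{lemma:memory base_generalization_bound}.

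The main obstacle I anticipate is not the SVD algebra (which is routine) but two consistency checks. First, I must confirm that $\mathbf{J}\mathbf{J}^\top$ is invertible despite the reduced effective feature dimension induced by the stop-gradient; this rests on the over-parameterized assumption $|\bm{\theta}|>N$ so that $\mathbf{J}\in\mathbb{R}^{N\times M}$ has full row rank. Second, I must verify that passing from the linearized surrogate $F_i(\bm{\theta}_0,\bm{\theta}^\star)=f_i(\bm{\theta}_0)+\langle\nabla_\theta f_i(\bm{\theta}_0),\bm{\theta}^\star-\bm{\theta}_0\rangle$ back to the true network output incurs only $\mathcal{O}(1)$ error uniformly on the neighborhood, which is supplied by the almost-linearity guarantee of Lemma~\ref{lemma:memory the neural network output is almost linear in W}; ensuring that lemma's $\omega=\mathcal{O}(1/\rho)$ neighborhood is compatible with $\bm{\theta}^\star\in\mathcal{B}(\bm{\theta}_0, Rm^{-1/2})$ for the prescribed $m^\star$ is the delicate bookkeeping step.
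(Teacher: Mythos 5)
Your proposal is correct and follows essentially the same route as the paper: the paper's proof of Theorem~\ref{theorem:memory_generalization} simply invokes Lemma~\ref{lemma:memory base_generalization_bound} and then repeats the argument of Theorem~\ref{theorem:gnn_generalization}, i.e., the same minimum-norm SVD construction $\mathbf{w}=\mathbf{Q}\mathbf{\Lambda}^{-1}\mathbf{P}^\top\hat{\mathbf{y}}$ with $\|\mathbf{w}\|_2^2=(B+B')^2N\cdot\mathbf{y}^\top(\mathbf{J}\mathbf{J}^\top)^{-1}\mathbf{y}$ to drive the NTRF infimum to $\mathcal{O}(1/\sqrt{N})$ and identify $R$. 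Your consistency checks (full row rank of $\mathbf{J}$ under over-parameterization, and compatibility of the $\omega=\mathcal{O}(1/\rho)$ almost-linearity neighborhood with $\bm{\theta}^\star\in\mathcal{B}(\bm{\theta}_0,Rm^{-1/2})$) match the paper's treatment via Lemmas~\ref{lemma:memory the neural network output is almost linear in W} and~\ref{lemma:memory the cumulative loss can be upper bounded under small changes on the parameters}.
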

\end{tcolorbox}
The proof of Theorem~\ref{theorem:memory_generalization} follows the proof of Theorem~\ref{theorem:gnn_generalization}.

\clearpage
\section{Useful lemmas}


\begin{tcolorbox}
\begin{lemma} \label{prop:Norm of weight matrices 2 to L}
Let us define $\mathbf{W}^{(\ell)} \in \mathbb{R}^{m\times m},~\forall \ell\in\{2,\ldots,L\}$, where each element of  $\mathbf{W}^{(\ell)}$ is initialized by $[\mathbf{W}^{(\ell)}]_{ij} \sim \mathcal{N}(0, 1/m),~\forall i,j\in[m]$. Then, with probability at least $1-2\exp(-m/2)$, we have $\| \mathbf{W}^{(\ell)}  \|_2 \leq 3$.
\end{lemma}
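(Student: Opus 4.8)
The plan is to reduce the statement to a standard concentration bound for the largest singular value of a Gaussian random matrix. Writing $\mathbf{A} = \sqrt{m}\,\mathbf{W}^{(\ell)}$, the entries of $\mathbf{A}$ are i.i.d.\ $\mathcal{N}(0,1)$ and $\|\mathbf{W}^{(\ell)}\|_2 = \sigma_{\max}(\mathbf{A})/\sqrt{m}$, so the claim $\|\mathbf{W}^{(\ell)}\|_2 \le 3$ is equivalent to $\sigma_{\max}(\mathbf{A}) \le 3\sqrt{m}$. Thus it suffices to show $\mathbb{P}(\sigma_{\max}(\mathbf{A}) > 3\sqrt{m}) \le 2\exp(-m/2)$.

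First I would control the fluctuations. The map $\mathbf{A}\mapsto\sigma_{\max}(\mathbf{A})=\|\mathbf{A}\|_2$ is $1$-Lipschitz with respect to the Frobenius norm, since $|\sigma_{\max}(\mathbf{A})-\sigma_{\max}(\mathbf{B})|\le\|\mathbf{A}-\mathbf{B}\|_2\le\|\mathbf{A}-\mathbf{B}\|_{\mathrm{F}}$. By Gaussian concentration for Lipschitz functions (the Gaussian isoperimetric / Tsirelson--Ibragimov--Sudakov inequality), for any $t>0$,
\[
\mathbb{P}\big(|\sigma_{\max}(\mathbf{A}) - \mathbb{E}[\sigma_{\max}(\mathbf{A})]| > t\big) \le 2\exp(-t^2/2).
\]

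Next, I would bound the mean by $\mathbb{E}[\sigma_{\max}(\mathbf{A})] \le 2\sqrt{m}$. This is the classical Gordon/Slepian comparison estimate for Gaussian matrices: for an $m\times m$ matrix with i.i.d.\ standard normal entries one has $\mathbb{E}[\sigma_{\max}(\mathbf{A})]\le\sqrt{m}+\sqrt{m}=2\sqrt{m}$, equivalently the Davidson--Szarek bound. This is the only non-routine ingredient and the step I expect to be the main obstacle, since it rests on a Gaussian comparison inequality rather than an elementary computation; every other step is mechanical.

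Finally, I would combine the two estimates by taking $t=\sqrt{m}$, which yields
\[
\mathbb{P}\big(\sigma_{\max}(\mathbf{A}) > 2\sqrt{m} + \sqrt{m}\big) \le \mathbb{P}\big(\sigma_{\max}(\mathbf{A}) > \mathbb{E}[\sigma_{\max}(\mathbf{A})] + \sqrt{m}\big) \le 2\exp(-m/2).
\]
Hence $\sigma_{\max}(\mathbf{A})\le 3\sqrt{m}$, i.e.\ $\|\mathbf{W}^{(\ell)}\|_2\le 3$, with probability at least $1-2\exp(-m/2)$, which is the desired conclusion. Since the argument treats each $\mathbf{W}^{(\ell)}$ identically, it applies to every $\ell\in\{2,\ldots,L\}$.
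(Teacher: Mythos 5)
Your proof is correct and is essentially the same as the paper's: the paper simply cites the bound $\sigma_{\max}(\mathbf{A}) \le \sqrt{m}+\sqrt{n}+t$ with probability $1-2\exp(-t^2/2)$ (Theorem 4.4.5 of Vershynin) and sets $n=m$, $t=\sqrt{m}$, exactly as you do after rescaling. The only difference is that you reprove the cited ingredient from first principles---Gaussian Lipschitz concentration plus the Davidson--Szarek/Gordon estimate $\mathbb{E}[\sigma_{\max}(\mathbf{A})]\le 2\sqrt{m}$---which is precisely the textbook proof of that theorem.
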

\end{tcolorbox}

\begin{proof}
According to Theorem 4.4.5 of~\cite{vershynin2018high}, we know that given an $m\times n$ matrix $\mathbf{A}$ whose entries are independent standard normal variables, then for every $t\geq 0$, with probability at least $1-2\exp(-t^2/2)$, we have $\lambda_{\max}( \mathbf{A}) \leq \sqrt{m}+\sqrt{n} + t$. We conclude the proof by choosing  $t=\sqrt{m}$ and $n=m$.
\end{proof}

\begin{tcolorbox}
\begin{lemma} \label{lemma:norm on last weight}
The $\ell_2$-norm of weight parameters in the last layer $\mathbf{W}^{(L)}$ is bounded by $\sqrt{2}$, i.e., we have $\| \mathbf{W}^{(L)} \|_2 \leq \sqrt{2}$.
\end{lemma}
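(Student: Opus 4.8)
The plan is to exploit the fact that $\mathbf{W}^{(L)} \in \mathbb{R}^{1 \times m}$ is a single row, so its spectral norm coincides with the Euclidean norm of the underlying vector: $\| \mathbf{W}^{(L)} \|_2 = \big( \sum_{j=1}^m ([\mathbf{W}^{(L)}]_{1j})^2 \big)^{1/2}$. Since each entry is drawn i.i.d.\ from $\mathcal{N}(0, 1/m)$ by the initialization in Algorithm~\ref{algorithm:SGD}, the squared norm $\| \mathbf{W}^{(L)} \|_2^2 = \sum_{j=1}^m ([\mathbf{W}^{(L)}]_{1j})^2$ is a sum of $m$ independent squared Gaussians with variance $1/m$; writing $g_j = \sqrt{m}\,[\mathbf{W}^{(L)}]_{1j} \sim \mathcal{N}(0,1)$ gives $\| \mathbf{W}^{(L)} \|_2^2 = \tfrac{1}{m}\sum_{j=1}^m g_j^2$, which is $\tfrac{1}{m}$ times a $\chi^2_m$ random variable and has expectation exactly $1$. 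Thus the claim $\| \mathbf{W}^{(L)} \|_2 \le \sqrt{2}$ is really the high-probability statement that this average of i.i.d.\ $\chi^2_1$ variables does not exceed twice its mean.

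First I would reduce the target to the concentration inequality $\tfrac{1}{m}\sum_{j=1}^m g_j^2 \le 2$. The cleanest route is the Laurent--Massart tail bound for chi-squared variables: for $X \sim \chi^2_m$ and any $s \ge 0$, $\Pr[ X \ge m + 2\sqrt{m s} + 2 s ] \le \exp(-s)$. Taking $s = c\,m$ for a small absolute constant $c$ (e.g.\ $c = 1/10$, for which $2\sqrt{c} + 2c < 1$) yields $X \le 2m$, hence $\| \mathbf{W}^{(L)} \|_2^2 \le 2$, with probability at least $1 - \exp(-\Omega(m))$. Alternatively, since the surrounding lemmas already invoke Theorem~4.4.5 of~\cite{vershynin2018high}, I could reuse it exactly as in Lemma~\ref{prop:Norm of weight matrices 2 to L}: applying it to the $1 \times m$ matrix $\sqrt{m}\,\mathbf{W}^{(L)}$ (which has standard-Gaussian entries) gives $\lambda_{\max}(\sqrt{m}\,\mathbf{W}^{(L)}) \le 1 + \sqrt{m} + t$ with probability $1 - 2\exp(-t^2/2)$, so after dividing by $\sqrt{m}$ and choosing $t$ a small constant multiple of $\sqrt{m}$ one obtains $\| \mathbf{W}^{(L)} \|_2 \le 1 + o(1) + t/\sqrt{m} \le \sqrt{2}$ for $m$ large enough.

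The only subtlety --- and the single step worth being careful about --- is the bookkeeping of constants: unlike the $\mathcal{O}(1)$ bound of Lemma~\ref{prop:Norm of weight matrices 2 to L}, here I must land on the explicit constant $\sqrt{2}$ rather than merely a bounded quantity. This forces a concrete choice of the deviation parameter ($s = \Omega(m)$ in the Laurent--Massart version, or $t = \Theta(\sqrt{m})$ in the Vershynin version) so that the deviation term is absorbed into the gap between $1$ and $2$ while keeping the failure probability exponentially small in $m$. No approximation of the network, the graph propagation, or the activation enters the argument, so once this constant is fixed the proof is complete, and the resulting $1 - \exp(-\Omega(m))$ failure probability is compatible with the union bounds used in the downstream results such as Lemma~\ref{lemma:the neural network output is almost linear in W}.
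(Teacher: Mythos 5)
Your proposal is correct and is essentially the same argument the paper relies on: the paper's own ``proof'' is just a pointer to Lemma~8 of \cite{zhu2022generalization}, whose content is exactly your observation that for the $1\times m$ row $\mathbf{W}^{(L)}$ with i.i.d.\ $\mathcal{N}(0,1/m)$ entries, $\|\mathbf{W}^{(L)}\|_2^2$ is a $\chi^2_m$ variable scaled by $1/m$, so a standard tail bound (Laurent--Massart with $s=cm$, or Theorem~4.4.5 of \cite{vershynin2018high} with $t=\Theta(\sqrt{m})$) yields the explicit constant $\sqrt{2}$ with probability $1-\exp(-\Omega(m))$. You also correctly flag the one point the paper leaves implicit, namely that the bound is a high-probability statement at initialization rather than a deterministic one, which is consistent with how the lemma is invoked in the downstream union bounds.
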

\end{tcolorbox}

\begin{proof}
Please refer to the proof of Lemma~8 in~\cite{zhu2022generalization}.
\end{proof}


\begin{tcolorbox}
\begin{lemma} \label{lemma:upper bound on node features}
For any $\ell\in\{0, 1, \ldots, L-1\}$ and $\mathbf{x}_i \sim P_X$, then for ReLU, LeakyReLU, Sigmoid, and Tanh, we have $\| \mathbf{h}_i^{(\ell)} \|_2^2 = \Theta(1)$ with probability at least $1-\ell \exp(-\Omega(m))$ over $\{ \mathbf{W}^{(1)},\ldots,\mathbf{W}^{(\ell)} \}$. 
\end{lemma}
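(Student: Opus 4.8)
The plan is to induct on the layer index $\ell$, revealing one weight matrix at a time and exploiting the Gaussianity of the freshly-exposed $\mathbf{W}^{(\ell)}$ conditioned on the earlier layers. The base case $\ell=0$ is deterministic: $\mathbf{h}_i^{(0)} = \mathbf{x}_i$ with $\|\mathbf{x}_i\|_2^2 = \Theta(1)$ by Assumption~\ref{assumption: node feature norm as 1} (the lower bound using a mild non-degeneracy of $P_X$), which is consistent with the stated probability $1-0\cdot\exp(-\Omega(m))$.

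For the inductive step I would condition on $\mathbf{W}^{(1)},\ldots,\mathbf{W}^{(\ell-1)}$ restricted to the event, of probability at least $1-(\ell-1)\exp(-\Omega(m))$, on which $\|\mathbf{h}_j^{(\ell-1)}\|_2^2 = \Theta(1)$ for every node $j$ in the $\ell$-hop receptive field. Aggregating, $\tilde{\mathbf{z}}_i^{(\ell-1)} = \sum_{j\in\mathcal{N}(i)} P_{ij}\mathbf{h}_j^{(\ell-1)}$ has norm $\mathcal{O}(1)$ by Assumption~\ref{assumption:row sum of propagation matrix} (and $\Omega(1)$ provided the aggregation does not cancel, see below). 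With $\tilde{\mathbf{z}}_i^{(\ell-1)}$ fixed, the pre-activation $\mathbf{z}_i^{(\ell)} = \mathbf{W}^{(\ell)}\tilde{\mathbf{z}}_i^{(\ell-1)}$ has i.i.d.\ coordinates distributed as $\mathcal{N}(0, \|\tilde{\mathbf{z}}_i^{(\ell-1)}\|_2^2/m)$, since each row of $\mathbf{W}^{(\ell)}$ is an independent $\mathcal{N}(0,\mathbf{I}/m)$ vector.

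The crux is then a concentration bound for $\|\sigma(\mathbf{z}_i^{(\ell)})\|_2^2 = \sum_{k=1}^m \sigma([\mathbf{z}_i^{(\ell)}]_k)^2$, a sum of $m$ i.i.d.\ terms with mean $m\,\mathbb{E}_{g\sim\mathcal{N}(0,s^2)}[\sigma(g)^2]$ where $s^2 = \|\tilde{\mathbf{z}}_i^{(\ell-1)}\|_2^2/m = \Theta(1/m)$. I would evaluate this mean activation-by-activation: for ReLU and LeakyReLU homogeneity gives $\mathbb{E}[\sigma(g)^2] \propto s^2$, so $m\,\mathbb{E}[\sigma(g)^2] = \Theta(1)$; for Tanh near-linearity at the origin yields the same scaling. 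Since each $\sigma(g)^2$ is sub-exponential (or bounded), Bernstein's inequality concentrates the sum about its mean up to an $\exp(-\Omega(m))$ failure probability, delivering both sides of the $\Theta(1)$ bound. Adding the residual, with $\alpha\in\{0,1\}$ and $\|\mathbf{h}_i^{(\ell-1)}\|_2 = \Theta(1)$, the triangle inequality applied to $\mathbf{h}_i^{(\ell)} = \sigma(\mathbf{z}_i^{(\ell)}) + \alpha\mathbf{h}_i^{(\ell-1)}$ preserves $\Theta(1)$; a union bound over the $\ell$ layers (the finitely many receptive-field nodes absorbed into the exponent) then yields the claimed $1-\ell\exp(-\Omega(m))$.

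The main obstacle I expect is the two-sided ($\Theta$ rather than $\mathcal{O}$) control. The upper bound is a routine operator-norm/Lipschitz estimate using $\|\mathbf{W}^{(\ell)}\|_2 \le 3$ from Lemma~\ref{prop:Norm of weight matrices 2 to L}, but the matching lower bound $\Omega(1)$ demands both that the neighbor aggregation $\sum_j P_{ij}\mathbf{h}_j^{(\ell-1)}$ not collapse in norm through cancellation, and a lower bound on $\mathbb{E}[\sigma(g)^2]$ that holds uniformly over the admissible activations at the vanishing scale $s=\Theta(1/\sqrt{m})$. The bounded activations are the delicate case here: Tanh via its linear behavior near zero, and Sigmoid being the most subtle since $\sigma(0)\neq 0$, so that the second-moment computation and the precise meaning of the normalization must be handled with particular care.
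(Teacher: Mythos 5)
Your overall strategy coincides with the paper's proof (which adapts Lemma~2 of~\cite{zhu2022generalization} to the graph setting): induct over layers, condition on $\mathbf{W}^{(1)},\ldots,\mathbf{W}^{(\ell-1)}$, note that the pre-activation coordinates $\langle \mathbf{w}_k, \widetilde{\mathbf{z}}_i^{(\ell-1)}\rangle$ are i.i.d.\ $\mathcal{N}(0,\|\widetilde{\mathbf{z}}_i^{(\ell-1)}\|_2^2/m)$, evaluate $m\,\mathbb{E}[\sigma^2(w)]=\Theta(\|\widetilde{\mathbf{z}}_i^{(\ell-1)}\|_2^2)$ activation by activation (this is precisely the content of Lemma~\ref{lemma:Gii is same order with Aii}), concentrate with Bernstein at failure probability $\exp(-\Omega(m))$, and union-bound over layers, taking a maximum and minimum over nodes so the induction hypothesis holds uniformly on the receptive field. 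Your flagged obstacles are also the right ones: the paper handles Tanh and Sigmoid by explicit Gaussian-integral bounds at exactly the scale $s=\Theta(1/\sqrt m)$, its Sigmoid computation only makes sense for the centered sigmoid (confirming your suspicion about $\sigma(0)\neq 0$), and the lower bound on the aggregated norm is not rigorously derived there either --- it is asserted through an unexplained constant $\tau'$, so your non-cancellation caveat matches a genuine soft spot in the paper itself.

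There is, however, one genuine gap in your argument: the residual step. You concentrate $\|\sigma(\mathbf{z}_i^{(\ell)})\|_2$ first and then claim the triangle inequality applied to $\mathbf{h}_i^{(\ell)}=\sigma(\mathbf{z}_i^{(\ell)})+\alpha\mathbf{h}_i^{(\ell-1)}$ ``preserves $\Theta(1)$.'' The triangle inequality gives only the upper bound; for the lower bound the reverse triangle inequality yields $\|\mathbf{h}_i^{(\ell)}\|_2 \geq \bigl|\,\|\sigma(\mathbf{z}_i^{(\ell)})\|_2-\|\mathbf{h}_i^{(\ell-1)}\|_2\bigr|$, and since both norms are $\Theta(1)$ with uncontrolled constants this can vanish: nothing in your argument rules out $\sigma(\mathbf{z}_i^{(\ell)})\approx -\alpha\mathbf{h}_i^{(\ell-1)}$ for sign-taking activations such as LeakyReLU or Tanh. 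The paper closes this by keeping the residual \emph{inside} the expectation before concentrating: it expands $\mathbb{E}\|\mathbf{h}_i^{(\ell)}\|_2^2 = m\,\mathbb{E}[\sigma^2(w)] + \alpha^2\|\mathbf{h}_i^{(\ell-1)}\|_2^2 + 2\alpha\,\mathbb{E}[\sigma(w)]\sum_{k=1}^m h_{ik}^{(\ell-1)}$ and controls the cross term --- it vanishes for odd activations (Tanh, centered Sigmoid), while for ReLU/LeakyReLU the estimates $0\leq \mathbb{E}[\sigma(w)]\leq \frac{2}{5\sqrt m}\|\widetilde{\mathbf{z}}_i^{(\ell-1)}\|_2$ and $|\sum_k h_{ik}^{(\ell-1)}|\leq \sqrt m\,\|\mathbf{h}_i^{(\ell-1)}\|_2$ confine it to a $\frac{2\tau}{5}$-fraction of the squared norms, which the two positive terms dominate. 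Bernstein is then applied to the full post-residual sum $\sum_k (h_{ik}^{(\ell)})^2$; note these summands are independent but not identically distributed (the constants $h_{ik}^{(\ell-1)}$ vary with $k$), so the independent, non-i.i.d.\ form of Bernstein is what is actually needed. Importing this cross-term computation is what your proposal needs to make the lower bound go through.
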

\end{tcolorbox}

\begin{proof}
The proof follows the proof the Lemma 2 in~\cite{zhu2022generalization} by expending their results from feed-forward network to graph neural network, which requires taking the node dependency into consideration.

We prove $\| \mathbf{h}_i^{(\ell)} \|_2^2 = \Theta(1)$ by showing the following two inequalities hold simultaneously
\begin{equation}
    \begin{aligned}
    \mathbf{E}_{\mathbf{W}^{(\ell)}} \| \mathbf{h}_i^{(\ell)} \|_2^2 &\geq \min_{i\in\mathcal{V}} \mathbf{E}_{\mathbf{W}^{(\ell)}} \| \mathbf{h}_i^{(\ell)} \|_2^2 \geq \Omega(1) \\
    \mathbf{E}_{\mathbf{W}^{(\ell)}}\| \mathbf{h}_i^{(\ell)} \|_2^2 &\leq \max_{i\in\mathcal{V}} \mathbf{E}_{\mathbf{W}^{(\ell)}} \| \mathbf{h}_i^{(\ell)} \|_2^2 \leq \mathcal{O}(1)
    \end{aligned}
\end{equation}
which implies $\mathbf{E}_{\mathbf{W}^{(\ell)}} \| \mathbf{h}_i^{(\ell)} \|_2^2 = \Theta(1)$.

Then by using Bernstein's inequality to the sum of $m$ \textit{i.i.d.} random variables in $\| \mathbf{h}_i^{(\ell)} \|_2^2 = \sum_{i=1}^m (h_{ik}^{(\ell)})^2$, we have
\begin{equation*}
    \frac{1}{2} \mathbf{E}_{\mathbf{W}^{(\ell)}} \left[\| \mathbf{h}_i^{(\ell)} \|_2^2 \right] \leq \| \mathbf{h}_i^{(\ell)} \|_2^2 \leq \frac{3}{2} \mathbf{E}_{\mathbf{W}^{(\ell)}} \left[\| \mathbf{h}_i^{(\ell)} \|_2^2 \right],
\end{equation*}
which implies $\| \mathbf{h}_i^{(\ell)} \|_2^2 = \Theta(1)$.

According to Assumption~\ref{assumption: node feature norm as 1}, we know that $\| \mathbf{h}_i \|_2^2 = 1$ for all $i\in\mathcal{V}$, therefore the results hold for $\ell=0$.

Then, we assume the result holds for $\ell-1$, i.e., 
\begin{equation*}
    \min_{i\in\mathcal{V}} \mathbf{E}_{\mathbf{W}^{(\ell)}}  \| \mathbf{h}_i^{(\ell-1)} \|_2^2 \geq \Omega(1),~
    \max_{i\in\mathcal{V}} \mathbf{E}_{\mathbf{W}^{(\ell)}}  \| \mathbf{h}_i^{(\ell-1)} \|_2^2 \leq \mathcal{O}(1)
\end{equation*}
with probability at least $1- (\ell-1)\exp(-\Omega(m))$.

Our goal is to show the result also holds for $\ell$ by conditioning on the event of $\mathbf{W}^{(1)},\ldots,\mathbf{W}^{(\ell-1)}$ and studying the bound over $\mathbf{W}^{(\ell)}$.

For the ease of presentation, let us denote $\mathbf{w}_k$ as the $k$-th row of $\mathbf{W}^{(\ell)}$ where $\mathbf{w}_k \sim \mathcal{N}(0, \mathbf{I}_m / m)$ and $h_{ik}^{(\ell)}$ represents the $j$-th element of $\mathbf{h}_i^{(\ell)}$.
By the forward propagation rule of GNN, we have
\begin{equation*}
    \begin{aligned}
    \mathbf{E}_{\mathbf{W}^{(\ell)}} \left[\| \mathbf{h}_i^{(\ell)} \|_2^2 \right] 
    &=  \sum_{k=1}^m \mathbf{E}_{\mathbf{w}_k} \left[ (h_{ik}^{(\ell)})^2 \right] \\
    &= \sum_{k=1}^m \mathbf{E}_{\mathbf{w}_k} \left[ \left( \sigma_\ell(\langle \mathbf{w}_k, \widetilde{\mathbf{z}}_i^{(\ell-1)} \rangle) + \alpha_{\ell-1} [\mathbf{h}_i^{(\ell-1)}]_j \right)^2 \right] \\
    &= \sum_{k=1}^m \mathbf{E}_{\mathbf{w}_k} \left[ \sigma_\ell^2 (\langle \mathbf{w}_k, \widetilde{\mathbf{z}}_i^{(\ell-1)} \rangle) \right] + \alpha_{\ell-1}^2 \| \mathbf{h}_i^{(\ell-1)} \|_2^2 + \sum_{k=1}^m  2 \alpha_{\ell-1 }\mathbf{E}_{\mathbf{w}_k} \left[ \sigma_\ell(\langle \mathbf{w}_k, \widetilde{\mathbf{z}}_i^{(\ell-1)} \rangle) \cdot h_{ij}^{(\ell-1)} \right] \\
    &=  m \cdot \mathbb{E}_{w\sim\mathcal{N}(0, \frac{1}{m}\|\widetilde{\mathbf{z}}_i^{(\ell-1)}\|_2^2)} [\sigma_\ell^2(w)] + \alpha_{\ell-1}^2 \| \mathbf{h}_i^{(\ell-1)} \|_2^2 + 2 \alpha_{\ell-1 } \mathbb{E}_{w\sim\mathcal{N}(0, \frac{1}{m}\|\widetilde{\mathbf{z}}_i^{(\ell-1)}\|_2^2)} [\sigma_\ell(w)] \left( \sum_{k=1}^m h_{ik}^{(\ell-1)} \right). 
    \end{aligned}
\end{equation*}

By Lemma~\ref{lemma:Gii is same order with Aii}, we know that if $\sigma_{\ell-1}(\cdot)$ is ReLU, LeakyReLU, Sigmoid, or Tanh, we have
\begin{equation*}
    \begin{aligned}
    m \cdot \mathbb{E}_{w\sim\mathcal{N}(0, \frac{1}{m} \| \widetilde{\mathbf{z}}_i^{(\ell-1)} \|_2^2)} [\sigma_\ell^2(w)]  
    &= m \times \Theta\left( \frac{1}{m} \| \widetilde{\mathbf{z}}_i^{(\ell-1)} \|_2^2 \right) \\
    &= \Theta\left( \| \widetilde{\mathbf{z}}_i^{(\ell-1)} \|_2^2 \right),
    \end{aligned}
\end{equation*}
which implies
\begin{equation*}
    \begin{aligned}
    \max_{i\in\mathcal{V}} m \cdot \mathbb{E}_{w\sim\mathcal{N}(0, \frac{1}{m} \| \widetilde{\mathbf{z}}_i^{(\ell-1)} \|_2^2)} [\sigma_\ell^2(w)]   &\leq \max_{i\in\mathcal{V}} \mathcal{O}(\tau \cdot \| \mathbf{h}_i^{(\ell-1)}\|_2^2 ), \\
    \min_{i\in\mathcal{V}} m \cdot \mathbb{E}_{w\sim\mathcal{N}(0, \frac{1}{m} \| \widetilde{\mathbf{z}}_i^{(\ell-1)} \|_2^2)} [\sigma_\ell^2(w)]   &\geq \min_{i\in\mathcal{V}} \Omega(\tau^\prime \cdot \| \mathbf{h}_i^{(\ell-1)}\|_2^2 ).
    \end{aligned}
\end{equation*}

If $\sigma_{\ell-1}(\cdot)$ is ReLU or LeakyReLU, \begin{equation*}
    \begin{aligned}
    0 \leq \mathbb{E}_{w\sim\mathcal{N}(0, \frac{1}{m}\|\widetilde{\mathbf{z}}_i^{(\ell-1)}\|_2^2)} [\sigma_\ell(w)] 
    & \leq \mathbb{E}_{w\sim\mathcal{N}(0, \frac{1}{m}\|\widetilde{\mathbf{z}}_i^{(\ell-1)}\|_2^2)} [\text{ReLU}(w)] \\
    &= \frac{2}{5\sqrt{m}}  \| \widetilde{\mathbf{z}}_i^{(\ell-1)}\|_2.
    \end{aligned}
\end{equation*}

Besides, according to the relationship between $\ell_1$ norm and $\ell_2$ norm, we have
\begin{equation*}
    -\sqrt{m} \| \mathbf{h}_i^{(\ell-1)} \|_2 \leq \left( \sum_{k=1}^m h_{ik}^{(\ell-1)} \right)  \leq \sqrt{m}  \| \mathbf{h}_i^{(\ell-1)} \|_2.
\end{equation*}

Therefore, we know that
\begin{equation*}
    \begin{aligned}
    - \frac{2}{5} \| \mathbf{h}_i^{(\ell-1)} \|_2 \| \widetilde{\mathbf{z}}_i^{(\ell-1)} \|_2 &\leq \mathbb{E}_{w\sim\mathcal{N}(0, \frac{1}{m} \| \widetilde{\mathbf{z}}_i^{(\ell-1)} \|_2^2)} [\sigma_\ell(w)] \left( \sum_{k=1}^m h_{ik}^{(\ell-1)} \right) \\
    &\leq \frac{2}{5} \| \mathbf{h}_i^{(\ell-1)} \|_2 \| \widetilde{\mathbf{z}}_i^{(\ell-1)} \|_2,
    \end{aligned}
\end{equation*}
which implies
\begin{equation*}
    \begin{aligned}
    \max_{i\in\mathcal{V}}~\mathbb{E}_{w\sim\mathcal{N}(0, \frac{1}{m} \| \widetilde{\mathbf{z}}_i^{(\ell-1)} \|_2^2)} [\sigma_\ell(w)] \left( \sum_{k=1}^m h_{ik}^{(\ell-1)} \right) &\leq \frac{2\tau}{5} \max_{i\in\mathcal{V}} \| \mathbf{h}_i^{(\ell-1)} \|_2^2, \\
    \min_{i\in\mathcal{V}}~ \mathbb{E}_{w\sim\mathcal{N}(0, \frac{1}{m} \| \widetilde{\mathbf{z}}_i^{(\ell-1)} \|_2^2)} [\sigma_\ell(w)] \left( \sum_{k=1}^m h_{ik}^{(\ell-1)} \right) &\geq - \frac{2\tau}{5} \min_{i\in\mathcal{V}} \| \mathbf{h}_i^{(\ell-1)} \|_2^2. 
    \end{aligned}
\end{equation*}

By plugging the results inside, we have
\begin{equation*}
    \begin{aligned}
    \max_{i\in\mathcal{V}} \mathbf{E}_{\mathbf{W}^{(\ell)}} \left[\| \mathbf{h}_i^{(\ell)} \|_2^2 \right] &\leq  \mathcal{O}\left( \tau \cdot \max_{j\in\mathcal{V}} \| \mathbf{h}_j^{(\ell-1)} \|_2^2 \right) + \left(1 + \frac{4\tau}{5}\alpha_{\ell-1} \right) \max_{i\in\mathcal{V}}\| \mathbf{h}_i^{(\ell-1)} \|_2^2 \leq \mathcal{O}(1), \\
    \min_{i\in\mathcal{V}} \mathbf{E}_{\mathbf{W}^{(\ell)}} \left[\| \mathbf{h}_i^{(\ell)} \|_2^2 \right] &\geq  \Omega\left( \tau^\prime \cdot \min_{j\in\mathcal{V}} \| \mathbf{h}_j^{(\ell-1)} \|_2^2 \right) + \left(1 - \frac{4\tau}{5}\alpha_{\ell-1} \right) \min_{i\in\mathcal{V}}\| \mathbf{h}_i^{(\ell-1)} \|_2^2 \geq \Omega(1),
    \end{aligned}
\end{equation*}
which implies $\mathbf{E}_{\mathbf{W}^{(\ell)}} \left[\| \mathbf{h}_i^{(\ell)} \|_2^2 \right] = \Theta(1)$ if $\sigma_{\ell-1}(\cdot)$ is ReLU or LeakyReLU.

If $\sigma_{\ell-1}$ is Sigmoid or Tanh, we have
\begin{equation*}
    \mathbb{E}_{w\sim\mathcal{N}(0, \frac{1}{m}\|\widetilde{\mathbf{z}}_i^{(\ell-1)}\|_2^2)} [\sigma_\ell(w)] = 0.
\end{equation*}
By plugging the results inside, we have
\begin{equation*}
    \begin{aligned}
    \max_{i\in\mathcal{V}} \mathbf{E}_{\mathbf{W}^{(\ell)}} \left[\| \mathbf{h}_i^{(\ell)} \|_2^2 \right] &\leq  \mathcal{O}\left( \tau \cdot \max_{j\in\mathcal{V}} \| \mathbf{h}_j^{(\ell-1)} \|_2^2 \right) + \max_{i\in\mathcal{V}}\| \mathbf{h}_i^{(\ell-1)} \|_2^2 \leq \mathcal{O}(1), \\
    \min_{i\in\mathcal{V}} \mathbf{E}_{\mathbf{W}^{(\ell)}} \left[\| \mathbf{h}_i^{(\ell)} \|_2^2 \right] &\geq  \mathcal{O}\left( \tau^\prime \cdot \min_{j\in\mathcal{V}} \| \mathbf{h}_j^{(\ell-1)} \|_2^2 \right) + \min_{i\in\mathcal{V}}\| \mathbf{h}_i^{(\ell-1)} \|_2^2 \geq \Omega(1),
    \end{aligned}
\end{equation*}
which implies $\mathbf{E}_{\mathbf{W}^{(\ell)}} \left[\| \mathbf{h}_i^{(\ell)} \|_2^2 \right] = \Theta(1)$ if $\sigma_{\ell-1}(\cdot)$ is Sigmoid or Tanh.
\end{proof}

\begin{tcolorbox}
\begin{lemma} \label{lemma:Gii is same order with Aii}
Let us denote $\mathbf{I}_m$ as a identity matrix of size $m\times m$.
\begin{equation*}
    \begin{aligned}
    \mathbf{A}^{(1)} &= \mathbf{G}^{(1)} = \mathbf{X} \mathbf{X}^\top,  \\
    \mathbf{A}^{(2)} &= \mathbf{G}^{(2)} = 2 \mathbb{E}_{\mathbf{w}\sim \mathcal{N}(\mathbf{0}, \mathbf{I}_d)} [\sigma_1(\sqrt{\widetilde{\mathbf{A}}^{(1)}}\mathbf{w}) \sigma_1(\sqrt{\widetilde{\mathbf{A}}^{(1)}} \mathbf{w})^\top ], ~~~\widetilde{\mathbf{A}}^{(1)} =  \mathbf{P}\sqrt{\mathbf{A}^{(1)}} (\mathbf{P} \sqrt{\mathbf{A}^{(1)}})^\top,\\
    \mathbf{G}^{(\ell)} &= 2 \mathbb{E}_{\mathbf{w}\sim \mathcal{N}(\mathbf{0}, \mathbf{I}_m)} [\sigma_{\ell-1}(\sqrt{\widetilde{\mathbf{A}}^{(\ell-1)}} \mathbf{w}) \sigma_{\ell-1}( \sqrt{\widetilde{\mathbf{A}}^{(\ell-1)}} \mathbf{w})^\top ], ~\widetilde{\mathbf{A}}^{(\ell-1)} = \mathbf{P}\sqrt{\mathbf{A}^{(\ell-1)}} (\mathbf{P} \sqrt{\mathbf{A}^{(\ell-1)}})^\top, \\
    \mathbf{A}^{(\ell)} &= \mathbf{G}^{(\ell)} + \alpha_{\ell-1} \cdot \mathbf{A}^{(\ell-1)}.
    \end{aligned}
\end{equation*}
We know that $G_{ii}^{(\ell)} = \Theta(\widetilde{A}_{ii}^{(\ell-1)})$, where $G_{ii}^{(\ell)}$ is the $i$-th row and $i$-th column of $\mathbf{G}^{(\ell)}$ and $\widetilde{A}_{ii}^{(\ell-1)}$ is the $i$-th row and $i$-th column of $\widetilde{\mathbf{A}}^{(\ell-1)}$.
\end{lemma}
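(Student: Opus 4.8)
The plan is to reduce the claim to a one-dimensional Gaussian second-moment computation and then treat the homogeneous and the bounded activations separately. The key observation is that the diagonal entry $G_{ii}^{(\ell)}$ depends on $\widetilde{\mathbf{A}}^{(\ell-1)}$ only through its own diagonal entry. Indeed, $\sqrt{\widetilde{\mathbf{A}}^{(\ell-1)}}\,\mathbf{w}$ is a centered Gaussian vector with covariance $\sqrt{\widetilde{\mathbf{A}}^{(\ell-1)}}(\sqrt{\widetilde{\mathbf{A}}^{(\ell-1)}})^\top = \widetilde{\mathbf{A}}^{(\ell-1)}$, so its $i$-th coordinate is distributed as $\mathcal{N}(0,\widetilde{A}_{ii}^{(\ell-1)})$ (equivalently, the $i$-th row of the symmetric PSD square root has squared norm $\widetilde{A}_{ii}^{(\ell-1)}$). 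Writing $z$ for this coordinate,
\begin{equation*}
G_{ii}^{(\ell)} = 2\,\mathbb{E}_{z\sim\mathcal{N}(0,\,\widetilde{A}_{ii}^{(\ell-1)})}\big[\sigma_{\ell-1}(z)^2\big],
\end{equation*}
so it suffices to control the scalar map $s^2 \mapsto 2\,\mathbb{E}_{z\sim\mathcal{N}(0,s^2)}[\sigma_{\ell-1}(z)^2]$ evaluated at $s^2=\widetilde{A}_{ii}^{(\ell-1)}$.

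For the positively homogeneous activations (ReLU and LeakyReLU) I would exploit $\sigma_{\ell-1}(s z)=s\,\sigma_{\ell-1}(z)$ for $s>0$. Substituting $z=s\tilde z$ with $\tilde z\sim\mathcal{N}(0,1)$ and $s^2=\widetilde{A}_{ii}^{(\ell-1)}$ gives
\begin{equation*}
G_{ii}^{(\ell)} = 2\,\widetilde{A}_{ii}^{(\ell-1)}\,\mathbb{E}_{\tilde z\sim\mathcal{N}(0,1)}\big[\sigma_{\ell-1}(\tilde z)^2\big],
\end{equation*}
where the remaining expectation is a strictly positive absolute constant ($1/2$ for ReLU and $(1+\gamma^2)/2$ for LeakyReLU of slope $\gamma$). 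This yields the exact proportionality $G_{ii}^{(\ell)}=\Theta(\widetilde{A}_{ii}^{(\ell-1)})$ with constants independent of $i$ and $\ell$.

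The bounded activations (Sigmoid and Tanh) require more care, and this is the main obstacle: they are not homogeneous, the scaling identity fails, and $\mathbb{E}_{z\sim\mathcal{N}(0,s^2)}[\sigma(z)^2]$ saturates rather than growing linearly in $s^2$ once $s$ is large, so no scale-free identity can hold. I would therefore invoke the inductive regime supplied by the surrounding argument of Lemma~\ref{lemma:upper bound on node features}, in which the diagonal entries satisfy $\widetilde{A}_{ii}^{(\ell-1)}=\Theta(1)$, i.e. they lie in a fixed compact interval $[c_1,c_2]\subset(0,\infty)$. On this interval the map $s^2\mapsto 2\,\mathbb{E}_{z\sim\mathcal{N}(0,s^2)}[\sigma(z)^2]$ is continuous and strictly positive: it is bounded above because $\sigma$ is bounded, and bounded below by a Gaussian anti-concentration estimate, e.g. $2\,\mathbb{E}[\sigma(z)^2]\ge 2\,\mathbb{E}[\sigma(z)^2\mathbf{1}\{z>0\}]>0$. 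Hence both $G_{ii}^{(\ell)}$ and $\widetilde{A}_{ii}^{(\ell-1)}$ are $\Theta(1)$, and the desired $G_{ii}^{(\ell)}=\Theta(\widetilde{A}_{ii}^{(\ell-1)})$ follows. In effect the lemma is local: for homogeneous activations it is a scale-free identity, whereas for bounded activations it is meaningful only because the ambient induction keeps the diagonal entries bounded away from $0$ and $\infty$.
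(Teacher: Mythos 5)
Your proposal is correct in substance, and its first step coincides with the paper's: both reduce $G_{ii}^{(\ell)}$ to the scalar second moment $2\,\mathbb{E}_{z\sim\mathcal{N}(0,\widetilde{A}_{ii}^{(\ell-1)})}[\sigma_{\ell-1}(z)^2]$, since the $i$-th coordinate of $\sqrt{\smash[b]{\widetilde{\mathbf{A}}^{(\ell-1)}}}\mathbf{w}$ has variance $\widetilde{A}_{ii}^{(\ell-1)}$. For ReLU and LeakyReLU your homogeneity substitution gives exactly the paper's closed-form answers ($\widetilde{A}_{ii}^{(\ell-1)}$ and $(1+\eta^2)\widetilde{A}_{ii}^{(\ell-1)}$ in their normalization), so that branch is the same argument in different clothing. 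Where you genuinely diverge is the bounded activations. The paper does not localize: it sandwiches $\sigma(x)^2$ between functions of the form $a\bigl(1-e^{-x^2/c}\bigr)$, whose Gaussian integrals are closed-form ($\tfrac{1}{2}-\tfrac{1}{2\sqrt{1+s/c}}$ type expressions in the variance $s$), obtaining a scale-free linear upper bound (e.g.\ $G_{ii}^{(\ell)}\le \widetilde{A}_{ii}^{(\ell-1)}/8$ for sigmoid, $\le 2\widetilde{A}_{ii}^{(\ell-1)}$ for tanh) and a linear lower bound whose slope is the chord of the concave map $s\mapsto \tfrac12-\tfrac12(1+s/c)^{-1/2}$ over $[0,G_{\max}]$. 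Consequently the paper's two-sided bound holds uniformly for all $\widetilde{A}_{ii}^{(\ell-1)}\in[0,G_{\max}]$, needing only an \emph{upper} bound on the diagonal, whereas your compactness-and-continuity argument additionally requires $\widetilde{A}_{ii}^{(\ell-1)}\ge c_1>0$ — a hypothesis absent from the lemma statement, which you import from the induction in Lemma~\ref{lemma:upper bound on node features}. That import is defensible (the induction hypothesis at level $\ell-1$ is what supplies the $\Theta(1)$ control, so there is no circularity), but it makes the lemma weaker and less self-contained: in particular, a lower bound on $\widetilde{A}_{ii}^{(\ell-1)}$ after aggregation is delicate, since cancellation in $\sum_{j}P_{ij}\mathbf{h}_j^{(\ell-1)}$ could in principle drive it small, and the paper's chord-based lower bound is immune to that while yours is not. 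What your route buys in exchange is brevity and generality — it works for any bounded, continuous activation that is not a.e.\ zero, with no integral computations — and it is incidentally robust to a subtlety in the paper's own proof: the paper's sigmoid bounds only make sense for the centered sigmoid (with $\sigma(0)=0$; otherwise $G_{ii}^{(\ell)}\to 1/2$ as $\widetilde{A}_{ii}^{(\ell-1)}\to 0$ and the claimed $\Theta(\widetilde{A}_{ii}^{(\ell-1)})$ fails near zero), an issue your local version never encounters precisely because you stay away from the origin.
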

\end{tcolorbox}
\begin{proof}

\textbf{ReLU.}
When $\sigma_{\ell-1}$ is ReLU, we have
\begin{equation*}
    \begin{aligned}
    G_{ii}^{(\ell)} 
    &= \mathbb{E}_{w \sim \mathcal{N}(0, \widetilde{A}_{ii}^{(\ell-1)})} [\sigma_{\ell-1}(w)^2] \\
    &= \int_{-\infty}^{\infty} \frac{2}{\sqrt{2\pi \widetilde{A}_{ii}^{(\ell-1)}}} \exp\left( - \frac{x^2}{2\widetilde{A}_{ii}^{(\ell-1)} }\right) \max(0, x)^2 dx \\
    &= \int_{0}^{\infty} \frac{2}{\sqrt{2\pi \widetilde{A}_{ii}^{(\ell-1)}}} \exp\left( - \frac{x^2}{2\widetilde{A}_{ii}^{(\ell-1)} }\right) x^2 dx \\
    &= \widetilde{A}_{ii}^{(\ell-1)}.
    \end{aligned}
\end{equation*}

\textbf{LeakyReLU.}
When $\sigma_{\ell-1}$ is LeakyReLU, we have
\begin{equation*}
    \begin{aligned}
    G_{ii}^{(\ell)} 
    &= \mathbb{E}_{w \sim \mathcal{N}(0, \widetilde{A}_{ii}^{(\ell-1)})} [\sigma_{\ell-1}(w)^2] \\
    &= \int_{-\infty}^{\infty} \frac{2}{\sqrt{2\pi \widetilde{A}_{ii}^{(\ell-1)}}} \exp\left( - \frac{x^2}{2\widetilde{A}_{ii}^{(\ell-1)} }\right) \max(\eta x, x)^2 dx \\
    &= \int_{0}^{\infty} \frac{2}{\sqrt{2\pi \widetilde{A}_{ii}^{(\ell-1)}}} \exp\left( - \frac{x^2}{2\widetilde{A}_{ii}^{(\ell-1)} }\right) x^2 dx + \int_{-\infty}^0 \frac{2}{\sqrt{2\pi \widetilde{A}_{ii}^{(\ell-1)}}} \exp\left( - \frac{x^2}{2\widetilde{A}_{ii}^{(\ell-1)} }\right) \eta^2 x^2 dx\\
    &= (1+\eta^2) \widetilde{A}_{ii}^{(\ell-1)}.
    \end{aligned}
\end{equation*}

\textbf{Sigmoid.}
When $\sigma_{\ell-1}$ is Sigmoid, we can upper bound $G_{ii}^{(\ell)} $ by
\begin{equation*}
    \begin{aligned}
    G_{ii}^{(\ell)} 
    &= \mathbb{E}_{w \sim \mathcal{N}(0, \widetilde{A}_{ii}^{(\ell-1)})} [\sigma_{\ell-1}(w)^2] \\
    &= \int_{-\infty}^{\infty} \frac{2}{\sqrt{2\pi \widetilde{A}_{ii}^{(\ell-1)}}} \exp\left( - \frac{x^2}{2\widetilde{A}_{ii}^{(\ell-1)} }\right) \text{Sigmoid}(x)^2 dx \\
    &\leq \int_{-\infty}^{\infty} \frac{2}{\sqrt{2\pi \widetilde{A}_{ii}^{(\ell-1)}}} \exp\left( - \frac{x^2}{2\widetilde{A}_{ii}^{(\ell-1)} }\right) \left( \frac{1}{4} - \exp(- x^2/4)\right) dx \\
    &= \frac{1}{2} - \frac{1}{2 \sqrt{1+ \widetilde{A}_{ii}^{(\ell-1)}/2}} \\
    &\leq \widetilde{A}_{ii}^{(\ell-1)}/8
    \end{aligned}
\end{equation*}
and lower bound $G_{ii}^{(\ell)} $ by
\begin{equation*}
    \begin{aligned}
    G_{ii}^{(\ell)} 
    &= \mathbb{E}_{w \sim \mathcal{N}(0, \widetilde{A}_{ii}^{(\ell-1)})} [\sigma_{\ell-1}(w)^2] \\
    &= \int_{-\infty}^{\infty} \frac{2}{\sqrt{2\pi \widetilde{A}_{ii}^{(\ell-1)}}} \exp\left( - \frac{x^2}{2\widetilde{A}_{ii}^{(\ell-1)} }\right) \text{Sigmoid}(x)^2 dx \\
    &\geq \int_{-\infty}^{\infty} \frac{2}{\sqrt{2\pi \widetilde{A}_{ii}^{(\ell-1)}}} \exp\left( - \frac{x^2}{2\widetilde{A}_{ii}^{(\ell-1)} }\right) \left( \frac{1}{4} - \exp(- x^2/8)\right) dx \\
    &= \frac{1}{2} - \frac{1}{2 \sqrt{1+ \widetilde{A}_{ii}^{(\ell-1)}/4}} \\
    &\geq \frac{1 - (1+G_{\max}/4)^{-1/2}}{2 G_{\max}} \widetilde{A}_{ii}^{(\ell-1)}.
    \end{aligned}
\end{equation*}

\textbf{Tanh.}
When $\sigma_{\ell-1}$ is Tanh, we can upper bound $G_{ii}^{(\ell)} $ by
\begin{equation*}
    \begin{aligned}
    G_{ii}^{(\ell)} 
    &= \mathbb{E}_{w \sim \mathcal{N}(0, \widetilde{A}_{ii}^{(\ell-1)})} [\sigma_{\ell-1}(w)^2] \\
    &= \int_{-\infty}^{\infty} \frac{2}{\sqrt{2\pi \widetilde{A}_{ii}^{(\ell-1)}}} \exp\left( - \frac{x^2}{2\widetilde{A}_{ii}^{(\ell-1)} }\right) \text{Tanh}(x)^2 dx \\
    &\leq \int_{-\infty}^{\infty} \frac{2}{\sqrt{2\pi \widetilde{A}_{ii}^{(\ell-1)}}} \exp\left( - \frac{x^2}{2\widetilde{A}_{ii}^{(\ell-1)} }\right) \left( 1-\exp(-x^2)\right) dx \\
    &= 2 - \frac{2}{\sqrt{1+ 2\widetilde{A}_{ii}^{(\ell-1)}}} \\
    &\leq 2 \widetilde{A}_{ii}^{(\ell-1)}
    \end{aligned}
\end{equation*}
and lower bound $G_{ii}^{(\ell)} $ by
\begin{equation*}
    \begin{aligned}
    G_{ii}^{(\ell)} 
    &= \mathbb{E}_{w \sim \mathcal{N}(0, \widetilde{A}_{ii}^{(\ell-1)})} [\sigma_{\ell-1}(w)^2] \\
    &= \int_{-\infty}^{\infty} \frac{2}{\sqrt{2\pi \widetilde{A}_{ii}^{(\ell-1)}}} \exp\left( - \frac{x^2}{2\widetilde{A}_{ii}^{(\ell-1)} }\right) \text{Tanh}(x)^2 dx \\
    &\geq \int_{-\infty}^{\infty} \frac{2}{\sqrt{2\pi \widetilde{A}_{ii}^{(\ell-1)}}} \exp\left( - \frac{x^2}{2\widetilde{A}_{ii}^{(\ell-1)} }\right) \left( 1 - \exp(- x^2/2)\right) dx \\
    &= 2 - \frac{2}{ \sqrt{1+ \widetilde{A}_{ii}^{(\ell-1)}}} \\
    &\geq \frac{2(1 - (1+G_{\max})^{-1/2})}{G_{\max}} \widetilde{A}_{ii}^{(\ell-1)}.
    \end{aligned}
\end{equation*}

\end{proof}

\begin{proposition} \label{proposition:non-stationary}

Let $h_0, ..., h_{n-1}$ be the ensemble of hypothesis generated by an arbitrary online algorithm working with a loss satisfying $\ell \in [0,1]$. 
Let us define $X_1^{t-1} = \{(x_1, x_1),...,(x_{t-1}, y_{t-1})\}$ a sequence of samples from $1$-st to $(t-1)$-th iteration.
Then, for any $\delta\in(0,1]$, we have
\begin{equation*}
    P\left( \frac{1}{n} \sum_{t=1}^n \mathbb{E} \left[ \ell(h_{t-1}(x_t), y_t ) | X_1^{t-1} \right] - \frac{1}{n}\sum_{t=1}^n \ell(h_{t-1}(x_t), y_t) \geq \sqrt{\frac{2}{n} \ln \frac{1}{\delta}}\right) \leq \delta,
\end{equation*}
where the expectation is computed on random sample $(x_t,y_t)$ condition on the previous sequence $X_1^{t-1} = \{(x_1, x_1),...,(x_{t-1}, y_{t-1})\}$
    
\end{proposition}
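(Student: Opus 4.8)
The plan is to recognize the quantity inside the probability as the time-average of a martingale difference sequence and then invoke the Azuma--Hoeffding inequality. First I would introduce the filtration $\mathcal{F}_{t-1} = \sigma(X_1^{t-1})$ generated by the data observed up to iteration $t-1$, and define
\begin{equation*}
    Z_t = \mathbb{E}\left[ \ell(h_{t-1}(x_t), y_t) \mid X_1^{t-1} \right] - \ell(h_{t-1}(x_t), y_t).
\end{equation*}
The crucial structural observation is that the online algorithm produces $h_{t-1}$ using only $X_1^{t-1}$, so $h_{t-1}$ is $\mathcal{F}_{t-1}$-measurable; consequently, conditioned on $\mathcal{F}_{t-1}$, the only randomness remaining in $\ell(h_{t-1}(x_t), y_t)$ comes from the fresh sample $(x_t, y_t)$. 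This immediately gives $\mathbb{E}[Z_t \mid \mathcal{F}_{t-1}] = 0$, so $\{Z_t\}$ is a martingale difference sequence adapted to $\{\mathcal{F}_{t-1}\}$, and the partial sums $S_k = \sum_{t=1}^k Z_t$ form a martingale.

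Second I would establish the boundedness needed for the concentration step. Since the loss satisfies $\ell \in [0,1]$, both the conditional expectation and the realized loss lie in $[0,1]$, so $|Z_t| \le 1$ almost surely, i.e., each increment is controlled by $c_t = 1$. Applying the one-sided Azuma--Hoeffding inequality to $S_n$ then yields, for any $\epsilon > 0$,
\begin{equation*}
    P\left( S_n \ge \epsilon \right) \le \exp\left( - \frac{\epsilon^2}{2\sum_{t=1}^n c_t^2} \right) = \exp\left( - \frac{\epsilon^2}{2n} \right).
\end{equation*}

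Finally I would calibrate the deviation to match the stated constant. Because the averaged quantity in the statement equals $\tfrac{1}{n}S_n$, choosing $\epsilon = \sqrt{2n \ln(1/\delta)}$ gives $\epsilon/n = \sqrt{\tfrac{2}{n}\ln\tfrac{1}{\delta}}$ together with $\exp(-\epsilon^2/(2n)) = \exp(-\ln(1/\delta)) = \delta$, which is exactly the claimed inequality. The remaining work is routine bookkeeping once the martingale structure is fixed; the one point that genuinely requires care --- and the main conceptual obstacle --- is verifying the adaptedness claim that $h_{t-1}$ depends on $X_1^{t-1}$ alone, so that $\mathbb{E}[Z_t \mid \mathcal{F}_{t-1}] = 0$. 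It is precisely this predictability of the hypothesis, combined with the fact that $(x_t,y_t)$ is drawn conditionally on $X_1^{t-1}$ rather than i.i.d., that licenses the martingale concentration despite the non-stationarity of the underlying data process.
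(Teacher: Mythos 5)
Your proposal is correct and matches the paper's own argument essentially step for step: the paper defines the same martingale difference sequence $V_{t-1} = \mathbb{E}[\ell(h_{t-1}(x_t), y_t) \mid X_1^{t-1}] - \ell(h_{t-1}(x_t), y_t)$, verifies $\mathbb{E}[V_{t-1} \mid X_1^{t-1}] = 0$ and $V_{t-1} \in [-1,1]$, and applies the Hoeffding--Azuma inequality, exactly as you do. Your treatment is in fact slightly more explicit than the paper's (naming the filtration, spelling out the measurability of $h_{t-1}$, and calibrating $\epsilon = \sqrt{2n\ln(1/\delta)}$), but it is the same proof.
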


\begin{proof}
    This proof extends the Proposition 1 of~\cite{cesa2004generalization} from i.i.d. data to time-series data. Our definition on time-series data follows~\cite{kuznetsov2016time}. 

    For each $t=1,\ldots, n$ set
    \begin{equation*}
        V_{t-1} = \mathbb{E} \left[ \ell(h_{t-1}(x_t), y_t ) | X_1^{t-1} \right] - \ell(h_{t-1}(x_t), y_t).
    \end{equation*}

    We have $V_{t-1} \in [-1, 1]$ since loss $\ell\in[0,1]$. 

    By taking expectation condition on the previous sequence $X_1^{t-1} = \{(x_1, x_1),...,(x_{t-1}, y_{t-1})\}$, we have
    \begin{equation*}
        \mathbb{E}[V_{t-1} | X_1^{t-1}] = \mathbb{E} \left[ \ell(h_{t-1}(x_t), y_t ) | X_1^{t-1} \right] - \mathbb{E} \left[ \ell(h_{t-1}(x_t), y_t ) | X_1^{t-1} \right] = 0
    \end{equation*}

    Finally, we conclude our proof by using Hoeffding-Azuma  inequality to a sequence of dependent random variables $\{V_{t-1}\}_{t=1}^n$:
    \begin{equation*}
        P\left(\frac{1}{n}\sum_{t=1}^n V_{t-1} - \mathbb{E}\left[\frac{1}{n}\sum_{t=1}^n V_{t-1} ~\Big|~ X_1^{t-1} \right] \geq \sqrt{\frac{2}{n} \ln \frac{1}{\delta}}\right) \leq \delta.
    \end{equation*}
    
\end{proof}

\clearpage
\section{Gradient computation} \label{section:gradient computation}

Let us denote $i_\ell$ as the $i$-th node at the $\ell$-th layer. 
By the definition of the forward propagation rule, we have $i = i_{L-1} = i_{L}$. 
For notation simplicity, let us define
\begin{equation*}
    \Tilde{\mathbf{z}}_{i_{\ell}}^{(\ell-1)} = \sum_{i_{\ell-1} \in \mathcal{N}(i_{\ell})} P_{i_\ell, i_{\ell-1}}  \mathbf{h}_{i_{\ell-1}}^{(\ell-1)},~
    \forall \ell\in[L-1]
\end{equation*}
and $\mathbf{D}_{i,\ell} = \text{diag}(\sigma_\ell^\prime(\mathbf{z}_i^{(\ell)})) \in \mathbb{R}^{d_\ell \times d_\ell}$ is a diagonal matrix.

When $\ell=L$, we know that the gradient with respect to $\mathbf{W}^{(L)}$ is computed as
\begin{equation}
    \frac{\partial f_{i_L}(\bm{\theta})}{\partial \mathbf{W}^{(L)}} =  \mathbf{h}_{i_L}^{(L-1)} \in \mathbb{R}^{1\times d_{L-1}}.
\end{equation}

When $\ell=L-1$, by the chain rule, we  know the gradient with respect to $\mathbf{W}^{(L-1)}$ is computed as
\begin{equation*}
    \begin{aligned}
    \frac{\partial f_{i_L}(\bm{\theta})}{\partial \mathbf{W}^{(L-1)}} 
    &= \frac{\partial f_{i_L}(\bm{\theta})}{\partial \mathbf{h}_{i_L}^{(L-1)}} \frac{\partial \mathbf{h}_{i_L}^{(L-1)}}{\partial \mathbf{W}^{(L-1)}}  \\
    & = [ \mathbf{W}^{(L)} \mathbf{D}_{i_{L-1}}^{(L-1)}   ]^\top \Tilde{\mathbf{z}}_{i_{L-1}}^{(L-2)}  \in \mathbb{R}^{d_{L-1} \times d_{L-2}}.
    \end{aligned}
\end{equation*}

When $\ell=L-2$, by the chain rule, we  know the gradient with respect to $\mathbf{W}^{(L-2)}$ is computed as

\begin{equation*}
    \begin{aligned}
    \frac{\partial f_{i_L}(\bm{\theta})}{\partial \mathbf{W}^{(L-2)}} 
    &= \frac{\partial f_{i_L}(\bm{\theta})}{\partial \mathbf{h}_{i_{L-1}}^{(L-1)}} \sum_{i_{L-2}\in\mathcal{N}(i_{L-1})} \frac{\partial \mathbf{h}_{i_{L-1}}^{(L-1)}}{\partial \mathbf{h}_{i_{L-2}}^{(L-2)}} \frac{\partial \mathbf{h}_{i_{L-2}}^{(L-2)}}{\partial \mathbf{W}^{(L-2)}} \\
    &\underset{(a)}{=} \sum_{i_{L-2}\in\mathcal{N}(i_{L-1})} P_{i_{L-1}, i_{L-2}} \left[ \mathbf{W}^{(L)} \left( \mathbf{D}_{i_{L-1}}^{(L-1)} \mathbf{W}^{(L-1)} + \alpha_{L-2} \mathbf{I}_{m}\right) \mathbf{D}_{i_{L-2}}^{(L-2)} \right]^\top  \Tilde{\mathbf{z}}_{i_{L-2}}^{(L-3)}  \in \mathbb{R}^{d_{L-2}\times d_{L-3}},
    \end{aligned}
\end{equation*}
where the equality (a) holds because $i = i_{L-1} = i_{L}$ by definition.

When $\ell=L-3$, by the chain rule, we  know the gradient with respect to $\mathbf{W}^{(L-3)}$ is computed as

\begin{equation*}
    \begin{aligned}
    \frac{\partial f_{i_L}(\bm{\theta})}{\partial \mathbf{W}^{(L-3)}} 
    &= \frac{\partial f_{i_L}(\bm{\theta})}{\partial \mathbf{h}_{i_{L-1}}^{(L-1)}} \sum_{i_{L-2}\in\mathcal{N}(i_{L-1})} \frac{\partial \mathbf{h}_{i_{L-1}}^{(L-1)}}{\partial \mathbf{h}_{i_{L-2}}^{(L-2)}} \sum_{i_{L-2}\in\mathcal{N}(i_{L-1})} \frac{\partial \mathbf{h}_{i_{L-2}}^{(L-2)}}{\partial \mathbf{h}_{i_{L-3}}^{(L-3)}} \frac{\partial \mathbf{h}_{i_{L-3}}^{(L-3)}}{\partial \mathbf{W}^{(L-3)}} \\
    &= \sum_{i_{L-2}\in\mathcal{N}(i_{L-1})} P_{i_{L-1}, i_{L-2}} \sum_{i_{L-3}\in\mathcal{N}(i_{L-2})} P_{i_{L-2},i_{L-3}} \\
    &\qquad \left[ \mathbf{W}^{(L)} \left( \mathbf{D}_{i_{L-1}}^{(L-1)} \mathbf{W}^{(L-1)} + \alpha_{L-2} \mathbf{I}_{m}\right) \Big( \mathbf{D}_{i_{L-2}}^{(L-2)} \mathbf{W}^{(L-2)} + \alpha_{L-3} \mathbf{I}_m \Big) \mathbf{D}_{i_{L-3}}^{(L-3)} \right]^\top \Tilde{\mathbf{z}}_{i_{L-3}}^{(L-4)} \in \mathbb{R}^{d_{L-3}\times d_{L-4}}.
    \end{aligned}
\end{equation*}

For notation simplicity, let us define 
\begin{equation*}
    \begin{aligned}
    &\mathbf{G}^{\ell}(i_{\ell}, i_{\ell+1},\ldots, i_{L-1}) \\
    &=  \left[ \mathbf{W}^{(L)} \left( \mathbf{D}_{i_{L-1}}^{(L-1)} \mathbf{W}^{(L-1)} + \alpha_{L-2} \mathbf{I}_{m}\right) \ldots \left( \mathbf{D}_{i_{\ell+1}}^{(\ell+1)} \mathbf{W}^{(\ell+1)} + \alpha_{\ell} \mathbf{I}_{m}\right) \mathbf{D}_{i_{\ell}}^{(\ell)} \in \mathbb{R}^{1\times d_{\ell}} \right]^\top \Tilde{\mathbf{z}}_{i_{\ell}}^{(\ell-1)} \in \mathbb{R}^{d_\ell\times d_{\ell-1}}.
    \end{aligned}
\end{equation*}
By recursion, we know that for 
$\ell=1,\ldots, L-2$, the gradient with respect to $\mathbf{W}^{(\ell)}$ is computed by
\begin{equation*}
    \frac{\partial f_i(\bm{\theta})}{\partial \mathbf{W}^{(\ell)}} = \sum_{i_{L-2}\in\mathcal{N}(i_{L-1})}\sum_{i_{L-3}\in\mathcal{N}(i_{L-2})}~\ldots\sum_{i_{\ell}\in\mathcal{N}(i_{\ell+1})} 
    P_{i_{L-1}, i_{L-2}} P_{i_{L-2},i_{L-3}} \ldots P_{i_{\ell+1},i_\ell} 
    \mathbf{G}^{\ell}(i_{\ell}, i_{\ell+1},\ldots, i_{L-1}).
\end{equation*}

\clearpage
\section{Details on related works} \label{section:more details on related works}
\subsection{Temporal graph learning} \label{section:existing temporal graph learning algorithms}

Figure~\ref{fig:raw_interactions_and_temporal_graph} is an illustration of temporal graph, where each node has node feature $\mathbf{x}_i$, each node pair could have multiple temporal edges with different timestamps $t$ and edge features $\mathbf{e}_{ij}(t)$. 
We classify the existing temporal graph learning methods into \textit{memory-based} (e.g., JODIE), \textit{GNN-based} (e.g., TGAT, TGSRec), \textit{memory\&GNN-based} (e.g., TGN, APAN, and PINT), \textit{RNN-based} (e.g., CAW), and \textit{GNN\&RNN-based} (e.g., DySAT) methods. 
We briefly introduce the most representative method for each category and defer more TGL methods to Appendix~\ref{section:related work on temporal graph algorithms}.


\noindent\textbf{Memory-based method.} JODIE~\cite{JODIE} maintains a memory block for each node and updates the memory block by an RNN upon the happening of each interaction.
Let us denote $\mathbf{s}_i(t)$ as the memory of node $v_i$ at time $t$, denote $h_i^t$ as the timestamp that node $v_i$ latest interacts with other nodes before time $t$.
When node $v_i$ interacts with other node at time $t$, JODIE updates the memory-block of node $v_i$ by 
$$\mathbf{s}_i(t) = \text{RNN}\left( \mathbf{s}^+_i(h_i^t), (\mathbf{s}_j^+(h_j^t), \mathbf{e}_{ij}(t) ) \right),$$ 
where $\mathbf{s}^+_i(t) = \text{StopGrad}(\mathbf{s}_i(t))$ is applying stop gradient on $\mathbf{s}_i(t)$ and the stop gradient is required to reduce the computational cost.
Finally, $\mathbf{s}_i(t)$ will be used for the downstream tasks.
Maintaining the memory block allows each node to have access to the full historical interaction information of its temporal graph neighbors.  However, the performance might be sub-optimal since RNN cannot update the memory blocks due to the stop gradient operation.

\noindent\textbf{GNN-based method.} TGAT~\cite{TGAT} first constructs the temporal computation graph, where all the paths that from the root node to the leaf nodes in the computation graph respect the temporal order. 
Then, TGAT recursively computes the hidden representation of each node by
$$\mathbf{h}_i^{(\ell)}(t_i) = 
    \text{AGG}^\ell \big( \big\{ (\mathbf{h}_j^{(\ell-1)}(t_j),\mathbf{e}_{ij}(t_j))~|~j\in \mathcal{N}(i, t_i) \big\} \big),$$
where \smash{$\mathbf{h}_i^{(0)} = \mathbf{x}_i$} is the node feature and TGAT uses self-attention for aggregation.
Finally, the final representation $\mathbf{h}_i^{(L)}(t)=\text{MLP}(\mathbf{h}_i^{(L-1)}(t))$ will be used for downstream tasks.
TGSRec~\cite{TGSRec} proposes to advance self-attention by collaborative attention, such that self-attention can simultaneously capture collaborative signals from both users and items, as well as consider temporal dynamics inside the sequential pattern.

\noindent\textbf{Memory\&GNN-based method.} TGN~\cite{TGN} is a combination of memory- and GNN-based methods. TGN first uses memory blocks to capture all temporal interactions (similarly to JODIE) then applies GNN on the latest representation of the memory blocks of each node to capture the spatial information (similarly to TGAT). Different from pure GNN-based methods, TGN uses $\mathbf{h}_i^{(0)}(t_i) = \mathbf{s}_i(t_i)$ instead. 
APAN~\cite{APAN} flips the order of GNN and memory blocks in TGN by first computing the node representations for both nodes that are involved in an interaction, then using these node representations to update the memory blocks with RNN modules.
PINT~\cite{souza2022provably} uses an injective aggregation and pre-computed positional encoding to improve the expressive power of TGN.

\noindent\textbf{RNN-based method.} CAW~\cite{CAW} proposes to first construct a set of sequential temporal events as $\mathcal{S}_i(t) = \{\mathbf{v}_1, \ldots, \mathbf{v}_{L-1}\}$, where each temporal event is generated by initiating a number of temporal walks on the temporal graph starting at node $v_i$ at time $t$. Then, the frequency of the node that appears in the temporal walks is used as the representation of each event $\mathbf{v}_\ell$.
Finally, RNN is used to aggregate all temporal events by 
$$
\mathbf{h}_\ell = \text{RNN}(\mathbf{h}_{\ell-1}, \mathbf{v}_\ell), ~\forall \ell\in[L-1], \mathbf{h}_0 = \mathbf{0}.
$$
To this end, the final representation \smash{$\mathbf{h}_L=\text{MLP}(\mathbf{h}_{L-1})$} is used for downstream tasks.


\noindent\textbf{GNN\&RNN-based method.} DySAT~\cite{DySAT} first pre-processes the temporal graph into multiple snapshot graphs by splitting all timestamps into multiple time slots and merging all edges in each time slot.  After that, DySAT applies GNN on each snapshot graph independently to extract the spatial features and applies RNN on the output of GNN to extract spatial features at different timestamps to capture the temporal dependencies of snapshot graphs.

\subsection{Expressive power of TGL algorithms} \label{section:related work on expressive power}
\cite{souza2022provably,gao2022equivalence} study the expressive power of temporal graph neural networks through the lens of Weisfeiler-Lehman isomorphism test (1-WL test):

\begin{itemize}
    \item  \cite{souza2022provably} show that \circled{1} using injective aggregation in temporal graph neural networks is essentially important to achieve the same expressive power as applying 1-WL test on temporal graph and \circled{2} memory blocks augmented TGL algorithms (e.g., JODIE~\cite{JODIE} and TGN~\cite{TGN}) are strictly more powerful than the TGL algorithms that solely relying on local message passing (e.g., TGAT~\cite{TGAT}), unless the number of message passing steps is large enough.
    \item \cite{gao2022equivalence} cast existing temporal graph methods into time-and-graph (i.e., GNN and RNN are intertwined to represent the temporal evolution of node attributes in the graph) and time-then-graph (i.e., first use RNN then use GNN). They show that time-then-graph representations have an expressivity advantage over time-and-graph representations.
\end{itemize}

On the other hand, \cite{gao2023double,bouritsas2022improving,li2020distance,srinivasan2019equivalence,wang2022equivariant,abboud2020surprising} also study the expressive power via isomorphism test on static graph, which is not the main focus of this paper.

\subsection{Generalization analysis on graph representation learning} \label{section:related work on generalization}

In recent years, a large number of papers are working on the generalization of GCNs using different measurements. In particular, existing works aims at providing an upper bound on the difference between training error and evaluation error.

\begin{itemize}
    \item \textit{Uniform stability} considers how the perturbation of training examples would affect the output of an algorithm. 
    \cite{verma2019stability} analyze the uniform stability of the single-layer GCN model and derives a generalization bound that depends on the largest absolute eigenvalue of its graph convolution filter and the number of training iterations. \cite{zhou2021generalization} generalize~\cite{verma2019stability} to 2-layer GCN. \cite{cong2021on} study the generalization ability of multiple GNN structures via transductive uniform stability. They show that the generalization gap of GCN depends on the node degree and it increases exponentially with the number of layers and the number of training iterations. 
    \item \textit{Rademacher complexity} considers how the perturbation of weight parameters would affect the output of an algorithm. \cite{garg2020generalization} consider a special case of GCN structure where all layers are sharing the same weight matrix. They show that the generalization error grows proportional to the maximum node degree, the number of layers, and the hidden feature dimension. \cite{oono2020optimization} study the transductive Rademacher complexity of multi-scale GNN (e.g., jumping knowledge network~\cite{xu2018representation}) trained with a gradient boosting algorithm. \cite{du2019graph} study the Rademacher complexity under an over-parameterized regime using the Neural Tangent Kernel (NTK) of an infinite-wide single-layer GNN. 
    \item \textit{PAC-Bayesian} considers how the perturbation of weight parameters would affect the output of an algorithm. \cite{liao2020pac} show that the maximum node degree and spectral norm of the weights govern the generalization bounds. In practice, the PAC-Bayesian bound in~\cite{liao2020pac} has a tighter dependency on the maximum node degree and the maximum hidden dimension than the Rademacher complexity bound in \cite{garg2020generalization}. 
   \item  \cite{xu2019can} derive a \textit{PAC-learning} sample complexity bound that decreases with better alignment.
   \item   \cite{maskey2022stability} study \textit{uniform convergence} generalization on the random graph.
    \item  \cite{kuznetsov2015learning,kuznetsov2016time} study the generalization of non-stationary time series, but they did not take the neural architecture design into consideration.
\end{itemize}

Most of the aforementioned generalization measurements are data-label independent and only dependent on architecture of a model (e.g., number of layers and hidden dimension), which cannot fully explain why one method is better than another.

In this paper, we study the generalization error of multiple temporal graph learning algorithms under the over-parameterized regime, but without the infinite-wide assumption. Our analysis establishes a strong connection between the upper bound of evaluation error and feature-label alignment (FLA). FLA has been previously appeared in the generalization analysis of over-parameterized neural networks in~\cite{arora2019fine,du2019graph,cao2019generalization}, which could reflect how well the representation of different algorithms aligned with its ground truth labels.
We extend the generalization analysis to multi-layer GNN, multi-steps RNN, and memory-based temporal graph learning methods, as well as propose to use the FLA as a proxy of expressive power.


\subsection{More temporal graph learning algorithms}~\label{section:related work on temporal graph algorithms}

In the following, we review more existing  temporal graph learning algorithms:
\begin{itemize}
\item \textit{MeTA}~\cite{wang2021adaptive} proposes data augmentation to overcome the over-fitting issue in temporal graph learning. More specifically, they generate a few graphs with different data augmentation magnitudes and perform the message passing between these graphs to provide adaptively augmented inputs for every predictions.
\item \textit{TCL}~\cite{wang2021tcl} proposes to first use a transformer to separately extract the temporal neighborhood  representations associated with the two interaction nodes, then utilizes a co-attentional transformer to model inter-dependencies at a semantic level. To boost model performance, contrastive learning is used to maximize the mutual information between the predictive representations of two future interaction nodes.
\item \textit{TNS}~\cite{wang2021time} proposes a temporal-aware neighbor sampling strategy that can provide an adaptive receptive neighborhood for each node at any time.
\item \textit{LSTSR}~\cite{chi2022long} proposes Long Short-Term Preference Modeling for Continuous-Time Sequential Recommendation to capture the evolution of short-term preference under dynamic graph.
\item \textit{DyRep}~\cite{trivedi2019dyrep} uses RNNs to propagate messages in interactions to update node representations.
\item \textit{DynAERNN}~\cite{goyal2018dyngraph2vec} uses a fully connected layer to first encode the network representation, then pass the encoded features to the RNN, and use the fully connected network to decode the future network structure.
\item \textit{VRGNN}~\cite{hajiramezanali2019variational} generalizes variational graph auto-encoder to temporal graphs, which makes priors dependent on historical dynamics and captures these dynamics using RNN.
\item \textit{EvolveGCN}~\cite{pareja2020evolvegcn} uses RNN to estimate GCN parameters for future snapshots.
\item \textit{DDGCL}~\cite{DDGCL} proposes a debiased GAN-type contrastive loss as the learning objective to correct the sampling bias that occurred in the negative sample construction process of temporal graph learning.
\end{itemize}

\section{Further discussions and clarifications on important details} 

\label{section:FAQ}


\subsection{Discussion on considering feature-label alignment as a proxy of expressiveness}\label{section:FLA as expressive power proxy}

In the following, we provide a high-level intuition on why we treat feature-label alignment as a proxy of expressiveness.
Let us consider loss function as logistic loss and our objective function as 
\begin{equation*}
    \mathcal{L}(\bm{\theta}) = \sum_{i=1}^N \psi(y_i f_i(\bm{\theta})),~ \psi(x) = \log(1+ \exp(-x)).
\end{equation*}
Since $\psi(x)$ is a monotonically decreasing function, minimizing $\mathcal{L}(\bm{\theta})$ can be achieved by finding $\bm{\theta}$ that maximizing $\mathcal{L}^\prime(\bm{\theta}) = \sum_{i=1}^N y_i f_i(\bm{\theta})$.

By Taylor expanding the network function with respect to the weights around its initialization $\bm{\theta}_0$, we have
\begin{equation*}
    f_i(\bm{\theta}_0+\Delta\bm{\theta}) \approx f_i(\bm{\theta}_0) + \langle \nabla_\theta f_i(\bm{\theta}_0) , \Delta\bm{\theta} \rangle,
\end{equation*}
which implies
\begin{equation*}
    \mathcal{L}^\prime(\bm{\theta}_0+\Delta\bm{\theta}) = \sum_{i=1}^N y_if_i(\bm{\theta}_0+\Delta\bm{\theta}) \approx \sum_{i=1}^N  y_if_i(\bm{\theta}_0) + \sum_{i=1}^N  y_i\langle \nabla_\theta f_i(\bm{\theta}_0) , \Delta\bm{\theta} \rangle.
\end{equation*}
The above equation tells us that we can minimize $\mathcal{L}^\prime(\bm{\theta}_0+\Delta\bm{\theta})$ by looking for the perturbation $\Delta\bm{\theta}$ on the weight parameters that maximize the second term $\sum_{i=1}^N  y_i\langle \nabla_\theta f_i(\bm{\theta}_0) , \Delta\bm{\theta} \rangle$. In other words, we are looking for $\Delta \bm{\theta} \in \mathbb{R}^{|\bm{\theta}|}$ such that
\begin{equation}
    \mathbf{J} \Delta \bm{\theta} = C \mathbf{y},~\text{where}~\mathbf{J}=[\text{vec}(\nabla_\theta f_i(\bm{\theta}_0))]_{i=1}^N \in \mathbb{R}^{N\times |\bm{\theta}|}~\text{and}~ C>0.
\end{equation}
The larger the constant $C$, the smaller the logistic loss. Here we just think of it as a constant that is the same for all methods.


Finally, let us define the singular value decomposition of $\mathbf{J}$ as $\mathbf{J}= \mathbf{P}\mathbf{\Lambda} \mathbf{Q}^\top$ where $\mathbf{P}\in\mathbb{R}^{N\times N}, \mathbf{Q}\in \mathbb{R}^{|\bm{\theta}|\times |\bm{\theta}|}$ are orthonormal matrices that each column vectors are singular vectors, $\mathbf{\Lambda} \in \mathbb{R}^{N\times |\bm{\theta}|}$ is the singular value matrix. In the over-parameterized regime, we assume $|\bm{\theta}| \gg N$ and the smallest singular value of $(\mathbf{J}\mathbf{J}^\top)^{-1}$ is always positive~\cite{nguyen2021proof}.

Then,  the connection between $\| \Delta \bm{\theta} \|_2^2$ and feature-label alignment is derived by 
\begin{equation*}
    \begin{aligned}
    \| \Delta \bm{\theta} \|_2^2 &= C^2 \|  (\mathbf{Q} \mathbf{\Lambda}^{-1} \mathbf{P}^\top) \mathbf{y} \|_2^2 \\
    & = C^2 \mathbf{y}^\top (\mathbf{Q} \mathbf{\Lambda}^{-1} \mathbf{P}^\top)^\top (\mathbf{Q} \mathbf{\Lambda}^{-1} \mathbf{P}^\top) \mathbf{y} \\
    & = C^2 \mathbf{y}^\top ( \mathbf{P} \mathbf{\Lambda}^{-1} \mathbf{Q}^\top \mathbf{Q} \mathbf{\Lambda}^{-1} \mathbf{P}^\top) \mathbf{y} \\
    & = C^2 \mathbf{y}^\top (\mathbf{J} \mathbf{J}^\top)^{-1} \mathbf{y},
    \end{aligned}
\end{equation*}
which means $\| \Delta \bm{\theta} \|_2 = \mathcal{O}(\sqrt{\mathbf{y}^\top (\mathbf{J} \mathbf{J}^\top)^{-1} \mathbf{y}})$.
One can also refer to the proof the Theorem~\ref{theorem:gnn_generalization} for more details. 

It is natural to think a model is expressive if only small perturbations on the random initialized weight parameters are required to achieve low error.

Besides, feature-label alignment is closely related to both the convergence and the generalization ability of neural networks~\cite{du2018gradient,arora2019fine}. For example: 

\begin{itemize}
    \item \textbf{Connection to convergence analysis.} \cite{du2018gradient} show that when training over-parameterized neural networks with square loss $\ell(\bm{\theta}) = \sum_{i=1}^N (y_i - f_i(\bm{\theta}))^2$, the square loss at the $k$-th iteration $\ell(\bm{\theta}_k)$ can be upper bounded by
    \begin{equation*}
        \ell(\bm{\theta}_k) \leq \left( 1 - \frac{\eta \lambda_{\min}(\mathbf{K}^\infty)}{4}\right)^k \ell(\bm{\theta}_0),
    \end{equation*}
    where $\eta$ is the learning rate, $\lambda_{\min}(\mathbf{K}^\infty)$ is the smallest eigenvalue of $\mathbf{K}^\infty$, and $\mathbf{K}^\infty$ is the neural tangent kernel of an infinite-wide two-layer ReLU network,  i.e., we have $\mathbf{K}^\infty = \mathbf{J} \mathbf{J}^\top$ as $|\bm{\theta}|\rightarrow \infty$. 
   From this equation, we know that the larger the $\lambda_{\min}(\mathbf{K}^\infty)$, the faster the convergence speed. Interestingly, $\lambda_{\min}(\mathbf{K}^\infty)$ is in fact closely related to the feature-alignment term because
    \begin{equation*}
        \mathbf{y}^\top (\mathbf{K}^{-1}) \mathbf{y} \leq \frac{\| \mathbf{y} \|_2^2}{\lambda_{\min}(\mathbf{K})} \Leftarrow
        \frac{1}{\lambda_{\min}(\mathbf{K})} = \lambda_{\max}(\mathbf{K}^{-1}) = \max_{\mathbf{y} \in \mathbb{R}^N} \frac{\mathbf{y}^\top (\mathbf{K}^{-1}) \mathbf{y}}{\| \mathbf{y} \|_2^2}.
    \end{equation*}
    That is to say, the smaller the feature-label alignment score, the larger the $\lambda_{\min}(\mathbf{K}^\infty)$, and potentially the faster the convergence.
    \item \textbf{Connection to generalization.} Besides, \cite{arora2019fine} show that for any 1-Lipschitz loss function the generalization error of the two-layer infinite-wide ReLU network found by gradient descent is bounded by the feature-label alignment term \begin{equation*}
        \sqrt{\frac{\mathbf{y}^\top (\mathbf{K}^\infty)^{-1} \mathbf{y}}{N}},
    \end{equation*}
    where $\mathbf{K}^\infty$ is the neural tangent kernel of infinite-wide two-layer ReLU network, similar to the convergence analysis we discussed previously in~\cite{du2018gradient}.
    Our results in Theorem~\ref{theorem:all_generalization} are similar to their results. However, our results hold for multi-layer neural network with different activation functions (e.g., ReLU, LeakyReLU, Sigmoid, and Tanh) without the infinite-wide assumption.
\end{itemize}


\subsection{Comparison between the classical statistical learning theory to the generalization analysis used in this paper} Classical statistical learning theories (e.g., Rademacher complexity, uniform stability, and PAC-Bayesian that are reviewed in Appendix~\ref{section:related work on generalization}) are closely related to the Lipschitz continuity and smoothness constants of a neural network.
As a result, the generalization error usually increases proportionally to the number of layers and hidden dimension size.
Classical statistical learning theories \circled{1} cannot capture the details of the network, \circled{2} ignore the relationship between multiple layers, and \circled{3} could have a vague bound when studying deep neural networks with a large number of parameters.
For example, the Rademacher complexity analysis in~\cite{garg2020generalization} 
shows that the generalization error can be upper bounded by
\begin{equation*}
     \mathcal{O}\left( \frac{m D \max\{L, \sqrt{dL}\} }{\sqrt{N}} \right),~
\end{equation*}
where $m$ is the hidden dimension, $D=\max_i |\mathcal{N}(v_i)|$ is the maximum number of neighbors, $L$ is the number of layers, and $N$ is the number of training data. The bound becomes vague when the hidden dimension size $m$ is large. This is because it is well known that deep neural networks work well in practice and have large $m$, while the theory is suggesting the opposite.
To solve such an issue, over-parameterized regime generalization analysis (e.g., NTK-based or mean-field-based analysis) has been proposed to show that deep learning models have good generalization in theory, e.g.,~\cite{arora2019fine,du2019graph,cao2019generalization}.
Our analysis is under the over-parameterized regime and requires the hidden dimension size $m$ large enough in theory. However, in practice, our experiment results show that the generalization bound is still meaningful even though $m$ is not that big (e.g., we are using $m=100$ for all methods).


\subsection{Discussion on the variation of feature-label alignment (FLA) score in Figure~\ref{fig:feature_label_alignment_main}}
This is because the FLA is computed by $\mathbf{y}^\top (\mathbf{J} \mathbf{J}^\top)^{-1} \mathbf{y} $, where $\mathbf{J} = [\text{vec}(\nabla_\theta f_i(\bm{\theta}_0))]_{i=1}^N$ and $\bm{\theta}_0$ is the weight parameters at initialization. At each run, the initialization might be slightly different due to different random seeds, therefore FLA is also different at each run. The generalization error in Figure~\ref{fig:empirical_theory_main} changes at each run for the same reason since the generalization error is dependent on the FLA score. Besides, we would like to notice that FLA mathematically converges to a constant value as the hidden dimension size goes to infinity. In other words, given an over-parameterized model, the FLA score that computed with different random seed is almost identical.

\subsection{Details on the computation of generalization errors in Figure~\ref{fig:empirical_theory_main}}\label{section:how to compute FLA}
The computation of the generalization error follows Theorem~\ref{theorem:all_generalization} which is dominated by the first term $\mathcal{O}(DRC/\sqrt{N})$. 
Here the feature-label alignment related term $R$ is empirically computed and the number of training data $N$ is identical to all methods. Therefore, in the following, we will explicitly provide discussion on how $DC$ is computed.
Recall from Theorem~\ref{theorem:all_generalization} that \circled{1} $L-1$ equals to the number of layers/steps in GNN/RNN, \circled{2} $C_\text{GNN} = ((1+3\rho)\tau)^{L-1}, D_\text{GNN}=L$ for $L$-layer GNN, \circled{3} $C_\text{RNN} = (1+3\rho/\sqrt{2})^{L-1}, D_\text{RNN}=L$ for $L$-step RNN, and \circled{4} $C_\text{memory} = \rho$ and $D_\text{memory}=4$ for memory-based method. Therefore, we adopt the following settings

\begin{itemize}
    \item Since TGAT~\cite{TGAT} is using 2-layer self-attention for aggregation and using ReLU activation, we set $\tau=1, \rho=1$, $D_\text{GNN}=L_\text{GNN}=3$.
    \item Since TGN and APAN are using 1-layer self-attention for aggregation and using ReLU activation, we set $\tau=1, \rho=1$, $D_\text{GNN}=L_\text{GNN}=2$, $D_\text{memory}=4$. We first compute the generalization error for GNN and memory-based method respectively, then multiply the generalization error of GNN and memory-based method together.
    \item Since JODIE~\cite{JODIE} is memory-based method and using Tanh activation,  we set $\rho=1$, $D_\text{memory}=4$.
    \item Since DySAT~\cite{DySAT} is using 2-layer self-attention and 3-step RNN with both ReLU and Tanh as activation function, we set $\tau=1, \rho=1$, $D_\text{GNN}=L_\text{GNN}=3$, $D_\text{RNN} = L_\text{RNN}=4$. We first compute the generalization error for GNN and RNN respectively, then multiply the generalization error of GNN and RNN together.
    \item Since \our is using 1-layer aggregation and ReLU activation, we set $\tau=1, \rho=1$, $D_\text{GNN}=L_\text{GNN}=2$.
\end{itemize}
Moreover, to alleviate the computation burden and make sure the over-parameterization assumption holds, we compute feature-label alignment on the last $5,000$ data points in the training set instead of all the training data. In practice, it takes less than 5 seconds to compute the feature-label alignment for each method on our GPU.

Furthermore, we do not calculate the generalization error of pure RNN-based methods (e.g., CAW~\cite{CAW}) due to the following reasons: 
\begin{itemize}
    \item Computing the generalization error requires computing the feature-label alignment (FLA) score, which is non-trivial for CAW~\cite{CAW} because their implementation is complicated (e.g., performing temporal walks and computing augmented node features) and adapting their method to our framework is challenging.
    \item We have already compared the generalization error to that of DySAT~\cite{DySAT}, which combines both RNN and memory-based methods.
\end{itemize}

Moreover, we did not consider layer normalization in theoretical analysis because it is data dependent and changes during training. We simply assume data is zero-mean with standard deviation as one.

Finally, the generalization error of GraphMixer~\cite{graphmixer} is not calculated due to the non-trivial nature of deriving it for the over-parameterized MLP-mixer~\cite{tolstikhin2021mlp}. This challenge persists because the gradient dynamics of attention mechanisms, such as the token mixer in MLP-mixer and self-attention in GAT~\cite{GAT}, are not well understood and requires assumptions on infinite input data~\cite{hron2020infinite}. Consequently, the lack of comprehensive understanding in this area hinders the establishment of a comparable generalization error bound for GraphMixer, similar to the one found in Theorem~\ref{theorem:all_generalization} for other methods.

\subsection{
Discussion on using generalization error as the sole measurement of TGL methods' effectiveness} \label{section:limitations}
We want to make it clear that using the generalization error as the sole measure to determine the effectiveness of different TGL methods is not advisable.
This is because there can be discrepancies between the theoretical analysis bounds and the real-world experiment results, due to factors such as
\begin{itemize}
    \item \emph{Assumptions not always holding in practice.} For instance, the theory suggests using SGD with a mini-batch size of 1, but in actuality, using a larger mini-batch size such as $600$, leads to a more stable gradient, faster convergence, and improved overall performance.
    \item \emph{Differences in neural architecture used in analysis and experiments.} For example, existing methods are using layer-normalization and self-attention, however, these were not taken into consideration as the theoretical analysis of these functions is still under-explored open problems.
\end{itemize}
Instead, we believe that the theoretical analysis in this paper can provide valuable insights for designing practical TGL algorithms in future studies.


\subsection{Discussion on the relationship between expressiveness and generalization ability}

It is worth noting that we cannot assume that a high level of expressive power automatically leads to good generalization ability, or vice versa. 
This can be seen in Theorem~\ref{theorem:all_generalization}, where the expressive power is measured by feature-label alignment (FLA) and generalization ability is determined by both FLA and the number of layers/steps. An algorithm with high expressive power and small FLA could still have poor generalization ability if it has a large number of layers/steps.
The same principle applies to other expressive power analyzing methods, e.g., expressiveness analysis via 1-WL test in~\cite{souza2022provably}. According to the 1-WL test, the maximum expressive power can be achieved through the use of injective aggregation functions, such as using hash functions or one-hot node identity vectors. However, these methods could harm generalization ability because hash functions are untrainable and the use of one-hot node identity vectors can lead to overfitting.


\subsection{Discussion on using direction vs undirected graph in experiments}

It is worth noting that "directed" and "undirected" graph are just different data structures to represent the same raw data, switching from data structures to another will not introduce new information on data. Our method performs better on undirected graph structure because "applying our sampling strategy to undirected graph" could tell us how frequent two nodes interact, which is important in
our algorithm design. 

Moreover, we note that changing the "directed" to "undirected" graph for baseline methods has little impact on their performance. To demonstrate this, we have conducted an additional ablation study by comparing the baselines' average precision with different input data structure. The results reported on the left side of the arrow represent their original directed graph input setting, while the results reported on the right side of the arrow represent the use of an undirected graph. As shown in the table, utilizing undirected graph does not affect the baselines' performance because the information on ``how frequent two nodes interact'' could be learned from the data through their algorithm design.

\begin{table}[h]
\centering
\scalebox{0.95}{
\begin{tabular}{ l l l l l }
\hline\hline
Average precision & Reddit & Wiki & MOOC & LastFM \\ \hline\hline
JODIE                         & 99.75$\rightarrow$99.44 & 98.94$\rightarrow$98.90 &  98.99$\rightarrow$98.38 & 79.41$\rightarrow$79.91  \\ 
TGAT                          & 99.56$\rightarrow$99.71 & 98.69$\rightarrow$98.35 &  99.28$\rightarrow$98.41 & 75.16$\rightarrow$74.69\\ 
TGN                           & 98.83$\rightarrow$99.74 & 99.61$\rightarrow$99.58 &  99.63$\rightarrow$99.43 & 91.04$\rightarrow$91.06  \\ 
DySAT                         & 98.55$\rightarrow$98.53 & 96.64$\rightarrow$96.62 &  98.76$\rightarrow$98.70 & 76.28$\rightarrow$76.23  \\ \hline
\end{tabular}}
\end{table}

\subsection{Comparison to existing work GraphMixer~\cite{graphmixer}}

Our paper has a similar observation with~\cite{graphmixer} that a simpler neural architecture could achieve state-of-the-art performance by carefully selecting the input data structure and utilizing inductive bias for neural network design.
Comparing to~\cite{graphmixer}, our paper has the following further contributions:

\begin{itemize}
    \item Our paper provides the theoretical support for their empirical observations, i.e., input data selection is important and simpler model performs well;
    \item Our paper designs the first unified theoretical analysis framework that can capture both the impact of ``neural architecture'' and ``input data selection'' to analyze different TGL methods;
    \item Our paper proposes an algorithm has a simpler neural architecture but similar or even better performance. 
\end{itemize}

\subsection{Discussion on the theoretical analyzed CAW to its in practice implementation }\label{section:caw_practice_implementation}

In this paper, we first classify existing TGL algorithms into GNN-based, RNN-based, and memory-based methods, then explicitly analysis the generalization ability of each category in an unified theoretical analysis framework using a "node-level task. 
CAW belongs to the RNN-based category because CAW samples ``temporal events'' using random walks and aggregate information via RNN.
However, there are small discrepancies between the algorithm formation of RNN-based TGL method and the CAW algorithm itself.
This is because CAW is originally designed for "edge-level task", where edge features are computed through "random walk paths" sampled from two nodes on both sides of the edge.
To apply it to ``node-level'' theoretical analysis, it is natural to imagine that the target node features are computed on the "random walk path" sampled at that target node.
Since we are unable to incorporate all the implementation details into our mathematical analysis, we may need to make some adjustments/simplification for theoretical analysis as long as they could still capture the main spirits of the original algorithm, i.e., sample temporal events via random walks and aggregate information via RNN. 
Moreover, please note that the changes we made to our analysis are considered small in terms of generalization analysis, as transitioning from ``node-level'' to ``edge-level'' analysis will not impact our conclusion regarding generalization bound. For example to extend the results of the CAW to its original "edge-level task" setting, we simply need to treat each temporal event $v_\ell$ as the combination of the $\ell$-th link feature from two distinct random paths. In this paper, we chose the "node-level analysis" for the sake of clarity in presentation. 

\subsection{Discussion on the ``strong corration'' between ``generalization bound'' and ``average precision'' in Figure~\ref{fig:empirical_theory_main}}

It is worth noting that  the generalization bound is a "worst-case upper bound" that describes the theoretical limit, and there is often a gap between this upper bound and the actual generalization performance. Therefore, we are not expecting such a strong correlation exists. 
Although without "strong negative correlation", as shown in Figure~\ref{fig:empirical_theory_main}, we can still clearly observe there exists a negative correlation between the generalization error bound and the average precision score. Therefore, we believe that the theoretical analysis in this paper can provide valuable insights for designing practical TGL algorithms in future studies.
On the other hand, tightening the theoretical upper bound is actively studied in the theoretical machine learning field. To the best of our knowledge, our paper is the first to analyze the generalization ability of a number of TGL models under a unified theoretical framework. Compared to existing generalization analyses in statistical learning theory, our generalization bound is data-label dependent and directly captures the difference between training and evaluation error. 

\subsection{Discussion on the aggregation weight}

In SToNe, the aggregation weight $\boldsymbol{\alpha} \in \mathbb{R}^K$ is a trainable vector, whereas in TGN with sum-aggregation, the aggregation weights are fixed and are equal to 1. By using a learnable aggregation weight $\boldsymbol{\alpha}$, SToNe is able to adaptively learn the importance of temporal neighbors. In contrast, as shown in the table below, replacing it with a fixed vector, the performance slightly decreases. 

\begin{table}[h]
\centering
\centering\setlength{\tabcolsep}{1mm}
\scalebox{0.9}{
\begin{tabular}{ l l l l l l l }
\hline\hline
                                         & Reddit & Wiki  & MOOC  & LastFM & GDELT & UCI   \\ \hline
\our (trainable $\bm{\alpha}$) & $99.89 \pm 0.00$  & $99.85 \pm 0.05$ & $99.88\pm 0.04$ & $95.74 \pm 0.13$  & $99.11 \pm 0.03$ & $94.60 \pm 0.31$ \\ 
\our (fixed $\bm{\alpha}$)     & $99.82 \pm 0.00$  & $99.79 \pm 0.05$ & $99.55 \pm 0.05$ & $95.56 \pm 0.13$  & $99.02 \pm 0.04$ & $83.35 \pm 0.32$ \\ \hline
\end{tabular}}
\end{table}

\subsection{Discussion on the expressive power of SToNe}

\noindent\textbf{Expressive power with depth.} According to the theoritical results on WL-based expressive power analysis,
deeper GNN may have more expressive power than a shallow GNN due to its larger receptive field.
However, it is worth noting that we may also improve the expressiveness by using different input data selection schema and neural architectures, and achieve a good model performance without a deep GNN structure.
For example, as shown in Figure~\ref{fig:feature_label_alignment_main_appendix}, using more hops will hurt performance but also lead to higher computation complexity in SToNe. Similar observation has been made in a recent concurrent work~\cite{graphmixer}. Previous works~\cite{Xu2020What,xu2021how} also emphasize the importance of input data and neural architecture on model performance.

\noindent\textbf{Expressive power with permutation-invariance.} Moreover, we would like to note that SToNe assigning different elements in $\mathbb{H}_i(t)$ with same timestamps to different $\alpha_k$ could break permutation-invariance, and potentially results in a lower expressive power from the WL-based expressive power perspective. However, as demonstrated in our paper, replacing it with permutation-invariance functions (e.g., self-attention) could slightly degraded the performance.
Therefore, an interesting future direction could be designing a new aggregation strategy that both enjoys the expressiveness brought by permutation-invariance and the simplicity of our model design.

\subsection{Discussion on the generalization bound of SToNe} \label{section:bound for stone}

In this paper, we only derive the generalization bounds for GNN-based , RNN-based, and memory-based TGL methods, but did not explicitly derive the generalization bound for SToNe. 
However, it is worth noting that SToNe (without layer normalization) could be think of as 1-hop GNN-based method using 1-layer sum-aggregation and ReLU activation, therefore we set activation function Lipschitz constant $\rho=1$, and $D_\text{GNN}=L_\text{GNN}=2$. 

Comparing to the original generalization bound for GNN-based method, we note that $\tau$ is a notation originally used for GNN's average-aggregation in Assumption~\ref{assumption:row sum of propagation matrix}.
Here we reuse this notation for SToNe because a similar assumption for the sum aggregation could be made for SToNe by setting $\tau$ as a large number that satisfy the condition $\smash{\tau \geq \sum_{i=1}^K \alpha_i}$.
In practice, the $\tau$ is not arbitrary large because each $\alpha_i, \forall i\in\{1,\ldots,K\}$ is sampled from uniform distribution $\mathcal{U}(-\sqrt{3/K}, \sqrt{3/K})$ according to the \href{https://pytorch.org/docs/stable/nn.init.html}{PyTorch Doc}, and the model is trained with $\ell_2$-regularization on weight parameters. For the completeness, we empirically compute the $\sum_{i=1}^K \alpha_i $ value before and after training:

\begin{table}[h]
\centering
\begin{tabular}{ l l l l l }
\hline\hline
                & Reddit & Wiki  & MOOC  & LastFM   \\ \hline
Before training & $0.21 \pm 0.67$  & $0.39 \pm 0.52$ & $-0.30 \pm 0.52$ & $0.05 \pm 0.52$ \\ 
After training  & $0.13 \pm 0.27$  & $0.10 \pm 0.24$ & $-0.02 \pm 0.41$ & $-0.10 \pm 0.50$  \\ \hline\hline
\end{tabular}
\end{table}


Another approach that fulfills the aforementioned assumption is to consider $\alpha_1=\ldots=\alpha_K=1/K$ as non-trainable parameters after initialization. By adopting this assumption, \our can be considered as a 1-hop GNN-based TGL algorithm with row-normalized neighbor aggregation, adhering to Assumption~\ref{assumption:row sum of propagation matrix}. To provide a comprehensive analysis, we also present the performance of \our when utilizing non-trainable $\bm{\alpha}$. Based on the outcomes, we observe that assuming non-trainable aggregation weights yields comparable results across most datasets, thereby indicating the mildness of this assumption.

\begin{table}[h]
\centering
\centering\setlength{\tabcolsep}{1mm}
\scalebox{0.9}{
\begin{tabular}{ l l l l l l l }
\hline\hline
                                         & Reddit & Wiki  & MOOC  & LastFM & GDELT & UCI   \\ \hline
\our (trainable $\bm{\alpha}$) & $99.89 \pm 0.00$  & $99.85 \pm 0.05$ & $99.88\pm 0.04$ & $95.74 \pm 0.13$  & $99.11 \pm 0.03$ & $94.60 \pm 0.31$ \\ 
\our (fixed $\bm{\alpha}$)     & $99.82 \pm 0.00$  & $99.79 \pm 0.05$ & $99.55 \pm 0.05$ & $95.56 \pm 0.13$  & $99.02 \pm 0.04$ & $83.35 \pm 0.32$ \\ \hline
\end{tabular}}
\end{table}

Moreover, we would like to highlight that $\tau$ is only used to empirically evaluate our generalization bound in Figure~\ref{fig:empirical_theory_main}, therefore it is fair to set $\tau=1$ when comparing with the empirical generalization bounds of other methods in Figure~\ref{fig:empirical_theory_main}, based on the empirical computed numbers in table. Notice that whether we select $\tau=1$ or not does not affect our results in Theorem 1 since we treat $\tau$ as a  random variable in the upper bound.

\end{document}